\title[Logarithmic regret of exploration in MDPs]{
    Logarithmic regret of exploration in average reward Markov decision processes
}
\newenvironment{blackblock}{
    \begin{tcolorbox}[
        enhanced,
        breakable,
        colback=black!10,
        leftrule=1mm,
        toprule=0pt,
        bottomrule=0pt,
        rightrule=0pt,
        arc=0pt,
        before skip=1em plus 2pt,
        after skip=1em plus 2pt,
    ]
}{
    \end{tcolorbox}
}
\newtheorem{assumption}{Assumption}
\newaliascnt{informalproperty}{theorem}
\newtheorem{informalproperty}[informalproperty]{Informal Property}
\crefname{informalproperty}{informal Property}{informal Properties} % we are cheating with Capitalize ...
\newcommand{\draft}[1]{#1}	% for removing macro coloring
\newcommand{\newmacro}[2]{\newcommand{#1}{\draft{#2}}}	% for semantic definitions
\newcommand{\renewmacro}[2]{\renewcommand{#1}{\draft{#2}}}	% for semantic definitions
\newcommand{\subsetcong}{\mathrel{\mathpalette\subset@cong\relax}}
\newcommand{\subsetsim}{\mathrel{\mathpalette\subset@sim\relax}}
\newcommand{\subset@cong}[2]{%
  \vbox{\offinterlineskip\m@th
    \ialign{\hfil$#1##$\hfil\cr
      \sim\cr\subset\cr
    }%
  }%
}
\newcommand{\subset@sim}[2]{%
  \vtop{\offinterlineskip\m@th
    \ialign{\hfil$#1##$\hfil\cr
      \subset\cr\noalign{\kern1pt}\sim\cr
    }%
  }%
}
\DeclarePairedDelimiter{\braces}{\{}{\}}	% for braces
\DeclarePairedDelimiter{\brackets}{[}{]}	% for brackets
\DeclarePairedDelimiter{\parens}{(}{)}	% for parentheses
\DeclarePairedDelimiter{\abs}{\lvert}{\rvert}	% for absolute value
\DeclarePairedDelimiter{\ceil}{\lceil}{\rceil}	% for ceiling
\DeclarePairedDelimiter{\floor}{\lfloor}{\rfloor}	% for floor
\DeclarePairedDelimiter{\pospart}{[}{]_{+}}	% for positive par
\DeclarePairedDelimiter{\norm}{\lVert}{\rVert}	% for norm
\newcommand{\tsqrt}[1]{{\textstyle\sqrt{#1}}} % text style square root
\newmacro{\Real}{\mathbf{R}}
\newmacro{\Nat}{\mathbf{N}}
\newmacro{\Rel}{\mathbf{Z}}
\newmacro{\R}{\Real}
\newmacro{\N}{\Nat}
\newmacro{\Q}{\mathbf{Q}}
\newmacro{\Z}{\Rel}
\def\product{\prod}
\def\integral{\int}
\newmacro{\tlim}{\lim\limits} % inline limit
\newmacro{\tlimsup}{\limsup\limits} % inline limsup
\newmacro{\tliminf}{\liminf\limits} % inline liminf
\newmacro{\tsum}{\sum\limits} % inline sum
\newmacro{\tprod}{\prod\limits} % inline product
\newmacro{\EE}{\mathbf{E}} % Expectation
\newmacro{\Var}{\mathbf{V}} % Variance
\renewmacro{\Pr}{\mathbf{P}} % Probability
\newmacro{\dd}{\mathrm{d}} % for integrals
\newmacro{\probabilities}{\mathcal{P}} % Space of probability distributions
\newmacro{\ind}{\mathbf{1}} % Indicator
\newcommand{\indic}[1]{\ind\parens{#1}}
\def\indicator{\indic}
\newcommand{\eqindicator}[1]{\ind\parens*{#1}}
\newcommand{\vecspan}[1]{\mathrm{sp}\parens{#1}}
\DeclareMathOperator*{\oh}{\mathrm{o}}
\DeclareMathOperator*{\OH}{\mathrm{O}}
\DeclareMathOperator*{\Clim}{\mathrm{Cesaro-lim}}
\newmacro{\event}{\mathcal{E}}
\newmacro{\KL}{\mathrm{KL}}
\newmacro{\kl}{\mathrm{kl}}
\newmacro{\entropy}{\mathrm{Ent}}
\newmacro{\mdp}{M}
\def\model{\mdp}
\newmacro{\mdps}{\mathcal{M}}
\def\models{\mdps}
\newmacro{\modelss}{\mathfrak{M}}
\newmacro{\modelsseq}{\mathfrak{M}}
\newmacro{\state}{s} % Space of actions
\newmacro{\State}{S} % random state
\newmacro{\states}{\mathcal{S}} % Space of states
\newmacro{\action}{a} % action
\newmacro{\Action}{A} % random action
\newmacro{\actions}{\mathcal{A}} % Space of actions
\newmacro{\pair}{z} % Pair state-action
\newmacro{\Pair}{Z} % random pair
\newmacro{\pairs}{\mathcal{Z}} % State-action space
\newmacro{\history}{o} % instance of history
\newmacro{\histories}{\mathcal{O}} % history space
\newmacro{\History}{O} % random history
\newmacro{\structure}{\mathfrak{Z}} % Co-exploration structure
\newmacro{\kerneldistribution}{p} % kernel as a distribution
\newmacro{\kernelvector}{p} % kernel as a tensor/vector (bold)
\def\kernel{\kernelvector}
\newmacro{\kernels}{\mathcal{P}}
\newmacro{\Kernel}{P} % kernel matrix
\newmacro{\kernelrewarddistribution}{q} % kernel or reward distribution?
\newmacro{\kernelreward}{q} % kernel or reward?
\def\kerrew{\kernelreward}
\newmacro{\kerrews}{\mathcal{Q}}
\newmacro{\rewarddistribution}{r} % reward as a distribution
\newmacro{\rewardvector}{r} % (mean) reward as a vector (bold)
\newmacro{\Reward}{R}
\def\reward{\rewardvector}
\newmacro{\rewards}{\mathcal{R}}
\newmacro{\bellman}{L} % Bellman operator
\newmacro{\opt}{*} 
\newmacro{\optopt}{\opt\opt} 
\newmacro{\policy}{\pi} % policy
\newmacro{\policies}{\Pi} % stationary deterministic policies
\newmacro{\deterministicpolicies}{\policies^\mathrm{SD}} % deterministic policies
\newmacro{\randomizedpolicies}{\policies^\mathrm{SR}} % randomized policies
\newmacro{\planners}{\policies^\mathrm{HR}} % history dependent policies
\newmacro{\planner}{(\policy_t)}
\newmacro{\optpolicy}{\policy^\opt} % optimal policy
\newmacro{\optimalpolicies}{\policies^\opt} % (gain) optimal policies
\newmacro{\bellmanpolicies}{\policies^{\opt}_\mathrm{Bell}} % Bellman optimal policies
\newmacro{\biaspolicies}{\policies^{\opt}_\mathrm{bias}} % bias optimal policies
\def\optpolicies{\optimalpolicies}
\newmacro{\approximalpolicies}{\policies{}^\opt_\epsilon} % epsilon-gain optimal policies
\newmacro{\optimalpairs}{\pairs^{\opt\opt}} % X_** optimal pairs
\newmacro{\weakoptimalpairs}{\pairs^{\opt}} % X_* weakly optimal pairs
\newmacro{\suboptimalpairs}{\pairs^-} % X_- suboptimal pairs
\def\optpairs{\optimalpairs}
\def\wkoptpairs{\weakoptimalpairs}
\newmacro{\gain}{g}
\newmacro{\bias}{h}
\newmacro{\biases}{\mathcal{H}}
\newmacro{\optimalgain}{\gain^\opt}
\def\optgain{\optimalgain} % shortcut for gain
\newmacro{\optimalbias}{\bias^\opt}
\def\optbias{\optimalbias} % shortcut for bias
\newmacro{\ogaps}{\Delta^{\opt}}
\newmacro{\gaps}{\Delta}
\newmacro{\gaingap}{\gaps_{\gain}} % gain gap
\newmacro{\biasgap}{\gaps_{\bias}} % bias gap
\newmacro{\bellmanoperator}{L} % Bellman operator
\newmacro{\diameter}{D}
\newmacro{\uniformdiameter}{\diameter_u}
\newmacro{\worstdiameter}{\diameter_{\opt}}
\newmacro{\invariantmeasures}{\mathrm{Inv}} % Space of invariant measures
\newmacro{\invariantmeasure}{\mu} % Notation for invariant measure
\newmacro{\candidatemeasures}{\mathcal{I}} % Condidate exploration measures 
\def\imeasure{\invariantmeasure}
\newmacro{\Reg}{\mathrm{\normalfont Reg}} % regret 
\newmacro{\FOReg}{\mathrm{\normalfont FOReg}} % regret : sum of gaps (sometimes pseudo-regret)
\newmacro{\stochasticregret}{\Delta} % stochastic regret
\newmacro{\regretlowerbound}{K} % regret lower bound
\newmacro{\minimaxcomplexity}{\regretlowerbound}
\newmacro{\alternative}{\textrm{\normalfont Alt}} % Alternative models
\newmacro{\confusing}{\textrm{\normalfont Cnf}} % Confusing models
\newmacro{\loss}{\ell} % loss function
\DeclareMathOperator{\RegExp}{\mathrm{RegExp}}
\newmacro{\alg}{\Lambda} % Algorithm
\newmacro{\learner}{\alg} % Algorithm
\newmacro{\visits}{N} % Visit counts
\newmacro{\stimes}{\mathcal{T}} % special times 
\newmacro{\episodes}{\mathcal{K}} % set of epsiodes
\newmacro{\explorationepisodes}{\mathcal{K}^-} % exploration episodes
\newmacro{\exploitationepisodes}{\mathcal{K}^+} % exploitation episodes
\tikzset{
    state/.style={
        draw,
        circle,
        fill=none,
    },
    action/.style={
        draw,
        circle,
        color=white,
        fill=black,
        inner sep=0pt,
        minimum size=4mm,
        anchor=center,
    },
    reward/.style={
        font={\scriptsize}
    },
    transition/.style={
        ->,
        >=stealth,
    },
}
\def\Reach{\mathrm{Reach}}
\def\STEP#1{(\strong{STEP~#1})}
\def\QED{$\blacksquare$}
\newcommand{\strong}[1]{\textbf{#1}}
\newmacro{\Bernoulli}{\mathrm{Ber}}
\begin{document}

    \allowdisplaybreaks % to allow aligns to break other multiple lines
    \maketitle
    
    \begin{abstract}%
        In average reward Markov decision processes, state-of-the-art algorithms for regret minimization follow a well-established framework:
        They are model-based, optimistic and episodic. 
        First, they maintain a confidence region from which optimistic policies are computed using a well-known subroutine called Extended Value Iteration (\texttt{EVI}).
        Second, these policies are used over time windows called episodes, each ended by the Doubling Trick \eqref{equation_doubling_trick} rule or a variant thereof.
        In this work, without modifying \texttt{EVI}, we show that there is a significant advantage in replacing \eqref{equation_doubling_trick} by another simple rule, that we call the Vanishing Multiplicative \eqref{equation_vanishing_multiplicative} rule.
        When managing episodes with \eqref{equation_vanishing_multiplicative}, the algorithm's regret is, both in theory and in practice, as good if not better than with \eqref{equation_doubling_trick}, while the one-shot behavior is greatly improved. 
        More specifically, the management of bad episodes (when sub-optimal policies are being used) is much better under \eqref{equation_vanishing_multiplicative} than \eqref{equation_doubling_trick} by making the regret of exploration logarithmic rather than linear.
        These results are made possible by a new in-depth understanding of the contrasting behaviors of confidence regions during good and bad episodes.
    \end{abstract}

    \begin{keywords}%
        Markov decision processes, average reward, regret minimization, optimism
    \end{keywords}

    \section{Introduction}

    Regret minimization in average reward Markov decision processes is a classical problem with a rich literature and landscape of methods. 
    Regarding theoretical guarantees (especially in the minimax setting), the most successful line of algorithms adapts the famous \texttt{UCB} algorithm of \cite{auer_using_2002} to Markov decision processes. 
    This includes \cite{auer_logarithmic_2006,tewari_optimistic_2007,auer_near_optimal_2009,bartlett_regal_2009,filippi_optimism_2010,fruit_efcient_2018,tossou_near_optimal_2019,fruit_improved_2020,bourel_tightening_2020,zhang_regret_2019,boone_achieving_2024} in particular, that are the focus of this work. 
    All these algorithms are episodic and follow the \emph{optimism-in-the-face-of-uncertainty} principle:
    During learning, they maintain a confidence region of plausible environments from which decisions are taken. 
    Specifically, they deploy policies achieving the highest average gain among all MDPs in the confidence region.
    This policy is used for a whole time interval called an episode, and is only updated when deemed necessary. 

    \begin{figure}[ht]
        \vspace{-.66em}

        \begin{tikzpicture}
            \node at (0, 0) {\includegraphics[width=.49\linewidth]{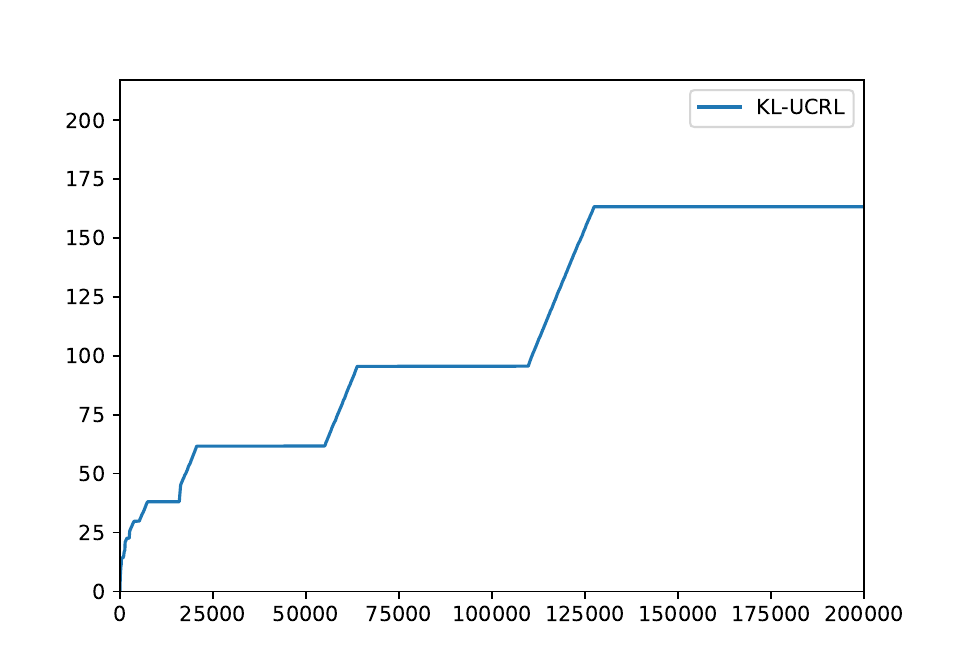}};
            \fill[color=red, opacity=0.3] (-2.37, -2.02) to (-2.37, 1.97) to (-2.18, 1.97) to (-2.18, -2.02);
            \fill[color=red, opacity=0.3] (-1.23, -2.02) to (-1.23, 1.97) to (-.93, 1.97) to (-.93, -2.02);
            \fill[color=red, opacity=0.3] (.35, -2.02) to (.35, 1.97) to (.92, 1.97) to (.92, -2.02);
        \end{tikzpicture}
        \hfill
        \begin{tikzpicture}
            \node at (0, 0) {\includegraphics[width=.49\linewidth]{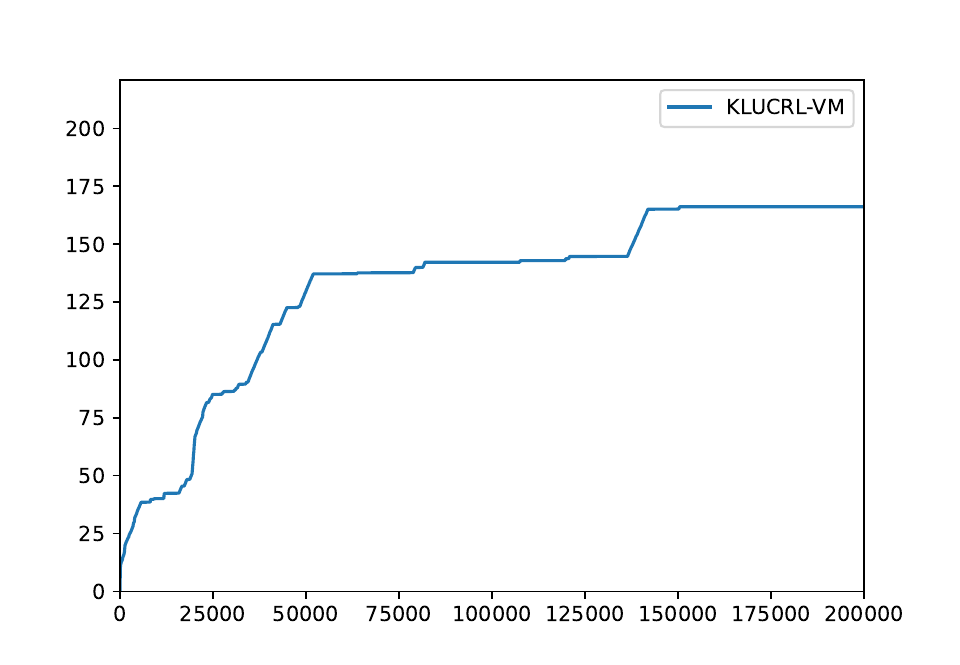}};
            \fill[color=red, opacity=0.3] (-1.8, -2.02) to (-1.8, 1.97) to (-1.71, 1.97) to (-1.71, -2.02);
            \fill[color=red, opacity=0.3] (-1.68, -2.02) to (-1.68, 1.97) to (-1.61, 1.97) to (-1.61, -2.02);
            \fill[color=red, opacity=0.3] (-1.57, -2.02) to (-1.57, 1.97) to (-1.5, 1.97) to (-1.5, -2.02);
            \fill[color=red, opacity=0.3] (-1.42, -2.02) to (-1.42, 1.97) to (-1.3, 1.97) to (-1.3, -2.02);
            \fill[color=red, opacity=0.3] (-0.97, -2.02) to (-0.97, 1.97) to (-.95, 1.97) to (-.95, -2.02);
            \fill[color=red, opacity=0.3] (-0.52, -2.02) to (-0.52, 1.97) to (-.49, 1.97) to (-.49, -2.02);
            \fill[color=red, opacity=0.3] (-0.45, -2.02) to (-0.45, 1.97) to (-.42, 1.97) to (-.42, -2.02);
            \fill[color=red, opacity=0.3] (.31, -2.02) to (.31, 1.97) to (.33, 1.97) to (.33, -2.02);
            \fill[color=red, opacity=0.3] (.65, -2.02) to (.65, 1.97) to (.67, 1.97) to (.67, -2.02);
            \fill[color=red, opacity=0.3] (.69, -2.02) to (.69, 1.97) to (.71, 1.97) to (.71, -2.02);
            \fill[color=red, opacity=0.3] (1.15, -2.02) to (1.15, 1.97) to (1.31, 1.97) to (1.31, -2.02);
            \fill[color=red, opacity=0.3] (1.54, -2.02) to (1.54, 1.97) to (1.56, 1.97) to (1.56, -2.02);
        \end{tikzpicture}

        \vspace{-.66em}
        \caption{
            \label{figure_introduction}
            The left plot displays the regret of \texttt{KLUCRL} \cite{filippi_optimism_2010} over a single run with highlighted periods of sub-optimal play that are increasing in duration. 
            In comparison, the right plot displays the regret of our proposed algorithm, where periods of sub-optimal play are much shorter resulting in a smoother regret curve. 
            \vspace{-1.5em}
        }
    \end{figure}

    This paper is not about improving the regret guarantees of these algorithms.
    Instead, we are interested in improving their long term behavior over a single run.
    In particular, we argue that state-of-the-art algorithms renew their policy too lazily, leading to long sequences of sub-optimal play, even when the learning process is well advanced. 
    This phenomenon appears strikingly during experiments: When running the classical \texttt{KLUCRL} of \cite{filippi_optimism_2010}, the algorithm displays periods of sub-optimal play that last for increasingly long durations, even after the initial burn-in phase is ended, see \Cref{figure_introduction}. 
    Such episodes of sub-optimal play are generally inevitable and correspond to the explorative part of the learning task; The learner has to make sure that seemingly bad actions are bad indeed. 
    The issue rather lies in the fact that the current design of all these algorithms makes such episodes increase exponentially in size. 
    This phenomenon was recently pointed out by \cite{boone_regret_2023} and measured by a new performance metric called the \strong{regret of exploration} (see \Cref{definition_regret_of_exploration}).
    The authors further suggest a way to obtain regret of exploration guarantees by refining the management of episodes. 
    However, their solution is computationally heavy and is only shown to work in the very restricted setting of Markov decision processes with deterministic transition kernels. 

    \paragraph{Contribution}
    In this paper, we go beyond \cite{boone_regret_2023} and provide a solution with better guarantees, both theoretically and experimentally. 
    We introduce a new simpler rule to end episodes, and show that the performance under the new episode rule guarantees logarithmic regret of exploration for two classes of MDPs: ergodic, and communicating MDPs with prior information on the support of the transition kernel.
    Our analysis is generic and focuses on when and how the confidence region used by an optimistic algorithm is well-behaved so that episodes of sub-optimal play are short and isolated. 
    We further show that the regret guarantees remain mostly intact, both in the model independent (minimax) and model dependent settings. 

    \section{Preliminaries}

    \paragraph{General notations}
    Given a finite set $\mathcal{X}$, we denote $\probabilities(\mathcal{X})$ the set of probability measures over $\mathcal{X}$.
    For $\kerrew \in \probabilities(\mathcal{X})$ and $f : \mathcal{X} \to \R$ a measurable map, we write $\kerrew f := \integral f(x) d\kerrew(x)$ the average of $f$ against $\kerrew$. 
    The Kullback-Leibler divergence from distribution $\kerrew'$ to $\kerrew'$ is denoted $\KL(\kerrew||\kerrew')$ and we further write $\kl(p,p') := \KL(\Bernoulli(p)||\Bernoulli(p')) = p \log(\frac p{p'}) + (1-p) \log(\frac{1-p}{1-p'})$ the divergence from Bernoulli distribution of parameters $p'$ to $p$. 
    Given a finite set $\states$, we denote $e = (1, \ldots, 1) \in \R^\states$ the constant unitary vector and $(e_\state)_{\state \in \states}$ the canonical basis of $\R^\states$. 
    The \strong{span semi-norm} of a vector $u \in \R^\states$ is $\vecspan{u} := \max(u) - \min(u)$. 

    \subsection{Markov decision processes in average reward}

    This work uses standard notations for Markov decision processes in average reward in the style of \cite[§8-9]{puterman_markov_1994}.
    A \strong{Markov decision process} (or \strong{model}) consists in a tuple $\model \equiv (\states, \actions, \kernel, \reward)$ made of a state space $\states$ and an action space $\actions \equiv \bigcup_{\state \in \states} \actions(\state)$ together forming a state-action pair space $\pairs := \bigcup_{\state \in \states} \braces{\state} \times \actions(\state)$, a transition kernel $\kernel : \pairs \to \probabilities(\states)$ and reward distributions $\reward: \pairs \to \probabilities(\R)$. 
    The {reward-kernel pair} is $\kerrew := (\reward, \kernel) : \pairs \to \probabilities(\R) \times \probabilities(\states)$.

    \begin{assumption}
    \label{assumption_finite_bernoulli}
        The pair space $\pairs$ is known and finite and rewards are Bernoulli.
    \end{assumption}

    Since rewards have Bernoulli distributions (\Cref{assumption_finite_bernoulli}), we will  use a harmless abuse of notations and write $\reward(\pair) \in [0,1]$ for the mean reward function at $\pair \in \pairs$.

    \subsubsection{Interacting with a Markov decision process using policies}

    The set of stationary deterministic policies is $\policies \equiv \states \to \actions$.
    We denote $\State_t, \Action_t, \Reward_t$ the random state, action and reward observed at time $t$, and $\Pair_t := (\State_t, \Action_t)$ is the associated pair. 
    By construction, $\State_{t+1} \sim \kernel(\Pair_t)$ and $\Reward_t \sim \Bernoulli(\reward(\Pair_t))$. 
    The (observed) {history of play} is $\History_t := (\State_1, \Action_1, \Reward_1, \ldots, \State_t)$ and $\histories$ is the space of all possible histories. 
    Fixing the environment $\model$, the policy $\policy \in \policies$ and the initial state $\state \in \states$ properly defines a probability space on the set of histories, or, said more loosely, determines the distribution of $(\State_t, \Action_t, \Reward_t)_{t \ge 1}$.
    We write $\EE_{\state}^{\model, \policy}[-]$ and $\Pr_{\state}^{\model, \policy}\parens{-}$ the associated expectation and probability operators. 

    The \strong{visit count} of a pair $\pair \in \pairs$ is written $\visits_T(\pair) := \sum_{t=1}^{T-1} \indicator{\Pair_t = \pair}$. 

    The \strong{gain} and \strong{bias} functions of a policy $\policy \in \policies$ from the initial state $\state \in \states$ are denoted $\gain^\policy(\state; \model)$ and $\bias^\policy(\state; \model)$ and given by the formulas
    \begin{align*}
        \gain^\policy(\state; \model)
        & := 
        \lim_{T \to \infty}
        \EE_{\state}^{\model, \policy} \brackets*{
            \frac 1T
            \sum_{t=1}^{T}
            \Reward_t
        }, 
        \\
        \bias^\policy(\state; \model)
        & :=
        \Clim_{T \to \infty}
        \EE_{\state}^{\model, \policy} \brackets*{
            \sum_{t=1}^{T}
            \parens*{
                \Reward_t - \gain^{\policy}(\State_t; \model)
            }
        }.
    \end{align*}
    The optimal gain and bias functions are respectively $\optgain(\model)$ and $\optbias(\model)$ and the set of gain optimal policies, i.e., policies $\policy \in \policies$ such that $\gain^\policy(\model) = \optgain(\model)$, are denoted $\optpolicies(\model)$. 
    In particular, $\optgain(\model) := \max_{\policy \in \policies} \gain^{\policy}(\model)$ and $\optbias(\model) := \max_{\policy \in \optpolicies(\model)} \bias^\policy(\model)$. 
    We define the \strong{Bellman gaps} $\ogaps(-;\model) : \pairs \to \R$ as the gaps in Bellman's optimality equations:
    \begin{equation}
    \label{equation_bellman_gaps}
        \ogaps(\state, \action; \model) 
        := 
        \optgain(\state; \model) + \optbias(\state; \model) - \reward(\state, \action) - \kernel(\state, \action) \optbias(\model)
        .
    \end{equation}
    A pair $\pair \in \pairs$ is said \strong{weakly-optimal}, written $\pair \in \wkoptpairs(\model)$, if it has Bellman gap $\ogaps(\pair; \model) = 0$; and \strong{sub-optimal} otherwise, written $\pair \in \suboptimalpairs(\model)$. 
    A pair $\pair \in \pairs$ is said \strong{optimal}, written $\pair \in \optpairs(\model)$, if $\pair \in \wkoptpairs(\model)$ and it is visited infinitely often (almost surely) under some gain optimal policy.
    Note that $\optpairs(\model) \subseteq \wkoptpairs(\model)$ by definition.

    \subsubsection{Communicating Markov decision processes}

    All throughout the paper, the models that we consider are always \strong{communicating} (\Cref{assumption_communicating}).

    \begin{assumption}
    \label{assumption_communicating}
        In this work, all Markov decision processes are {communicating}, i.e., that every state is reachable from any other under the right policy, meaning that the {diameter} is finite:
        \begin{equation}
        \label{equation_diameter}
            \diameter(\model)
            :=
            \max_{\state \ne \state'}
            \min_{\policy \in \policies}
            \EE_{\state}^{\model, \policy}
            \brackets*{
                \inf \braces*{
                    t \ge 1
                    :
                    \State_t = \state'
                }
            }
            <
            \infty
            .
        \end{equation}
    \end{assumption}

    \Cref{assumption_communicating} is pretty common nowadays.
    It is the core assumption made in the seminal paper of \cite{auer_near_optimal_2009} and most subsequent works; It is much more general than the ergodic assumption of \cite{agrawal_adaptive_1990,burnetas_optimal_1997,pesquerel_imed_rl_2022}; It is not completely general either.
    This assumption is absolutely necessary for the well-behavior of the \texttt{EVI}-subroutine of \cite{auer_near_optimal_2009} (discussed downstream), which is the common foundation of the algorithms of interest in this paper. 

    Under \Cref{assumption_communicating}, the optimal gain is a constant vector with $\optgain(\state; \model) \in \R e$ so that we write $\optgain(\model) \in \R$ in place of $\optgain(\state; \model)$; And Bellman gaps are non-negative, i.e., $\ogaps(\pair; \model) \ge 0$ for all $\pair \in \pairs$. 
    In the sequel, the dependency in $\model$ is dropped when unambiguous.

    \subsection{Reinforcement learning and regret minimization}

    A \strong{learning algorithm} is formally a measurable map $\learner : \histories \to \probabilities(\actions)$, mapping histories of observations to probabilistic choices of actions. 
    Similarly to policies, fixing the environment $\model$, a learner $\learner$ and the initial state $\state \in \states$ properly defines the distribution of $(\State_t, \Action_t, \Reward_t)_{t \ge 1}$ and we write $\EE_{\state}^{\model, \learner}[-]$ and $\Pr_{\state}^{\model, \learner}\parens{-}$ the associated expectation and probability operators. 
    The objective of the learner is to maximize $\Reward_1 + \ldots + \Reward_T$, and their ability to do so is measured by the \strong{regret}, that compares the amount of reward that a gain optimal policy $\optpolicy \in \optpolicies(\model)$ (dependent on $\model$) and the learner are able to collect within the same time budget. 
    Following standard MDP theory, $\abs{\EE_{\state}^{\model, \optpolicy}[\Reward_1 + \ldots + \Reward_T] - T \optgain} \le \vecspan{\optbias}$ so that in this  setting, the regret is usually defined as $T \optgain - \sum_{t=1}^T \Reward_t$, see \cite{auer_near_optimal_2009}. 
    In this work, we consider a pseudo-regret instead (\Cref{definition_regret}), to remove random noise over which the learner as no control.
    The expected regret defined below is equal to the classical one, up to an inconsequential additive constant, with $\abs{\Reg(T; \model, \learner, \state) - \EE_{\state}^{\model, \learner}[T \optgain - \sum_{t=1}^T \Reward_t]} \le \vecspan{\optbias}$.

    \begin{definition}
    \label{definition_regret}
        The \strong{pseudo-regret} of an algorithm $\learner$ over $\model$ is the random variable given by:
        \begin{equation}
            \stochasticregret(1, T) 
            \equiv
            \stochasticregret(T) 
            := 
            \sum_{t=1}^T \ogaps(\Pair_t; \model)
        \end{equation}
        and the \strong{expected regret} is $\Reg(T; \model, \learner, \state) := \EE_{\state}^{\model, \learner}[\stochasticregret(T)]$. 
    \end{definition}

    The lower the regret, the better the learner performs.
    A learner $\learner$ is said \strong{no-regret} relatively to a set of models $\models^0$, or {Hannan consistent} \cite{hannan_approximation_1957}, if for all {communicating} $\model \in \models^0$ and regardless of the initial state $\state \in \states$, $\Reg(T; \model, \learner, \state) = \oh(T)$. 
    $\models^0$ will be called the \strong{ambient set} and is a form of prior information.
    We further assume that $\models^0$ is in product form.

    \begin{assumption}
    \label{assumption_product_form}
        The ambient set $\models^0$ is in \strong{product form}, i.e., it is of the form $\models^0 \equiv \product_{\pair \in \pairs} (\rewards_\pair^0 \times \kernels^0_\pair)$ where $\rewards_\pair^0 \subseteq [0, 1]$ and $\kernels_\pair^0 \subseteq \probabilities(\states)$. 
    \end{assumption}

    \subsection{Optimistic model-based and \texttt{EVI}-based algorithms}
    \label{section_optimistic_model_based}

    There is a large literature on algorithms with regret guarantees. 
    In this paper, we focus on \strong{optimistic model-based algorithms}, which is a line of algorithms adapted from the well-known \texttt{UCB} \cite{auer_using_2002}. 
    They follow the \emph{optimism-in-the-face-of-uncertainty} (OFU) principle: when unsure about the value of an action or a policy, estimate that value as the highest that is statistically plausible. 
    Ever since \texttt{UCRL} \cite{auer_logarithmic_2006}, the main incarnation of this principle is the following.
    Over time, maintain a confidence region $\models(t)$ that contains $\model$ with high probability and work in an episodic fashion.
    An \strong{episode} is a time segment $\braces{t_k, \ldots, t_{k+1}-1}$ during which the algorithm plays a fixed policy $\policy_{t_k} \in \policies$. 
    This policy is computed as the policy achieving the highest gain on the best plausible model at time $t_k$.
    More formally, we define the \strong{optimistic gain} of $\policy$ in $\models(t)$ from $\state \in \states$ as 
    \begin{equation}
    \label{equation_optimistic_gain}
        \gain^\policy(\state; \models(t)) 
        := 
        \sup_{\model' \in \models(t)} \gain^\policy(\state; \model')
        .
    \end{equation}
    An \strong{optimistic policy} $\pi$ is such that $\ \gain^\policy(\models(t)) := \sup_{\policy \in \policies} \gain^\policy(\models(t))$. 
    Perhaps surprisingly, optimistic policies are easy to compute from $\models(t)$ via a process called {Extended Value Iteration} (\texttt{EVI}), see \cite{auer_near_optimal_2009}.
    \Cref{algorithm_optimistic} provides the general architecture of these algorithms. 
    
    \begin{algorithm}[h]
        \begin{algorithmic}[1]
            \FOR {$t = 1, 2, \ldots$}
                \IF {the current policy $\policy_{t_k}$ is obsolete}
                    \STATE Increase $k$, set $t_k \gets t$ and compute $\policy_{t_k} \gets \texttt{EVI}(\models(t_k))$;
                \ENDIF
                \STATE Set $\policy_t \gets \policy_{t_k}$ and play $\Action_t := \policy_{t}(\State_t)$;
            \ENDFOR
        \end{algorithmic}
        \caption{
            \label{algorithm_optimistic}
            \texttt{EVI}-based algorithms.
        }
    \end{algorithm}

    \paragraph{Selected focus: \texttt{KLUCRL}}
    The scheme of \texttt{EVI}-based algorithms (\Cref{algorithm_optimistic}) can be improved along two axis. 
    The first is the choice of confidence region.
    According to Sanov's theorem, the tightest way to construct $\models(t)$ so that it contains $\model$ with high probability is to rely on KL divergences (see \Cref{appendix_choice_klucrl}), leading to a region $\models(t) \equiv \prod_{\pair \in \pairs} \rewards_\pair(t) \times \kernels_\pair(t)$ with:
    \begin{equation}
    \label{equation_confidence_region}
    \begin{gathered}
        \rewards_\pair(t)
        := 
        \braces*{
            \tilde{\reward}_\pair \in [0, 1]
            :
            \visits_\pair (t)
            \KL(\hat{\reward}_\pair(t)||\tilde{\reward}_\pair)
            \le
           \log(2 e t)
        }
        \cap 
        \rewards_\pair^0
        \\
        \kernels_\pair(t)
        := 
        \braces*{
            \tilde{\kernel}_\pair \in \probabilities(\states)
            :
            \visits_\pair (t)
            \KL(\hat{\kernel}_\pair(t)||\tilde{\kernel}_\pair)
            \le
            \abs{\states} \log(2 e t)
        }
        \cap
        \kernels_\pair^0
    \end{gathered}
    \end{equation}
    where $\hat{\reward}_\pair (t)$ and $\hat{\kernel}_\pair (t)$ are the {empirically} observed reward and kernels after $t$ learning steps. 
    The expression of \eqref{equation_confidence_region} is tuned so that $\Pr(\exists t \ge T: \model \notin \models(t)) \le {2\abs{\pairs}}T^{-1}$, see \Cref{appendix_confidence_region}.
    Our work could be adapted to other types of confidence region, e.g., $\ell_1$ or $\ell_2$ norms, or Bernstein's style inequalities but the above will be the selected focus for its superiority over the others, both theoretically and empirically.
    Note that in \eqref{equation_confidence_region}, $\models(t)$ is constrained to the ambiant set of MDPs $\models^0$.
    This is a form of prior information. 

    The second potential  improvement axis is the way to determine whether the current policy is obsolete or not.
    Most of the literature relies on the \strong{doubling trick} \eqref{equation_doubling_trick} or variants thereof, that essentially wait for a pair to increase its visit count multiplicatively---by $2$ for the doubling trick.
    \begin{equation}
    \tag{\texttt{DT}}
    \label{equation_doubling_trick}
        \visits_t(\State_t, \policy_{t_k}(\State_t))
        \ge
        \max \braces[\big]{
            2 \visits_{t_k}(\State_t, \policy_{t_k}(\State_t))
            ,
            1
        }.
    \end{equation}
    Choosing $\models(t)$ as in \eqref{equation_confidence_region} and managing episodes with \eqref{equation_doubling_trick} leads to our variant of the algorithm \texttt{KLUCRL} of \cite{filippi_optimism_2010} that can take the prior information $\models^0$ into account.
    From now on and to streamline the discussion, \texttt{KLUCRL} is the main focus. 

    \section{The regret of exploration of episodic algorithms}

    In this work, we move beyond regret minimization, by investigating additional regret guarantees localized in time. 
    To that end, the regret notations of \Cref{definition_regret} are overloaded as such: we denote $\stochasticregret(\tau, \tau') \equiv \stochasticregret(\tau, \tau'; \model) := \sum_{t=\tau}^{\tau'} \ogaps(\Pair_t)$ the pseudo-regret endured from $\tau$ to $\tau'$ where $\tau \le \tau'$ are two stopping times of the stochastic process. 
    We further write $\Reg(\tau, \tau'; \model) = \EE_{\state}^{\model, \learner}[\stochasticregret(\tau, \tau'; \model)]$ the associated expected regret. 

    \subsection{The definition of the regret of exploration beyond ergodic Markov decision processes}

    We start by generalizing the definition of the regret of exploration of \cite{boone_regret_2023} beyond ergodic models. 
    To measure the instantaneous performance of an algorithm that has exploration phases, one may be tempted to monitor the regret starting at times when the algorithm drops an optimal policy for a sub-optimal one, i.e., at times
    \begin{equation}
    \label{equation_intention}
        \braces*{
            t_k
            : 
            \policy_{t_{k}-1} \in \optpolicies(\model)
            \text{~and~}
            \policy_{t_k} \notin \optpolicies(\model)
        }
        .
    \end{equation}
    \Cref{equation_intention} captures the idea.
    However, beyond ergodic environments, \eqref{equation_intention} is not precise enough and is ill-behaved in general.
    The main reason is that deployed policies can be partially optimal and multi-chain. 
    For instance, \eqref{equation_intention} fails to capture time-instants where $\policy_{t_k-1}$ is gain optimal from the current state but not globally, while such times should be considered as exploration times too. 
    Indeed, the behavior of the algorithm does not depend on the actions chosen by the policy from states that can never be reached. 
    To account for such cases, the final definition of \strong{exploration times} (\Cref{definition_exploration_time}) is slightly more complex.

    \begin{definition}[Exploration]
    \label{definition_exploration_episode}
    \label{definition_exploration_time}
        An episode $k$ is an \strong{exploration episode} and $t_k$ is an \strong{exploration time} if the two following conditions are satisfied:
        {\upshape (1)} $\optgain(\model) = \gain^{\policy_{t_{k}-1}}(\State_{t_k}; \model)$; 
        and {\upshape (2)} we have $\Pr_{\state}^{\policy_{t_k}} \parens{\exists t \ge 1: \ogaps(\Pair_t; \model) \ne 0} > 0$.
        The set of exploration episodes is denoted $\explorationepisodes$.
    \end{definition}

    Written differently, $t_k$ is an exploration time if the learning agent drops a policy that is gain optimal from the current state for a policy that may use a sub-optimal pair if iterated over and over from the current state.
    When the underlying model is ergodic, the exploration times given by \Cref{definition_exploration_time} are equivalent to those defined using \eqref{equation_intention} and by \cite{boone_regret_2023}.

    We enumerate $\explorationepisodes$ as $(t_{k(i)})$ where $k(i)$ denotes the $i$-th exploration episode and $t_{k(i)}$ is the associated $i$-th initial exploration time. 
    Formally, $t_{k(1)} := \inf \explorationepisodes$ and $t_{k(i+1)} := \inf \braces{t_k > t_{k(i)} : k \in \explorationepisodes}$. 
    The \strong{regret of exploration} is defined as the worst expected regret at exploration times asymptotically.

    \begin{definition}[Regret of exploration]
    \label{definition_regret_of_exploration}
        Let $(t_{k(i)})$ be the enumeration of exploration times.
        The \strong{regret of exploration} is given by:
        \begin{equation}
            \RegExp(T)
            \equiv \RegExp(T; \model)
            :=
            \limsup_{i \to \infty} \Reg(t_{k(i)}, t_{k(i)}+T; M)
            .
        \end{equation}
    \end{definition}
    
    \subsection{Explorative Markov decision processes}
    \label{section_well_definition}

    The regret of exploration is only worth studying if there are infinitely many exploration times $t_{k(i)}$.
    This is not always the case.
    In fact, there exist Markov decision processes for which infinite exploration is somehow unnecessary, making them conceptually easier to learn than bandits. 
    In \Cref{appendix_exploration}, we provide a complete characterization of the set of Markov decision processes for which the number of exploration episodes is infinite and where the regret of exploration is well-defined:
    For such models, there exist \strong{consistent} (see \cite{salomon_lower_2013}) learners $\learner$ achieving $\Reg(T; \model, \learner) = \oh(\log(T))$. 
    This result is surprisingly difficult to establish and is peripheral to our work, hence completely deferred to \Cref{appendix_exploration}.
    Let us insist on the fact that given a class $\models$ of environments, the regret of exploration may be ill-defined for large sub-spaces of $\models$, even for reasonable learning algorithms. 
    This motivates the following definitions.

    \begin{definition}[Non-degeneracy]
    \label{definition_non_degeneracy}
        A model $\model \equiv (\pairs, \reward, \kernel)$ is said \strong{non-degenerate} if 
        {\upshape (1)} it has a unique Bellman optimal policy, i.e., there is a unique $\policy \in \policies$ that satisfies the first two orders optimality equations:
        \begin{align*}
            \forall \state \in \states,
            \quad
            \gain^\policy(\state) 
            & = 
            \max_{\action \in \actions(\state)} \braces*{
                \kernel(\state, \action) \gain^\policy
            }
            \\
            \forall \state \in \states,
            \quad
            \gain^\policy(\state) + \bias^\policy(\state)
            & = 
            \max_{\action \in \actions(\state)} \braces*{
                \reward(\state, \action)
                + \kernel(\state, \action) \bias^\policy
            }
        \end{align*}
        and {\upshape (2)} that unique Bellman optimal policy is unichain.  
    \end{definition}

    In \Cref{figure_non_degenerate}, we provide an example of degenerate and non-degenerate Markov decision processes.  
    On the left model, there are multiple Bellman optimal policies, as one can achieve optimal gain by either looping on the left or the right loop. 
    As a matter of fact, every policy excepted $1 \leftrightarrow 2$ is Bellman optimal. 
    On the right model, we add a small noise to the reward function.
    That noise breaks ties and the right loop becomes better than the other, so that the Bellman optimal policy is indeed unique ($1 \to 2 \to 2$) and unichain. 

    \begin{figure}[t]
        \centering
        \begin{tikzpicture}
            \node[state] (1) at (0, 0) {$1$};
            \node[state] (2) at (3, 0) {$2$};
            \draw[transition, loop] (1) to node[midway, above] {$0.5$} (1);
            \draw[transition, loop] (2) to node[midway, above] {$0.5$} (2);
            \draw[transition] (1) to[bend left] node[midway, above] {$0.1$} (2);
            \draw[transition] (2) to[bend left] node[midway, below] {$0.1$} (1);
        \end{tikzpicture}
        \begin{tikzpicture}
            \node[state] (1) at (0, 0) {$1$};
            \node[state] (2) at (3, 0) {$2$};
            \draw[transition, loop] (1) to node[midway, above] {$0.49$} (1);
            \draw[transition, loop] (2) to node[midway, above] {$0.51$} (2);
            \draw[transition] (1) to[bend left] node[midway, above] {$0.09$} (2);
            \draw[transition] (2) to[bend left] node[midway, below] {$0.08$} (1);
        \end{tikzpicture}

        \caption{
        \label{figure_non_degenerate}
            Examples of non-degeneracy (\Cref{definition_non_degeneracy}).
            A degenerate Markov decision process (to the left) and a non-degenerate Markov decision process (to the right).
            Both models have deterministic transitions represented with arrows. 
            Labels are reward means. 
        }
    \end{figure}

    The main observation from \Cref{figure_non_degenerate} can be generalized: All ``noisy'' versions of the non-degenerate model (to the left) are non-degenerate. 
    This means that ``almost-all'' communicating Markov decision processes are non-degenerate and makes the non-degeneracy assumption mild.
    More details are found in \Cref{appendix_technical}.

    From now on, we will focus on non-degenerate Markov decision processes. 

    \begin{definition}[Explorative models]
    \label{definition_explorative}
        Given a space of Markov decision processes $\models$, its \strong{explorative sub-space} $\models^+$ is the set of non-degenerate models $M \in \models$ such that every algorithm {\upshape (1)} with sub-linearly many episodes and {\upshape (2)} which is no-regret on $\models$, has infinitely many exploration episodes almost surely.
        Non-explorative models are said \strong{exploration-free}.
    \end{definition}

    Note that the explorative character of a Markov decision process depends on the ambient space. 
    This makes explorative models slightly more difficult to describe than non-degenerate ones.
    That discussion is deferred to \Cref{appendix_exploration}.

    \subsection{The doubling trick leads to linear regret of exploration}
    \label{section_regexp}

    In \Cref{figure_introduction}, we observe that the regret at exploration times of \texttt{KLUCRL} that uses \eqref{equation_doubling_trick} increases overall.
    This follows from a general principle that is quite intuitive:
    If a change of episode requires an increase of visits relatively to the initial visit count vector, and if deployed policies do not play actions with vanishing probabilities (see \eqref{equation_linear_condition}), then the regret of exploration grows linearly on recurrent models at least. 
    The following theorem is an alternative version of \cite[Theorem~1]{boone_regret_2023}, adapted to our definition of exploration times.

    \begin{theorem}
    \label{theorem_linear_regexp}
        Fix a pair space $\pairs$ and let $\models$ be the space of all recurrent models with pairs $\pairs$. 
        Let $f : \N \to (0, \infty)$ be such that $\lim f(n) = + \infty$. 
        Any no-regret episodic learner $\planner$, i.e., using fixed policies over episodes $\braces{t_k, \ldots, t_{k+1}-1}$, satisfying
        \begin{equation}
        \label{equation_linear_condition}
        \begin{gathered}
            \forall k \ge 1,
            \exists \pair \in \pairs,
            \quad
            \visits_\pair({t_{k+1}}) \ge \visits_\pair({t_k}) + f\parens*{\visits_\pair({t_k})}
            \\
            \exists c > 0,
            \forall t \ge 0, \forall (\state, \action) \in \pairs,
            \quad
            \policy_t(\action|\pair) \ge c \text{~or~} \policy_t(\action|\pair) = 0
        \end{gathered}
        \end{equation}
        has linear regret of exploration on the explorative sub-space of $\models$, i.e., for all $\model \in \models^+$, we have $\RegExp(T) = \Omega(T)$ a.s.~when $T \to \infty$.
    \end{theorem}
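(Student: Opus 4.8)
The plan is to show that exploration episodes become arbitrarily long while the instantaneous regret accumulated during any sub-optimal episode is bounded below by a strictly positive rate; for any fixed $T$, these two facts together produce, for all large exploration indices $i$, a window $[t_{k(i)}, t_{k(i)}+T]$ consisting entirely of sub-optimal play, which forces regret $\Omega(T)$.

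First I would verify that the learner has infinitely many exploration episodes. Condition \eqref{equation_linear_condition} forces each episode to push the count of some pair $\pair$ from $\visits_\pair(t_k)$ up to at least $\visits_\pair(t_k) + f(\visits_\pair(t_k))$. Since $f \to \infty$, a routine computation shows that a single pair can trigger an episode-change only $\oh(N)$ times before its count reaches $N$ (once the count exceeds the level past which $f \ge 1/\epsilon$, each further trigger advances it by at least $1/\epsilon$). As the counts over the finitely many pairs of $\pairs$ sum to $T$, the total number of episodes up to time $T$ is $\oh(T)$, i.e.\ sub-linear. Because $\planner$ is no-regret and $\model \in \models^+$, \Cref{definition_explorative} then guarantees infinitely many exploration episodes almost surely.

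Next I would show the exploration-episode lengths diverge. Letting $\pair$ be the triggering pair of an episode, the number of visits to $\pair$ inside the episode is at least $f(\visits_\pair(t_k))$, and since each time step increments exactly one count, the episode length obeys $\ell_k \ge f(\visits_\pair(t_k))$. Fix $T_0$; as $f \to \infty$ there is $n_0$ with $f(n) > T_0$ for all $n \ge n_0$, so $\ell_k \le T_0$ forces the triggering count to satisfy $\visits_\pair(t_k) \le n_0$. But counts are non-decreasing and each trigger strictly increases the triggering pair's count, so across the finitely many pairs only finitely many episodes can have a triggering count $\le n_0$. Hence all but finitely many episodes, in particular all but finitely many exploration episodes, have length exceeding $T_0$; as $T_0$ is arbitrary, exploration-episode lengths tend to infinity.

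The quantitative core is a uniform lower bound on the regret rate of any policy deployed during an exploration episode. By \Cref{definition_exploration_episode}(2) such a policy $\policy$ satisfies $\Pr_{\state}^{\policy}\parens{\exists t \ge 1 : \ogaps(\Pair_t) \ne 0} > 0$, so it plays a sub-optimal pair at a reachable state, and by \eqref{equation_linear_condition} it does so with probability at least $c$. On a recurrent model the chain induced by $\policy$ is irreducible, hence this pair is visited with positive long-run frequency and the stationary regret rate $\rho(\policy) := \sum_{\pair} \mu_\policy(\pair)\,\ogaps(\pair)$ is strictly positive. To make this uniform I would use that the policies obeying the ``$\policy(\action|\pair) \ge c$ or $= 0$'' constraint form a finite union of compact faces, on each of which $\mu_\policy$, and therefore $\rho(\policy)$, depend continuously and stay positive; compactness yields a single $\alpha > 0$ with $\rho(\policy) \ge \alpha$ for every exploration-episode policy, together with a uniform bound $C$ on the transient correction $\abs{\EE_{\state}^{\policy}[\sum_{t=1}^{T}\ogaps(\Pair_t)] - T\rho(\policy)}$. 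Combining the pieces: fixing $T$, for all large $i$ the $i$-th exploration episode has length exceeding $T$, so the window $[t_{k(i)}, t_{k(i)}+T]$ lies inside one sub-optimal episode governed by a single $\policy$, whence $\Reg(t_{k(i)}, t_{k(i)}+T) = \EE[\stochasticregret(t_{k(i)}, t_{k(i)}+T)] \ge \alpha T - C$; taking $\limsup_i$ gives $\RegExp(T) \ge \alpha T - C = \Omega(T)$, and the pathwise almost-sure statement follows identically by replacing the stationary average with the ergodic theorem along the divergent sub-optimal episodes. The main obstacle is exactly this uniformity: because exploration-episode policies vary continuously over $[c,1]$, the visit frequencies of sub-optimal pairs and the mixing correction $C$ must be bounded away from $0$ and $\infty$ simultaneously, which is where recurrence of the model and the compactness afforded by the ``$\ge c$'' floor of \eqref{equation_linear_condition} are essential, since without that floor the regret rate could vanish and the conclusion would fail.
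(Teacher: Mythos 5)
Your proposal is correct and follows essentially the same route as the paper's proof: infinitely many exploration episodes from the definition of $\models^+$ (after checking sub-linearly many episodes), divergence of episode lengths because $f \to \infty$, and then a lower bound $\Reg(t_{k(i)}, t_{k(i)}+T) \ge \alpha T - \beta$ obtained from the positive invariant-measure mass that any deployed sub-optimal policy places on a sub-optimal pair (via the Poisson equation / bounded bias correction), with the $c$-floor of \eqref{equation_linear_condition} and recurrence supplying the uniform $\alpha > 0$ and $\beta < \infty$. You are in fact slightly more explicit than the paper on the sub-linear episode count and on the compactness argument behind the uniformity of $\alpha$ and $\beta$.
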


    This result applies to \texttt{KLUCRL} and more generally to all algorithms relying on the doubling trick \eqref{equation_doubling_trick} to manage episodes, corresponding to  $f(n) = n \vee 1$.
    This includes \texttt{UCRL2} \cite{auer_near_optimal_2009}, \texttt{REGAL} \cite{bartlett_regal_2009}, \texttt{KLUCRL} \cite{filippi_optimism_2010}, \texttt{UCRL2B} \cite{fruit_improved_2020}, \texttt{SCAL} \cite{fruit_efcient_2018}, \texttt{UCRL3} \cite{bourel_tightening_2020}, \texttt{EBF} \cite{zhang_regret_2019} and also \texttt{PMEVI} \cite{boone_achieving_2024} (up to mild modifications of \eqref{equation_linear_condition} for a few of them).
    Therefore, \Cref{theorem_linear_regexp} pinpoints an issue: 
    The local regret of current optimistic algorithms is the worst possible, because the regret of exploration of these methods grows linearly. 
    In this paper, we will fix this problem without too many side-effects. 
    We alter these algorithms (focusing on \texttt{KLUCRL}) and achieve sub-linear regret of exploration without hurting the minimax regret guarantees.
    This is achieved by a small and cost-less modification of the episode stopping rule.

    \section{Logarithmic regret of exploration with the vanishing multiplicative condition}
    \label{section_vm_regret_guarantees}

    Our solution improves on \cite{boone_regret_2023}, where the regret of exploration guarantees are only proved for deterministic transition MDPs. 
    Their solution consists in stopping an episode if the current policy is no longer optimistically optimal \emph{enough}. This is done by introducing a function $\psi(t)$ and ending episode $k$ if $\gain^{\policy_{t_k}}(\models(t)) + \psi(t) < \optgain(\models(t))$. 
    This approach has the clear issue that one has to constantly  monitor the values of $\gain^{\policy_{t_k}}(\models(t))$ and $\optgain(\models(t))$. This has a high computational  cost.
    Despite these limitations, the main observation of \cite{boone_regret_2023} is key:
    if $\policy_{t_k}$ is sub-optimal, its optimistic gain $\gain^{\policy_{t_k}}(\models(t))$ decreases quickly over $\braces{t_k, \ldots, t}$ and otherwise, if $\policy_{t_k}$ is optimal, then $\gain^{\policy_{t_k}}(\models(t))$ rather behaves like a random walk.
    Essentializing their argument, it actually boils down to show that the behaviors of confidence regions $\rewards_\pair(t)$ and $\kernels_\pair(t)$ are very different at high and low visit counts of $\pair$. 
    At high visit counts, the evolution of confidence regions is slow and they mostly behave like random walks.
    At low visit counts, they \emph{shrink} quickly as the number of visits increases. We call these two distinct behaviors the \strong{shrinking-shaking effect}.
    It motivates a different and simpler approach than the one of \cite{boone_regret_2023}: Renew the episode when there is an increase of information.
    This is actually the idea of the \eqref{equation_doubling_trick}, but instead of asking for a visit count to double, we suggest to wait for a multiplicative increase with respect to a vanishing time-dependent factor, i.e.,
    \begin{equation}
    \tag{\texttt{VM}}
    \label{equation_vanishing_multiplicative}
        \visits_t(\State_t, \policy_{t_k}(\State_t))
        > 
        (1 + f(t_k)) 
        \max\braces*{
            1,
            \visits_{t_k}(\State_t, \policy_{t_k}(\State_t))
        }
    \end{equation}
    \noindent
    where $f : \N \to [0,1]$ is a non-increasing vanishing function of $t$. 
    The above condition will be referred to as the $f$-\strong{Vanishing Multiplicative condition}, or $f$-\eqref{equation_vanishing_multiplicative}, or even more simply \eqref{equation_vanishing_multiplicative}.
    Remark that \eqref{equation_doubling_trick} is also of the form \eqref{equation_vanishing_multiplicative} with $f \equiv 1$, except that this function is not vanishing.

    By changing \eqref{equation_doubling_trick} for \eqref{equation_vanishing_multiplicative}, we get the following range of regret guarantees for \texttt{KLUCRL}.

    \begin{blackblock}
    \begin{theorem}[Main result]
    \label{theorem_main}
        Let $f : \N \to [0, 1]$ and consider running \texttt{KLUCRL} with episodes managed by $f$-\eqref{equation_vanishing_multiplicative} with prior information $\models^0$ and let $\models_\diameter$ be the set of Markov decision processes with diameter less than $\diameter$. 
        We have:
        \begin{itemize}
            \item[1.] \emph{Minimax}: For $f = \Omega(t^{-1/2})$, $\displaystyle \sup_{\model' \in \models_\diameter} \Reg(T; \model') = \textstyle \OH(\diameter \State \sqrt{\Action T \log(T)})$; 
        \end{itemize}
        Moreover, if $\model$ satisfies \Cref{assumption_interior} and is explorative, we have:
        \begin{itemize}
            \item[2.] \emph{Model dependent}: If $f > 0$, then $\Reg(T; \model) = \OH(\log(T)\log\log(T))$;
            \vspace{-1em}
            \item[3.] \emph{Regret of exploration}: If $f(t) = \oh\parens*{\frac 1{\log(t)}}$, then $\RegExp(T; \model) = \OH(\log(T))$.

        \end{itemize}
    \end{theorem}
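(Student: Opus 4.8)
The plan is to prove the three claims on the high-probability event $\event := \braces{\forall t \ge 1 : \model \in \models(t)}$, which by the calibration of \eqref{equation_confidence_region} satisfies $\Pr(\event^c) = \OH(1/T)$ (the tail bound $2\abs{\pairs}T^{-1}$ recalled after \eqref{equation_confidence_region}); since each $\ogaps(\Pair_t) = \OH(\vecspan{\optbias}) = \OH(\diameter)$, the event $\event^c$ contributes only $\OH(\diameter)$ to every expected regret below and is discarded. On $\event$, optimism yields $\gain^{\policy_{t_k}}(\models(t_k)) \ge \optgain$ for every episode. The analysis then rests on a quantitative \emph{shrinking--shaking} dichotomy: writing the KL-radii of $\rewards_\pair(t)$ and $\kernels_\pair(t)$ as $\OH(\sqrt{\abs{\states}\log(t)/\visits_\pair(t)})$, one shows that while $\visits_\pair(t)$ is small these sets contract at a definite multiplicative rate whenever $\pair$ is visited, whereas once $\visits_\pair(t)$ is large they move only diffusively. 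The \eqref{equation_vanishing_multiplicative} rule is tuned to this dichotomy, since an episode ends exactly when a played pair has gained a $(1+f(t_k))$ fraction of information, so episode boundaries track genuine information gains rather than elapsed time.

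For the minimax bound I would follow the standard telescoping decomposition of the \texttt{KLUCRL} regret into a \emph{deviation term} $\sum_t \OH(\diameter \cdot \mathrm{width}_{t_k}(\Pair_t))$ and an \emph{episode--boundary term}. Because \eqref{equation_vanishing_multiplicative} forces $\visits_t(\pair) \le (1+f(t_k))\visits_{t_k}(\pair) \le 2\visits_{t_k}(\pair)$ throughout episode $k$, the width used for the decisions at the start of the episode stays within a constant factor of the true width at the current time, so the deviation term is bounded as in \texttt{KLUCRL} by $\sum_\pair \OH\parens{\diameter\sqrt{\abs{\states}\log T}\cdot\sqrt{\visits_T(\pair)}}$; Cauchy--Schwarz over $\pair \in \pairs$ together with $\abs{\pairs}\le\State\Action$ gives the claimed $\OH(\diameter\State\sqrt{\Action T\log T})$. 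For the boundary term, a potential argument on $\sum_\pair \log\visits_T(\pair) \le \abs{\pairs}\log T$ shows $\sum_k \log(1+f(t_k)) = \OH(\abs{\pairs}\log T)$, and since consecutive optimistic biases differ by $\OH(f(t_k)\diameter)$ (a perturbation estimate for \texttt{EVI} under a $(1+f(t_k))$-change of counts), this term sums to $\OH(\diameter\abs{\pairs}\log T)$, which is lower order. The hypothesis $f = \Omega(t^{-1/2})$ is exactly what keeps this potential bound, and the induced episode count, from degrading the leading $\sqrt T$ order.

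For the model-dependent and regret-of-exploration bounds I would use the explorative and \Cref{assumption_interior} structure together with the shrinking half of the dichotomy. First, regret accrues only inside exploration episodes, since $\ogaps(\Pair_t)=0$ elsewhere. For each $\pair\in\suboptimalpairs$, optimism shows a gain-optimal policy through $\pair$ can be deployed only while $\pair$'s width still exceeds its Bellman gap, i.e.\ while $\visits_t(\pair) = \OH(\abs{\states}\log t/\ogaps(\pair)^2)$; summing $\ogaps(\pair)$ over these visits yields the $\OH(\log T)$ gap bound, the residual $\log\log T$ factor coming from the union over the $\OH(\log T)$ episodes that \eqref{equation_vanishing_multiplicative} produces. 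For the regret of exploration, I would exploit that sub-optimal pairs have small counts, so when an exploration episode at $t_k$ deploys a policy using such a pair the threshold $(1+f(t_k))\max(1,\visits_{t_k}(\pair))$ is crossed after only $\OH(1)$ further visits, hence after $\OH(\diameter)$ steps; each exploration episode is thus short, and once it ends the contracted region forces \texttt{EVI} back to a genuinely optimal policy. The condition $f(t) = \oh(1/\log t)$ guarantees the ensuing optimal episode is long enough that no new exploration is triggered for a controlled stretch, so that in any window $\brackets{t_{k(i)}, t_{k(i)}+T}$ the exploration episodes are short and sparse and their total regret telescopes to $\OH(\log T)$, giving the bound of \Cref{definition_regret_of_exploration}.

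The main obstacle is the quantitative shrinking--shaking estimate underpinning all three parts: proving, for non-degenerate communicating (not merely ergodic) MDPs, that visiting a sub-optimal pair drives the optimistic gain of any policy through it below $\optgain$ after a controlled number of visits, and conversely that the optimistic gain of a truly optimal policy stays above $\optgain$ for a comparably long time. This is a perturbation analysis of the optimistic Bellman operator (the \texttt{EVI} fixed point) as \eqref{equation_confidence_region} contracts, which must be coupled to a concentration argument converting ``number of visits'' into ``amount of contraction''. \Cref{assumption_interior} is precisely what makes this perturbation well-conditioned --- the true kernel sits in the relative interior of the plausible kernels, so small KL-balls induce small gain perturbations with no boundary degeneracies --- and handling multichain optimistic policies, the reason \Cref{definition_exploration_time} is phrased with a from-the-current-state optimality condition, is the delicate point distinguishing this argument from the purely ergodic analysis of \cite{boone_regret_2023}.
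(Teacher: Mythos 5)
Your high-level picture (optimism plus a shrinking--shaking dichotomy of the confidence sets, with \Cref{assumption_interior} making the gain perturbation well-conditioned) matches the paper's, but two of your three arguments rest on claims that fail. For the minimax bound, you dispose of the episode-boundary term by asserting that consecutive optimistic biases differ by $\OH(f(t_k)\diameter)$ under a $(1+f(t_k))$-change of counts. There is no such continuity: at an episode boundary \texttt{EVI} may switch to a completely different policy, so $\tilde\bias_{t_{k+1}}$ need not be close to $\tilde\bias_{t_k}$ and the telescoping term $\tilde\bias_{t_k}(\State_{t_{k+1}})-\tilde\bias_{t_{k+1}}(\State_{t_{k+1}})$ can only be bounded by $\diameter$ per episode. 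The paper accepts this $\diameter\cdot\abs{\episodes(T)}$ cost (\Cref{lemma_minimax_regret_episodes}) and controls it by bounding the number of episodes under $f$-\eqref{equation_vanishing_multiplicative} by $\abs{\pairs}\log(T)/\log(1+f(T))$ (\Cref{lemma_number_episodes}); this is exactly where the hypothesis $f=\Omega(t^{-1/2})$ enters, and your potential bound $\sum_k\log(1+f(t_k))=\OH(\abs{\pairs}\log T)$ does not yield $\OH(\abs{\pairs}\log T)$ episodes — dividing by $\log(1+f(T))$ gives polynomially many. The same error propagates into your model-dependent argument: you attribute the $\log\log T$ factor to ``the union over the $\OH(\log T)$ episodes that \eqref{equation_vanishing_multiplicative} produces,'' but for arbitrary $f>0$ the episode count is \emph{not} logarithmic (the paper stresses it can be nearly linear); the paper instead obtains $\OH(\log T\log\log T)$ by applying its coherence lemma on dyadic blocks $[2^m,2^{m+1})$ with budget $\varphi(2^m)=\OH(\log 2^m)$ (\Cref{theorem_general_model_dependent_regret,theorem_model_dependent_regret}).

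The deeper missing ingredient, which affects both assertions 2 and 3, is the coherence machinery (\Cref{definition_coherence}, \Cref{lemma_coherence}). In a communicating but non-ergodic model, knowing that a deployed sub-optimal policy must be using a sub-sampled \emph{reachable} pair does not by itself bound the regret: one must show that this pair is actually visited with probability bounded away from zero before the episode ends (this is where weakly regenerative episodes are used), control the number and total duration of sub-optimal segments via a sub-exponential tail argument, and separately bound the regret incurred on \emph{optimal} segments before the optimal recurrent class is reached. Your sketch asserts that each exploration episode lasts $\OH(\diameter)$ steps and that the regret ``telescopes to $\OH(\log T)$,'' but gives no mechanism for converting ``a sub-sampled pair exists somewhere reachable'' into ``the episode ends quickly with high probability,'' which is precisely the content of \Cref{lemma_coherence} and the partitioning into optimal/sub-optimal segments in \Cref{appendix_coherence}. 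Relatedly, the role of $f(t)=\oh(1/\log t)$ is not to lengthen good episodes but to ensure, via \eqref{equation_vanishing_ends} and the logarithmic visit rates of \Cref{lemma_asymptotic_regime}, that a sub-sampled pair can be visited at most once more before \eqref{equation_vanishing_multiplicative} triggers, so that bad episodes terminate essentially immediately.
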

    \end{blackblock}

    Note that the model dependent regret guarantees and the regret of exploration guarantees only hold if $\model$ satisfies a structural assumption with respect to the ambient set of models, $\models^0$. 

    \begin{assumption}[Interior assumption]
    \label{assumption_interior}
        For $\pair \in \pairs$, $\reward(\pair)$ and $\kernel(\pair)$ are in the interior of $\rewards_\pair^0$ and $\kernels_\pair^0$ respectively, i.e., $\mathrm{supp}(\reward(\pair)) = \braces{0, 1}$ and $\mathrm{supp}(\kernel(\pair)) \supseteq \mathrm{supp}(\kernel'(\pair))$ for all $\kernel'(\pair) \in \kernels_\pair^0$. 
    \end{assumption}

    \paragraph{Comment 1}
    The minimax regret guarantees (\textit{1.}) given in \Cref{theorem_main} are the same as for the original \eqref{equation_doubling_trick} version of \texttt{KLUCRL}.
    The model dependent guarantees (\textit{2.}) only suffer from an additional $\log \log(T)$ factor, while the regret of exploration guarantees are improved from $\Omega(T)$ to $\OH(\log(T))$.
    Moreover, although \Cref{theorem_main} is stated specifically for \texttt{KLUCRL}, it can be generalized to other types of confidence regions such as $\ell_1$, $\ell_2$ or Bernstein-type., inducing similar results for other algorithms such as \texttt{UCRL}, \texttt{UCRL2},  \texttt{UCRL3}, \texttt{EBF} or \texttt{PMEVI}.

    \paragraph{Comment 2}
    When $\models^0 = \product_{\pair \in \pairs} ([0, 1] \times \probabilities(\states))$ (no prior information), $\model$ satisfies \Cref{assumption_interior} if, and only if $\model$ is an ergodic model with fully-supported transition kernels. 
    However, non-ergodic models can also be covered when prior information on the support of $\kernel$ is available.

    \medskip

    \paragraph{Outline of the proof}
    The minimax guarantees directly follow from a straight-forward upper bound of the number of episodes under $f$-\eqref{equation_vanishing_multiplicative} and are detailed in \Cref{appendix_minimax}.
    The model dependent guarantees are proved in \Cref{appendix_model_dependent}.
    In the remaining of the paper, we focus on the analysis of the regret of exploration.
    The proof is challenging and requires an in-depth understanding of the behavior of optimistic algorithms in the long run. 

    First, we establish a noteworthy difference between the visit rates of optimal and non-optimal pairs in \Cref{section_macroscopic_coherence}: the first are visited linearly and the others at most logarithmically.
    Following this observation, we explain in \Cref{section_shrinking_shaking} how these distinct rates imply drastically different behaviors of the associated confidence regions, referred to as the \strong{shrinking-shaking} dichotomy. 
    In turn, it leads to a general conceptual property that we call \strong{coherence} in \Cref{section_coherence}.  
    Lastly, we show that regret of exploration is logarithmic because of coherence, see \Cref{section_establishing_coherence}.

    \subsection{Visit rates of optimal and non-optimal pairs}
    \label{section_macroscopic_coherence}

    The result below describes the almost-sure asymptotic regime of versions of \texttt{KLUCRL} managing episodes with \eqref{equation_vanishing_multiplicative}. 
    Up to the non-degeneracy of the underlying model, the visit counts can be split into two regimes: $N_z(t)$ grows linearly with $t$ for $z \in \pairs^{**}(M)$ while $N_z(t)$ grows sub-logarithmically for $z \notin \pairs^{**}(M)$ (including $\optpairs(\model) \setminus \pairs^*(M)$ in particular).

    \begin{lemma}[Almost-sure asymptotic regime]
    \label{lemma_asymptotic_regime}
        Let $\model \in \models$ be a non-degenerate model satisfying \Cref{assumption_interior}.
        Assume that \texttt{KLUCRL} is run while managing episodes with $f$-\eqref{equation_vanishing_multiplicative} with arbitrary $f > 0$.
        There exists $\lambda > 0$ such that:
        \begin{align*}
            \forall z \notin \optpairs(M), \quad
            & \Pr^M \parens*{
                \exists T, \forall t \ge T :
                N_z(t) < \lambda \log(t)
            } = 1, 
            \text{~and}
            \\
            \forall z \in \optpairs(M), \quad
            & \Pr^M \parens*{
                \exists T, \forall t \ge T :
                N_z(t) > \tfrac 1\lambda t
            } = 1.
        \end{align*}
    \end{lemma}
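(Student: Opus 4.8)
The plan is to work throughout on the almost-sure event on which the confidence region is eventually valid, and then to classify pairs according to their Bellman gap. Let $\mathcal{G} := \braces{\exists T_0, \forall t \ge T_0 : M \in \models(t)}$. Since $\Pr(\braces{\exists t \ge T : M \notin \models(t)}) \le 2\abs{\pairs}T^{-1} \to 0$ and the events $\braces{\exists t \ge T : M \notin \models(t)}$ are non-increasing in $T$, the complement of $\mathcal G$ is $\braces{M \notin \models(t) \text{ infinitely often}}$, so $\Pr(\mathcal G) = 1$. On $\mathcal G$ and for $t_k \ge T_0$, optimism gives $\gain^{\policy_{t_k}}(\models(t_k)) = \gain(\models(t_k)) \ge \optgain(M)$, and the optimistic model $M'_k \in \models(t_k)$ realizing the supremum in \eqref{equation_optimistic_gain} lies, on every pair $z$, within a KL-ball of $M$ of radius $\OH(\abs{\states}\log(2et_k)/N_z(t_k))$. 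The first reduction is a growth cap coming directly from \eqref{equation_vanishing_multiplicative}: an episode ends at the first pair whose count $(1+f(t_k))$-multiplies, so every pair obeys $N_z(t_{k+1}) \le (1+f(t_k))\max\braces{1, N_z(t_k)} + 1 \le 2\max\braces{1, N_z(t_k)} + 1$. As policies are deterministic within an episode, $N_z$ grows only during episodes in which $\policy_{t_k}$ plays $z$; it therefore suffices to prove that, on $\mathcal G$ and for $k$ large, $\policy_{t_k}$ plays a non-optimal pair $z \notin \optpairs(M)$ only while $N_z(t_k) < \frac{\lambda}{2}\log(t_k)$, since the cap then yields $N_z(t) < \lambda\log(t)$ eventually.

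Second, I would establish this selection bound by an optimism-rejection argument, splitting on the gap. If $\ogaps(z;M) > 0$, then a policy making $z$ recurrent in $M'_k$ loses, relative to $M$, at least a stationary-frequency-weighted multiple of $\ogaps(z;M)$ of gain (the performance-difference identity for unichain policies). When all recurrent pairs of $\policy_{t_k}$ have counts $\gtrsim \log(t_k)$, a simulation lemma bounds $\abs{\gain^{\policy_{t_k}}(M'_k) - \gain^{\policy_{t_k}}(M)}$ by a quantity controlled by $\vecspan{\optbias}$ and the KL-radii, which contradicts $\gain^{\policy_{t_k}}(M'_k) \ge \optgain(M) \ge \gain^{\policy_{t_k}}(M) + c\,\ogaps(z;M)$ once $\lambda$ is large. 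If instead $z \in \wkoptpairs(M)\setminus\optpairs(M)$, so $\ogaps(z;M) = 0$, the first-order argument is vacuous and I would run its second-order analogue: by non-degeneracy (\Cref{definition_non_degeneracy}) the Bellman-optimal policy $\optpolicy$ is unique and unichain, hence any policy recurrent at $z$ is strictly bias-suboptimal; tight confidence forces \texttt{EVI} to return a policy whose gain and bias both approach the optimal ones, so such a policy is eventually not selected, and $z$ is thereafter visited only along transient excursions, whose cumulated length is itself logarithmic because they occur in the (rare) exploration episodes.

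Third, the linear lower bound for $z \in \optpairs(M)$ I would deduce from the upper bound together with the ergodic structure granted by \Cref{assumption_interior}. Summing the logarithmic bounds over the finitely many non-optimal pairs gives that the fraction of time spent outside $\optpairs(M)$ is $\OH(\abs{\pairs}\log(t)/t) = \oh(1)$, so a $1-\oh(1)$ fraction of steps is played on $\optpairs(M)$; on $\mathcal G$ the empirical model restricted to these linearly-visited pairs converges to $M$, and the deployed policy therefore coincides with $\optpolicy$ outside a vanishing fraction of time. The recurrent class of $\optpolicy$ is then visited at the positive frequencies prescribed by its (unique, by \Cref{assumption_interior}) invariant measure, and the ergodic theorem gives $N_z(t)/t \to \invariantmeasure_{\optpolicy}(z) > 0$ for each $z \in \optpairs(M)$. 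Choosing $\lambda$ larger than both the constant of the upper bound and $\max_z \invariantmeasure_{\optpolicy}(z)^{-1}$ completes the proof.

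The hard part will be the optimism-rejection step, specifically the coupling between sub-optimal pairs: the simulation lemma controls $\gain^{\policy_{t_k}}(M'_k)$ through all recurrent pairs of $\policy_{t_k}$ at once, so one cannot reject a single $z$ in isolation when $\policy_{t_k}$ is recurrent at several poorly-estimated pairs simultaneously. Resolving this requires treating the whole set of non-optimal pairs jointly and leaning on the shrinking–shaking dichotomy and the coherence property developed in \Cref{section_shrinking_shaking,section_coherence}; the zero-gap case is the most delicate, as it carries no first-order regret signal and must be handled entirely at the level of the bias. Finally, upgrading the high-probability rejection to an almost-sure statement calls for a Borel–Cantelli argument summed over episodes, which is legitimate because \eqref{equation_vanishing_multiplicative} produces only sub-polynomially many episodes, making the union bound summable.
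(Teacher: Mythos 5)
Your skeleton matches the paper's at the top level (work on the asymptotically almost-sure confidence event, show sub-optimal policies are only selected while some reachable pair is sub-sampled, then deduce logarithmic visits off $\optpairs(\model)$ and linear visits on it), but three steps as written would not go through. First, the conversion from ``whenever the deployed policy is sub-optimal from the current state, some reachable pair has $N_z(t)\lesssim\log t$'' to ``each $z\in\suboptimalpairs(\model)$ satisfies $N_z(t)=\OH(\log t)$ almost surely'' is the actual content of the lemma, and your proposal leaves it at ``lean on coherence.'' The paper makes this precise: STEP~1 of its proof establishes exactly a $((F_t),\lfloor\log T\rfloor,T,\varphi)$-coherence property with $\varphi=\OH(\log T)$ via the gain-deviation inequality and Pinsker, and then invokes \Cref{lemma_coherence} (whose proof is itself a long potential-function/martingale argument over alternating optimal and sub-optimal segments) to get $\Pr(\Reg(\log T,T)\ge C_1+C_2\log T \text{ and } \bigcap F_t)\le T^{-2}$, which is summable in $T$ and yields the a.s.\ bound. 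Relatedly, your closing justification — Borel–Cantelli over episodes because \eqref{equation_vanishing_multiplicative} ``produces only sub-polynomially many episodes'' — is false in the generality of the lemma: for arbitrary $f>0$ the bound of \Cref{lemma_number_episodes} degenerates and the number of episodes can be nearly linear in $T$ (the paper points this out explicitly). The summability has to come from the $T^{-2}$ tail of the coherence lemma applied at all horizons $T$, not from counting episodes.

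Second, your treatment of the zero-gap pairs $z\in\wkoptpairs(\model)\setminus\optpairs(\model)$ relies on two claims that do not hold: \texttt{EVI} is a first-order (gain-optimistic) procedure and gives no control on the bias of the returned policy, so ``tight confidence forces \texttt{EVI} to return a policy whose gain \emph{and bias} both approach the optimal ones'' is unsupported; and ``transient excursions have logarithmic cumulated length because they occur in the rare exploration episodes'' assumes the conclusion. The paper's mechanism here is different and purely first-order: with the reward function $f(z)=\indicator{z\notin\optpairs(\model)}$ and the unique Bellman-optimal policy, the Poisson equation gives $\sum_{z\notin\optpairs(\model)}N_z(T)\le H^f(1+\text{martingale}+H^f\sum_{z\in\suboptimalpairs(\model)}N_z(T))$, i.e.\ every visit to a transient-but-weakly-optimal pair is charged against a visit to a strictly sub-optimal pair, and the already-proved logarithmic bound on $\suboptimalpairs(\model)$ plus a self-bounding inequality $n\le\alpha+\beta\sqrt{n}$ finishes the step. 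The linear lower bound on $\optpairs(\model)$ is then obtained by the same Poisson-equation device with $f(z)=\indicator{z=z_0}$ rather than by an appeal to the ergodic theorem, which does not directly apply to the non-stationary process you describe. You correctly identified the coupling between simultaneously under-sampled pairs as the hard point, but the missing ingredient is the coherence machinery of \Cref{lemma_coherence} together with the charging argument for zero-gap pairs, not a bias-level refinement of optimism.
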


    The result holds in particular for \eqref{equation_doubling_trick}.
    This is not much of a surprise, since \texttt{KLUCRL} is known to have logarithmic model dependent regret.
    In contrast, this result is remarkable for \eqref{equation_vanishing_multiplicative} because the number of episodes can arbitrarily greater than logarithmic. 

    Note that \eqref{equation_doubling_trick} and \eqref{equation_vanishing_multiplicative} differ in the amount of time a sub-sampled pair can be visited during an episode. 
    Indeed, for $z \in \pairs$ with $\Pr^M(\exists T, \forall t \ge T : N_z(t) < \lambda \log(t)) = 1$, we have
    \begin{equation}
    \label{equation_vanishing_ends}
        N_z(t_{k+1}) 
        \le \floor{\parens*{1 + f(t_k)} N_z(t_k)} + 1
        = N_z(t_k) + 1 + \floor{\lambda f(t_k) \log(t_k)}.
    \end{equation}
    For $f(t) = \oh\parens*{\frac 1{\log(t)}}$, we have $\floor{\lambda f(t_k) \log(t_k)} = 0$ provided that $t_k$ is large enough.
    So, following \eqref{equation_vanishing_ends}, sub-sampled pairs are visited at most once per episode in the long run.
    So, under \eqref{equation_vanishing_multiplicative}, \texttt{KLUCRL} almost instantly refreshes its policy when sub-optimal pairs are visited. 

    \subsection{The shrinking-shaking dichotomy in the behavior of confidence regions}
    \label{section_shrinking_shaking}

    The shrinking-shaking effect concerns the way confidence regions evolve over time under low (shrinking) and high (shaking) amounts of information.
    This is illustrated in \Cref{figure_shrinking_shaking}.

    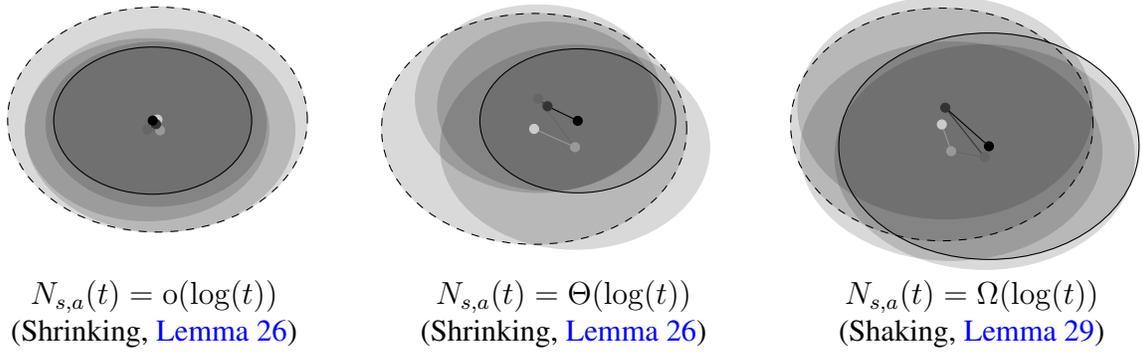
\begin{figure}[h]
        \centering
        \begin{tikzpicture}
            \begin{scope}[scale=1]
                \draw[fill=black, opacity=0.15] (0.045,0.1) ellipse(1.992 and 1.497);
                \draw[dashed] (0.045,0.1) ellipse(1.992 and 1.497);
                \draw[fill=black, opacity=0.15] (0.08,-0.044) ellipse(1.792 and 1.349);
                \draw[fill=black, opacity=0.15] (-0.093,-0.035) ellipse(1.618 and 1.212);
                \draw[fill=black, opacity=0.15] (0.027,0.039) ellipse(1.464 and 1.092);
                \draw[fill=black, opacity=0.15] (-0.018,0.086) ellipse(1.316 and 0.979);
                \draw (-0.018,0.086) ellipse(1.316 and 0.979);
                \draw[color=black!40] (0.045,0.1) to (0.08,-0.044);
                \draw[color=black!60] (0.08,-0.044) to (-0.093,-0.035);
                \draw[color=black!80] (-0.093,-0.035) to (0.027,0.039);
                \draw[color=black!100] (0.027,0.039) to (-0.018,0.086);
                \draw[fill=black!20, color=black!20] (0.045, 0.1) circle(0.06);
                \draw[fill=black!40, color=black!40] (0.08, -0.044) circle(0.06);
                \draw[fill=black!60, color=black!60] (-0.093, -0.035) circle(0.06);
                \draw[fill=black!80, color=black!80] (0.027, 0.039) circle(0.06);
                \draw[fill=black!100, color=black!100] (-0.018, 0.086) circle(0.06);
            \end{scope}
            \node at (0, -2.25) {$N_{s,a}(t)=\oh(\log(t))$};
            \node at (0, -2.75) {(Shrinking, \Cref{lemma_shrinking})};
        \end{tikzpicture}
        \hfill
        \begin{tikzpicture}
            \begin{scope}[scale=1]
                \draw[fill=black, opacity=0.15] (-0.392,-0.024) ellipse(2.028 and 1.536);
                \draw[dashed] (-0.392,-0.024) ellipse(2.028 and 1.536);
                \draw[fill=black, opacity=0.15] (0.152,-0.267) ellipse(1.79 and 1.359);
                \draw[fill=black, opacity=0.15] (-0.339,0.378) ellipse(1.631 and 1.248);
                \draw[fill=black, opacity=0.15] (-0.214,0.277) ellipse(1.46 and 1.12);
                \draw[fill=black, opacity=0.15] (0.188,0.083) ellipse(1.301 and 0.958);
                \draw (0.188,0.083) ellipse(1.301 and 0.958);
                \draw[color=black!40] (-0.392,-0.024) to (0.152,-0.267);
                \draw[color=black!60] (0.152,-0.267) to (-0.339,0.378);
                \draw[color=black!80] (-0.339,0.378) to (-0.214,0.277);
                \draw[color=black!100] (-0.214,0.277) to (0.188,0.083);
                \draw[fill=black!20, color=black!20] (-0.392, -0.024) circle(0.06);
                \draw[fill=black!40, color=black!40] (0.152, -0.267) circle(0.06);
                \draw[fill=black!60, color=black!60] (-0.339, 0.378) circle(0.06);
                \draw[fill=black!80, color=black!80] (-0.214, 0.277) circle(0.06);
                \draw[fill=black!100, color=black!100] (0.188, 0.083) circle(0.06);
            \end{scope}
            \node at (0, -2.25) {$N_{s,a}(t)=\Theta(\log(t))$};
            \node at (0, -2.75) {(Shrinking, \Cref{lemma_shrinking})};
        \end{tikzpicture}
        \hfill
        \begin{tikzpicture}
            \begin{scope}[scale=1]
                \draw[fill=black, opacity=0.15] (-0.398,0.035) ellipse(2.01 and 1.548);
                \draw[dashed] (-0.398,0.035) ellipse(2.01 and 1.548);
                \draw[fill=black, opacity=0.15] (-0.274,-0.321) ellipse(2.004 and 1.453);
                \draw[fill=black, opacity=0.15] (0.171,-0.4) ellipse(1.99 and 1.494);
                \draw[fill=black, opacity=0.15] (-0.352,0.258) ellipse(1.96 and 1.48);
                \draw[fill=black, opacity=0.15] (0.228,-0.255) ellipse(1.996 and 1.506);
                \draw (0.228,-0.255) ellipse(1.996 and 1.506);
                \draw[color=black!40] (-0.398,0.035) to (-0.274,-0.321);
                \draw[color=black!60] (-0.274,-0.321) to (0.171,-0.4);
                \draw[color=black!80] (0.171,-0.4) to (-0.352,0.258);
                \draw[color=black!100] (-0.352,0.258) to (0.228,-0.255);
                \draw[fill=black!20, color=black!20] (-0.398, 0.035) circle(0.06);
                \draw[fill=black!40, color=black!40] (-0.274, -0.321) circle(0.06);
                \draw[fill=black!60, color=black!60] (0.171, -0.4) circle(0.06);
                \draw[fill=black!80, color=black!80] (-0.352, 0.258) circle(0.06);
                \draw[fill=black!100, color=black!100] (0.228, -0.255) circle(0.06);
            \end{scope}

            \node at (0, -2.25) {$N_{s,a}(t) = \Omega(\log(t))$};
            \node at (0, -2.75) {(Shaking, \Cref{lemma_shaking})};
        \end{tikzpicture}
        \caption{
            \label{figure_shrinking_shaking}
            An artist view of the shrinking/shaking behavior of the $\mathcal{Q}_{s,a}(t)$ as the number of new samples $N_{s,a}(t') - N_{s,a}(t) \ll N_{s,a}(t)$ increases (from dashed to solid line). 
        }
        \vspace{-2em}
    \end{figure}

    \begin{informalproperty}[{Shrinking-Shaking} effect]
    \label{lemma_informal_shrinking_shaking}
        Let $(t_{k(i)})$ be the enumeration of exploration episodes.
        Fix $T \ge 1$ and denote $\kerrews_\pair(t) := \rewards_\pair(t) \times \kernels_\pair(t)$.
        With high probability and uniformly over $t \in \braces{t_{k(i)}, \ldots, t_{k(i)} + T}$, we have
        \begin{equation*}
            \visits_\pair (t) 
            > 
            \visits_\pair (t_{k(i)})
            + 
            {
                \color{red!80!black}
                \indicator{\pair \notin \optpairs(\model)} 
            }
            C \log(T)
            \implies
            \kerrews_\pair (t) \subseteq \kerrews_\pair (t_{k(i)-1})
            .
        \end{equation*}
    \end{informalproperty}
    
    In \Cref{lemma_informal_shrinking_shaking}, which is informal and slightly wrong, \textcolor{red!80!black}{$\indicator{\pair \notin \optpairs(\model)}$} incarnates the shrinking-shaking dichotomy. 
    The rigorous treatment of this dichotomy is tedious and calculatory, while the idea behind the phenomenon is quite intuitive.
    We postpone the formal, precise and extensive description of the shrinking-shaking effect to \Cref{appendix_regret_of_exploration}.

    Instead, we provide here a heuristic derivation of what the shrinking-shaking phenomenon is and how it is proved. 
    As a matter of fact, the phenomenon already appears in the simple setting of bandits with Gaussian rewards. 
    Let us introduce specialized notations for that purpose. 
    Let $(X_i)$ be a sequence of i.i.d.~random variables of distribution $\mathrm{N}(\mu, \sigma^2)$ and let $\widehat{\mu}(n) := \frac 1n \sum_{k=1}^n X_k$ the empirical average after $n$ samples. 
    The typical way to construct a confidence region for $\mu$ follows from Azuma-Hoeffding's inequality, with
    $
        \mathcal{I}(n) 
        :=
        \braces{
            \widetilde{\mu} \in \R
            :
            \abs*{\widetilde{\mu} - \widehat{\mu}(n)}
            \le
            \sigma \sqrt{{\log(1/\delta)}/{n}}
        }
        .
    $
    The supremum of $\mathcal{I}(n)$ is the largest plausible value for $\mu$, and is given by:
    \begin{equation}
    \label{equation_ucb_index}
        \widetilde{\mu}(n) 
        := 
        \widehat{\mu}(n) + \sigma \sqrt{\frac{\log(1/\delta)}n}
        \equiv
        \widehat{\mu}(n) + \sigma \sqrt{\frac{\log(t)}n}
        .
    \end{equation}
    Regarding our setting, $\widetilde{\mu}(n)$ is the analogue of $\sup \rewards_\pair(t)$, i.e., the highest plausible reward for a given pair at a given time. 
    The quantity $\log(1/\delta)$ can be seen as $\log(t)$ as our confidence regions are tuned for the confidence threshold at $\delta = \frac 1t$.
    The \strong{shrinking-shaking effect} is a general observation about the evolution of $\widetilde{\mu}(n + \dd n)$ after a few samples $\dd n$. 
    It starts with the first order Taylor expansion of $\widetilde{\mu}(n + \dd n)$, giving
    \begin{equation}
    \label{equation_ucb_update}
        \widetilde{\mu}(n + \dd n)
        =
        \widetilde{\mu}(n) 
        + 
        \underbrace{
            \widehat{\mu}(n + \dd n) - \widehat{\mu}(n)
        }_{\textsc{Empirical update}}
        - 
        \underbrace{
            \frac{\sqrt{\log(t)} \dd n}{2 n \sqrt{n}}
        }_{\textsc{Optimism drop}}
        .
    \end{equation}
    The update in the optimistic estimate is the sum of two quantities: the empirical update and the optimism drop. 
    The empirical update is the change of $\widehat{\mu}(n)$ and is roughly noise. 
    Indeed, from the law of large numbers, we have $\widehat{\mu}(n) \approx \mu$ and $\widehat{\mu}(n + \dd n) - \widehat{\mu}(n) \approx \frac 1n \sum_{k=n}^{n + \dd n}(X_k - \mu)$, hence is the sum of $\dd n$ i.i.d.~centered random variables. 
    By Azuma-Hoeffding's inequality, we find that $\widehat{\mu}(n + \dd n) - \widehat{\mu}(n) \approx \frac 1n \sigma \sqrt{\log(1/\delta')} \sqrt{\dd n}$ with probability $1 - \delta'$.

    Taking $n = \OH(\log(t))$ leads to the \strong{shrinking effect}: $\widetilde{\mu}(n + \dd n)$ tends to decrease after a few additional samples $\dd n$.
    Indeed, to have a decrease in the optimistic estimate in \eqref{equation_ucb_update} from $n$ to $n + \dd n$, we need
    \begin{equation}
    \label{equation_ucb_condition}
        \sigma \sqrt{\log\parens*{\frac 1{\delta'}}}
        \sqrt{\dd n}
        \le 
        \frac 12 \sqrt{\frac{\log(t)}n} \dd n
        .
    \end{equation}
    When $n = \OH(\log(t))$, \eqref{equation_ucb_condition} states that as soon as $\dd n = \Omega(\sigma^2 \log(\frac 1{\delta'}))$, the noise due to the empirical updates becomes negligible with respect to the optimism drop.
    Then, the optimistic estimate $\widetilde{\mu}(n + \dd n)$ starts to decrease with quantifiable speed. 
    
    Taking $n = \Omega(t)$, we get the opposite; This is the \strong{shaking effect}.
    More specifically, the optimism drop kills the noise of the empirical update.
    Indeed, by setting $n = t$ in \eqref{equation_ucb_update}, the optimism drop is of order $\frac 1n (\frac{\log(t)}t)^{1/2} \dd n$.
    This quantity is eventually negligible in front of the noise, that is of order $\frac 1n \sqrt{\dd n}$ when $t \to \infty$.
    Hence the shaking effect.

    \paragraph{What it means for \texttt{KLUCRL}}
    When running \texttt{KLUCRL}, we have argued in \Cref{section_macroscopic_coherence} that optimal pairs satisfy $\visits_\pair (t) = \Omega(t)$ while non-optimal pairs satisfy $\visits_\pair (t) = \OH(\log(t))$. 
    When \texttt{KLUCRL} deploys a sub-optimal policy $\policy$, this policy uses non-optimal pairs, for which the confidence regions $\rewards_\pair(t)$ and $\kernels_\pair(t)$ tend to shrink while all the others are negligibly shaking. 
    When iterating $\policy$, these non-optimal pairs are eventually visited enough, $\rewards_\pair(t)$ and $\kernels_\pair(t)$ eventually shrink, so the optimistic gain of $\policy$ decreases.
    Hence, $\policy$ won't be used as an optimistic policy anymore if the episode is updated---and it is updated quickly thanks to \eqref{equation_vanishing_multiplicative}, see \eqref{equation_vanishing_ends}.

    \subsection{A central conceptual property: coherence}
    \label{ssec:coherence}
    \label{section_coherence}

    The point of the shrinking-shaking effect is to establish a \strong{coherence property} defined below. 
    
    \begin{definition}[Coherence]
        \label{definition_coherence}
        We say that an algorithm is \strong{$\boldsymbol{(F, \tau, T, \varphi)}$-coherent} if $F \equiv (F_t : t \ge 1)$ is an adapted sequence of events, $\tau$ a stopping time, $T \ge 1$ is a scalar and $\varphi : \N \to [0, \infty)$ is a function such that, for all $t \in \braces{\tau, \ldots, \tau + T - 1}$, 
        \begin{equation*}
            F_t 
            \subseteq 
            \braces*{
                g^{\pi_t}(S_t) < g^*(\State_t)
                \Rightarrow
                \exists \pair \equiv (\state,\action) \in \Reach(\policy_t, \State_t) 
                % \setminus \pairs^*(\model)
                :
                \brackets*{
                    \hspace{-.3em}
                    \begin{array}{c}
                        N_z(t) - N_z(\tau) \le \varphi(\tau)
                        \\
                        \text{and~}
                        \gain^{\policy_t}(\state) < \optgain(\state)
                    \end{array}
                    \hspace{-.3em}
                }
            }
        \end{equation*}
        where $\pair\equiv(\state,\action) \in \Reach(\policy_t, \State_t)$ stands for $\policy(\action|\state) > 0$ and $\Pr^\policy_{\State_t}(\tau_{\state}<\infty) > 0$.  
    \end{definition}

    Roughly speaking, coherence states that the iteration of a sub-optimal policy is linked to a lack of information (quantified by a budget $\varphi(\tau)$) that has positive probability to be recovered by iterating that policy only. 
    The purpose of the coherence property is its link with local regret guarantees, as shown by \Cref{lemma_coherence} below.
    However, the coherence property may only be conveniently used if the episodes of the algorithm are \strong{weakly regenerative}, meaning that episodes may only end if the current state has already been visited during the episode. 
    This property makes sure that the sub-sampled state-action pair, of which coherence ensures the existence, is reached and visited during the episode with positive probability.

    \begin{definition}[Weakly regenerative episodes]
    \label{definition_regenerative_episodes}
        We say that the episodes of an algorithm are \strong{weakly regenerative} if, for all $k \ge 1$, there exists $t \in \braces{t_k, \ldots, t_{k+1}-1}$ such that $\State_t = \State_{t_{k+1}}$. 
    \end{definition}

    \begin{lemma}[Coherence and local regret]
        \label{lemma_coherence}
        Assume that $\model$ is non-degenerate (\Cref{definition_non_degeneracy}).
        If the algorithm is $(F, \tau, T, \varphi)$-coherent and has weakly regenerative episodes, then there exist model dependent constants $C_1, C_2, C_3, C_4 > 0$ such that: 
        \begin{equation*}
            \forall x \ge 0,
            \quad
            \Pr \parens*{
                \stochasticregret(\tau, \tau + T) 
                \ge 
                x + C_4 \varphi(\tau)
                \text{~and~}
                \bigcap_{t=\tau}^{\tau+T-1} F_t
            }
            \le
            C_1 T^{C_3}
            \exp\parens*{
                - \frac x{C_2}
            }
            .
        \end{equation*}
        More specifically, $C_1, C_2, C_3, C_4$ only depend on $M$ and are independent of $F, \tau, T$ and $\varphi$.
    \end{lemma}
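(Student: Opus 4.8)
The plan is to reduce the pseudo-regret to a weighted count of visits to sub-optimal pairs, and then to charge that count against the finite ``information budget'' $\varphi(\tau)$ that coherence supplies, letting a geometric tail absorb the slack. Throughout I use that $T$ is a \emph{fixed} horizon, so the window $\braces{\tau,\ldots,\tau+T}$ meets at most $T$ episodes; every quantity polynomial in $T$ may therefore be hidden in the prefactor $T^{C_3}$.

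\emph{Step 1 (reduction to sub-optimal visits).} By \eqref{equation_bellman_gaps} and \Cref{assumption_communicating}, $\ogaps(\pair) = 0$ for $\pair \in \wkoptpairs(\model)$, while $0 < \ogaps(\pair) \le \vecspan{\optbias} + 1 =: \ogaps_{\max}$ otherwise. Hence
$
    \stochasticregret(\tau,\tau+T)
    \le
    \ogaps_{\max}\sum_{t=\tau}^{\tau+T}\indicator{\Pair_t \in \suboptpairs(\model)}
    =: \ogaps_{\max}\,V,
$
so it suffices to control the number $V$ of visits to sub-optimal pairs in the window.

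\emph{Step 2 (splitting by the policy gain).} I would write $V = V^{\mathrm{good}} + V^{\mathrm{bad}}$ according to whether $\gain^{\policy_t}(\State_t) = \optgain(\State_t)$ or $\gain^{\policy_t}(\State_t) < \optgain(\State_t)$ at the time of the visit. For the good times, summing the Bellman gaps $\ogaps(\state,\policy_t(\state))$ against the invariant measure of the chain that $\policy_t$ reaches from $\State_t$ (well defined by non-degeneracy, \Cref{definition_non_degeneracy}) telescopes to $\optgain(\State_t) - \gain^{\policy_t}(\State_t) = 0$; since each gap is non-negative and the invariant measure is positive on the recurrent class, every \emph{recurrently} used pair is weakly-optimal. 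Thus a good-time visit to a sub-optimal pair can only occur during the transient prefix of an episode, whose length has an exponentially decaying tail controlled by $\diameter(\model)$. With at most $T$ episodes, $V^{\mathrm{good}} = \OH(T\,\diameter(\model))$ up to an exponential tail, a contribution that the factor $T^{C_3}$ and the exponential in $x$ absorb.

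\emph{Step 3 (the hard term $V^{\mathrm{bad}}$, via coherence and weak regeneration).} On $\bigcap_{t} F_t$, \Cref{definition_coherence} hands us at each bad time a witness pair $\pair \in \Reach(\policy_t,\State_t)$ that is \emph{sub-sampled}, $\visits_\pair(t) - \visits_\pair(\tau) \le \varphi(\tau)$, and gain-sub-optimal. Because episodes are weakly regenerative (\Cref{definition_regenerative_episodes}), during any bad excursion the witness stays reachable, and by non-degeneracy (a positive lower bound on the transition mass and on the minimal gain gap) together with $\diameter(\model)$, it is actually played within one regeneration cycle with probability at least a model constant $p_0 > 0$. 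Each visit to a still-sub-sampled pair consumes one unit of its budget, and the total budget over sub-optimal pairs is at most $\abs{\suboptpairs(\model)}(\varphi(\tau)+1)$. Consequently the number of bad regeneration cycles beyond this budget is stochastically dominated by a geometric count of ``failed'' attempts with success probability $p_0$, which yields the tail $\exp(-x/C_2)$; multiplying by the cycle lengths (again exponentially tailed with mean $\OH(\diameter(\model))$) preserves the same tail with adjusted constants. Assembling Steps~1--3 gives $\stochasticregret(\tau,\tau+T) \le C_4\varphi(\tau) + (\text{geometric overshoot})$, and a union bound over the $\le T$ episodes and over the identity of the witness pair produces the prefactor $C_1 T^{C_3}$, all constants depending on $\model$ only.

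\emph{Main obstacle.} The crux is Step~3: turning ``positive probability to hit the witness per cycle'' into a rigorous, uniform exponential tail despite full adaptivity — the policies $\policy_t$, the episode boundaries, and the witnesses are all data-dependent stopping-time objects, so the geometric domination cannot be an i.i.d.\ count but must be set up as an optional-stopping or supermartingale argument along the regeneration times. In parallel, $p_0$ must be lower-bounded purely from model constants (diameter, minimal transition mass, minimal gain gap of the non-degenerate model, via \Cref{lemma_asymptotic_regime}'s regime), and reconciling the additive exploration budget $C_4\varphi(\tau)$ with the exponential-in-$x$ overshoot is where the delicate bookkeeping lies.
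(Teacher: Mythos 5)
Your overall architecture matches the paper's: decompose the window into periods where the current policy is gain-optimal from the current state versus not, charge the sub-optimal periods against the coherence budget $\varphi(\tau)$ via weak regenerativity, and let a geometric/sub-exponential tail absorb the overshoot. Your Step~3 is essentially the paper's argument for sub-optimal segments (the paper implements your ``geometric domination despite adaptivity'' via a potential $\phi(t) = \sum_{\pair}[\varphi(\tau) + \visits_\pair(\tau) - \visits_\pair(t)]_+$ that decreases with conditional probability at least $\epsilon$ at each sub-optimal segment, plus a Chernoff bound on the adapted indicators and a time-uniform Azuma--Hoeffding step to convert segment counts into durations). One caveat: do not invoke \Cref{lemma_asymptotic_regime} to lower-bound $p_0$ --- that lemma is itself proved \emph{using} \Cref{lemma_coherence}, so this would be circular; $p_0$ must come directly from the finiteness of $\policies$ and $\states$ and the positive transition probabilities of the fixed model.

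The genuine gap is in Step~2. You bound $V^{\mathrm{good}}$ by $\OH(T\,\diameter(\model))$ on the grounds that the window contains at most $T$ episodes, each contributing a transient prefix of length $\OH(\diameter(\model))$, and claim this is ``absorbed'' by $C_1 T^{C_3}\exp(-x/C_2)$. It is not: a deterministic (or expected) contribution linear in $T$ forces $x = \Omega(T)$ before the right-hand side becomes small, which makes the lemma vacuous precisely in the regime $x = \Theta(\log T)$ where it is used (e.g.\ in the proof of \Cref{theorem_main}, assertion~3). The point of the lemma is that the regret over the whole window is $\OH(\varphi(\tau) + \log T)$ with high probability, so you cannot afford a per-episode transient cost summed over all episodes in the window. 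The paper's fix has two ingredients you are missing: (i) consecutive optimal segments are merged into macro-segments whose number is at most $1 + \abs{\mathcal{I}_{\mathrm{sub}}}$ by alternation, and $\abs{\mathcal{I}_{\mathrm{sub}}}$ already has sub-exponential tails from the budget argument; (ii) within an optimal macro-segment, weak regenerativity implies that each episode ends at a state already visited during that episode, hence the new episode starts inside the optimal recurrent class $\mathrm{supp}(\mu^*)$ with conditional probability at least some $\epsilon' > 0$, so the number of episode restarts that re-enter the transient part is stochastically dominated by a geometric sum indexed by the macro-segments, not by $T$. Only with both ingredients does the transient cost become $\OH\bigl(\diameter(\model)\cdot(\abs{\mathcal{I}_{\mathrm{sub}}} + \mathrm{Geom})\bigr)$, which has the required sub-exponential tail.
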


    Using the shorthand $F_{\tau:\tau+T} := \bigcap_{t=\tau}^{\tau+T-1} F_t$, this means that on a good event $F_{\tau: \tau+T}$, the local regret $\stochasticregret(\tau, \tau+T)$ has sub-exponential tails.
    The above result can also be written in the form $\Pr(\stochasticregret(\tau, \tau+T) \ge C_1 + C_4 \varphi(\tau) + (\eta C_2 + C_3) \log(T), F_{\tau:\tau+T}) \le T^{-\eta}$ for all $\eta > 0$.

    The proof of \Cref{lemma_coherence} is difficult and deferred to \Cref{section_lemma_coherence_proof}.

    \subsection{Establishing regret of exploration guarantees via coherence}
    \label{section_establishing_coherence}

    Based on the shrinking and shaking effects discussed upstream, we show that \eqref{equation_vanishing_multiplicative} guarantees \strong{local} coherence properties that, once combined with \Cref{lemma_asymptotic_regime}, become regret of exploration guarantees. 
    The exact coherence property is detailed in \Cref{lemma_local_coherence} below.
    Once \Cref{lemma_local_coherence} is established (see \Cref{appendix_local_coherence}), \Cref{theorem_main} follows instantly. 

    \begin{lemma}
    \label{lemma_local_coherence}
        Let $M \in \models^+$ be a non-degenerate explorative model.
        Consider running \texttt{KLUCRL} with model satisfying \Cref{assumption_interior} and assume that episodes are managed with the $f$-\eqref{equation_vanishing_multiplicative} with $f(t) = \oh\parens{\frac 1{\log(t)}}$.
        Let $(t_{k(i)})$ be the enumeration of exploration episodes. 
        Then, there exists a constant $C(\model) > 0$ such that, for all $T \ge 1$ and $\delta > 0$, there is an adapted sequence of events $(E_t)$ and a function $\varphi : \N \to \R$ such that:
        \begin{enumerate}
            \item 
                For all $i \ge 1$, the algorithm is $(E_t, t_{k(i)}, T, \varphi)$-coherent;
                \vspace{-.66em}
            \item 
                $\Pr\parens[\big]{\bigcup_{t=t_{k(i)}}^{t_{k(i)}+T-1} E_t^c} \le \delta + \oh(1)$ when $i \to \infty$;
                \vspace{-.66em}
            \item 
                $\varphi(t) \le 1 + C \log\parens{\frac T\delta} + \oh(1)$ when $t \to \infty$.
        \end{enumerate}
    \end{lemma}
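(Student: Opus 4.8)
The statement bundles three requirements, and the only genuinely hard one is the coherence inclusion (item~1); items~2 and~3 reduce to bookkeeping once the good event $E_t$ and the budget $\varphi$ are fixed. My plan is therefore to \emph{(i)} define $E_t$ as the intersection of the rigorous shrinking–shaking event underlying \Cref{lemma_informal_shrinking_shaking} with the almost-sure asymptotic regime of \Cref{lemma_asymptotic_regime} and the confidence containment $\{\model\in\models(t)\}$; \emph{(ii)} read $\varphi$ directly off the shrinking threshold; and \emph{(iii)} obtain coherence by a contradiction combining optimism of the deployed policy with the shrinking branch of \Cref{lemma_informal_shrinking_shaking}.

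\textbf{The good event and the budget.} I would set $\varphi(\tau) := 1 + C\log(T/\delta)$, pushing lower-order terms into the $\oh(1)$ of item~3; here $C\log(T)$ is the per-step shrinking threshold of \Cref{lemma_informal_shrinking_shaking} after a union bound over the $T$ times of the window, $\log(1/\delta)$ sets the confidence level of the empirical-update concentration (cf.\ \eqref{equation_ucb_condition}), and the additive $1$ reflects that under $f$-\eqref{equation_vanishing_multiplicative} with $f(t)=\oh(1/\log t)$ a sub-sampled pair is visited at most once per episode in the long run, see \eqref{equation_vanishing_ends}. For the event, I take $E_t := E_t^{\mathrm{ss}} \cap E^{\mathrm{reg}} \cap \{\model\in\models(t)\}$, where $E_t^{\mathrm{ss}}$ is the shrinking–shaking event, holding with probability at least $1-\delta$ uniformly over the window (this is the source of the $\delta$ in item~2), and $E^{\mathrm{reg}}$ is the eventual-regime event $\{\exists T_0,\forall t\ge T_0:\ \visits_\pair(t)<\lambda\log t \text{ for } \pair\notin\optpairs(\model),\ \visits_\pair(t)>t/\lambda \text{ for } \pair\in\optpairs(\model)\}$ of \Cref{lemma_asymptotic_regime}, which has probability one. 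Item~2 then follows: along exploration times $t_{k(i)}\to\infty$, both $E^{\mathrm{reg}}$ and $\{\model\in\models(t)\}$ hold off a set of vanishing probability (the latter failing with probability $\OH(\abs{\pairs}/t_{k(i)})$), so $\Pr(\bigcup_t E_t^c)\le \delta + \oh(1)$; item~3 holds by construction of $\varphi$.

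\textbf{Coherence.} Fix $t\in\{t_{k(i)},\ldots,t_{k(i)}+T-1\}$, work on $E_t$, and suppose $\gain^{\policy_t}(\State_t)<\optgain(\State_t)$. Because $\policy_t$ is the \texttt{EVI}-optimistic policy of its episode, non-degeneracy of $\model$ forces this gain deficit to be carried by a genuinely sub-optimal pair $\pair\equiv(\state,\action)\in\Reach(\policy_t,\State_t)$, i.e.\ one with $\policy_t(\action|\state)>0$, $\state$ reachable from $\State_t$ under $\policy_t$, and $\gain^{\policy_t}(\state)<\optgain(\state)$. It remains to show some such pair stays within budget, $\visits_\pair(t)-\visits_\pair(t_{k(i)})\le\varphi(t_{k(i)})$. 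Suppose not, so every reachable sub-optimal pair $\pair$ satisfies $\visits_\pair(t)-\visits_\pair(t_{k(i)})>C\log(T)$; since $\pair\notin\optpairs(\model)$, the shrinking branch of \Cref{lemma_informal_shrinking_shaking} yields $\kerrews_\pair(t)\subseteq\kerrews_\pair(t_{k(i)-1})$ for all of them. At time $t_{k(i)-1}$, however, the deployed policy was gain-optimal from $\State_{t_{k(i)}}$ by condition~(1) of \Cref{definition_exploration_time}, so $\models(t_{k(i)-1})$ already favored an optimal policy over $\policy_t$ through precisely these pairs; with $\kerrews_\pair(t)$ now contained in $\kerrews_\pair(t_{k(i)-1})$ for every sub-optimal pair $\policy_t$ relies on, the optimistic value of $\policy_t$ on $\models(t)$ can no longer reach $\optgain(\State_t)$. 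Hence $\policy_t$ would not be re-selected by \texttt{EVI}, and under $f$-\eqref{equation_vanishing_multiplicative} the episode is refreshed essentially as soon as a sub-optimal pair is revisited (see \eqref{equation_vanishing_ends}), contradicting that $\policy_t$ is still in force. So at least one reachable sub-optimal pair stays within budget, which is exactly the coherence inclusion.

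\textbf{Main obstacle.} The crux is making the penultimate implication quantitative: turning ``$\kerrews_\pair(t)\subseteq\kerrews_\pair(t_{k(i)-1})$ for every sub-optimal pair used by $\policy_t$'' into a certified drop of the optimistic value of $\policy_t$ below $\optgain(\State_t)$. This requires the rigorous, non-asymptotic shrinking–shaking statement (deferred to \Cref{appendix_regret_of_exploration}), a sensitivity bound for the optimistic Bellman operator with respect to per-pair shrinkage of $\kerrews_\pair$, and a careful propagation of the deficit $\optgain(\State_t)-\gain^{\policy_t}(\State_t)$ onto a pair that is both sub-optimal and visited with positive probability inside the episode. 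This last point is where non-degeneracy (\Cref{definition_non_degeneracy}) and weak regenerativity (\Cref{definition_regenerative_episodes}) are indispensable: they ensure the offending pair lies on a recurrent class of $\policy_t$ that the episode actually reaches, so the shrinking effect has time to act before the episode ends.
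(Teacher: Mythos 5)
Your setup---the choice of $E_t$ as the intersection of the shrinking--shaking events with the asymptotic-regime event and $\{\model\in\models(t)\}$, and the budget $\varphi=1+C\log(T/\delta)$---matches the paper's, and items 2 and 3 are handled the same way. The gap is in the coherence step, and it is exactly the point you defer to your ``main obstacle'' paragraph: the qualitative containment $\kerrews_\pair(t)\subseteq\kerrews_\pair(t_{k(i)-1})$ on the over-visited sub-optimal pairs cannot, by itself, certify that $\gain^{\policy_t}(\models(t))$ falls below $\optgain(\models(t))$. The reference value sits on the wrong side of the inequality: at time $t_{k(i)-1}$ one only has $\gain^{\policy_t}(\models(t_{k(i)-1}))\le\optgain(\models(t_{k(i)-1}))$, and this upper bound is itself $\ge\optgain(\model)$ (by the shaking effect on the heavily-visited optimal pairs it equals $\optgain(\model)+\OH(\sqrt{\log(t_{k(i)-1})/t_{k(i)-1}})$). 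Moreover the regions at $t_{k(i)-1}$ and $t_{k(i)}$ essentially coincide (a sub-sampled pair receives at most one extra visit between the two, see the proof of \Cref{lemma_shrinking_kernels}), and at $t_{k(i)}$ the sub-optimal policy was \emph{selected} by \texttt{EVI}, i.e.\ its optimistic gain over those very regions was at least $\optgain(\models(t_{k(i)}))\ge\optgain(\model)$. So containment in $\kerrews_\pair(t_{k(i)-1})$ only yields $\gain^{\policy_t}(\models(t))\le\optgain(\model)+\oh(1)$, which does not contradict $\policy_t$ being re-selected. Your sentence ``$\models(t_{k(i)-1})$ already favored an optimal policy over $\policy_t$ through precisely these pairs'' is unjustified: \texttt{EVI} picking an optimal policy at $t_{k(i)-1}$ gives only a non-strict inequality, and the whole difficulty is that the algorithm explores precisely because the sub-optimal policy's optimistic gain is \emph{not} below the optimum.

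The paper closes this by making the shrinking strict and quantitative. It uses the rewards version of the shrinking effect (\Cref{lemma_shrinking_rewards}), which gives a decrease of $\sup\rewards_\pair(t)$ of order $(\visits_\pair(t)-\visits_\pair(t_{k(i)}))/(C(t_{k(i)})^\eta)$, and then an exit-probability argument showing that any optimistic invariant measure of $\policy_t$ must put mass at least some $\eta(\model,\policy_t)>0$ on the shrunk sub-optimal pairs (this is where \Cref{assumption_interior} and the linear visit rates on $\optpairs(\model)$ from \Cref{lemma_asymptotic_regime} enter). Together these convert the budget overshoot $C\log(T/\delta)$ into a certified gain drop of order $\log(T/\delta)/\log(t_{k(i)})$, which asymptotically dominates both the shaking perturbation $\sqrt{c\log(t)/t}$ on optimal pairs and the slack $\OH(\sqrt{\log(t_{k(i)-1})/t_{k(i)-1}})$ between $\optgain(\models(t_{k(i)-1}))$ and $\optgain(\model)$. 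Without this three-way comparison of rates the contradiction does not go through, so the argument as written is incomplete at its central step.
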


    \begin{proof}\hspace{-0.2em}\strong{ of \Cref{theorem_main}, assertion 3}~
        Use the coherence property of \Cref{lemma_local_coherence} with $\delta = \frac 1T$ and apply \Cref{lemma_coherence}.
        We obtain:
        \begin{align*}
            & \limsup_{i \to \infty}
            \Pr \parens*{
                \stochasticregret(t_{k(i)}, t_{k(i)}+T)
                \ge 
                x + C_4 \varphi(t_{k(i)})
            }
            \\
            & \le
            \limsup_{i \to \infty}
            \braces*{
                \Pr \parens*{
                    \stochasticregret(t_{k(i)}, t_{k(i)}+T)
                    \ge 
                    x + C_4 \varphi(t_{k(i)})
                    ,
                    \bigcap_{t=t_{k(i)}}^{t_{k(i)}+T-1}
                    E_t
                }
                +
                \Pr \parens*{
                    \bigcup_{t=t_{k(i)}}^{t_{k(i)}+T-1}
                    E_t^c
                }
            }
            \\
            & \le
            \exp\parens*{
                - \frac x{C_2} + C_3 \log(T) + \log(C_1)
            }
            + \frac 1T
        \end{align*}
        which is bounded by $\frac 2T$ for $x \ge C_2 (1+C_3) \log(T) + C_2 \log(C_1)$, where $C_1, C_2, C_3, C_4$ are the constants provided by \Cref{lemma_coherence}. 
        Using that $\limsup_{i \to \infty} \varphi(t_{k(i)}) \le 1 + 2C \log(T)$ and setting $\psi(T) := (C_2 (1 + C_3) + 2 C_4 C)\log(T) + C_2 \log (C_1) + C_4$, we obtain:
        \begin{equation}
            \RegExp(T)
            \le
            \limsup_{i \to \infty} 
            \braces*{
                \psi(T)
                + 
                T 
                \cdot
                \Pr \parens*{
                    \Reg(t_{k(i)}, t_{k(i)}+T)
                    \ge 
                    \psi(T)
                }
            }
            \le
            \psi(T) + 2.
        \end{equation}
        This concludes the proof of \Cref{theorem_main}.
    \end{proof}

    \section{Beyond asymptotic guarantees}

    Our theoretical results (\Cref{theorem_main}) are only asymptotic. However, the behavior of \texttt{KLUCRL} with episodes managed by \eqref{equation_vanishing_multiplicative} is remarkably better than its \eqref{equation_doubling_trick} version over a single run and for reasonably small time horizons, as displayed in \Cref{figure_introduction}.
    In \Cref{appendix_experiments}, we provide additional numerical insights with thorough evidence of the smoother behavior of \eqref{equation_vanishing_multiplicative} over \eqref{equation_doubling_trick} for \texttt{KLUCRL} as well as for other learning algorithms (\texttt{UCRL2},  \texttt{UCRL2B}, ...).

    \section*{Acknowledgements}

    This research was supported in part by the French National Research Agency (ANR) in the framework of the PEPR IA FOUNDRY project (ANR-23-PEIA-0003), and benefitted from the support of the FMJH Program PGMO. 

    \bibliography{bibliography}

    \appendix
    \allowdisplaybreaks

    \crefalias{section}{appendix} % uncomment if you are using cleveref

    \clearpage
    \section{Experiments: The Vanishing Multiplicative condition in practice}
    \label{appendix_experiments}

    In this appendix, we provide a few numerical insights in \texttt{KLUCRL} and the differences between \eqref{equation_doubling_trick} and \eqref{equation_vanishing_multiplicative}. 
    In \Cref{appendix_choice_klucrl}, we justify the choice of \texttt{KLUCRL} both from a theoretical and experimental viewpoint. 
    In \Cref{appendix_experiments_regret}, we show that in practice, the regret of \texttt{KLUCRL} is slightly better with \eqref{equation_vanishing_multiplicative} than with \eqref{equation_doubling_trick} in expectation, in distribution and sometimes in variance.
    In \Cref{appendix_experiments_regret_of_exploration}, we provide an experimental proxy for the regret of exploration, that displays a clear advantage in using \eqref{equation_vanishing_multiplicative} over \eqref{equation_doubling_trick}.

    \subsection{On the choice of \texttt{KLUCRL} as a reference algorithm}
    \label{appendix_choice_klucrl}

    The confidence region $\models(t)$ is built in product form, i.e., $\models(t) \equiv \product_{\pair \in \pairs} (\rewards_\pair (t) \times \kernels_\pair (t))$ where $\rewards_\pair (t) \subseteq [0, 1]$ and $\kernels_\pair (t) \subseteq \probabilities(\states)$. 
    Both are confidence regions for categorical distributions (of dimension $d = 2$ for rewards and $d = \abs{\states}$ for kernels), in which case the constructions of $\rewards_\pair (t)$ and $\kernels_\pair (t)$ are traditionally relying on concentration inequalities.
    These relate how far is the empirical estimate ($\hat{\reward}_t(\pair)$ and $\hat{\kernel}_t(\pair)$) from the true expected value ($\reward(\pair)$ and $\kernel(\pair)$).
    In the literature of model-based optimistic algorithms for Markov decision processes, three main concentration inequalities are being used, taking the form below:
    \begin{align*}
        \text{Weissman's inequality:}
        & \quad
        \visits_\pair (t) \norm{\hat{\kernel}_t(\pair) - \kernel(\pair)}_1^2
        \le
        f(t)
        ;
        \\
        \text{Bernstein's inequality:}
        & \quad 
        \abs{\hat{\kernel}_t(\state|\pair) - \kernel(\state|\pair)}
        \le 
        \sqrt{\frac{2 \hat{\kernel}_t(\state|\pair) (1 - \hat{\kernel}_t(\state|\pair)) f(t)}{\visits_\pair (t)}}
        + \frac{7 f(t)}{3 \visits_\pair(t)}
        ;
        \\
        \text{Empirical likelihoods:}
        & \quad
        \visits_\pair (t) \KL(\hat{\kernel}_t(\pair)||\kernel(\pair)) 
        \le 
        f(t)
        .
    \end{align*}
    Choosing one among these three respectively provides (in order) \texttt{UCRL2} \cite{auer_near_optimal_2009}, \texttt{UCRL2B} \cite{fruit_improved_2020} and \texttt{UCRL3} \cite{bourel_tightening_2020}, and \texttt{KLUCRL} \cite{filippi_optimism_2010}. 

    \paragraph{Best confidence region in theory}
    On the theoretical side, asymptotically tight concentration inequalities are based on empirical likelihoods.
    The reason for that is Sanov's theorem.
    Let $\kernel \in \probabilities(\states)$ be a categorical distribution and $(X_k)$ i.i.d.~random variables of distribution $\kernel$.
    Let $\hat{\kernel}_n := \frac 1n \sum_{k=1}^n e_{X_k}$ denote the empirical estimation of $\kernel$ after $n$ independent samples of it.
    Sanov's theorem states that for all $\mathcal{U} \subseteq \probabilities(\states)$, we have
    $
        \Pr \parens*{
            \hat{\kernel}_n \in \mathcal{U}
        }
        =
        \exp \braces*{
            - n \inf_{\kernel' \in \mathcal{U}} \KL(\kernel'||\kernel)
            + \oh(n)
        }
        .
    $
    This justifies that $\KL(-||-)$ is a natural object to measure where $\hat{\kernel}_n$ concentrates.
    We also find:
    \begin{align*}
        \Pr \parens*{
            \KL(\hat{\kernel}_n||\kernel) > x
        }
        & = \Pr \parens*{
            \hat{\kernel}_n \in \braces{\kernel' : \KL(\kernel'||\kernel) > x}
        }
        \\
        & = \exp \braces*{
            - n \inf_{\kernel' \in \braces{\kernel'' : \KL(\kernel''||\kernel) > x}} \KL(\kernel'||\kernel) 
            + \oh(n)
        }
        \\
        & = \exp \braces*{
            - n x + \oh(n)
        }
        .
    \end{align*}
    Therefore, $\KL$-semi-balls naturally provide tight confidence regions. 
    Such confidence regions are also variance aware (see \cite{talebi_variance_aware_2018}), which is crucial to provide minimax regret bounds that are better than the usual $\OH(\diameter \State\sqrt{\Action T \log(T)})$.

    \paragraph{Best confidence region in practice}
    In practice, \texttt{KLUCRL} is known to perform well. 
    This is displayed in \Cref{figure_comparison_regret_all}, where a selection of algorithms is run. 
    The regret is averaged over $100$ runs and at each run, the environment is picked uniformly at random among ergodic Markov decision processes---the environment is re-rolled every time to mitigate the possible over-specialization of some algorithms for some environments. 
    We compare the performance of \texttt{UCRL2} \cite{auer_near_optimal_2009}, \texttt{UCRL2B} \cite{fruit_improved_2020} and \texttt{KLUCRL} \cite{filippi_optimism_2010} that are all optimistic algorithms relying on \texttt{EVI} (\Cref{appendix_evi}) to compute optimistic policies from their confidence regions.
    This is the framework described in details in \Cref{appendix_minimax}, for which our proof techniques for regret of exploration guarantees (\eqref{equation_vanishing_multiplicative}, \Cref{appendix_coherence,appendix_regret_of_exploration}) are applicable. 
    For fairness, these algorithms are reworked with state-of-the-art confidence regions in $\ell_1$-norm, Bernstein's style and in empirical likelihoods. 

    In \Cref{figure_comparison_regret_all}, we observe that \texttt{KLUCRL} has better expected regret than the other algorithms. 

    \begin{figure}[h]
        \centering
        \includegraphics[width=0.6\linewidth]{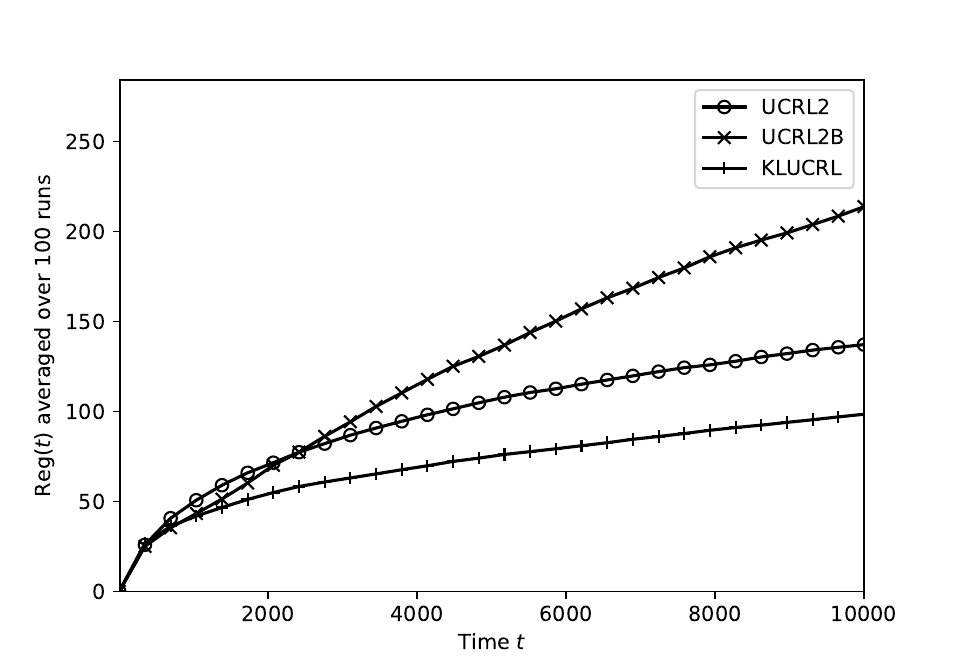}
        \caption{
            \label{figure_comparison_regret_all}
            Bayesian regret of \texttt{UCRL2}, \texttt{UCRL2B} and \texttt{KLUCRL}. 
            Each algorithm is run 100 times on an ergodic environment with $5$ states and $2$ actions, picked at random and renewed for every run. 
        }
    \end{figure}

    \subsection{regret guarantees of \eqref{equation_vanishing_multiplicative} on experiments}
    \label{appendix_experiments_regret}

    The model independent regret of an algorithm is quite difficult to measure experimentally, because it is found as the expected regret on the worst environment $\model \in \models$, that depends on the algorithm and is hard to find. 
    Instead, we focus on the model dependent regret. 

    \begin{figure}[ht]
        \includegraphics[width=0.5\linewidth]{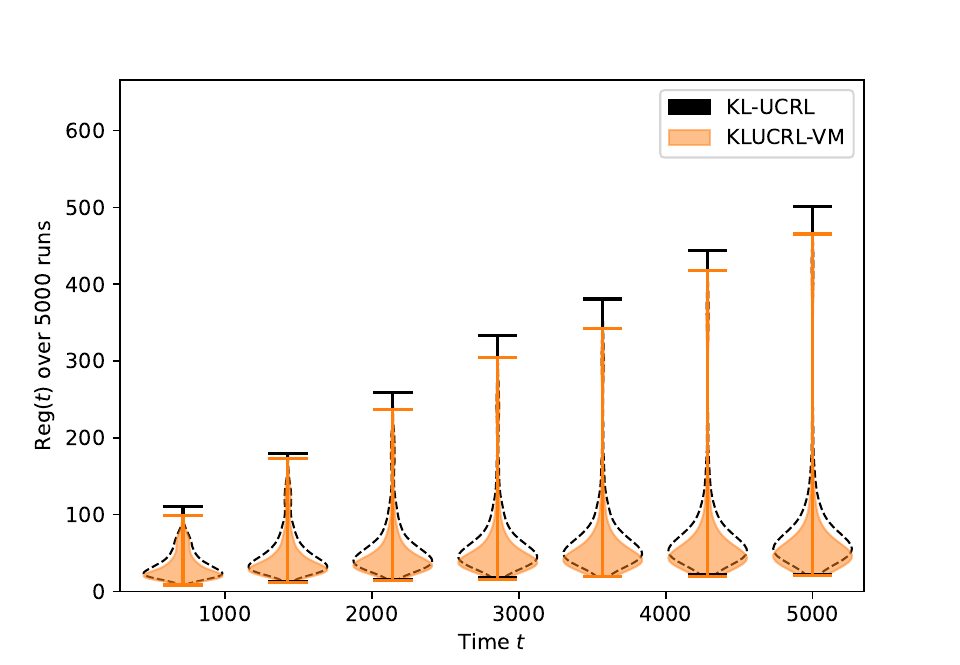}
        \hfill
        \includegraphics[width=0.5\linewidth]{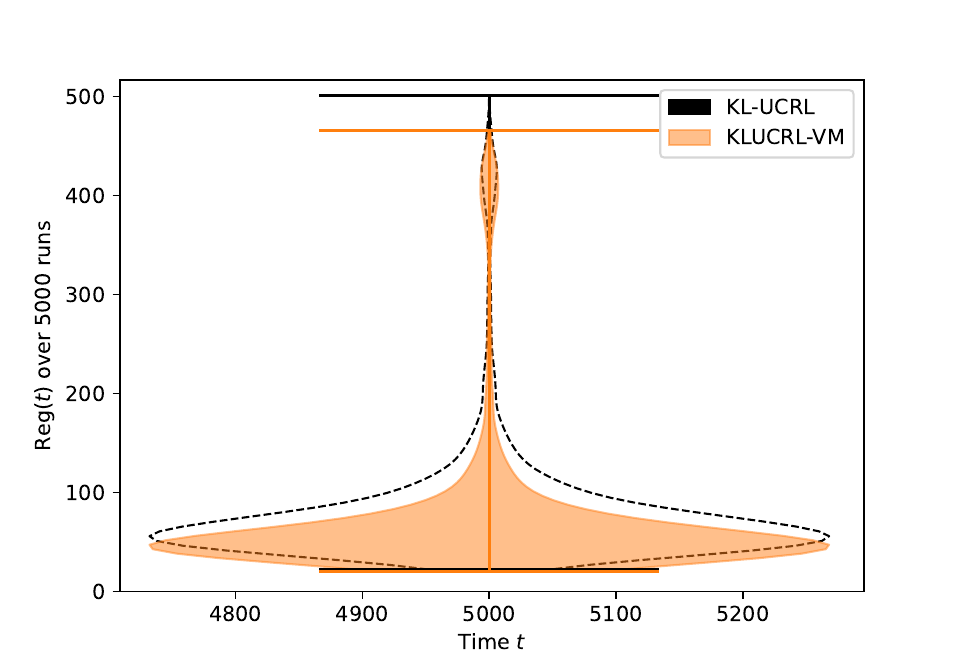}
        \caption{
        \label{figure_violin_klucrl}
            Violin plots of the regret of \texttt{KLUCRL} with episodes managed by \eqref{equation_doubling_trick} (in \strong{black} with dashed lines) and by \eqref{equation_vanishing_multiplicative} (in \textcolor{orange}{\strong{orange}} with solid lines) on a small ergodic environment. 
            By changing \eqref{equation_doubling_trick} to \eqref{equation_vanishing_multiplicative}, we observe a slight improvement of the expected regret with an overall shift of its distribution to smaller values.
            These observations are uniform over the time horizon.
        }
    \end{figure}

    \begin{figure}[ht]
        \centering
        \includegraphics[width=.49\linewidth]{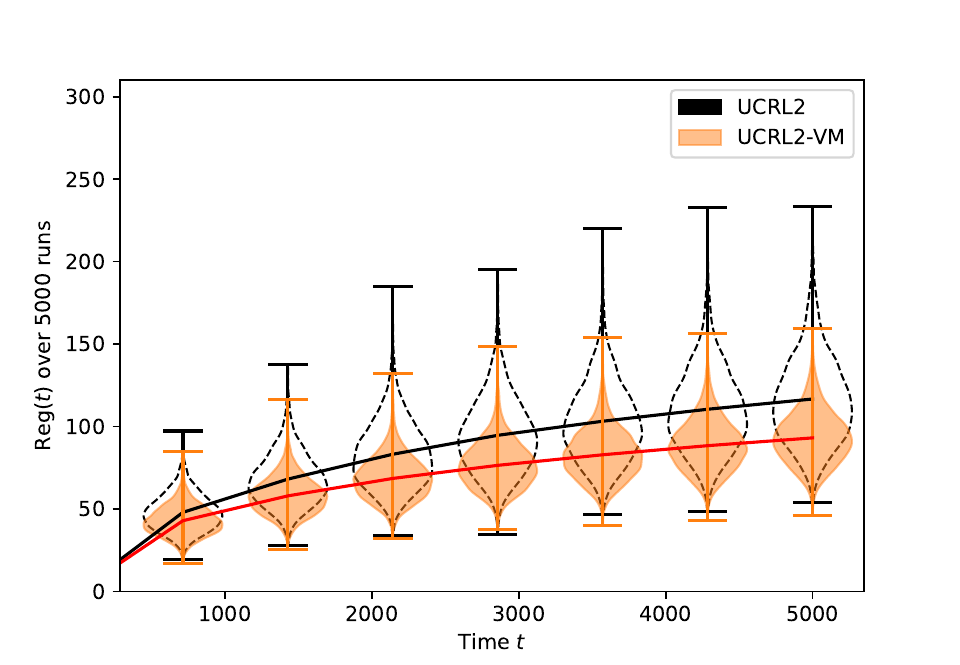}
        \hfill
        \includegraphics[width=.49\linewidth]{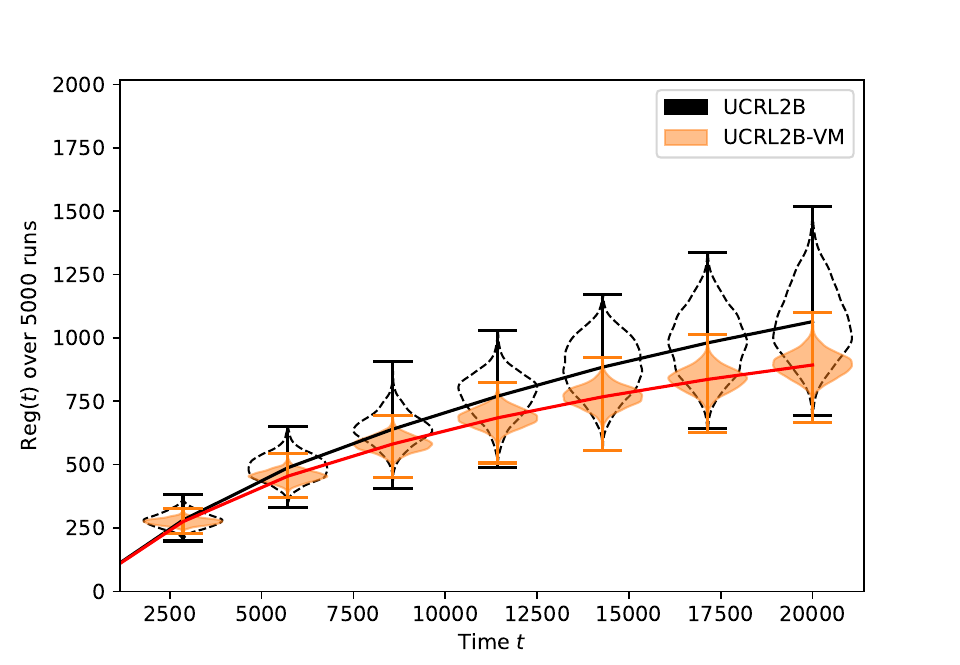}
        \caption{
        \label{figure_violin_others}
            Violin plots of the regret of \texttt{UCRL2} and \texttt{UCRL2B} with episodes managed by \eqref{equation_doubling_trick} (in \strong{black} with dashed lines) and by \eqref{equation_vanishing_multiplicative} (in \textcolor{orange}{\strong{orange}} with solid lines) on a small ergodic environment. 
            For these algorithms, we further observe a reduction of the variance.
        }
    \end{figure}

    In \Cref{figure_violin_klucrl}, we compare the behavior of \texttt{KLUCRL} when managing episodes with \eqref{equation_doubling_trick} and with $f$-\eqref{equation_vanishing_multiplicative} for $f(t) = \sqrt{\log(1+t)/t}$.
    The chosen environment is a small ergodic Markov decision process and a huge number of runs is done to accurately determine the distribution of the regret for both algorithms. 
    We observe that both are bimodal with a concentration around the expected value, with \texttt{KLUCRL-\eqref{equation_vanishing_multiplicative}} being better than \texttt{KLUCRL-\eqref{equation_doubling_trick}} overall, both in expectation and in distribution (stochastic dominance). 

    In \Cref{figure_violin_others}, we run the same experiments as in \Cref{figure_violin_klucrl} but with \texttt{UCRL2} \cite{auer_near_optimal_2009} and \texttt{UCRL2B} \cite{fruit_improved_2020}.
    The same observation than with \texttt{KLUCRL} can be made: the version relying on \eqref{equation_vanishing_multiplicative} stochastically dominates the version relying on \eqref{equation_doubling_trick}.
    A phenomenon that is hard to see for \texttt{KLUCRL} but that is striking for \texttt{UCRL2} and \texttt{UCRL2B} is the reduction of the variance.
    Indeed, we can see that the distributions is much more concentrated around its mean with \eqref{equation_vanishing_multiplicative} than with \eqref{equation_doubling_trick}.

    \begin{figure}[ht]
        \centering
        \includegraphics[width=.49\linewidth]{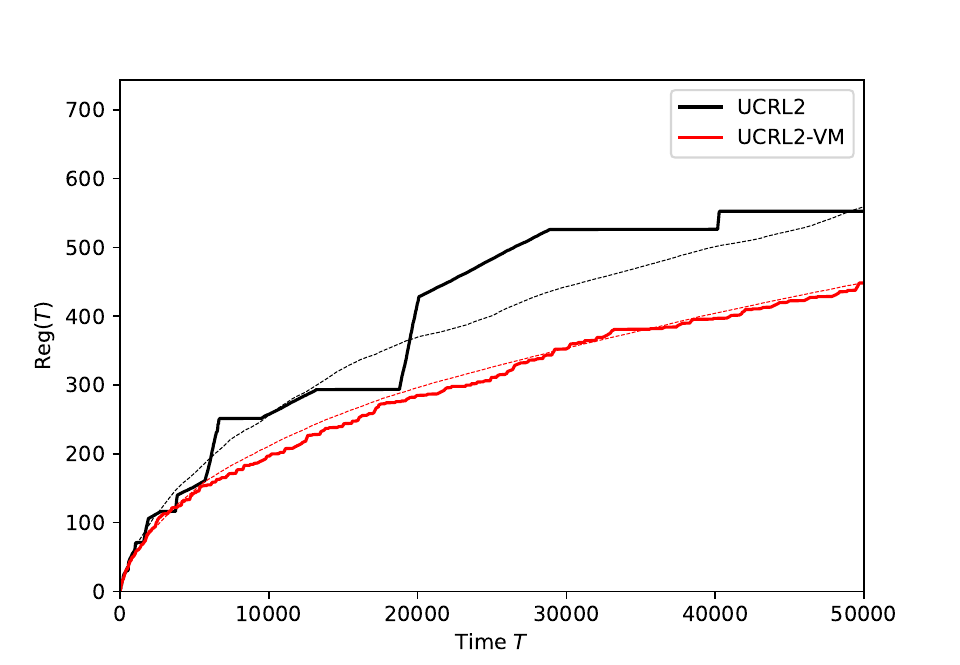}
        \hfill
        \includegraphics[width=.49\linewidth]{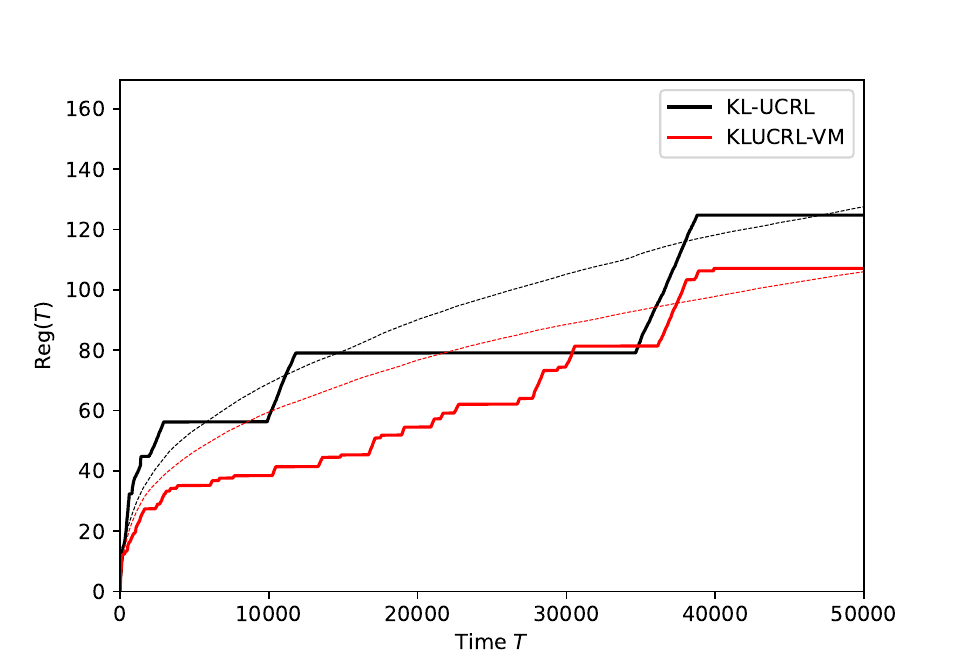}

        \includegraphics[width=.49\linewidth]{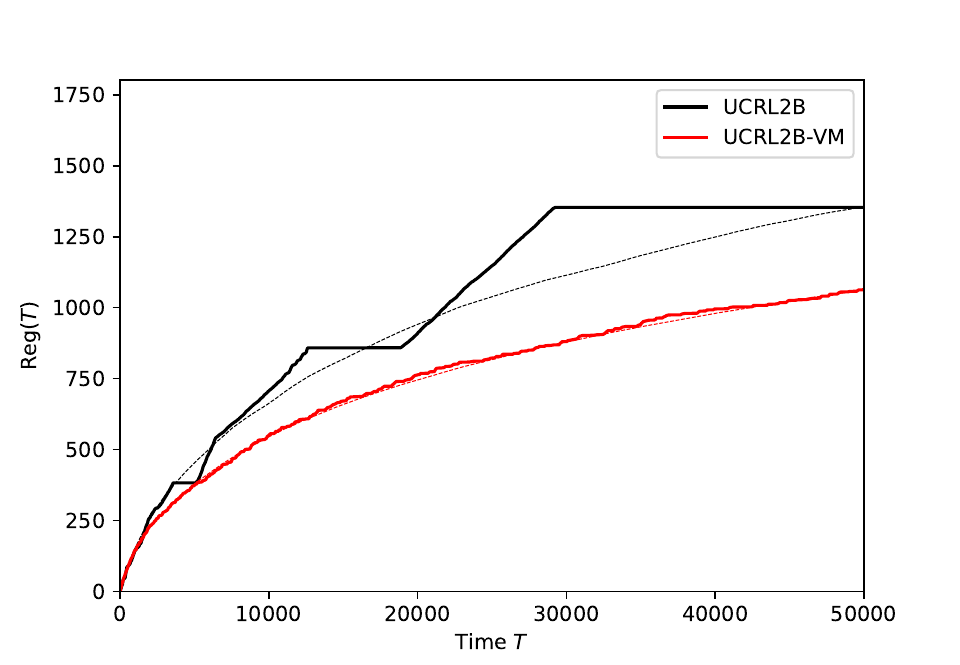}
        \caption{
        \label{figure_single_trajectories}
            Pseudo-regret of a selection of algorithms on a fixed ergodic environment with $5$ states and $2$ actions picked at random.
            The dashed line is the average over $256$ runs, while the solid line displays the pseudo-regret over a single trajectory. 
        }
    \end{figure}

    In \Cref{figure_single_trajectories}, we run \texttt{UCRL2} \cite{auer_near_optimal_2009}, \texttt{UCRL2B} \cite{fruit_improved_2020} and \texttt{KLUCRL} \cite{filippi_optimism_2010} in their vanilla (with episodes managed by \eqref{equation_doubling_trick}) and their reworked versions (with episodes managed with $f$-\eqref{equation_vanishing_multiplicative} for $f(t) = \sqrt{\log(t)/t}$) tagged with \eqref{equation_vanishing_multiplicative} in the legend.  
    The environment is a fixed ergodic Markov decision process with $5$ states and $2$ actions per state, picked at random. 
    We display the regret averaged over $256$ runs with a dashed line, and the solid line is the pseudo-regret over a single trajectory, picked among those that minimize $\stochasticregret(50000; \model) - \Reg(T; \model)$ for readability. 
    The plotted average pseudo-regrets show that using \eqref{equation_vanishing_multiplicative} rather than \eqref{equation_doubling_trick} has a real advantage regarding regret minimization already.
    Looking at the single trajectory curves, we observe that the duration of periods of sub-optimal play is much shorter under $f$-\eqref{equation_vanishing_multiplicative} than under \eqref{equation_doubling_trick}, for all three algorithms.
    Note that not all bad episodes are guaranteed to be small (see for e.g.~the plot of \texttt{KLUCRL}).
    This is consistent with theory: A bound on the regret of exploration guarantees that periods of sub-optimal play are short in average, but does not rule out the existence of long periods of sub-optimal play. 

    \subsection{The regret of exploration under \eqref{equation_vanishing_multiplicative}}
    \label{appendix_experiments_regret_of_exploration}

    As the regret of exploration is a $\limsup$, it is impossible to measure it experimentally. 
    We approximate it in finite time by looking at the quantity:
    \begin{equation}
    \label{equation_proxy_regret_of_exploration}
        T \mapsto 
        \max \braces*{
            \Reg(t_{k(i)}, t_{k(i)} + T)
            :
            t_{k(i)} \in \braces{\psi(T_\text{max}), \ldots, T_\text{max}}
        }
    \end{equation}
    where $T_\text{max} \ge 1$ is the number of learning steps in the experiment and $\psi : \N \to \N$ is a threshold function. 
    The threshold function satisfies $\psi(t) < t$. 
    First, we want $\psi(t) \to \infty$ to remove the burn-in phase of the learning algorithm.
    Second, we want $\psi(t) = \oh(t)$ to make sure that $\braces{\psi(t), \ldots, t}$ contains many episodes of exploration so that the regret of exploration is estimated correctly. 

    \begin{figure}[h]
        \centering
        \includegraphics[width=.6\linewidth]{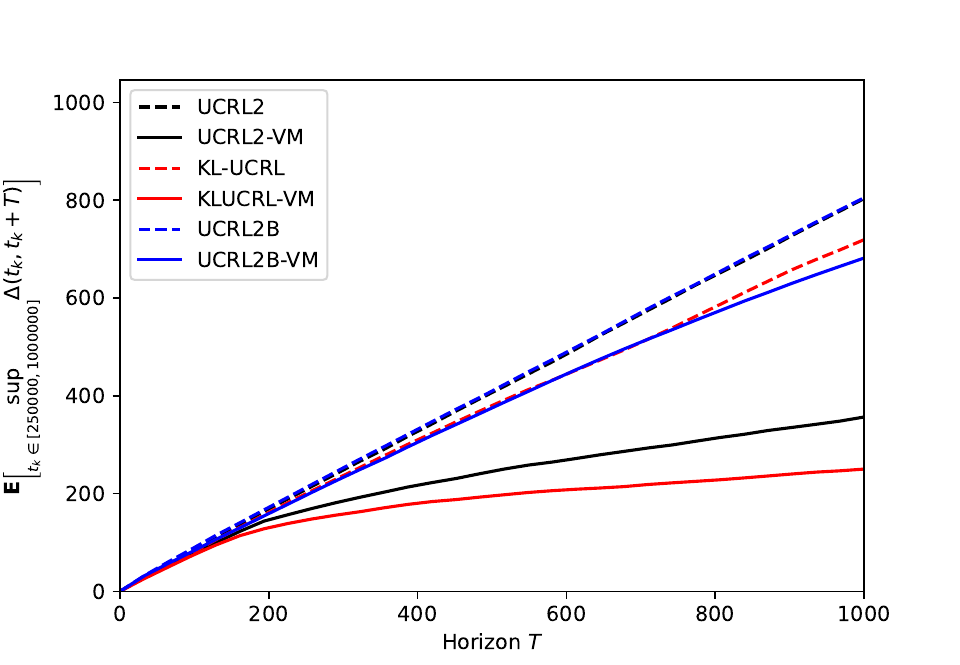}
        \caption{
        \label{figure_proxy_regret_of_exploration}
            Estimation of the regret of exploration of several algorithms, following the proxy \eqref{equation_proxy_regret_of_exploration}.
        }
    \end{figure}
    
    In \Cref{figure_proxy_regret_of_exploration}, we plot a proxy for the regret of exploration in the form of \eqref{equation_proxy_regret_of_exploration}, with $T_\text{max} = 10^6$ and $\psi(T_\text{max}) = 10^5$.
    The environment is a  small River-Swim with $3$ states, known to be a hard-to-learn environment (see \cite[Figure~4]{bourel_tightening_2020}).
    We run \texttt{UCRL2} \cite{auer_near_optimal_2009}, \texttt{UCRL2B} \cite{fruit_improved_2020} and \texttt{KLUCRL} \cite{filippi_optimism_2010} in their vanilla (with episodes managed by \eqref{equation_doubling_trick}) and their reworked versions (with episodes managed with $f$-\eqref{equation_vanishing_multiplicative} for $f(t) = \sqrt{\log(t)/t}$) tagged with \eqref{equation_vanishing_multiplicative} in the legend.  
    
    We observe that the regret of exploration indeeds is a sub-linear function of $T$ for all three algorithms under \eqref{equation_vanishing_multiplicative}, while their \eqref{equation_doubling_trick} versions display a linear regret of exploration. 

    \clearpage
    \section{Minimax regret guarantees under \texorpdfstring{\eqref{equation_vanishing_multiplicative}}{(VM)}}
    \label{appendix_minimax}

    In this appendix, we establish the minimax regret guarantees as given by \Cref{theorem_main}, assertion 1.
    We further provide a large range of general\footnote{With the exception of the pioneer work of \cite{auer_near_optimal_2009}, previous works tend to overlook the well-behavior of \texttt{EVI} from a theoretical perspective. \Cref{appendix_evi} provides a more rigorous treatment.} results on \texttt{EVI} (\Cref{appendix_evi}) and confidence regions (\Cref{appendix_confidence_region}) that will be used in other sections.  
    In \Cref{appendix_regret_episodes}, we provide a regret bound for instances of \texttt{KLUCRL} running with the $f$-\eqref{equation_vanishing_multiplicative} rule for general non-increasing $f : \N \to [0, 1]$.
    This bound is to be combined with the bound on the number of episodes provided in \Cref{appendix_number_episodes} to obtain \Cref{theorem_minimax_regret}.

    \begin{blackblock}
        \begin{theorem}
        \label{theorem_minimax_regret}
            Let $f : \N \to [0, 1]$ and consider running \texttt{KLUCRL} with episodes managed by $f$-\eqref{equation_vanishing_multiplicative} and let $\models_\diameter$ be the set of Markov decision processes with diameter less than $\diameter$. 
            If $f(t) = \Omega(t^{-1/2})$, then:
            \begin{equation*}
                \sup_{\model' \in \models_\diameter} \Reg(T; \model')
                =
                \OH \parens*{
                    \diameter \State \tsqrt{\Action T \log(T)}
                }
            \end{equation*}
        \end{theorem}
    \end{blackblock}

    \paragraph{Notations.}
    The empirical transition kernel and mean reward vector at learning step $t$ are denoted $\hat{\kernel}_t$ and $\hat{\reward}_t$.
    The policy played at time $t$ is $\policy_t$. 
    By design of \texttt{EVI}, the policy $\policy$ that it returns at time $t$ satisfies a Poisson equation (\Cref{corollary_evi_convergence}) of the form $\tilde{\gain}_t + \tilde{\bias}_t = \tilde{\reward}_t + \tilde{\kernel}_t \tilde{\bias_t}$ where $\vecspan{\tilde{\gain}_t} = 0$, $\tilde{\reward}_t(\state, \policy(\state)) \in \rewards_{\state, \policy(\state)}(t)$ and $\tilde{\kernel}_t(\state, \policy(\state)) \in \kernels_{\state, \policy(\state)}(t)$ for all $\state \in \states$; and $\tilde{\bias}_t$ is the bias function of the Markov reward process $(\tilde{\reward}_t, \tilde{\kernel}_t)$. 
    As shown formally in \Cref{appendix_evi} thereafter, $\tilde{\gain}_t = \gain^*(\models(t))$ is the optimal gain of $\models(t)$ and
    \begin{equation}
    \label{equation_bias_diameter}
        \vecspan{\tilde{\bias}_t}
        \le
        \diameter(\models(t))
    \end{equation}
    which is bounded by $\diameter(\model)$ as soon as $\model \in \models(t)$. 
    We further introduce $\episodes(T) := \braces{k \in \N: t_k \le T}$ the set of episodes starting prior to $T \ge 1$. 

    \subsection{Properties of Extended Value Iteration (\texttt{EVI}) and extended MDPs}
    \label{appendix_evi}

    When the confidence region $\models(t)$ is in product form $\models(t) \equiv \product_{\pair \in \pairs} \rewards_\pair(t) \times \kernels_\pair (t)$, such as in our case (see \Cref{section_optimistic_model_based} and \Cref{equation_confidence_region}), it can be seen as a single Markov decision process with compact action space by \emph{extending} actions.
    This extended formulation of $\models(t)$ goes back to \cite{auer_near_optimal_2009} and is what allows to interpret the optimistic gain \eqref{equation_optimistic_gain} as the optimal gain function of $\models(t)$ seen as a Markov decision process.
    Specifically, the extended action space of $\models(t)$ from $\state \in \states$ is:
    \begin{equation*}
        \tilde{\actions}(\state; t) 
        :=
        \product_{\action \in \actions(\state)}
        \parens*{
            \rewards_{\state, \action}(t) \times \kernels_{\state, \action}(t)
        }
    \end{equation*}
    that we may more simply write $\tilde{\actions}(\state)$.
    Accordingly, a choice of action in $\models(t)$ consists in a choice of a vanilla action from $\state$ (i.e., $\action \in \actions(\state)$) as well as a plausible reward and transition kernel for that choice of action. 
    Policies of $\models(t)$ are \strong{extended policies}, and take the form of $(\policy, \reward', \kernel')$ where $\policy$ is a policy of $\model$ and $\reward', \kernel'$ are plausible choices of reward function and transition kernel for that policy. 
    Extended Value Iteration (\texttt{EVI}) consists in iterating the \strong{Bellman operator} \cite[§8.5]{puterman_markov_1994} of $\models(t)$ seen as an extended MDP.
    In the case of $\models(t)$, its Bellman operator is given by $\mathcal{\bellman}(t) \equiv \bellman(\models(t)): \R^\states \to \R^\states$, 
    \begin{equation}
    \label{equation_extended_bellman_operator}
        \parens*{\mathcal{\bellman}(t) u}(\state)
        =
        \max_{\action \in \actions(\state)}
        \max_{\reward \in \rewards_{\state, \action}(t)}
        \max_{\kernel \in \kernels_{\state, \action}(t)}
        \braces[\big]{
            \reward(\state, \action) 
            + 
            \kernel(\state, \action) u
        }
        .
    \end{equation}
    \texttt{EVI} consists in iterating $\mathcal{\bellman}(t)$ until convergence to a near span-fixpoint, i.e., in computing $u_{n+1} = \mathcal{\bellman}(t) u_n$ until $\vecspan{u_{n+1} - u_n} < \epsilon$ where $\epsilon$ is the desired numerical precision. 
    Once the condition ``$\vecspan{u_{n+1} - u_n} < \epsilon$'' is reached, the algorithm returns the policy $\policy : \states \to \actions$ such that $\policy(\state)$ is a choice of action achieving the maximum in \eqref{equation_extended_bellman_operator} for $u = u_n$. 
    This is the algorithm Value Iteration \cite[§8.5]{puterman_markov_1994} applied to $\models(t)$. 

    While \texttt{EVI} performs very well in practice and rarely struggles to converge, there is actually no existing theoretical guarantees regarding its convergence. 
   
    MDPs with compact action spaces are to be treated with care, especially because the existence of solutions to the Bellman equations is not always guaranteed.
    As a consequence to this, the convergence of the iterates of Bellman operators is not guaranteed in general. This issue has been largely overlooked in the reinforcement learning literature and curious readers can take a look at \cite{schweitzer_undiscounted_1985} for that matter.
    In \cite{auer_near_optimal_2009} for \texttt{UCRL2}, the authors do address this issue and argue that the maximum in \eqref{equation_extended_bellman_operator} must be achieved at some vertex of the polytope given by the $\ell_1$-ball spawn by the confidence region.
    Therefore, the maximum is always a maximum over finitely many elements, so $\models(t)$ can be reduced to an extended MDP with \emph{finite} action space.
    This argument can be replicated for confidence regions based on empirical Bernstein inequalities such as for \texttt{UCRL2B} \cite{fruit_improved_2020}, although not explicitly mentioned. 
    It fails completely when $\kernels_\pair (t)$ has smooth boundary, such as for \texttt{KLUCRL} \cite{filippi_optimism_2010} and confidence regions used here.
    Thankfully and in general, MDPs with compact action spaces are much better behaved when they are \strong{communicating} (see \Cref{assumption_communicating}).
    Thankfully again, this is the case of $\models(t)$. 

    \begin{proposition}[\cite{schweitzer_brouwer_1987}]
    \label{proposition_compact_mdps}
        Let $\models$ be a communicating Markov decision process with finite state space $\states$ and compact action space $\actions$.
        Assume that $\reward(\pair) \in [0, 1]$ and that $\action \in \actions(\state) \mapsto \reward(\state, \action)$ and $\action \in \actions(\state) \mapsto \kernel(\state, \state)$ are continuous functions. 
        Then:
        \begin{enumerate}
            \item 
                Its Bellman operator $\bellman : \R^\state \to \R^\states$ given by $(\bellman u)(\state) = \max_{\action \in \actions(\state)} \braces{\reward(\state, \action) + \kernel(\state, \action) u}$ admits a span-fixpoint, i.e., $\exists u \in \R^\states$ such that $\vecspan{\bellman u - u} = 0$;
            \item 
                If $\kernel(\state|\state,\action) > 0$ for all $(\state, \action) \in \pairs$, then the iterates of the Bellman operator converge to a span-fixpoint with linear convergence speed, i.e., there is $\gamma < 1$ such that for all $u \in \R^\states$, $\vecspan{\bellman^{n+1} u - \bellman^n u} = \OH(\gamma^n)$ when $n \to \infty$. 
        \end{enumerate}
    \end{proposition}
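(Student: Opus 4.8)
\emph{Strategy.} I would prove the two assertions separately: assertion~1 (existence of a span-fixpoint) by the vanishing-discount method, and assertion~2 (geometric convergence of the iterates) by exhibiting a span-seminorm contraction of some iterate $\bellman^J$, the positive self-loop hypothesis supplying the aperiodicity that is needed. For assertion~1, introduce for $\beta \in (0,1)$ the discounted operator $\bellman_\beta u(\state) := \max_{\action \in \actions(\state)}\{\reward(\state,\action) + \beta \kernel(\state,\action) u\}$. Since $\action \mapsto \reward(\state,\action)$ and $\action \mapsto \kernel(\state,\action)$ are continuous and $\actions(\state)$ is compact, the maximum is attained (Berge/Weierstrass), and $\bellman_\beta$ is a $\beta$-contraction for $\norm{\cdot}_\infty$, hence has a unique fixed point $V_\beta = \bellman_\beta V_\beta$. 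Fixing a reference state $\state_0$, set $h_\beta := V_\beta - V_\beta(\state_0) e$ and $g_\beta := (1-\beta) V_\beta(\state_0) \in [0,1]$. The communicating assumption (finite $\diameter$) yields a uniform bound $\vecspan{h_\beta} \le \diameter$, exactly in the spirit of \eqref{equation_bias_diameter}, by comparing $V_\beta$ at two states through the policy that travels between them in minimal expected time.

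With $\{h_\beta\}$ confined to a bounded subset of $\R^\states$ and $\{g_\beta\}$ bounded, I would extract (Bolzano--Weierstrass) a sequence $\beta_n \uparrow 1$ with $h_{\beta_n} \to h$ and $g_{\beta_n} \to g$. Using $\kernel(\state,\action) e = 1$, the fixed-point equation normalizes to $h_{\beta_n}(\state) + g_{\beta_n} = \max_{\action}\{\reward(\state,\action) + \beta_n \kernel(\state,\action) h_{\beta_n}\}$; letting $n \to \infty$ and interchanging limit and maximum (justified by compactness of $\actions(\state)$, continuity of $\reward,\kernel$, and $h_{\beta_n} \to h$) gives $h + g e = \bellman h$, i.e.\ $\vecspan{\bellman h - h} = 0$, with $g = \optgain$. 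Alternatively, one may run the Brouwer fixed-point argument of \cite{schweitzer_brouwer_1987} directly on the normalized operator.

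For assertion~2 I would reduce geometric convergence to a span contraction of an iterate: it suffices to find $J \ge 1$ and $\delta > 0$ with $\vecspan{\bellman^J u - \bellman^J v} \le (1-\delta)\vecspan{u - v}$ for all $u,v$. The underlying one-step estimate is the two-function comparison: choosing the maximizing action at the arg-max and arg-min states of $\bellman u - \bellman v$ and invoking the total-variation bound $\sum_j (p_1(j)-p_2(j)) w_j \le (1 - \sum_j \min\{p_1(j),p_2(j)\})\vecspan{w}$ gives $\vecspan{\bellman u - \bellman v} \le (1 - \sum_j \min\{p_1(j), p_2(j)\})\,\vecspan{u-v}$, where $p_1,p_2$ are the kernels realized by the two maximizers. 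Iterating, the self-loop hypothesis $\kernel(\state|\state,\action) > 0$ (uniformly $\ge \delta_0 > 0$ by continuity and finiteness) furnishes aperiodicity while communication furnishes reachability; together they are what one uses to obtain a uniform overlap $\sum_j \min \ge \delta$ for the $J$-step kernels realized by $\bellman^J$. Given a span-fixpoint $u^*$ from assertion~1, additive homogeneity gives $\bellman^{nJ} u^* = u^* + nJ \optgain\, e$, so $\vecspan{\bellman^{nJ} u - u^*} \le (1-\delta)^n \vecspan{u - u^*} \to 0$; filling the intermediate indices by non-expansiveness of $\bellman$ then yields $\vecspan{\bellman^{n+1}u - \bellman^n u} = \OH(\gamma^n)$ with $\gamma := (1-\delta)^{1/J} < 1$.

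The main obstacle is the interaction between the maximization and the contraction estimate. Because the maximizing action and kernel depend on the argument, there is no single Markov chain to appeal to, and the one-step overlap $\sum_j \min\{p_1,p_2\}$ can vanish when the two arg-extremal states differ (self-loops keep the two kernels concentrated on distinct states). The delicate point is therefore that the uniform minorization must be established for the $J$-step kernels \emph{realized by $\bellman^J$}, not by a fixed policy; a naive counting does not suffice, since adversarially chosen actions can keep two trajectories in disjoint regions of a merely communicating chain. Ruling out such periodic and separating obstructions is exactly where positive self-loops and communication are jointly indispensable, and this is the technical heart of the argument of \cite{schweitzer_brouwer_1987}.
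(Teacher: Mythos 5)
First, note that the paper does not prove this proposition at all: it is imported verbatim from \cite{schweitzer_brouwer_1987}, so there is no in-paper argument to compare against. Judged on its own terms, your treatment of assertion~1 is sound and essentially complete: the vanishing-discount route (uniform bound $\vecspan{h_\beta} \le \diameter$ from the communicating assumption, Bolzano--Weierstrass, passage to the limit in the normalized fixed-point equation, with the interchange of limit and maximum justified by compactness and continuity) is a standard and valid alternative to the Brouwer fixed-point mapping of the cited source. It is arguably more elementary, at the price of only producing \emph{some} span-fixpoint along a subsequence rather than the structural information Schweitzer extracts.

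Assertion~2 is where there is a genuine gap, and you have in effect diagnosed it yourself without closing it. The positive self-loop hypothesis only yields the diagonal minorization $\kernel(\state|\state,\action) \ge \delta_0$, which gives overlap of $J$-step kernels started from the \emph{same} state and nothing at all for two distinct starting states: with $\kernel(\state|\state,\action)$ close to $1$ at two different states, the realized $J$-step kernels can remain concentrated on disjoint sets for every $J$, so the uniform span-contraction coefficient of $\bellman^J$ over all realizable action sequences equals $1$ for every $J$. Communication does not repair this, because it only asserts reachability under \emph{some} policy, whereas the maximizers selected by $\bellman^J$ along the iteration are not under your control. Your proposed fix --- establish the minorization ``for the $J$-step kernels realized by $\bellman^J$'' --- is therefore not a proof step but a restatement of the difficulty: to make it work one must show that for large $n$ the maximizing actions concentrate on the conserving (optimal) actions at the span-fixpoint and then analyze the chain and period structure of the resulting optimal transition matrices, which is precisely the content of the Schweitzer--Federgruen geometric-convergence analysis that you defer to. As written, the last paragraph of your proposal acknowledges that ``the technical heart of the argument'' is missing, so assertion~2 remains unproven; if you intend to keep the vanishing-discount proof of assertion~1, you should either carry out that concentration-plus-chain-structure argument explicitly or, like the paper, cite assertion~2 rather than claim to prove it.
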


    The span-fixpoint to which $\bellman$ converges is denoted $\optbias$, is the optimal bias function of $\models$, and satisfies a Bellman equation $\optgain + \optbias = \max_{\action \in \actions(\state)} \braces{\reward(\state, \action) + \kernel(\state, \action) \optbias}$. 
    The technical condition ``$\kernel(\state|\state, \action) > 0$ for all $(\state, \action) \in \pairs$'' can always be guaranteed under an aperiodicity transform of $\models$, see \cite[§8.5.4]{puterman_markov_1994} and \cite[§4]{bartlett_regal_2009}, that consists in iterating $\frac 12 (\bellman + \mathrm{Id})$ instead of $\bellman$. 
    This aperiodicity transform indeed improves the convergence speed of \texttt{EVI} in practice, without modification of the quality of its output policy. 
    The communicativity assumption is always satisfied, because by design of the confidence region, $\kernels_\pair(t)$ contains fully-supported transition kernel for all $\pair \in \pairs$ and $t \ge 1$.\footnote{If \texttt{KLUCRL} is ran with prior information on the support of $\kernel(\pair)$, then $\kernels_\pair(t)$ always contains elements with the same support than $\kernel(\pair)$---and the communicativity assumption only depends on the support of transition kernels, independently of how small the transition probabilities can be. So $\models(t)$ is communicating when $\model$ is communicating, which is the case in this work.}
    In the end, we can provide generic guarantees for the convergence of \texttt{EVI}.

    \begin{corollary}
    \label{corollary_evi_convergence}
        Let $\model = (\pairs, \reward, \kernel)$ be a communicating Markov decision process. 
        Let $\models(t) \equiv \product_{\pair \in \pairs} \rewards_\pair(t) \times \kernels_\pair(t)$ be a compact confidence region for $\model$.
        Assume that $\rewards_\pair(t) \subseteq [0, 1]$ and that, for all $\pair \in \pairs$, $\kernels_\pair (t)$ contains some $\kernel'(\pair)$ with $\mathrm{supp}(\kernel'(\pair)) \supseteq \mathrm{supp}(\kernel(\pair))$.
        Then:
        \begin{enumerate}
            \item 
                The extended Bellman operator $\mathcal{\bellman}(t)$, see \eqref{equation_extended_bellman_operator}, admits a span-fixpoint and the optimistic gain $\optgain(\models(t))$ of \eqref{equation_optimistic_gain} is the optimal gain of the extended MDP $\models(t)$;

            \item 
                The iterates of $\frac 12(\mathcal{\bellman}(t) + \mathrm{Id})$ converge linearly fast to a span-fixpoint of $\mathcal{\bellman}(t)$, $\optbias(\models(t))$, that satisfies the Bellman equation 
                \begin{equation*}
                    \optgain(\state; \models(t)) + \optbias(\state; \models(t)) 
                    = 
                    \max_{\reward'(\state, \action) \in \rewards_{\state, \action}(t)}
                    \max_{\kernel'(\state, \action) \in \kernels_{\state, \action}(t)}
                    \braces[\big]{
                        \reward'(\state, \action)
                        +
                        \kernel'(\state, \action) \optbias(\models(t))
                    }
                    .
                \end{equation*}
                Therefore, the extended policy $(\policy, \reward', \kernel')$ achieving $\mathcal{\bellman}(t) \optbias(\models(t))$ satisfies the Poisson equation $\optgain(\state; \models(t)) + \optbias(\state; \models(t)) = \reward'(\state) + \kernel'(\state) \optbias(\models(t))$. 
        \end{enumerate}
    \end{corollary}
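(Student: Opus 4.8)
The plan is to recognize $\models(t)$ as a \emph{communicating} Markov decision process with \emph{compact} action space and then invoke \Cref{proposition_compact_mdps} essentially as a black box. First I would fix the extended-action formulation suggested by \eqref{equation_extended_bellman_operator}: from a state $\state$, an extended action is a triple $(\action, \reward', \kernel')$ with $\action \in \actions(\state)$, $\reward' \in \rewards_{\state,\action}(t)$ and $\kernel' \in \kernels_{\state,\action}(t)$, so the extended action set from $\state$ is the finite union $\bigcup_{\action \in \actions(\state)} \braces{\action} \times \rewards_{\state,\action}(t) \times \kernels_{\state,\action}(t)$ (equivalent to the product display, since the outer $\max$ over $\action$ selects one branch). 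Each $\rewards_{\state,\action}(t)$ and $\kernels_{\state,\action}(t)$ is compact and $\actions(\state)$ is finite, so this set is compact; moreover the reward and transition kernel attached to an extended action are simply its last two coordinates, hence continuous functions of the extended action. With this identification $\mathcal{\bellman}(t)$ is exactly the Bellman operator of a compact-action MDP satisfying the continuity hypotheses of \Cref{proposition_compact_mdps}.

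Second, I would verify that this extended MDP is communicating. The support hypothesis provides, for each pair $\pair$, some $\kernel'(\pair) \in \kernels_\pair(t)$ with $\mathrm{supp}(\kernel'(\pair)) \supseteq \mathrm{supp}(\kernel(\pair))$. Hence every transition available in $\model$ remains available in $\models(t)$ once the learner commits to $\kernel'$, so any path realizing communicatingness in $\model$ is realizable in the extended MDP; since $\model$ is communicating (\Cref{assumption_communicating}), so is $\models(t)$. Item~1 then follows from \Cref{proposition_compact_mdps}(1): $\mathcal{\bellman}(t)$ admits a span-fixpoint whose associated constant gain is the optimal gain of the extended MDP. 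It remains to identify that optimal gain with the optimistic gain of \eqref{equation_optimistic_gain}, which is where a little care is needed: because a stationary extended policy is precisely a pair $(\policy, \model')$ with $\policy \in \policies$ and $\model' \in \models(t)$, maximizing the gain over extended policies equals $\sup_\policy \sup_{\model' \in \models(t)} \gain^\policy(\model') = \sup_\policy \gain^\policy(\models(t))$.

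For item~2 I would route through the aperiodicity transform. Iterating $\tfrac 12(\mathcal{\bellman}(t) + \mathrm{Id})$ is Value Iteration for the extended MDP whose kernels are replaced by $\tfrac 12(\kernel' + \mathrm{Id})$; every such kernel has self-loop mass at least $\tfrac 12 > 0$, so the positivity hypothesis $\kernel(\state \mid \state, \action) > 0$ of \Cref{proposition_compact_mdps}(2) holds, while compactness, continuity and communicatingness are preserved (supports only grow). Thus these iterates converge linearly fast to a span-fixpoint $u$ of $\tfrac 12(\mathcal{\bellman}(t) + \mathrm{Id})$. Since $\vecspan{\tfrac 12(\mathcal{\bellman}(t) u + u) - u} = \tfrac 12 \vecspan{\mathcal{\bellman}(t) u - u}$, such a $u$ is exactly a span-fixpoint $\optbias(\models(t))$ of $\mathcal{\bellman}(t)$; expanding $\mathcal{\bellman}(t)\optbias(\models(t)) = \optgain(\models(t))\, e + \optbias(\models(t))$ coordinatewise yields the stated Bellman equation, and reading off an extended action attaining the maximum at each state gives the extended policy $(\policy, \reward', \kernel')$ with its Poisson equation.

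I expect the main obstacle to lie not in the final algebra but in the two ``hidden'' verifications that actually license \Cref{proposition_compact_mdps}: the continuity of the extended reward/kernel maps (trivial here only because the confidence region is in product form and the chosen action \emph{is} the plausible model) and, above all, the rigorous identification of the extended-MDP optimal gain with the optimistic gain of \eqref{equation_optimistic_gain}. This identification is delicate because it asserts that restricting to stationary extended policies loses nothing, which is precisely the point at which communicatingness of $\models(t)$ is indispensable: it rules out the compact-action pathologies flagged in \cite{schweitzer_undiscounted_1985}, guaranteeing a constant optimal gain and a well-posed Bellman equation despite the non-finite action space.
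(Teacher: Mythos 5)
Your proposal is correct and follows essentially the same route the paper takes: view $\models(t)$ as a compact-action extended MDP, check communicatingness via the support-inclusion hypothesis, invoke \Cref{proposition_compact_mdps} for existence of a span-fixpoint and identification of the optimal gain, and handle convergence through the aperiodicity transform $\tfrac 12(\mathcal{\bellman}(t) + \mathrm{Id})$, whose span-fixpoints coincide with those of $\mathcal{\bellman}(t)$. The two verifications you flag as the real content (continuity of the extended data and the identification of the extended-MDP optimal gain with the optimistic gain of \eqref{equation_optimistic_gain}) are exactly the points the paper's discussion around \Cref{proposition_compact_mdps} is designed to secure.
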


    A last property that is crucial in the regret analysis of \texttt{EVI}-based algorithms is that their optimal bias $\optbias$ given by \Cref{corollary_evi_convergence} have small span. 
    This result is well-known, see for example \cite[Proposition~3.6]{fruit_exploration_exploitation_2019}.
    We provide a short proof for self-containedness.

    \begin{lemma}
    \label{lemma_bias_bounded_diameter}
        Let $\models$ be a communicating Markov decision process with finite state space $\states$ and compact action space $\actions$. 
        Assume that $\reward(\pair) \in [0, 1]$ and that $\action \in \actions(\state) \mapsto \reward(\state, \action)$ and $\action \in \actions(\state) \mapsto \kernel(\state, \state)$ are continuous functions. 
        Let $\optbias$ be a span-fixpoint of its Bellman operator. 
        Then:
        \begin{equation*}
            \vecspan{\optbias} \le \diameter(\models)
        \end{equation*}
        where $\diameter(\models)$ is the diameter of $\models$, as given by \eqref{equation_diameter}.
    \end{lemma}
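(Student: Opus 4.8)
The plan is to bound $\vecspan{\optbias}$ by relating the gap between the extreme values of $\optbias$ to an expected travel time, via a supermartingale and optional-stopping argument. First I would fix $s^+ \in \arg\max_{\state \in \states} \optbias(\state)$ and $s^- \in \arg\min_{\state \in \states} \optbias(\state)$, so that $\vecspan{\optbias} = \optbias(s^+) - \optbias(s^-)$; if $s^+ = s^-$ the bias is constant and the bound is trivial, so assume $s^+ \ne s^-$. The only structural input I need is the Bellman inequality attached to the span-fixpoint $\optbias$ with its constant $\optgain$ (the optimal gain, cf.\ \Cref{proposition_compact_mdps}): for every $\state \in \states$ and every $\action \in \actions(\state)$,
\[
    \optgain + \optbias(\state) \ge \reward(\state, \action) + \kernel(\state, \action) \optbias ,
\]
which holds because the left-hand side equals the maximum of the right-hand side over $\actions(\state)$.

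The second step is to choose a policy $\policy$ that drives the chain quickly from $s^-$ to $s^+$. By the very definition of the diameter, $\min_{\policy \in \policies} \EE_{s^-}^{\models, \policy}[\tau] \le \diameter(\models)$ with $\tau := \inf \braces*{ t \ge 1 : \State_t = s^+ }$, so for any $\epsilon > 0$ I may pick $\policy$ with $\EE_{s^-}^{\models, \policy}[\tau] \le \diameter(\models) + \epsilon$. Importantly, since the Bellman inequality above is valid uniformly over all actions, I never use stationarity of $\policy$: the argument works verbatim for any policy realizing the travel. With this $\policy$ fixed, I introduce
\[
    M_t := \optbias(\State_t) - \sum_{u=1}^{t-1} \parens*{ \optgain - \reward(\State_u, \Action_u) }
\]
and check that $(M_{t \wedge \tau})$ is a supermartingale for the natural filtration $(\mathcal{F}_t)$: applying the Bellman inequality to the realized pair $(\State_t, \Action_t)$ gives $\EE[\optbias(\State_{t+1}) \mid \mathcal{F}_t] = \kernel(\State_t, \Action_t) \optbias \le \optgain + \optbias(\State_t) - \reward(\State_t, \Action_t)$, which is exactly the one-step supermartingale inequality once the running sum is telescoped.

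Finally I would invoke the optional stopping theorem at $\tau$. Integrability is immediate: $\optbias$ is bounded since $\states$ is finite, and $\optgain, \reward \in [0, 1]$, so $M_t$ has uniformly bounded increments, while $\EE_{s^-}^{\models, \policy}[\tau] < \infty$ by communicativity; together these justify $\EE[M_\tau] \le M_1 = \optbias(s^-)$. Using $\State_\tau = s^+$ and rearranging yields
\[
    \optbias(s^+) - \optbias(s^-) \le \EE_{s^-}^{\models, \policy} \brackets*{ \sum_{u=1}^{\tau-1} \parens*{ \optgain - \reward(\State_u, \Action_u) } } \le \EE_{s^-}^{\models, \policy}[\tau - 1] \le \diameter(\models) + \epsilon ,
\]
where the middle inequality uses the crude bound $\optgain - \reward(\State_u, \Action_u) \le \optgain \le 1$. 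Letting $\epsilon \to 0$ gives $\vecspan{\optbias} \le \diameter(\models)$. The main obstacle is orienting the argument correctly: one must travel from the \emph{minimizing} state $s^-$ toward the \emph{maximizing} state $s^+$ and apply the wasteful estimate $\optgain - \reward \le 1$, because running the argument in the opposite direction would only produce a vacuous lower bound on the gap; the only other delicate point, the legitimacy of optional stopping, is dispatched by the bounded-increment observation above.
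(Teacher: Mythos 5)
Your proof is correct and follows essentially the same route as the paper's: both apply the Bellman inequality attached to the span-fixpoint along a diameter-achieving policy from the minimizing to the maximizing state of $\optbias$, and conclude via optional stopping of the associated (super)martingale, bounding the accumulated $\optgain - \reward$ terms by $1$ per step. The only cosmetic differences are that the paper states the bound for arbitrary pairs of states before specializing to the argmax/argmin, and bounds the reward terms via $\vecspan{\reward} \le 1$ rather than $\optgain \le 1$; your $\epsilon$-approximate choice of policy is a harmless (if anything slightly more careful) variant.
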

    \begin{proof}
        Fix two states $\state, \state' \in \states$ and let $\policy$ such that $\EE_{\state}^{\policy}[\tau_{\state'}] < \infty$ where $\tau_{\state'} := \inf\braces{t > 1: \State_t = \state'}$ is the reaching time to $\state'$. 
        Since $\optgain(\state) + \optbias(\state) \ge \reward(\state, \policy(\state)) + \kernel(\state, \policy(\state)) \optbias$, we have:
        \begin{align*}
            0 
            & \le
            \EE_{\state}^{\policy} \brackets*{
                \sum_{t=1}^{\tau_{\state'} - 1}
                \parens*{
                    \optgain(\State_t) - \reward(\Pair_t)
                    + 
                    \parens*{e_{\State_t} - \kernel(\Pair_t)} \optbias
                }
            }
            \\
            & \overset{(\dagger)}\le
            \vecspan{\reward}
            \EE_{\state}^\policy [\tau_{\state'}]
            + \optbias(\state) - \optbias(\state')
        \end{align*}
        where $(\dagger)$ follows from Doob's optional stopping theorem and that $\vecspan{\optgain - \reward} \le \vecspan{\reward}$. 
        By taking the policy minimizing $\EE_{\state}^\policy[\tau_{\state'}]$, we conclude that $\optbias(\state') - \optbias(\state) \le \diameter(\models)$.
        Because this holds for arbitrary $\state, \state' \in \states$, we conclude that $\vecspan{\optbias} = \max(\optbias) - \min(\optbias) \le \diameter(\models)$. 
    \end{proof}

    \subsection{The confidence region of \texttt{KLUCRL}}
    \label{appendix_confidence_region}

    The confidence region of \texttt{KLUCRL} is designed to hold with high probability (\Cref{lemma_confidence_region}). 

    \begin{lemma}
    \label{lemma_confidence_region}
        The confidence region holds with high probability
        \begin{equation*}
            \Pr \parens*{
                \exists t \ge T:
                \model \notin \models(t)
            }
            \le
            2 \abs{\pairs} T^{-1}
            .
        \end{equation*}
    \end{lemma}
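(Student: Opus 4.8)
The plan is to show that \emph{every} confidence constraint appearing in \eqref{equation_confidence_region} is individually violated at some time $t\ge T$ only with small probability, and then to conclude by a union bound. First note that, since $\model$ is the true model of the learning problem, we have $\reward(\pair)\in\rewards_\pair^0$ and $\kernel(\pair)\in\kernels_\pair^0$ for every $\pair\in\pairs$, so the intersections with the ambient sets in \eqref{equation_confidence_region} never exclude $\model$. Consequently $\model\notin\models(t)$ holds if and only if, for some pair, either $\visits_\pair(t)\,\KL(\hat{\reward}_\pair(t)||\reward(\pair)) > \log(2et)$ or $\visits_\pair(t)\,\KL(\hat{\kernel}_\pair(t)||\kernel(\pair)) > \abs{\states}\log(2et)$. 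A union bound over the $2\abs{\pairs}$ constraints (one reward-constraint and one kernel-constraint per pair) then reduces the statement to proving
\[
\Pr\parens*{\exists t\ge T: \visits_\pair(t)\,\KL(\hat{\kernel}_\pair(t)||\kernel(\pair)) > \abs{\states}\log(2et)} \le T^{-1}
\]
for each fixed pair, and the analogous bound $\le T^{-1}$ for the reward-constraint (the Bernoulli case with categorical dimension $2$ rather than $\abs{\states}$).

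Next I would move to a clean probability space for the samples. The obstacle here is that the observations feeding $\hat{\reward}_\pair$ and $\hat{\kernel}_\pair$ are gathered at \emph{adaptively chosen} visit times, so they are not, a priori, an i.i.d.\ block. I would use the standard stack coupling: for each pair $\pair$ predraw an i.i.d.\ sequence of reward/next-state samples and reveal its $i$-th entry upon the $i$-th visit to $\pair$. Then $\hat{\reward}_\pair(t)$ and $\hat{\kernel}_\pair(t)$ become deterministic functions of the first $n:=\visits_\pair(t)$ entries of genuinely i.i.d.\ stacks, and the empirical-KL deviation depends on the interaction only through the visit count $n\le t$. Since the threshold $\abs{\states}\log(2et)$ is increasing in $t$ while $\hat{\kernel}_{\pair,n}$ is frozen between visits, a violation at some $t\ge T$ forces a violation at the first time of its visit-block, where $t\ge\max(T,n+1)$; this converts the global-time event into the anytime event $\exists n\ge 1:\ n\,\KL(\hat{\kernel}_{\pair,n}||\kernel(\pair)) > \abs{\states}\log\parens*{2e\max(T,n+1)}$.

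The core is then a time-uniform concentration inequality for the empirical KL of a categorical distribution. The method of types gives the pointwise estimate $\Pr\parens*{n\,\KL(\hat{\kernel}_{\pair,n}||\kernel(\pair)) > x}\le (n+1)^{\abs{\states}-1}e^{-x}$, since there are at most $(n+1)^{\abs{\states}-1}$ distinct empirical distributions at sample size $n$ and each has probability at most $e^{-n\KL(\cdot||\cdot)}$. Plugging $x=\abs{\states}\log(2e\max(T,n+1))$ cancels $\abs{\states}-1$ of the $\abs{\states}$ powers in the threshold, leaving a residual harmonic term, so a \emph{naive} union over $n$ (or over $t$) is off by exactly a logarithmic factor. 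I would recover this missing factor with a maximal inequality rather than a pointwise union: applying Ville's inequality to the exponential supermartingales $\exp\parens*{\lambda\sum_{i\le n}(\cdot)-n\psi(\lambda)}$ underlying the Chernoff bound (equivalently, a method-of-mixtures/Laplace argument, or a peeling of the time axis into geometric blocks $[2^jT,2^{j+1}T)$ with a frozen threshold on each block) yields a per-block bound of order $(2^jT)^{-1}$, whose geometric sum telescopes to the advertised $T^{-1}$. The reward case is identical with dimension $2$, giving coefficient $1$ in front of $\log(2et)$, matching \eqref{equation_confidence_region}.

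The main obstacle, and the heart of the argument, is precisely this anytime uniformity: the threshold is indexed by the global clock $t$ while statistical information for $\pair$ accrues only at the (random, adaptively placed) visit counts $\visits_\pair(t)\le t$. Both naive devices — a union over all times $t\ge T$ and a union over all sample sizes $n$ — are individually too lossy (each costs a spurious $\log$ or polynomial factor), and the clean $T^{-1}$ coverage error per constraint is obtained only by a genuine maximal/supermartingale inequality that pays for each visit count exactly once. Everything else — the reduction via $\model\in\models^0$, the stack coupling, and the final union bound yielding $2\abs{\pairs}T^{-1}$ — is bookkeeping around this estimate.
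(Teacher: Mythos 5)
Your proposal is correct and follows essentially the same route as the paper: reduce to a per-pair, per-constraint bound by a union bound over the $2\abs{\pairs}$ constraints, then invoke a time-uniform concentration inequality for the empirical KL divergence whose threshold $\abs{\states}\log(2et)$ dominates $\log(T)+(\abs{\states}-1)\log(e(1+\visits_\pair(t)/(\abs{\states}-1)))$ for all $t\ge T$. The only difference is that the paper imports this maximal inequality wholesale from \cite[Proposition~1]{jonsson2020planning}, whereas you sketch its derivation via Ville's inequality and the method of mixtures; your diagnosis that the pointwise method-of-types union bound is too lossy and that the anytime uniformity is the crux is exactly where the cited result does its work.
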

    \begin{proof}
        This result follows by a time-uniform concentration inequality for empirical likelihoods of \cite{jonsson2020planning}, see their Proposition~1.
        \cite[Proposition~1]{jonsson2020planning} states the following:
        Given $d \ge 2$, and $\kernel \in \probabilities[d]$ a probability distribution over $\braces{1, \ldots, d}$, if $\hat{\kernel}_n \in \probabilities[d]$ denotes the empirical average of $n$ i.i.d.~samples of $\kernel$, then for all $\delta \ge 0$, 
        \begin{equation*}
            \Pr \parens*{
                \exists n \ge 1
                :
                n \KL(\hat{\kernel}_n||\kernel)
                > 
                \log\parens*{\frac 1\delta}
                + 
                (d-1) \log \parens*{
                    e \parens*{
                        1 + \frac n{d-1}
                    }
                }
            }
            \le
            \delta
            .
        \end{equation*}
        In our case, we readily obtain that for all $\pair \in \pairs$ and $t \ge 1$,
        \begin{equation}
        \label{equation_proof_confidence_region_1}
            \Pr \parens*{
                \visits_\pair (t)
                \KL(\hat{\kernel}_t(\pair)||\kernel(\pair))
                >
                \log(t)
                + 
                (\abs{\states}-1)
                \log \parens*{
                    e \parens*{
                        1 + \frac{\visits_\pair(t)}{\abs{\states}-1}
                    }
                }
            }
            \le
            \frac 1t
            .
        \end{equation}
        Since $\visits_\pair (t) \le t-1$, we have in particular that for all $\pair \in \pairs$ and $t \ge 1$,
        \begin{equation}
        \label{equation_proof_confidence_region_2}
            \Pr \parens[\Big]{
                \visits_\pair (t)
                \KL(\hat{\kernel}_t(\pair)||\kernel(\pair))
                >
                \abs{\states} \log(2 e t)
            }
            \le
            \frac 1t
        \end{equation}
        where we recognize the definition of $\kernels_\pair (t)$ in \eqref{equation_confidence_region}. 
        Rewards are done similarly, applying \cite[Proposition~1]{jonsson2020planning} for $d = 2$. 
        Conclude by union bound over $\kerrew \in \braces{\reward, \kernel}$ and $\pair \in \pairs$. 
    \end{proof}
    
    Note that there is a significant loss of information when going from \eqref{equation_proof_confidence_region_1} to \eqref{equation_proof_confidence_region_2}, in the sense that \eqref{equation_proof_confidence_region_1} is much more precise than \eqref{equation_proof_confidence_region_2}. 
    It means that we could take a much more precise confidence region than the one used in \eqref{equation_confidence_region}. 
    The confidence region has been simplified to ease the calculations in the proof of the shrinking effect (\Cref{lemma_shrinking}), see \Cref{appendix_shrinking}.

    \subsection{Bounds of classical error terms}
    \label{appendix_bounds_classical_errors}

    The maximal version of Hoeffding's inequality below (\Cref{lemma_hoeffding_maximal}) is a standard result from Hoeffding \cite{hoeffding_probability_1963}.
    It is used in the proofs of \Cref{lemma_expected_reward_error,lemma_expected_kernel_error} in integrated form to bound the error due to optimism. 

    \begin{lemma}[\cite{hoeffding_probability_1963}]
    \label{lemma_hoeffding_maximal}
        Let $(X_k)_{k \ge 1}$ be a sequence of i.i.d.~random variables in $[0, 1]$ and let $\hat{\mu}_n$ their empirical mean after $n$ samples. 
        Let $\mu := \EE[X_1]$ be the true mean.
        Then, for all $x \ge 0$ and $m \ge 1$, we have:
        \begin{equation*}
            \Pr \parens*{
                \max_{n \ge m} \braces*{\hat{\mu}_n - \mu}
                \ge 
                x
            }
            \le 
            \exp \braces*{-2 m x^2}
            .
        \end{equation*}
    \end{lemma}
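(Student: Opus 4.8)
The plan is to realize the maximal inequality through an exponential supermartingale combined with Ville's maximal inequality for non-negative supermartingales, rather than through a (lossy) union bound of per-$n$ Hoeffding estimates. Write $S_n := \sum_{k=1}^n (X_k - \mu)$, so that $\hat{\mu}_n - \mu = S_n/n$ and the event to be controlled is $\{\exists n \ge m : S_n \ge n x\}$. For a fixed $\lambda > 0$ I would introduce $M_n := \exp(\lambda S_n - n \lambda^2/8)$ and verify that it is a non-negative supermartingale for the natural filtration $(\mathcal{F}_n)$, with $M_0 = 1$. The supermartingale property is exactly Hoeffding's lemma: since each $X_k$ lies in $[0,1]$, we have $\EE[\exp(\lambda(X_k - \mu))] \le \exp(\lambda^2/8)$, whence $\EE[M_n \mid \mathcal{F}_{n-1}] = M_{n-1}\, \EE[\exp(\lambda(X_n - \mu))]\, \exp(-\lambda^2/8) \le M_{n-1}$.

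Next I would transfer the deviation event onto the supermartingale. On $\{S_n \ge n x\}$ one has $M_n \ge \exp(n(\lambda x - \lambda^2/8))$. Choosing $\lambda = 4x$ maximizes the exponent and yields $\lambda x - \lambda^2/8 = 2x^2 \ge 0$, so that on this event $M_n \ge \exp(2 n x^2) \ge \exp(2 m x^2)$ for every $n \ge m$. Consequently $\{\exists n \ge m : \hat{\mu}_n - \mu \ge x\} \subseteq \{\sup_{n \ge 0} M_n \ge \exp(2 m x^2)\}$.

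Finally I would apply Ville's inequality, namely $\Pr(\sup_{n \ge 0} M_n \ge a) \le \EE[M_0]/a = 1/a$, with $a = \exp(2 m x^2)$, to conclude $\Pr(\max_{n \ge m}(\hat{\mu}_n - \mu) \ge x) \le \exp(-2 m x^2)$. For self-containedness I would include the short derivation of Ville's inequality from optional stopping: setting $\tau := \inf\{n : M_n \ge a\}$, the stopped process $M_{\tau \wedge n}$ is again a non-negative supermartingale, so $a\, \Pr(\tau \le n) \le \EE[M_{\tau \wedge n}] \le \EE[M_0] = 1$, and letting $n \to \infty$ gives $\Pr(\tau < \infty) \le 1/a$.

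The only genuinely delicate point is the maximal passage itself. A naive union bound over $n \ge m$ of the per-$n$ tails $\exp(-2 n x^2)$ is summable but destroys the clean single-term form of the bound; the martingale route is precisely what collapses the entire tail $\{n \ge m\}$ into the single boundary value at $n = m$. The main obstacle is therefore to make the tilt parameter $\lambda = 4x$ depend only on $x$ and not on $n$, since it is exactly this uniform choice that lets the $n$-dependent lower bound $\exp(2 n x^2)$ be controlled from below by the boundary term $\exp(2 m x^2)$ across the whole range $n \ge m$.
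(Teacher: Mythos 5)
Your proof is correct. The paper does not prove this lemma at all---it is stated with a citation to Hoeffding (1963) as a standard result---and your exponential-supermartingale argument ($M_n = \exp(\lambda S_n - n\lambda^2/8)$, Hoeffding's lemma for the increment, the uniform tilt $\lambda = 4x$ so that the event $\{S_n \ge nx\}$ forces $M_n \ge e^{2nx^2} \ge e^{2mx^2}$ for all $n \ge m$, then Ville's inequality) is precisely the classical derivation underlying the cited result. The one step worth writing carefully is the identification of $\max_{n\ge m}$ with $\exists n \ge m$ (harmless here since $\hat\mu_n - \mu \to 0$ a.s., or by replacing $x$ with $x-\epsilon$ and letting $\epsilon \downarrow 0$); otherwise nothing is missing.
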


    \begin{lemma}
    \label{lemma_expected_reward_error}
        The expected cumulative optimistic reward error is bounded as follows:
        \begin{equation*}
            \EE \brackets*{
                \sum_{k \in \episodes(T)}
                \sum_{t = t_k}^{t_{k+1}-1}
                \pospart*{
                    \tilde{\reward}_{t_k}(\Pair_t) - \reward(\Pair_t)
                }
            }
            =
            \OH \parens*{
                \tsqrt{\abs{\pairs} T \log(T)}
            }
            .
        \end{equation*}
    \end{lemma}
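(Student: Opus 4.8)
The plan is to bound the cumulative optimistic reward error pathwise, pair by pair, by splitting each per-step error into an \emph{empirical deviation} and a \emph{confidence half-width}, and then summing the two contributions separately. Fix a pair $\pair \in \pairs$ and isolate its contribution $\tsum_{t \le T, \Pair_t = \pair} \pospart*{\tilde{\reward}_{t_{k(t)}}(\pair) - \reward(\pair)}$, where $k(t)$ denotes the episode containing $t$. Since the policy of episode $k$ is produced by \texttt{EVI}, the optimistic reward satisfies $\tilde{\reward}_{t_k}(\pair) \in \rewards_\pair(t_k)$ by construction, so the KL constraint defining $\rewards_\pair(t_k)$ in \eqref{equation_confidence_region} together with Pinsker's inequality $\kl(p, p') \ge 2(p-p')^2$ yields $\tilde{\reward}_{t_k}(\pair) - \hat{\reward}_\pair(t_k) \le \sqrt{\log(2et_k)/(2\visits_\pair(t_k))}$ deterministically. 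Writing $\tilde{\reward}_{t_k}(\pair) - \reward(\pair) \le (\hat{\reward}_\pair(t_k) - \reward(\pair)) + \sqrt{\log(2et_k)/(2\visits_\pair(t_k))}$ then separates the two terms. A key structural input is that episodes are managed by $f$-\eqref{equation_vanishing_multiplicative} with $f \le 1$: this forces $\visits_\pair(t) \le 2\visits_\pair(t_k)$ for every $t$ inside episode $k$, hence $\visits_\pair(t_k) \ge \tfrac12 \visits_\pair(t)$, so the half-width evaluated at the stale count $t_k$ is comparable to the one at the current count.

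Next I would handle the \emph{width} term, which is entirely deterministic. Using $\visits_\pair(t_k) \ge \tfrac12 \visits_\pair(t)$ and $t_k \le T$, a visit of $\pair$ at running count $\visits_\pair(t) = n$ contributes at most a constant multiple of $\sqrt{\log(2eT)/n}$, while the $\OH(1)$ visits occurring when $\visits_\pair(t_k) = 0$ contribute at most a constant. Summing the telescoping series $\tsum_{n=1}^{\visits_\pair(T)} n^{-1/2} \le 2\sqrt{\visits_\pair(T)}$ bounds the per-pair width contribution by $\OH(\tsqrt{\visits_\pair(T)\log(T)})$, and Cauchy--Schwarz over pairs with $\tsum_\pair \visits_\pair(T) = T-1$ gives $\tsum_\pair \tsqrt{\visits_\pair(T)} \le \tsqrt{\abs{\pairs}(T-1)}$, hence the claimed $\OH(\tsqrt{\abs{\pairs} T \log(T)})$.

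Finally I would handle the \emph{empirical} term $\EE[\tsum_{t \le T, \Pair_t = \pair} \pospart*{\hat{\reward}_\pair(t_{k(t)}) - \reward(\pair)}]$ by applying the maximal Hoeffding inequality of \Cref{lemma_hoeffding_maximal} \emph{in integrated form}: for the empirical mean $\hat{\reward}^{(n)}_\pair$ of the first $n$ samples of $\pair$, $\EE[\pospart*{\max_{n \ge m}(\hat{\reward}^{(n)}_\pair - \reward(\pair))}] = \integral_0^\infty \Pr(\max_{n \ge m}(\hat{\reward}^{(n)}_\pair - \reward(\pair)) \ge x)\,\dd x \le \integral_0^\infty \exp(-2mx^2)\,\dd x = \OH(m^{-1/2})$. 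Bounding $\hat{\reward}_\pair(t_k) - \reward(\pair)$ at a visit of running count $n$ by $\max_{j \ge \lceil n/2\rceil}(\hat{\reward}^{(j)}_\pair - \reward(\pair))$ (again via $\visits_\pair(t_k) \ge \tfrac12 \visits_\pair(t)$) and summing the resulting $\OH(n^{-1/2})$ bounds over levels yields, after Cauchy--Schwarz over pairs, a contribution of order $\OH(\tsqrt{\abs{\pairs} T})$, which is of strictly lower order than the width term.

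The main obstacle is the interplay between the maximal-Hoeffding step and the \emph{random, correlated} visit counts $\visits_\pair(T)$. One cannot crudely replace $\visits_\pair(T)$ by $T$ before applying Cauchy--Schwarz, since that degrades the bound to $\OH(\abs{\pairs}\tsqrt{T\log T})$, which is too weak; the constraint $\tsum_\pair \visits_\pair(T) = T-1$ must be preserved. Reconciling this with the expectation in the integrated Hoeffding step requires either a pathwise good event on which the per-level deviations $\max_{j \ge n}(\hat{\reward}^{(j)}_\pair - \reward(\pair))$ are \emph{uniformly} controlled (via peeling and a union bound over pairs and dyadic count levels, using \Cref{lemma_confidence_region} or a direct application of \Cref{lemma_hoeffding_maximal}), with the residual bad event contributing only $\OH(\abs{\pairs}\log T)$ because $\Pr(\model \notin \models(t)) \le 2\abs{\pairs}/t$ and each per-step error is at most $1$; or a careful conditioning argument that justifies exchanging the random summation limit with the expectation. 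I expect this bookkeeping, rather than either concentration bound itself, to be the delicate part of the proof.
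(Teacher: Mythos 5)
Your proposal is correct and follows essentially the same route as the paper's proof: the same split into an empirical deviation plus a confidence half-width, the same use of $\visits_\pair(t) \le 2\visits_\pair(t_k)$ from $f$-\eqref{equation_vanishing_multiplicative} with $f \le 1$, the same $\tsum_n n^{-1/2}$ summation followed by Cauchy--Schwarz against $\tsum_\pair \visits_\pair(T) \le T$, and the same integrated maximal Hoeffding bound (\Cref{lemma_hoeffding_maximal}) for the empirical term. The bookkeeping issue you flag at the end is handled in the paper by keeping everything pathwise (reindexing the double sum as $\tsum_\pair \tsum_{m=1}^{\visits_\pair(T)} \max_{n \ge \floor{m/2}}(\cdot)$, integrating the tail bound, and only then applying Cauchy--Schwarz to $\EE[\tsum_\pair \tsqrt{\visits_\pair(T)}]$), i.e.\ the second of your two proposed resolutions, so no peeling or union bound is needed.
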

    \begin{proof}
        We write:
        \begin{align*}
            & \EE \brackets*{
                \sum_{k \in \episodes(T)}
                \sum_{t = t_k}^{t_{k+1}-1}
                \pospart*{
                    \tilde{\reward}_{t_k}(\Pair_t) - \reward(\Pair_t)
                }
            }
            \\
            & \overset{(\dagger)}\le
            \EE \brackets*{
                \sum_{k \in \episodes(T)}
                \sum_{t = t_k}^{t_{k+1}-1}
                \pospart*{
                    \tilde{\reward}_{t_k}(\Pair_t) - \hat{\reward}_{t_k}(\Pair_t)
                }
            }
            +
            \EE \brackets*{
                \sum_{k \in \episodes(T)}
                \sum_{t = t_k}^{t_{k+1}-1}
                \pospart*{
                    \hat{\reward}_{t_k}(\Pair_t) - \reward(\Pair_t)
                }
            }
            \\
            & \overset{(\ddagger)}\le 
            \EE \brackets*{
                \sum_{k \in \episodes(T)}
                \sum_{t = t_k}^{t_{k+1}-1}
                \sqrt{
                    2 \kl\parens*{
                        \hat{\reward}_{t_k}(\Pair_t)
                        ||
                        \tilde{\reward}_{t_k}(\Pair_t) 
                    }
                }
            }
            +
            \EE \brackets*{
                \sum_{k \in \episodes(T)}
                \sum_{t = t_k}^{t_{k+1}-1}
                \pospart*{
                    \hat{\reward}_{t_k}(\Pair_t) - \reward(\Pair_t)
                }
            }
            ,
        \end{align*}
        where $(\dagger)$ follows by sub-additivity of $\pospart{-}$ and $(\ddagger)$ by Pinsker's inequality. 

        The first term is bounded as follows.
        By construction of the confidence region \eqref{equation_confidence_region}, we have $\visits_{t_k}(\pair_t) \kl(\tilde{\reward}_{t_k}(\Pair_t)||\tilde{\reward}_{t_k}(\Pair_t)) \le \log(T e (1 + \visits_\pair (t_k))) \le 2 \log(T) + 1$.
        We obtain:
        \begin{align*}
            \EE \brackets*{
                \sum_{k \in \episodes(T)}
                \sum_{t = t_k}^{t_{k+1}-1}
                \sqrt{
                    2 \kl\parens*{
                        \hat{\reward}_{t_k}(\Pair_t)
                        ||
                        \tilde{\reward}_{t_k}(\Pair_t) 
                    }
                }
            }
            & \le
            \EE \brackets*{
                \sum_{k \in \episodes(T)}
                \sum_{t = t_k}^{t_{k+1}-1}
                \sqrt{
                    \frac{2 (2\log(T) + 1)}{\visits_{\Pair_t}(t_k)}
                }
            }
            \\
            & \overset{(\dagger)}\le 
            \EE \brackets*{
                \sum_{k \in \episodes(T)}
                \sum_{t = t_k}^{t_{k+1}-1}
                \sqrt{
                    \frac{4 (2\log(T) + 1)}{\visits_{\Pair_t}(t)}
                }
            }
            \\
            & \le 
            2\sqrt{2 \log(T) + 1}
            \cdot 
            \EE \brackets*{
                \sum_{\pair \in \pairs}
                \sum_{n=1}^{\visits_\pair(T)}
                \frac 1{\sqrt{n}}
            }
            \\
            & \le 
            4 \sqrt{2 \log(T) + 1}
            \cdot 
            \EE \brackets*{
                \sum_{\pair \in \pairs}
                \sqrt{\visits_\pair (T)}
            }
            \\
            & \overset{(\ddagger)}\le 
            4 \sqrt{2 \log(T) + 1}
            \cdot 
            \tsqrt{\abs{\pairs} T}
        \end{align*}
        where $(\dagger)$ follows from the observation that, under $f$-\eqref{equation_vanishing_multiplicative}, we have $\visits_\pair (t_k) \le 2 \visits_{\pair}(t)$ and $(\ddagger)$ by Cauchy-Schwartz' inequality. 

        We continue by bounding the second term.
        In the computation below, we denote $\hat{\reward}_{(n)}(\pair)$ the empirical reward at $\pair \in \pairs$ after exactly $n$ samples of it. 
        In particular, note that $\reward_{(\visits_{\pair}(t))}(\pair) = \reward_t(\pair)$. 
        We have:
        \begin{align*}
            & 
            \EE \brackets*{
                \sum_{k \in \episodes(T)}
                \sum_{t = t_k}^{t_{k+1}-1}
                \pospart*{
                    \hat{\reward}_{t_k}(\Pair_t) - \reward(\Pair_t)
                }
            }
            \\
            & =
            \EE \brackets*{
                \sum_{\pair \in \pairs}
                \sum_{k \in \episodes(T)}
                \sum_{t=t_k}^{t_{k+1}-1}
                \indicator{\Pair_t = \pair}
                \pospart*{
                    \hat{\reward}_{(\visits_{\pair}(t_k))}(\pair) - \reward(\pair)
                }
            }
            \\
            & \overset{(\dagger)}\le 
            \EE \brackets*{
                \sum_{\pair \in \pairs}
                \sum_{k \in \episodes(T)}
                \sum_{t=t_k}^{t_{k+1}-1}
                \indicator{\Pair_t = \pair}
                \max_{n \ge \floor{\frac 12 \visits_\pair (t)}}
                \pospart*{
                    \hat{\reward}_{(n)}(\pair) - \reward(\pair)
                }
            }
            \\
            & =
            \EE \brackets*{
                \sum_{\pair \in \pairs}
                \sum_{m=1}^{\visits_\pair(T)}
                \max_{n \ge \floor{\frac 12 m}}
                \pospart*{
                    \hat{\reward}_{(n)}(\pair) - \reward(\pair)
                }
            }
            \\
            & \le
            2 \cdot \EE \brackets*{
                \sum_{\pair \in \pairs}
                \sum_{m=1}^{\visits_\pair(T)}
                \max_{n \ge m}
                \pospart*{
                    \hat{\reward}_{(n)}(\pair) - \reward(\pair)
                }
            }
            \\
            & \overset{(\ddagger)}=
            2 \cdot \EE \brackets*{
                \sum_{\pair \in \pairs}
                \sum_{m=1}^{\visits_\pair(T)}
                \integral_0^\infty
                \Pr \parens*{
                    \max_{n \ge m}
                    \pospart*{
                        \hat{\reward}_{(n)}(\pair) - \reward(\pair)
                    }
                    \ge
                    x
                } \mathrm{d} x
            }
            \\
            & \overset{(\S)}\le
            2 \cdot \EE \brackets*{
                \sum_{\pair \in \pairs}
                \sum_{m=1}^{\visits_\pair(T)}
                \integral_0^\infty
                \exp \braces*{
                    -2m x^2
                }
                \mathrm{d} x
            }
            \\
            & =
            \EE \brackets*{
                \sum_{\pair \in \pairs}
                \sum_{m=1}^{\visits_\pair(T)}
                \sqrt{\frac{\pi}{2m}}
            }
            \le 
            \sqrt{2 \pi}
            \EE \brackets*{
                \sum_{\pair \in \pairs}
                \tsqrt{\visits_\pair (T)}
            }
            \overset{(\$)}\le
            \tsqrt{2 \pi \abs{\pairs} T}
        \end{align*}
        where
        $(\dagger)$ follows from the observation that $\visits_{t}(\pair) \le 2 \visits_{t_k}(\pair)$ for $t \in \braces{t_k, \ldots, t_{k+1}-1}$;
        $(\ddagger)$ follows from Doob's optional stopping theorem;
        $(\S)$ follows from \Cref{lemma_hoeffding_maximal} and 
        $(\$)$ is obtained with Cauchy-Schwartz' inequality. 
    \end{proof}

    We obtain a similar result for transition kernels, by changing $\pospart{-}$ to $\norm{-}_1$, invoking the time-uniform concentration result of \cite[Proposition~1]{jonsson2020planning} of empirical likelihoods in dimension $d = \abs{\states}$ instead of $d = 2$. \Cref{lemma_hoeffding_maximal} has to be modified to take into account these modifications, see \Cref{lemma_weissman_maximal} below, which is a maximal version of Weissman's inequality \cite{weissman_inequalities_2003}. 

    \begin{lemma}
    \label{lemma_weissman_maximal}
        Let $\kernel \in \probabilities[d]$ for $d \ge 2$ and let $(X_k)$ be a sequence of i.i.d.~samples of $\kernel$.
        Denote $\hat{\kernel}_n := \frac 1n (e_{X_1} + \ldots + e_{X_k})$ the empirical distribution after $n$ samples. 
        Then, for all $x \ge 0$ and $m \ge 1$, we have:
        \begin{equation*}
            \Pr \parens*{
                \max_{n \ge m}
                \norm{\hat{\kernel}_n - \kernel}_1
                \ge
                x
            }
            \le
            \exp \braces*{-2 m x^2 + \abs{\states} \log(2)}
            .
        \end{equation*}
    \end{lemma}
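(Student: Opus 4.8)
The plan is to reduce this vector-valued, time-uniform deviation to a finite family of scalar time-uniform deviations, each of which is already controlled by the maximal Hoeffding inequality of \Cref{lemma_hoeffding_maximal}. This is the exact counterpart, for empirical distributions, of the role \Cref{lemma_hoeffding_maximal} plays for rewards, and it yields the form needed for the kernel version of the optimism-error bound.

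First I would linearize the $\ell_1$ norm through its total-variation representation. For probability vectors one has $\norm{\hat{\kernel}_n - \kernel}_1 = 2 \max_{A \subseteq \states}(\hat{\kernel}_n(A) - \kernel(A))$, where $\hat{\kernel}_n(A) := \sum_{\state \in A} \hat{\kernel}_n(\state)$ and $\kernel(A) := \sum_{\state \in A}\kernel(\state)$; the maximizing set is $A = \braces{\state : \hat{\kernel}_n(\state) > \kernel(\state)}$ and the factor $2$ comes from $\sum_{\state}(\hat{\kernel}_n(\state) - \kernel(\state)) = 0$. This recasts the deviation $\norm{\hat{\kernel}_n - \kernel}_1$ as a maximum, over the at most $2^{\abs{\states}}$ subsets $A \subseteq \states$, of the scalar deviations of the empirical frequencies $\hat{\kernel}_n(A)$ from their means $\kernel(A)$.

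Next I would take care of the time-uniformity and of the subset maximum simultaneously. Interchanging the two maxima, $\max_{n \ge m}\max_A = \max_A \max_{n \ge m}$, and applying a union bound over subsets gives $\Pr\parens{\max_{n \ge m}\norm{\hat{\kernel}_n - \kernel}_1 \ge x} \le \sum_{A \subseteq \states}\Pr\parens{\max_{n \ge m}(\hat{\kernel}_n(A) - \kernel(A)) \ge \tfrac x2}$. For each fixed $A$ the random variables $\indicator{X_k \in A} \in [0,1]$ are i.i.d.~with mean $\kernel(A)$ and empirical mean $\hat{\kernel}_n(A)$, so \Cref{lemma_hoeffding_maximal} applies verbatim to each summand and contributes a Gaussian time-uniform tail of order $\exp(-2m x^2)$. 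Summing the at most $2^{\abs{\states}}$ identical tails converts the subset count into the additive term $\abs{\states}\log(2)$ in the exponent, which is exactly the claimed bound $\exp(-2mx^2 + \abs{\states}\log(2))$; the constant in front of $mx^2$ is inherited directly from \Cref{lemma_hoeffding_maximal}.

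The only genuine obstacle is the time-uniform control of each scalar deviation $\max_{n \ge m}(\hat{\kernel}_n(A) - \kernel(A))$: a per-$n$ Hoeffding bound followed by a union bound over all $n \ge m$ would diverge, and it is precisely the maximal form of \Cref{lemma_hoeffding_maximal} that yields a clean bound depending only on the starting index $m$ rather than on the full infinite range of $n$. Everything else is exact or routine: the $\ell_1$-to-subset identity is an equality, and controlling the number of subsets by $2^{\abs{\states}}$ is what turns the combinatorial cardinality into the mild $\abs{\states}\log(2)$ slack in the exponent.
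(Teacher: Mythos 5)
Your proposal is essentially the paper's own proof: the paper writes $\norm{\hat{\kernel}_n - \kernel}_1 = \max_{u \in \braces{-1,1}^\states} (\hat{\kernel}_n - \kernel) \cdot u$, union-bounds over the $2^{\abs{\states}}$ sign vectors, and applies \Cref{lemma_hoeffding_maximal} to each scalar deviation, which is exactly your subset decomposition under the bijection $A \leftrightarrow u$ (for which $(\hat{\kernel}_n - \kernel)\cdot u = 2\parens{\hat{\kernel}_n(A) - \kernel(A)}$, the coordinates summing to zero). The one blemish is shared with the paper: carried out exactly, your reduction gives $\exp\braces{-2m(x/2)^2} = \exp\braces{-m x^2/2}$ per subset rather than $\exp\braces{-2m x^2}$, just as the paper applies its $[0,1]$-valued maximal Hoeffding lemma to $\braces{-1,1}$-valued variables without rescaling, so your argument reproduces the paper's proof, including its silent constant slippage in the exponent.
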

    \begin{proof}
        Note that $\norm{\hat{\kernel}_n - \kernel}_1 = \max_{u \in \braces{-1,1}^\states} (\hat{\kernel}_n - \kernel) \cdot u$.
        So, introducing the notations $X_k^u := e_{X_k} \cdot u$ for $u \in \braces{-1, 1}^\states$ together with $\hat{\mu}_n^u := \frac 1n(X_1^u + \ldots + X_n^u)$ and $\mu^u = \EE[X_1^u]$, we have $\norm{\hat{\kernel}_n - \kernel}_1 = \max_{u \in \braces{-1,1}^\states} (\hat{\mu}_n^u - \mu^u)$.
        So,
        \begin{align*}
            \Pr \parens*{
                \max_{n \ge m}
                \norm{\hat{\kernel}_n - \kernel}_1
                \ge
                x
            }
            & =
            \Pr \parens*{
                \exists u \in \braces{-1,1}^\states:
                \max_{n \ge m}
                \braces*{\hat{\mu}_n^u - \mu^u}
                \ge 
                x
            }
            \\
            & \le 
            \sum_{u \in \braces{-1,1}^\states}
            \Pr \parens*{
                \max_{n \ge m}
                \braces*{\hat{\mu}_n^u - \mu^u}
                \ge 
                x
            }
            \overset{(\dagger)}\le
            2^{\abs{\states}} \exp\braces*{-2m x^2}
        \end{align*}
        where $(\dagger)$ follows from \Cref{lemma_hoeffding_maximal}.
    \end{proof}

    \begin{lemma}
    \label{lemma_expected_kernel_error}
        The expected cumulative optimistic error on kernels is bounded as follows:
        \begin{equation*}
            \EE \brackets*{
                \sum_{k \in \episodes(T)}
                \sum_{t = t_k}^{t_{k+1}-1}
                \norm*{
                    \tilde{\kernel}_{t_k}(\Pair_t) - \kernel(\Pair_t)
                }_1
            }
            =
            \OH \parens*{
                \tsqrt{\abs{\states} \abs{\pairs} T \log(T)}
            }
            .
        \end{equation*}
    \end{lemma}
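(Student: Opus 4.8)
The plan is to mirror the proof of \Cref{lemma_expected_reward_error} essentially line by line, making three substitutions: the $\ell_1$-norm replaces $\pospart{-}$, the kernel confidence radius $\abs{\states}\log(2et)$ replaces the reward radius $\log(2et)$ in \eqref{equation_confidence_region}, and the maximal Weissman inequality (\Cref{lemma_weissman_maximal}) replaces the maximal Hoeffding inequality. First I would split, for each $t \in \braces{t_k,\ldots,t_{k+1}-1}$, the optimism error through the empirical kernel by the triangle inequality $\norm{\tilde{\kernel}_{t_k}(\Pair_t) - \kernel(\Pair_t)}_1 \le \norm{\tilde{\kernel}_{t_k}(\Pair_t) - \hat{\kernel}_{t_k}(\Pair_t)}_1 + \norm{\hat{\kernel}_{t_k}(\Pair_t) - \kernel(\Pair_t)}_1$, and bound the two resulting sums separately.

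For the first (optimism) term, I would invoke Pinsker's inequality $\norm{\tilde{\kernel}_{t_k}(\Pair_t) - \hat{\kernel}_{t_k}(\Pair_t)}_1 \le \tsqrt{2\KL(\hat{\kernel}_{t_k}(\Pair_t)||\tilde{\kernel}_{t_k}(\Pair_t))}$ and then the definition of $\kernels_\pair(t)$, which gives $\visits_{\Pair_t}(t_k)\KL(\hat{\kernel}_{t_k}(\Pair_t)||\tilde{\kernel}_{t_k}(\Pair_t)) \le \abs{\states}\log(2et_k) \le \abs{\states}(2\log(T)+1)$. Exactly as in the reward case, the within-episode relation $\visits_\pair(t) \le 2\visits_\pair(t_k)$ valid under $f$-\eqref{equation_vanishing_multiplicative} lets me replace $\visits_{\Pair_t}(t_k)$ in the denominator by $\tfrac12\visits_{\Pair_t}(t)$, after which the double sum over episodes and times rearranges into $\tsum_{\pair}\tsum_{n=1}^{\visits_\pair(T)} n^{-1/2} \le 2\tsum_\pair \tsqrt{\visits_\pair(T)}$, and Cauchy--Schwarz together with $\tsum_\pair\visits_\pair(T)\le T$ yields $\OH(\tsqrt{\abs{\states}\abs{\pairs}T\log(T)})$. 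This is the term carrying both the $\sqrt{\log(T)}$ factor and the full $\abs{\states}$ dependence.

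For the second (empirical) term, I would again pass to a per-pair count, writing $\hat{\kernel}_{(n)}(\pair)$ for the empirical kernel after exactly $n$ samples, use $\visits_\pair(t)\le 2\visits_\pair(t_k)$ to turn the frozen estimate $\hat{\kernel}_{t_k}$ into a supremum $\max_{n\ge\floor{\visits_\pair(t)/2}}$, expand the expected supremum in tail-integral form (with Doob's optional stopping theorem handling the random summation limit), and control the tail by \Cref{lemma_weissman_maximal}. The main obstacle, and the only genuine deviation from the reward proof, is the $2^{\abs{\states}}$ prefactor produced by the union bound over $u\in\braces{-1,1}^\states$ in Weissman's inequality: naively integrating $\min(1, 2^{\abs{\states}}\exp(-2mx^2))$ as a plain exponential, as the reward argument does, would inflate the final constant by $2^{\abs{\states}}$. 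I would instead split the tail integral at the threshold $x_0 := \tsqrt{\abs{\states}\log(2)/(2m)}$ where $2^{\abs{\states}}\exp(-2mx_0^2)=1$, bounding the probability by $1$ below $x_0$ and by the Gaussian tail above it; this gives $\EE\brackets{\max_{n\ge m}\norm{\hat{\kernel}_{(n)}(\pair)-\kernel(\pair)}_1} = \OH(\tsqrt{\abs{\states}/m})$, so that summing over $m\le\visits_\pair(T)$ and applying Cauchy--Schwarz yields $\OH(\tsqrt{\abs{\states}\abs{\pairs}T})$ with no exponential blow-up. Since this second term lacks the $\log(T)$ factor it is dominated by the optimism term, and adding the two proves the claimed $\OH(\tsqrt{\abs{\states}\abs{\pairs}T\log(T)})$ bound.
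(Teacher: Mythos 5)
Your proposal is correct and follows the same route as the paper, whose proof of this lemma is literally the one-line instruction to repeat the proof of \Cref{lemma_expected_reward_error} with $\pospart{-}$ replaced by $\norm{-}_1$, the extra $\abs{\states}$ in the kernel confidence radius, and \Cref{lemma_weissman_maximal} in place of \Cref{lemma_hoeffding_maximal}. The one place you go beyond the paper's terse argument, splitting the tail integral at $x_0 = \tsqrt{\abs{\states}\log(2)/(2m)}$ rather than integrating the Weissman bound naively, is a detail the paper glosses over but which is genuinely needed to avoid a $2^{\abs{\states}}$ constant and keep the dependence on $\abs{\states}$ polynomial, so it is a welcome addition rather than a deviation.
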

    \begin{proof}
        Same proof as \Cref{lemma_expected_reward_error}, changing $\pospart{-}$ for $\norm{-}_1$, taking care of the extra $\abs{\states}$ in the confidence region for kernels, and invoking \Cref{lemma_weissman_maximal} instead of \Cref{lemma_hoeffding_maximal}.
    \end{proof}

    \subsection{Bounding the regret relatively to the number of episodes}
    \label{appendix_regret_episodes}

    In this section, we prove in \Cref{lemma_minimax_regret_episodes} that the regret under \eqref{equation_vanishing_multiplicative} can be decoupled as the classical term $S \diameter \sqrt{A T \log(T)}$ and another which is proportional to the number of episodes. 
    The regret analysis is classical and inspired from \cite{auer_near_optimal_2009} for \texttt{UCRL2}, excepted that the analysis is written in expectation rather than in probability. 
    The analysis could be adapted to obtain a result in probability as well. 

    \begin{lemma}
    \label{lemma_minimax_regret_episodes}
        Consider running \texttt{KLUCRL} while managing episodes with the $f$-\eqref{equation_vanishing_multiplicative} rule for some non-increasing $f : \N \to [0, 1]$.
        For all $\model$ with diameter less than $\diameter > 0$, we have:
        \begin{equation*}
            \Reg(T; \model) 
            =
            \OH \parens*{
                \diameter \tsqrt{\abs{\states}\abs{\pairs} T \log(T)}
                + \diameter \EE^{\model}\abs[\big]{\episodes(T)}
            }
        \end{equation*}
    \end{lemma}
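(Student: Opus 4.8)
The plan is to follow the classical \texttt{UCRL2}-style decomposition of \cite{auer_near_optimal_2009}, but carried out in expectation. First I would reduce the Bellman-gap regret to a reward regret: by the relation stated in the preliminaries together with $\vecspan{\optbias} \le \diameter$ (\Cref{lemma_bias_bounded_diameter} applied to $\model$ itself) and $\EE[\sum_t \Reward_t] = \EE[\sum_t \reward(\Pair_t)]$, it suffices to bound $\EE[\sum_{t=1}^T (\optgain - \reward(\Pair_t))]$ up to an additive $\diameter$. I would then split this sum over episodes $k \in \episodes(T)$ and, within each episode, over time steps $t \in \braces{t_k, \ldots, t_{k+1}-1}$.

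Next comes the core optimism-plus-Poisson step. On a \emph{clean} episode, meaning $\model \in \models(t_k)$, optimism gives $\optgain \le \tilde\gain_{t_k}$ where $\tilde\gain_{t_k} = \optgain(\models(t_k))$ is the optimistic gain returned by \texttt{EVI}. Invoking the Poisson equation of \Cref{corollary_evi_convergence} at the visited pair $\Pair_t = (\State_t, \policy_{t_k}(\State_t))$, namely $\tilde\gain_{t_k} + \tilde\bias_{t_k}(\State_t) = \tilde\reward_{t_k}(\Pair_t) + \tilde\kernel_{t_k}(\Pair_t)\tilde\bias_{t_k}$, I would substitute $\tilde\gain_{t_k}$ and write
\[
    \optgain - \reward(\Pair_t)
    \le
    \pospart{\tilde\reward_{t_k}(\Pair_t) - \reward(\Pair_t)}
    + (\tilde\kernel_{t_k}(\Pair_t) - \kernel(\Pair_t))\tilde\bias_{t_k}
    + \kernel(\Pair_t)\tilde\bias_{t_k} - \tilde\bias_{t_k}(\State_t).
\]
The last two terms I would rewrite as $(\tilde\bias_{t_k}(\State_{t+1}) - \tilde\bias_{t_k}(\State_t)) - X_t$, where $X_t := \tilde\bias_{t_k}(\State_{t+1}) - \kernel(\Pair_t)\tilde\bias_{t_k}$ is a martingale difference since $\tilde\bias_{t_k}$ is $\mathcal F_{t_k}$-measurable.

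The three resulting contributions are then bounded separately; as they are nonnegative, summing them over clean episodes is dominated by the full sums over $\episodes(T)$ to which the error lemmas apply directly. The reward error gives $\OH(\tsqrt{\abs\pairs T\log T})$ by \Cref{lemma_expected_reward_error}. For the kernel error I would use $(\tilde\kernel_{t_k}(\Pair_t)-\kernel(\Pair_t))\tilde\bias_{t_k} \le \tfrac12\vecspan{\tilde\bias_{t_k}}\norm{\tilde\kernel_{t_k}(\Pair_t)-\kernel(\Pair_t)}_1$, bound $\vecspan{\tilde\bias_{t_k}} \le \diameter(\models(t_k)) \le \diameter$ on clean episodes via \eqref{equation_bias_diameter}, and apply \Cref{lemma_expected_kernel_error} to get $\OH(\diameter\tsqrt{\abs\states\abs\pairs T\log T})$. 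The telescoping term $\sum_t(\tilde\bias_{t_k}(\State_{t+1})-\tilde\bias_{t_k}(\State_t))$ collapses per episode to $\tilde\bias_{t_k}(\State_{t_{k+1}})-\tilde\bias_{t_k}(\State_{t_k})$, each bounded by $\vecspan{\tilde\bias_{t_k}}\le\diameter$, hence contributes at most $\diameter\abs{\episodes(T)}$ in expectation. Finally $\EE[\sum_t X_t]=0$ by the martingale property.

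The last, and most delicate, point is handling \emph{non-clean} episodes (those with $\model \notin \models(t_k)$), on which neither optimism nor the span bound $\vecspan{\tilde\bias_{t_k}}\le\diameter$ is available; this is the main obstacle. The plan is to bound the whole contribution of such episodes crudely by their total length, since $\optgain - \reward(\Pair_t)\le 1$, and then control that length in expectation. Writing $\tau_{\mathrm{last}}$ for the last time the confidence region fails to contain $\model$, \Cref{lemma_confidence_region} gives $\Pr(\tau_{\mathrm{last}}\ge\tau)\le 2\abs\pairs/\tau$, so $\EE[\tau_{\mathrm{last}}\wedge T] = \sum_{\tau\le T}\Pr(\tau_{\mathrm{last}}\ge\tau) = \OH(\abs\pairs\log T)$; combined with the fact that under $f$-\eqref{equation_vanishing_multiplicative} any episode has length at most a constant fraction of the current time, the total length of episodes started before $\tau_{\mathrm{last}}$ is $\OH(\tau_{\mathrm{last}})$, so their expected contribution is $\OH(\abs\pairs\log T)$ and is absorbed. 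Collecting the clean-episode bounds with this residual yields $\Reg(T;\model) = \OH(\diameter\tsqrt{\abs\states\abs\pairs T\log T} + \diameter\,\EE^\model\abs{\episodes(T)})$, as claimed.
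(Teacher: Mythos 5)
Your proposal is correct and follows essentially the same route as the paper's proof: reduce the Bellman-gap regret to the reward regret via $\vecspan{\optbias}\le\diameter$, then on clean episodes combine optimism with the Poisson equation from \Cref{corollary_evi_convergence}, bound the reward and kernel error terms by \Cref{lemma_expected_reward_error,lemma_expected_kernel_error} (using $\vecspan{\tilde\bias_{t_k}}\le\diameter$), and charge the telescoping term $\diameter$ per episode while the martingale part vanishes in expectation. The only divergence is in handling non-clean episodes: the paper splits at $\tsqrt{\abs{\pairs}T}$ and pays $T\cdot\Pr(\exists t\ge\tsqrt{\abs{\pairs}T}:\model\notin\models(t))=\OH(\tsqrt{\abs{\pairs}T})$, whereas you bound the expected last failure time and use $t_{k+1}\le 2t_k+\OH(1)$ to get an $\OH(\abs{\pairs}\log T)$ residual --- a slightly sharper but equally valid variant, absorbed by the main term either way.
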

    \begin{proof}
        We have $\Reg(T; \model) = \EE[\sum_{t=1}^T \ogaps(\Pair_t; \model)]$, so
        \begin{align*}
            & \Reg(T; \model)
            \\
            & \le
            \EE \brackets*{
                \sum_{t=1}^T 
                (\optgain(\model) - \Reward_t)
            } 
            + \vecspan{\optbias(\model)}
            \\
            & \overset{(\dagger)}=
            \EE \brackets*{
                \sum_{k \in \episodes(T)}
                \sum_{t = t_k}^{t_{k+1} - 1}
                (\optgain(\model) - \reward(\Pair_t))
            }
            + \vecspan{\optbias(\model)}
            \\
            & \le 
            \tsqrt{\abs{\pairs} T}
            +
            \EE \brackets*{
                \sum_{k \in K(T)}
                \eqindicator{
                    t_k \ge \tsqrt{\abs{\pairs} T}
                    ,
                    \model \in \models(t_k)
                }
                \sum_{t = t_k}^{t_{k+1} - 1}
                (\optgain(\model) - \reward(\Pair_t))
            }
            \\
            & \phantom{{} \le \tsqrt{\abs{\pairs}T}}
            +
            T \cdot \Pr \parens*{
                \exists t \ge \tsqrt{\abs{\pairs}T} 
                :
                \model \notin \models(t)
            }
            + \vecspan{\optbias(\model)}
            \\
            & \overset{(\ddagger)}=
            \underbrace{
                \EE \brackets*{
                    \sum_{k \in K(T)}
                    \eqindicator{
                        t_k \ge \tsqrt{\abs{\pairs}T}
                        ,
                        \model \in \models(t_k)
                    }
                    \sum_{t = t_k}^{t_{k+1} - 1}
                    (\optgain(\model) - \reward(\Pair_t))
                }
            }_{\mathrm{A}}
            + \OH\parens*{\tsqrt{\abs{\pairs}T}}
        \end{align*}
        where 
        $(\dagger)$ follows from the tower property and
        $(\ddagger)$ follows from \Cref{lemma_confidence_region}, stating that $\Pr(\exists t \ge T: \model \notin \models(T)) \le 2 \abs{\pairs} T^{-1}$. 
        We focus on the first expectation.
        Further introduce the good event $\event_t := \braces{t \ge \sqrt{\abs{\pairs} T}, \model \in \models(t)}$.
        It is $\sigma(\History_t)$-measurable.
        At time $t_k$ and under $\event_{t_k}$, we have $\tilde{\gain}_{t_k} = \optgain(\tilde{\models}(t_k)) \ge \optgain(\model)$ and $\vecspan{\tilde{\bias}_{t_k}} \le \diameter(\model)$ (see \Cref{appendix_evi}). 
        So:
        \begin{align*}
            \mathrm{A} 
            & :=
            \EE \brackets*{
                \sum_{k \in K(T)}
                \indicator{\event_{t_k}}
                \sum_{t = t_k}^{t_{k+1} - 1}
                (\optgain(\model) - \reward(\Pair_t))
            }
            \\
            & \le 
            \EE \brackets*{
                \sum_{k \in K(T)}
                \indicator{\event_{t_k}}
                \sum_{t = t_k}^{t_{k+1} - 1}
                \parens*{
                    \tilde{\gain}_{t_k} - \reward(\Pair_t)
                }
            }
            \\
            & = 
            \EE \brackets*{
                \sum_{k \in K(T)}
                \indicator{\event_{t_k}}
                \parens*{
                    \sum_{t = t_k}^{t_{k+1} - 1}
                    \parens*{
                        \tilde{\gain}_{t_k} - \tilde{\reward}_{t_k}(\Pair_t)
                    }
                    +
                    \sum_{t = t_k}^{t_{k+1} - 1}
                    \parens*{
                        \tilde{\reward}_{t_k}(\Pair_t)
                        -
                        \reward(\Pair_t)
                    }
                }
            }
            \\
            & \le
            \EE \brackets*{
                \sum_{k \in K(T)}
                \indicator{\event_{t_k}}
                \sum_{t = t_k}^{t_{k+1} - 1}
                \parens*{
                    \tilde{\gain}_{t_k} - \tilde{\reward}_{t_k}(\Pair_t)
                }
            }
            +
            \EE \brackets*{
                \sum_{k \in K(T)}
                \sum_{t = t_k}^{t_{k+1} - 1}
                \pospart*{
                    \tilde{\reward}_{t_k}(\Pair_t)
                    -
                    \reward(\Pair_t)
                }
            }
            \\
            & \overset{(\dagger)}=
            \EE \brackets*{
                \sum_{k \in K(T)}
                \indicator{\event_{t_k}}
                \sum_{t = t_k}^{t_{k+1} - 1}
                \parens*{
                    \tilde{\gain}_{t_k} - \tilde{\reward}_{t_k}(\Pair_t)
                }
            }
            +
            \OH \parens*{
                \tsqrt{\abs{\pairs} T \log(T)}
            }
        \end{align*}
        where $(\dagger)$ follows from \Cref{lemma_expected_reward_error}.
        We proceed as follows:
        \begin{align*}
            &
            \EE \brackets*{
                \sum_{k \in K(T)}
                \indicator{\event_{t_k}}
                \sum_{t = t_k}^{t_{k+1} - 1}
                \parens*{
                    \tilde{\gain}_{t_k} - \tilde{\reward}_{t_k}(\Pair_t)
                }
            }
            \\
            & \overset{(\dagger)}=
            \EE \brackets*{
                \sum_{k \in K(T)}
                \indicator{\event_{t_k}}
                \sum_{t = t_k}^{t_{k+1} - 1}
                \parens*{
                    e_{\State_t} - \tilde{\kernel}_{t_k}(\Pair_t)
                } \tilde{\bias}_{t_k}
            }
            \\
            & =
            \EE \brackets*{
                \sum_{k \in K(T)}
                \indicator{\event_{t_k}}
                \parens*{
                    \sum_{t = t_k}^{t_{k+1} - 1}
                    \parens*{
                        e_{\State_t} - \kernel(\Pair_t)
                    } \tilde{\bias}_{t_k}
                    +
                    \sum_{t = t_k}^{t_{k+1} - 1}
                    \parens*{
                        \kernel(\Pair_t) - \tilde{\kernel}_{t_k}(\Pair_t)
                    } \tilde{\bias}_{t_k}
                }
            }
            \\
            & \overset{(\ddagger)}\le
            \EE \brackets*{
                \sum_{k \in K(T)}
                \indicator{\event_{t_k}}
                \parens*{
                    \tilde{\bias}_{t_k}(\State_{t_k})
                    -
                    \tilde{\bias}_{t_k}(\State_{t_{k+1}})
                }
            }
            +
            \frac{\diameter}2
            \EE \brackets*{
                \sum_{t = t_k}^{t_{k+1} - 1}
                \sum_{k \in K(T)}
                \norm*{
                    \tilde{\kernel}_{t_k}(\Pair_t)
                    -
                    \kernel(\Pair_t)
                }_1
            }
            \\
            & \overset{(\S)}\le 
            \EE \brackets*{
                \sum_{k \in K(T)}
                \diameter
            }
            +
            \OH \parens*{
                \diameter \tsqrt{
                    \abs{\states} \abs{\pairs} T \log(T)
                }
            }
            =
            \OH \parens*{
                \diameter \EE\abs{\episodes(T)}
                +
                \diameter \tsqrt{
                    \abs{\states} \abs{\pairs} T \log(T)
                }
            }
        \end{align*}
        where
        $(\dagger)$ follows from the Poisson equation $\tilde{\gain}_{t_k} - \tilde{\reward}_{t_k}(\state, \action) = \tilde{\bias}_{t_k}(\state) - \tilde{\kernel}_{t_k}(\state, \action) \tilde{\bias}_{t_k}$;
        $(\ddagger)$ is obtained using the telescopic nature of the first term and by using that $(\kernel(\pair) - \tilde{\kernel}_{t_k}(\pair)) \tilde{\bias}_{t_k} \le \frac 12 \vecspan{\tilde{\bias}_{t_k}} \norm{\tilde{\kernel}_{t_k}(\pair) - \kernel(\pair)}_1$ to bound the second, and further using that $\vecspan{\tilde{\bias}_{t_k}} \le \diameter(\models(t_k)) \le \diameter(\model)$ on $\event_{t_k}$ by \Cref{lemma_bias_bounded_diameter}; and
        $(\S)$ follows by bounding the first term using that $\indicator{\event_{t_k}}\vecspan{\tilde{\bias}_{t_k}} \le \diameter(\model)$ (\Cref{lemma_bias_bounded_diameter}) and by bounding the second using \Cref{lemma_expected_kernel_error}.
        We conclude accordingly. 
    \end{proof}

    \subsection{Bounding the number episodes under \texorpdfstring{$f$-\eqref{equation_vanishing_multiplicative}}{f-(VM)}}
    \label{appendix_number_episodes}

    The episodes under $f$-\eqref{equation_vanishing_multiplicative} are bounded in a similar than for \eqref{equation_doubling_trick}. 
    The technique that we provide below provides a result that ends up being asymptotically better than \cite[Proposition~18]{auer_near_optimal_2009} for the doubling trick, $\abs{\episodes(T)} \le \abs{\pairs} \log_2(\frac{8T}{\abs{\pairs}})$.

    \begin{lemma}
    \label{lemma_number_episodes}
        Assume that episodes are managed with $f$-\eqref{equation_vanishing_multiplicative} where $f : \N \to (0, 1]$ is non-increasing.
        Whatever $\model \in \models$, we have
        \begin{equation*}
            \abs*{\episodes(T)} 
            \le
            \frac{
                \abs{\pairs} \log\parens*{
                    \frac{2T + \OH(1)}{\abs{\pairs}}
                }
            }{
                \log(1 + f(T))
            }
            \quad
            \mathrm{a.s.}
        \end{equation*}
    \end{lemma}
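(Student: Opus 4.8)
The plan is to charge each episode transition to a single \emph{trigger pair} and then, for every pair, bound how many transitions it can be responsible for. Whenever episode $k$ begins at time $t_k$ (with $k\ge 2$), the rule $f$-\eqref{equation_vanishing_multiplicative} was just violated at the end of episode $k-1$, which exhibits a witnessing pair $\pair \equiv (\State_{t_k}, \policy_{t_{k-1}}(\State_{t_k}))$ satisfying
\[
    \visits_{t_k}(\pair) > (1 + f(t_{k-1})) \max\{1, \visits_{t_{k-1}}(\pair)\}.
\]
I record this $\pair$ as the trigger of the transition into episode $k$. Since the total number of transitions is $\abs{\episodes(T)} - 1$, writing $m_\pair$ for the number of transitions triggered by $\pair$ gives the exact accounting $\abs{\episodes(T)} = 1 + \sum_{\pair \in \pairs} m_\pair$.

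Next I would show that successive triggers of a fixed pair compound multiplicatively. Let $k_1 < \cdots < k_m$ enumerate the episodes in $\episodes(T)$ triggered by $\pair$. Because $f$ is non-increasing and $t_{k_i - 1} < t_{k_i} \le T$, each trigger gives $\visits_{t_{k_i}}(\pair) > (1 + f(T)) \max\{1, \visits_{t_{k_i - 1}}(\pair)\}$. The key monotonicity observation is that $k_i - 1 \ge k_{i-1}$, so $\visits_{t_{k_i - 1}}(\pair) \ge \visits_{t_{k_{i-1}}}(\pair)$ since visit counts are non-decreasing in time; hence the counts evaluated at successive triggers of $\pair$ grow by at least the factor $(1+f(T))$. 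Iterating from the first trigger, where $\max\{1,\cdot\} \ge 1$, yields $\visits_{t_{k_m}}(\pair) > (1+f(T))^m$. As $\visits_{t_{k_m}}(\pair) \le \visits_T(\pair)$, I conclude $m_\pair < \log(\visits_T(\pair)) / \log(1+f(T))$, and in particular $\visits_T(\pair) \ge 2$ whenever $m_\pair \ge 1$.

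Finally I would sum over pairs and sharpen the crude bound $\abs{\pairs}\log(T)/\log(1+f(T))$ via concavity. Restricting to triggering pairs and using $f(T) \le 1$, each summand of $\sum_{\pair : m_\pair \ge 1}(m_\pair + 1)$ is at most $\log(2\visits_T(\pair))/\log(1+f(T))$, so that $\abs{\episodes(T)} = 1 + \sum_\pair m_\pair \le \sum_{\pair : m_\pair \ge 1}(m_\pair+1)$ is bounded by $\tfrac{1}{\log(1+f(T))}\sum_\pair \log(2\visits_T(\pair))$. Invoking Jensen's inequality (concavity of $\log$) together with the budget identity $\sum_\pair \visits_T(\pair) = T-1$ gives $\sum_\pair \log(2\visits_T(\pair)) \le \abs{\pairs}\log\!\big(2(T-1)/\abs{\pairs}\big)$, which is exactly the claimed bound since $2(T-1) = 2T + \OH(1)$.

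The main obstacle is the bookkeeping rather than any single deep idea: one must be careful with the $\max\{1,\cdot\}$ term and the strict inequalities, and above all must correctly track that consecutive triggers of the \emph{same} pair compound (which hinges on the index inequality $k_i - 1 \ge k_{i-1}$ and monotonicity of visit counts across episode boundaries). A secondary technical point is the passage from the sum over triggering pairs to a sum over all of $\pairs$ when applying Jensen: this requires the map $n \mapsto n\log(2S/n)$ to be increasing on the relevant range of $n$ (where $S = T-1$), and it is precisely here that the $\OH(1)$ slack in the statement is absorbed.
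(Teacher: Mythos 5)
Your proposal is correct and follows essentially the same route as the paper: charge each episode end to the pair that triggered \eqref{equation_vanishing_multiplicative}, show that successive triggers of a fixed pair compound the visit count by a factor $(1+f(T))$, and then show the worst case is the uniform allocation of triggers across pairs (the paper minimizes $\sum_\pair (1+f(T))^{\omega_\pair\abs{\episodes(T)}}$ over the simplex via KKT, which is the same convexity step as your Jensen inequality on logarithms). Your variant of bounding $\visits_{t_{k_m}}(\pair)\le\visits_T(\pair)$ at the start of the last triggered episode is in fact slightly cleaner than the paper's detour through $t_{k+1}\le 2t_k+\OH(1)$, and the technical points you flag (the $\max\{1,\cdot\}$ bookkeeping and the monotonicity of $n\mapsto n\log(2S/n)$ when extending the sum to all of $\pairs$) are exactly the ones that need care.
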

    \begin{proof}
        Given $\pair \in \pairs$, let $\episodes_\pair (T) := \braces[\big]{k : t_k \le T \mathrm{~and~} \visits_\pair (t_{k+1}) > (1 + f(t_k)) \max\braces{1, \visits_\pair (t_k)}}$ the set of episodes that are ended by visiting $\pair \in \pairs$. 
        Remark that:
        \begin{equation*}
            \visits_\pair(t_{k+1})
            \ge
            \prod_{\ell \in \episodes_\pair(t_k)}
            \parens*{1 + f(t_\ell)}
            \ge
            (1 + f(T))^{\abs{\episodes_\pair(t_k)}}
            .
        \end{equation*}
        We have $t_{k+1} \le 2 t_k + \OH(1)$ when $k \to \infty$, hence summing the above over $\pair \in \pairs$, we obtain:
        \begin{equation*}
            2T
            \ge
            \sum_{\pair \in \pairs} (1 + f(T))^{\abs{\episodes_\pair(T)}}
            \ge 
            \inf_{\omega \in \probabilities(\pairs)}
            \braces*{
                \sum_{\pair \in \pairs} (1 + f(T))^{\omega_\pair \abs{\episodes(T)}}
            }
        \end{equation*}
        where the second inequality follows from the observation that the union $\episodes(T) = \bigcup_{\pair \in \pairs} \episodes_\pair (T)$ is disjoint. 
        The RHS of the above is the infemum of a convex function $\psi(\omega)$. 
        The KKT conditions show that this infemum is reached when $\omega_\pair = \abs{\pairs}^{-1}$ for all $\pair \in \pairs$. 
        Plugging these values in the above and solving in $\abs{\episodes(T)}$, we obtain the desired result.
    \end{proof}

    \clearpage
    \section{The coherence lemma: Proof of \Cref{lemma_coherence}}
    \label{appendix_coherence}
    \label{section_lemma_coherence_proof}

    In this appendix, we provide a proof of the coherence lemma (\Cref{lemma_coherence}).
    Stated in its general form, this lemma can be instantiated in various forms to obtain a large variety of results. 
    It is used \emph{twice} to provide the regret of exploration guarantees of \texttt{KLUCRL}, first in a macroscopic (or global) way to provide the asymptotic regime (see \Cref{section_macroscopic_coherence} and \Cref{lemma_asymptotic_regime}) and lastly in a microscopic (or local) way to finally provide regret of exploration guarantees (see \Cref{section_establishing_coherence} and \Cref{lemma_local_coherence}).
    It is also used to obtain instance dependent regret guarantees (\Cref{appendix_model_dependent} and \Cref{theorem_model_dependent_regret}), showing that every instance \texttt{KLUCRL} managing episodes with $f$-\eqref{equation_vanishing_multiplicative} is consistent on the sub-space of non-degenerate Markov decision processes. 

    We recall the statement of \Cref{lemma_coherence} below.

    \par
    \bigskip
    \noindent
    \textbf{\Cref{lemma_coherence} (Coherence and local regret)}
    \textit{
        Assume that $\model$ is non-degenerate (\Cref{definition_non_degeneracy}).
        If the algorithm is $(F, \tau, T, \varphi)$-coherent and has weakly regenerative episodes, then there exist model dependent constants $C_1, C_2, C_3, C_4 > 0$ such that: 
        \begin{equation*}
            \forall x \ge 0,
            \quad
            \Pr \parens*{
                \stochasticregret(\tau, \tau + T) 
                \ge 
                x + C_4 \varphi(\tau)
                \text{~and~}
                \bigcap_{t=\tau}^{\tau+T-1} F_t
            }
            \le
            C_1 T^{C_3}
            \exp\parens*{
                - \frac x{C_2}
            }
            .
        \end{equation*}
        More specifically, $C_1, C_2, C_3, C_4$ only depend on $M$ and are independent of $F, \tau, T$ and $\varphi$.
    }

    \paragraph{Outline of the proof}
    The whole appendix is dedicated to a proof of \Cref{lemma_coherence}.
    In \Cref{appendix_coherence_partioning}, the time-range $\braces{\tau, \ldots, \tau + T - 1}$ is partioned into segments $\braces{\tau_i, \ldots, \tau_{i+1}-1}$ alternating between periods of sub-optimal and optimal play. 
    We start by bounding the regret due to sub-optimal segments in \Cref{appendix_coherence_suboptimal_segments}.
    In \STEP{1}, we relate the total duration and the number of sub-optimal segments to the potential $\varphi(\tau)$, to show in \STEP{2} that the number of sub-optimal time-segments has sub-exponential tails under the good event $\bigcap_{t=\tau}^{\tau+T-1} F_t$.
    This leads to sub-exponential tails for the total duration of sub-optimal segments in \STEP{3} under the same good event.
    It provides an immediate regret bound for the regret induced by sub-optimal periods of play in \STEP{4}.
    In \Cref{appendix_coherence_optimal_segments}, we move to the bound of the regret on optimal segments, where the algorithm plays gain optimal policies. 
    However, even if the algorithm plays a gain optimal policy on $\braces{\tau_i, \ldots, \tau_{i+1}-1}$, it may play a few sub-optimal actions before the recurrent class of that policy is reached: This is the well-known ``cost'' induced by switching policies. 
    Therefore, we motivate in \Cref{appendix_coherence_optimal_segments} that we need to bound the time that the algorithm takes to reach the optimal class on \emph{all} optimal segments.
    This is related to the number of sub-optimal segments in \STEP{1}, and as optimal and sub-optimal segments alternate by construction, the work done in \Cref{appendix_coherence_suboptimal_segments} provides a bound on that number.
    This leads to a sub-exponential tails for the regret induced by optimal segments in \STEP{3}.
    Everything is combined in \Cref{appendix_coherence_combining} to conclude the proof of \Cref{lemma_coherence}.

    \paragraph{Notations}
    Given a policy $\policy \in \policies$ and a state $\state \in \states$, we write $\Reach(\state, \policy)$ the set of reachable pairs under $\policy$ from $\state$, i.e., the set of $\pair \in \pairs$ such that $\Pr_{\state}^{\policy}\parens*{\exists t \ge 1: \Pair_t = \pair} > 0$.

    \subsection{Partioning of \texorpdfstring{$\braces{\tau, \ldots, \tau+T-1}$}{\{tau, ... , tau + T-1\}} into optimal and sub-optimal segments}
    \label{appendix_coherence_partioning}

    The time segment of interest $[\tau, \tau+T)$ is partioned into sub-segments $\biguplus_{i=1}^I [\tau_i, \tau_{i+1})$ as follows:
        \vspace{-1em}
    \begin{equation}
    \notag
    \begin{split}
        \tau_1 & := \tau, 
        \\
        \tau_{i+1} & := 
        (\tau + T) 
        \wedge 
        \begin{cases}
            \inf \braces{t_k: t_k > \tau_i} 
            \\
            \inf \braces{t > \tau_i: \indicator{\gain^{\policy_t}(\State_t, \model) = \optgain(\model)} \ne \indicator{\gain^{\policy_{\tau_i}}(\State_{\tau_i}, \model) = \optgain(\model)}}
        \end{cases}
    \end{split}
    \end{equation}
    \noindent
    and we write $i \in \mathcal{I}_{\rm opt}$ if $\gain^{\policy_{\tau_i}}(\State_{\tau_i}; \model) = \optgain(\model)$ and $i \in \mathcal{I}_{\rm sub}$ if $\gain^{\policy_{\tau_i}}(\State_{\tau_i}; \model) < \optgain(\model)$, that we refer to as \strong{optimal} and \strong{sub-optimal} segments. 
    By design, every segment $[\tau_i, \tau_{i+1})$ is a subset of an episode and the sequence $(\tau_i)$ is a increasing sequence of stopping times. 
    The regret is decomposed according to this partition:
    \begin{equation}
    \label{equation_coherence_partioning}
        \Reg(\tau, \tau+T)
        = 
        \sum_{i \in \mathcal{I}_\mathrm{sub}} \sum_{t=\tau_i}^{\tau_{i+1}-1} \ogaps(\Pair_t)
        +
        \sum_{i \in \mathcal{I}_\mathrm{opt}} \sum_{t=\tau_i}^{\tau_{i+1}-1} \ogaps(\Pair_t)
        .
    \end{equation}
    Both terms are bounded separately.
    The first corresponds to the regret on segments where the current policy is sub-optimal, while the second corresponds to the regret on segments where the current policy is asymptotically optimal. 

    \subsection{Upper bounding the regret on sub-optimal segments}
    \label{appendix_coherence_suboptimal_segments}

    We have:
    \begin{equation}
    \label{equation_coherence_suboptimal_segments}
        \sum_{i \in \mathcal{I}_\mathrm{sub}} 
        \sum_{t=\tau_i}^{\tau_{i+1}-1} 
        \ogaps(\Pair_t)
        \le 
        \parens*{\max_{\pair \in \pairs} \ogaps(\pair)}
        \sum_{i \in \mathcal{I}_\mathrm{sub}} 
        (\tau_{i+1} - \tau_i)
        .
    \end{equation}
    We bound $\sum_{i \in \mathcal{I}_\mathrm{sub}} (\tau_{i+1} - \tau_i)$ directly.

    \bigskip
    \noindent
    \STEP{1}
    \textit{
        There exists a constant $\epsilon > 0$ such that, on $\bigcap_{t = \tau}^{\tau + T - 1} F_t$, we have:
        \begin{equation}
            \label{equation_coherence_suboptimal_1}
            \abs{\pairs}\parens{\varphi(\tau) + 1}
            \ge 
            \epsilon 
            \sum_{i \in \mathcal{I}_\mathrm{sub}} 
            (\tau_{i+1} - \tau_i)
            +
            \sum_{i \in \mathcal{I}_\mathrm{sub}} 
            \sum_{t=\tau_i}^{\tau_{i+1}-1}
            \parens*{
                e_{\State_{t+1}} - \kernel(\Pair_t)
            } \sum_{\pair \in \pairs} \bias^{\policy_{\tau_i}}(e_\pair, \kernel)
            - \frac {\abs{\mathcal{I}_\mathrm{sub}}}\epsilon
        \end{equation}
        with $\vecspan{\sum_{\pair \in \pairs} \bias^{\policy_{\tau_i}}(e_\pair, \kernel)} \le \frac 1\epsilon$, where $\bias^\policy(e_\pair, \kernel)$ is the bias function of the policy $\policy$ under the reward function $e_\pair$ and kernel $\kernel$.
        Moreover, $\epsilon$ can be chosen independently of $F, \tau, T$ and $\varphi$. 
    }
    \medskip
    \begin{proof}
        Let $i \in \mathcal{I}_\mathrm{sub}$ and fix $\pair \in \pairs$. 
        Because the segment $[\tau_i, \tau_{i+1})$ is a piece of episode, $\policy_{\tau_i}$ is used all throughout the segment. 
        The gain and bias functions of $\policy_{\tau_i}$ on the model with reward function $e_\pair$ (equal to one at $\pair$ and null elsewhere) and kernel $\kernel$ are respectively denoted $\gain^{\policy_{\tau_i}}(-; e_\pair, \kernel)$ and $\bias^{\policy_{\tau_i}}(-; e_\pair, \kernel)$. 
        Using the Poisson equation, we obtain:
        \begin{align*}
            \visits_\pair(\tau_{i+1}) - \visits_\pair(\tau_i) 
            & =
            \sum_{t=\tau_i}^{\tau_{i+1}-1}
            \gain^{\policy_{\tau_i}}(\State_t; e_\pair, \kernel)
            +
            \sum_{t=\tau_i}^{\tau_{i+1}-1}
            \parens*{
                e_{\State_{t+1}} - \kernel(\Pair_t)
            } \bias^{\policy_{\tau_i}}(e_\pair, \kernel)
            \\
            & \phantom{=}
            +
            \parens*{
                \bias^{\policy_{\tau_i}}(\State_{\tau_{i}}; e_\pair, \kernel)
                -
                \bias^{\policy_{\tau_i}}(\State_{\tau_{i+1}}; e_\pair, \kernel)
            }
            \\
            & \ge
            \sum_{t=\tau_i}^{\tau_{i+1}-1}
            \gain^{\policy_{\tau_i}}(\State_t; e_\pair, \kernel)
            +
            \sum_{t=\tau_i}^{\tau_{i+1}-1}
            \parens*{
                e_{\State_{t+1}} - \kernel(\Pair_t)
            } \bias^{\policy_{\tau_i}}(e_\pair, \kernel)
            - \frac 1\epsilon
        \end{align*}
        where $\epsilon$ is any positive quantity smaller than $(\max_\policy \max_\pair \vecspan{\bias^\policy(e_\pair, \kernel)})^{-1} > 0$. 

        Let $\mathcal{I}_\mathrm{sub}^\pair := \braces{i \in \mathcal{I}_\mathrm{sub}: \pair \in \Reach(\policy_{\tau_i}, \State_{\tau_{i+1}-1})}$.

        Because the segment $[\tau_i, \tau_{i+1})$ is a piece of episode, $\policy_{\tau_i}$ is used all throughout the segment hence a pair that is reachable at time $\tau_{i+1}-1$ is necessarily reachable during the entire segment. 
        Therefore, if $i \in \mathcal{I}_\mathrm{sub}^\pair$, then $\gain^{\policy_{\tau_i}}(\State_t; e_\pair, \kernel) > 0$ for all $t \in [\tau_i, \tau_{i+1}-1)$. 
        Further assume that $\epsilon$ is smaller than $\min \braces{\gain^\policy(\state; e_\pair, \kernel) : \pair \in \Reach(\policy, \state), \state \in \states, \policy \in \policies} > 0$.
        We obtain:
        \begin{align*}
            \visits_\pair(\tau_{i+1}) - \visits_\pair(\tau_i) 
            & \ge
            \epsilon (\tau_{i+1} - \tau_i)
            +
            \sum_{t=\tau_i}^{\tau_{i+1}-1}
            \parens*{
                e_{\State_{t+1}} - \kernel(\Pair_t)
            } \bias^{\policy_{\tau_i}}(e_\pair, \kernel)
            - \frac 1\epsilon
            .
        \end{align*}
        Summing for $i$ provides
        \begin{equation}
            \nonumber
            \max_{i \in \mathcal{I}_\mathrm{sub}} 
            \visits_\pair(\tau_{i+1}) - \visits_\pair(\tau)
            \ge
            \epsilon 
            \sum_{i \in \mathcal{I}_\mathrm{sub}^\pair} 
            (\tau_{i+1} - \tau_i)
            +
            \sum_{i \in \mathcal{I}_\mathrm{sub}^\pair} 
            \sum_{t=\tau_i}^{\tau_{i+1}-1}
            \parens*{
                e_{\State_{t+1}} - \kernel(\Pair_t)
            } \bias^{\policy_{\tau_i}}(e_\pair, \kernel)
            - \frac {\abs{\mathcal{I}_\mathrm{sub}^\pair}}\epsilon
            .
        \end{equation}
        Recall that for $i \in \mathcal{I}_\mathrm{sub}$, the segment last until the next episode and $\gain^{\policy_{\tau_i}}(\State_t, \model) < \optgain(\model)$ holds for all $t \in [\tau_i, \tau_{i+1})$.  
        Meanwhile, coherence guarantees that, on $\bigcap_{t=\tau}^{\tau+T-1} F_t$, we have $\visits_\pair(\tau_{i+1}) \le \visits_\pair(\tau) + \varphi(\tau) + 1$ for all $i \in \mathcal{I}_\mathrm{sub}$ and $\pair \notin \pairs^*(\model)$.  
        So, for all $\pair \notin \pairs^*(\model)$ and on $\bigcap_{t=\tau}^{\tau+T-1} F_t$, we have
        \begin{equation}
        \nonumber
            \varphi(\tau) + 1 
            \ge 
            \epsilon 
            \sum_{i \in \mathcal{I}_\mathrm{sub}^\pair} 
            (\tau_{i+1} - \tau_i)
            +
            \sum_{i \in \mathcal{I}_\mathrm{sub}^\pair} 
            \sum_{t=\tau_i}^{\tau_{i+1}-1}
            \parens*{
                e_{\State_{t+1}} - \kernel(\Pair_t)
            } \bias^{\policy_{\tau_i}}(e_\pair, \kernel)
            - \frac {\abs{\mathcal{I}_\mathrm{sub}^\pair}}\epsilon
            .
        \end{equation}
        By coherence and on $\bigcap_{t=\tau}^{\tau+T-1} F_t$ again, we see that $i \in \mathcal{I}_\mathrm{sub}$ belongs to one $\mathcal{I}_\mathrm{sub}^\pair$ for some $\pair \notin \pairs^*(\model)$ at least. 
        Summing for $\pair \notin \pairs^*(\model)$, we obtain the claim.
    \end{proof}

    \noindent
    \STEP{2}
    \textit{
        There exists a constant $\eta > 0$ such that
        \begin{equation}
        \label{equation_coherence_suboptimal_2}
            \forall x \ge 0,
            \quad
            \Pr \parens*{
                \abs*{\mathcal{I}_\mathrm{sub}} \ge x + \frac 1\eta \varphi(\tau)
                \text{~and~} \bigcap_{t=\tau}^{\tau+T-1} F_t
            }
            \le 
            \exp \parens*{- \eta x}.
        \end{equation}
        Moreover, $\eta$ can be chosen independently of $F, \tau, T$ and $\varphi$.
    }
    \medskip
    \begin{proof}
        Denote $\mathcal{T}_\mathrm{sub}(\tau, \tau+T) := \bigcup_{i \in \mathcal{I}_\mathrm{sub}} [\tau_i, \tau_{i+1})$ the time instants when $\gain^{\policy_t}(\State_t, \model) < \optgain(\model)$. 

        Introduce the quantity $\phi(t) := \sum_z \brackets*{\varphi(\tau) + N_z(\tau) - N_z(t)}_+$ for $t \in [\tau, \tau+T)$, which is non-increasing by construction. 
        By coherence and on $F_t$, if $t \in \mathcal{T}_\mathrm{sub}(\tau, \tau+T)$ then there exists a reachable $z$ such that $\varphi(\tau) + N_z(\tau) - N_z(t) > 0$.
        The crucial remark is that for $i \in \mathcal{I}_\mathrm{sub}$ with $[\tau_i, \tau_{i+1}) \subseteq [t_k, t_{k+1})$, two things may hold at time $\tau_{i+1}$:
        (1) Either $i+1 \in \mathcal{I}_\mathrm{opt}$, meaning that a state from which $\policy_{\tau_i}$ is optimal has been reached; 
        (2) Or $i+1 \notin \mathcal{I}_\mathrm{sub}$ and $\tau_{i+1} = t_{k+1}$, in which case $\State_{\tau_{i+1}}$ has been already visited since $\tau_i$.
        For (2), remark indeed that $\State_{\tau_{i+1}}$ has been visited already since $t_k$ by regenerativity of episodes (\Cref{definition_regenerative_episodes}), but if $t_k \ne \tau_i$ then $\gain^{\policy_{\tau_i}}(\State_t, \model) = \optgain(\model)$ for all $t \in [t_k, \tau_i)$ hence $\State_{t_{k+1}}$ cannot appear within the collection of states visited in the time-range $[t_k, \tau_i)$. 
        Combining (1) and (2), we conclude that conditionally on the history $\History_{\tau_i}$, every reachable pair $\pair \in \Reach(\policy_{\tau_i}, \State_{\tau_i})$ from which $\policy_{\tau_i}$ is sub-optimal has positive probability $\epsilon(S_{\tau_i}, \pi_{\tau_i}, z, M)$ to be visited until $\tau_{i+1}$. 
        Letting $\epsilon := \min_{s, \pi, z} \epsilon(s, \pi, z, M) > 0$, we get:
        \begin{align*}
            &
            \Pr \parens*{
                \phi(\tau_{i+1}) < \phi(\tau_i)
                \middle|
                \History_{\tau_i}, i \in \mathcal{I}_\mathrm{sub}, F_{\tau_i}
            }
            \\
            & \ge 
            \min_{
                \scriptsize
                \begin{array}{c}
                    z \equiv (\state,\action) \in \Reach(S_{\tau_i}, \pi_{\tau_i})
                    \\
                    \gain^{\policy_{\tau_i}}(\state, \model) < \optgain(\model)
                \end{array}
            }
            \Pr \parens*{
                N_z(\tau_{i+1}) > N_z(\tau_i)
                \middle |
                \History_{\tau_i}, i \in \mathcal{I}_\mathrm{sub}, F_{\tau_i}
            }
            \\
            & \ge 
            \epsilon
            .
        \end{align*}
        Let $\phi_0(\tau) := S A \varphi(\tau)$ and denote $F_{\tau:\tau+T} := \bigcap_{t=\tau}^{\tau+T-1} F_t$.
        On $F_{\tau:\tau+T}$, $\phi$ can only decrease up to $\phi_0(\tau)$ times before reaching zero, and once it has reached zero, we cannot have $t \in \mathcal{T}_\mathrm{sub}(\tau, \tau+T)$ anymore. 
        Accordingly, for all $m \ge 1$, $\abs*{\mathcal{I}_\mathrm{sub}} \ge m + \phi_0(\tau)$ implies on $F_{\tau:\tau+T}$ that the first in the first $m + \phi_0(\tau)$ elements of $\mathcal{I}_\mathrm{sub}$, at least $m$ of them are such that $\phi(\tau_{i+1}) = \phi(\tau_i)$.
        Introduce the short-hand $U_{\tau_i} := \indicator{\phi(\tau_{i+1}) = \phi(\tau_i)}$.
        For $\lambda > 0$ and $m \ge 1$, we have:
        \begin{align*}
            \psi(m) 
            & :=
            \Pr\parens*{
                \abs*{\mathcal{I}_\mathrm{sub}} \ge m + \phi_0(\tau)
                \text{~and~} F_{\tau:\tau+T}
            }
            \\
            & = 
            \Pr \parens*{
                \sum_{j=1}^{m + \phi_0(\tau)} U_{\tau_j} \ge m
                \text{~and~} F_{\tau:\tau+T}
            }
            \\
            & = \EE \brackets*{
                \eqindicator{
                    \exp\parens*{\lambda\sum_{j=1}^{m + \phi_0(\tau)} U_{\tau_j}}
                    \ge
                    \exp(\lambda m)
                }
                \indicator{F_{\tau:\tau+T}}
            }
            \\
            & \le 
            \exp(- \lambda m)
            \EE \brackets*{
                \exp\parens*{\lambda\sum_{j=1}^{m + \phi_0(\tau)} U_{\tau_j}}
                \indicator{F_{\tau:\tau+T}}
            }
            \\
            & \overset{(\dagger)}\le 
            \exp(- \lambda m)
            \EE \brackets*{
                \exp\parens*{\lambda\sum_{j=1}^{m + \phi_0(\tau)-1} U_{\tau_j}}
                \cdot
                {
                    \indicator{F_{\tau:\tau_{m+\phi_0(\tau)}}}
                    \cdot
                    \indicator{F_{\tau_{m+\phi_0(\tau)}}}
                    \atop
                    \EE\brackets*{\exp(\lambda U_{\tau_m+\phi_0(\tau)}) \middle| F_{\tau_{m+\phi_0(\tau)}}}
                }
            }
            \\
            & \overset{(\ddagger)}\le
            \exp(- \lambda m)
            \EE \brackets*{
                \exp\parens*{\lambda\sum_{j=1}^{m + \phi_0(\tau)-1} U_{\tau_j}}
                \indicator{F_{\tau:\tau_{m+\phi_0(\tau)}}}
                \cdot
                \exp\parens*{\lambda(1-\epsilon) + \frac{\lambda^2}8}
            }
            \\
            & \hspace{0.4em}\vdots{}
            \\
            & \le
            \exp \parens*{
                - \lambda m + \lambda (1-\epsilon) (m+\phi_0(\tau)) + (m+\phi_0(\tau)) \frac{\lambda^2}8
            }
            .
        \end{align*}
        In the above, $(\dagger)$ use that $\indicator{F_{\tau:\tau_{m+\phi_0}}} \cdot \indicator{F_{\tau_{m+\phi_0}}} \le \indicator{F_{\tau:\tau+T}}$ and $(\ddagger)$ is an application of Hoeffding's Lemma together with the fact that $\EE[U_{\tau_i}|F_{\tau_i}] \indicator{F_{\tau_i}} \le 1-\epsilon$.
        Assume that $m$ is large enough so that $\epsilon m > (1-\epsilon) \phi_0(\tau)$.
        Then we continue by factorizing the polynomial within the exponential and minimizing in $\lambda$, straight forward algebra shows that for $m \ge \tfrac{2\phi_0(\tau)}\epsilon$, we have:
        \begin{equation}
            \Pr \parens*{
                \abs*{\mathcal{I}_\mathrm{sub}} \ge m + \phi_0(\tau)
                \text{~and~} F_{\tau:\tau+T}
            }
            \le 
            \exp \parens*{- \frac{3\epsilon^2m}{4}}.
        \end{equation}
        We conclude accordingly by choosing $\eta = \Theta(1 + \frac 2\epsilon)$. 
    \end{proof}

    \noindent
    \STEP{3}
    \textit{
        There exists constants $C_0, C_1, C_2, C_3 > 0$ such that
        \begin{equation}
            \forall x \ge 0,
            \quad
            \Pr \parens*{ 
                \sum_{i \in \mathcal{I}_\mathrm{sub}} (\tau_{i+1}-\tau_i)
                >
                x + C_3 \varphi(\tau)
                \text{~and~}
                \bigcap_{t=\tau}^{\tau+T-1} F_t
            }
            \le
            C_1 T^{C_2} \exp\parens*{- C_0 x}
            .
        \end{equation}
        Moreover, $C_0, C_1, C_2, C_3$ can be chosen independently of $F, \tau, T$ and $\varphi$. 
    }
    \medskip
    \begin{proof}
        Using a time-uniform Azuma-Hoeffding's inequality (see \cite[Lemma~5]{bourel_tightening_2020}), we have for all $\delta > 0$,
        \begin{equation}
            \nonumber
            \Pr \parens*{ 
                \sum_{i \in \mathcal{I}_\mathrm{sub}} 
                \sum_{t=\tau_i}^{\tau_{i+1}-1}
                \parens*{
                    e_{\State_{t+1}} - \kernel(\Pair_t)
                } \sum_{\pair \in \pairs} \bias^{\policy_{\tau_i}}(e_\pair, \kernel)
                <
                - \frac 1\epsilon \sqrt{
                    \sum_{i \in \mathcal{I}_\mathrm{sub}} 
                    (\tau_{i+1} - \tau_i)
                    \log\parens*{\tfrac T\delta}
                }
            }
            \le
            \delta
            .
        \end{equation}
        Combined with \eqref{equation_coherence_suboptimal_1} from \STEP{1}, we obtain an equation of the form $x \le \alpha + \beta \sqrt{x}$ with $x = \sum_{i \in \mathcal{I}_\mathrm{sub}} (\tau_{i+1} - \tau_i)$, $\alpha = \frac {1} \epsilon (\abs{\pairs} (\varphi(\tau) + 1) + \frac 1\epsilon \abs{\mathcal{I}_\mathrm{sub}})$ and $\beta = \frac 1\epsilon \sqrt{\log(T/\delta)}$. 
        Simple algebra shows that $x \le 2 \alpha + 2 \beta^2$. 
        In other words, we have shown that:
        \begin{equation}
            \nonumber
            \forall \delta > 0,
            \quad
            \Pr \parens*{ 
                \sum_{i \in \mathcal{I}_\mathrm{sub}} (\tau_{i+1}-\tau_i)
                >
                C_0 \log\parens*{\tfrac T\delta}
                + C_1 \varphi(\tau)
                + C_2 \abs{\mathcal{I}_\mathrm{sub}}
                \text{~and~}
                \bigcap_{t=\tau}^{\tau+T-1} F_t
            }
            \le
            \delta
        \end{equation}
        for some model dependent constants $C_0, C_1, C_2 > 0$. 
        Use the sub-exponential tail property of $\abs{\mathcal{I}_\mathrm{sub}}$ \eqref{equation_coherence_suboptimal_2} from \STEP{2} to obtain a sub-exponential tail for $\sum_{i \in \mathcal{I}_\mathrm{sub}} (\tau_{i+1} - \tau_i)$. 
    \end{proof}

    \noindent
    \STEP{4}
    \textit{
        There exist constants $C_0, C_1, C_2, C_3 > 0$ such that, for all $\eta > 0$, 
        \begin{equation}
            \Pr \parens*{
                \sum_{j \in \mathcal{J}^+_\mathrm{sub}}
                \sum_{t=\tau_j^+}^{\tau_{j+1}^+-1}
                \ogaps(\Pair_t)
                >
                x + C_3 \varphi(\tau)
                \text{~and~}
                \bigcap_{t = \tau}^{\tau+T} F_t
            }
            \le
            C_1 T^{C_2} \exp(-C_0x)
            .
        \end{equation}
        Moreover, $C_0, C_1, C_2, C_3$ can be chosen independently of $F, \tau, T$ and $\varphi$. 
    }
    \medskip
    \begin{proof}
        Combine \eqref{equation_coherence_suboptimal_segments} with the result of \STEP{3}.
    \end{proof}

    \subsection{Upper bounding the regret on optimal segments}
    \label{appendix_coherence_optimal_segments}
    
    We start by merging consecutive optimal segments. 
    This is done by setting:
    \begin{equation}
    \begin{split}
        \tau_1^+ & := \inf \braces*{\tau_i: i \in \mathcal{I}_{\mathrm{opt}}}
        \\
        \tau_{2j}^+ & := \inf \braces*{\tau_i > \tau_{2j-1}^+: i \in \mathcal{I}_\mathrm{sub}}\\
        \tau_{2j+1}^+ & := \inf \braces*{\tau_i > \tau_{2j}^+ : i \in \mathcal{I}_\mathrm{opt}}
    \end{split}
    \end{equation}
    that design a macroscopic decomposition of $[\tau, \tau+T-1)$ into time-segments, of which $(\tau_i)$ is a refinement. 
    Remark that if $j$ is even, then $[\tau_j^+, \tau_{j+1}^+) \subseteq \bigcup_{i \in \mathcal{I}_\mathrm{sub}} [\tau_i, \tau_{i+1})$ and conversely, if $j$ is odd, then $[\tau_j^+, \tau_{j+1}^+) \setminus \bigcup_{i \in \mathcal{I}_\mathrm{sub}} [\tau_i, \tau_{i+1}) = \emptyset$.
    We write $j \in \mathcal{J}^+_\mathrm{sub}$ and $j \in \mathcal{J}^+_\mathrm{opt}$ respectively. 

    By non-degeneracy of the model $M$, all asymptotically optimal policies of $M$ have the same (unique) invariant probability measure $\mu^* \in \kernels(\pairs)$.
    On segments $[\tau_j^+, \tau_{j+1}^+)$ with $j \in \mathcal{J}^+_\mathrm{opt}$, $\ogaps(\Pair_t)$ can only be positive if the optimal recurrent states $\states(\mathrm{supp}(\mu^*))$ have not been reached yet. 
    The proof consists in showing that when $j \in \mathcal{J}^+_\mathrm{opt}$, the optimal recurrent class is quickly reached on $[\tau_j^+, \tau_{j+1}^+)$. 
    Indeed, setting $\tau_{j+1}^* := \tau_{j+1}^+ \wedge \inf \braces{t > \tau_j^+ : \mu^*(S_t) > 0}$ the reaching time to $\mathrm{supp}(\mu^*)$ after $\tau_j^+$, we have:\footnote{$\imeasure$ is a measure on $\pairs$. For $\state \in \states$, we write $\imeasure(\state) := \sum_{\action \in \actions(\state)} \imeasure(\state,\action)$.}
    \begin{equation}
    \label{equation_coherence_optimal_segments_1}
    \begin{aligned}
        \sum_{j \in \mathcal{J}^+_\mathrm{opt}}
        \sum_{t=\tau_j^+}^{\tau_{j+1}^+-1}
        \ogaps(\Pair_t)
        & \le 
        \parens*{\max_{z \in \pairs} \ogaps(\pair)}
        \sum_{j \in \mathcal{J}^+_\mathrm{opt}} \parens*{N_{\pairs^-(M)}(\tau_{j+1}^+) - N_{\pairs^-(M)}(\tau_j^+)}
        \\
        & =
        \parens*{\max_{z \in \pairs} \ogaps(\pair)}
        \sum_{j \in \mathcal{J}^+_\mathrm{opt}} \parens*{\tau_{j+1}^* - \tau_j^+}
        .
    \end{aligned}
    \end{equation}
    We now upper bound the RHS.

    \bigskip
    \noindent
    \STEP{1}
    \textit{
        There exists a constant $D_* > 0$ as well as an adapted sequence $(h_t)$ with $\vecspan{h_t} \le D^*$ s.t.:
        \begin{equation*}
        \begin{aligned}
            \sum_{j \in \mathcal{J}^+_\mathrm{opt}} \parens*{\tau_{j+1}^* - \tau_j^+}
            & \le
            2 D^* \parens*{
                \abs*{\mathcal{J}^+_\mathrm{opt}}
                + \sum_{j \in \mathcal{J}^+_\mathrm{opt}} \abs*{\braces*{
                    t_\ell \in (\tau_j^+, \tau_{j+1}^+) : \mu^*(S_{t_\ell} ) = 0
                }}
            }
            \\
            & \phantom{\le} 
            + \sum_{j \in \mathcal{J}^+_\mathrm{opt}} \sum_{t=\tau_j^+}^{\tau_j^*-1} \parens*{e_{S_{t+1}} - p_{Z_t}} h_t
            .
        \end{aligned}
        \end{equation*}
        Moreover, $D_*$ is independent of $F, \tau, T$ and $\varphi$. 
    }
    \medskip
    \begin{proof}
        Notice that $[\tau_j^+, \tau_{j+1}^+)$ is of the form $[t'_k, t_{k+1}) \uplus \biguplus_\ell [t_\ell, t_{\ell+1})$ where $[t'_k \in [t_k, t_{k+1}]$ is a time such that $g^{\pi_{t-1}}(\State_{t-1}; M) < g^{\pi_t}(\State_t; M) = g^*(S_t;M)$.
        Consider the reward function $f(z) := \indicator{z \notin \pairs^*(\model)}$.
        Over an episode $[t_\ell, t_{\ell+1}) \subseteq [\tau_j^+, \tau_{j+1}^+)$, the gain and the bias of $\pi^\ell$ associated to this reward function are respectively denoted $g^{(\ell)}$ and $h^{(\ell)}$. 
        Because the recurrent pairs under $\pi^\ell$ from $S_{t_\ell}$ are $\mathrm{supp}(\mu^*)$, we have $g^{(\ell)}(\state) = 0$ for all $(\state,\action) \in \Reach(S_{t_\ell}, \pi^\ell, M)$ and $h^{(\ell)}(\state) = 0$ for all $(\state,\action) \in \mathrm{supp}(\mu^*)$.
        Let $D^* < \infty$ be the maximum $\mathrm{sp}(h^{(\ell)})$ possible over all $\pi^\ell \in \Pi$.
        Using Poisson's equation $g^{(\ell)}(s) + h^{(\ell)}(s) = f(s, \pi^\ell(s)) + p(s, \pi^\ell(s)) h^{(\ell)}$, we obtain:
        \begin{align*}
            (-) 
            & := \tau_{j+1}^* - \tau_j^+
            \\
            & = N_{\pairs^-(M)}(\tau_{j+1}^+) - N_{\pairs^-}(\tau_j^+) 
            \\
            & = 
            \sum_\ell 
            \parens*{h^{(\ell)}(S_{t_\ell}) - h^{(\ell)}(S_{t_{\ell+1}})}
            + \sum_\ell \sum_{t=t_\ell}^{t_{\ell+1}-1}
            \parens*{e_{S_{t+1}} - p_{S_t, A_t}} h^{(\ell)}
            \\
            & \le 
            2 D^*
            + \sum_{\ell: t_\ell \in (\tau_j^+, \tau_{j+1}^+)}
            \parens*{h^{(\ell)}(S_{t_\ell}) - h^{(\ell)}(S_{t_{\ell+1}})}
            + \sum_\ell \sum_{t=t_\ell}^{t_{\ell+1}-1}
            \parens*{e_{S_{t+1}} - p_{S_t, A_t}} h^{(\ell)}
            \\
            & \overset{(\dagger)}=
            2 D^*
            + \sum_{\ell > k}
            \indicator{t_\ell < \tau_j^*}
            \parens*{h^{(\ell)}(S_{t_\ell}) - h^{(\ell)}(S_{t_{\ell+1}})}
            \\
            & \phantom{
                {} \overset{(\dagger)}=
                2 D^*
            }
            {} + \sum_\ell \sum_{t=t_\ell}^{t_{\ell+1}-1}
            \indicator{t < \tau_j^*}
            \parens*{e_{S_{t+1}} - p_{S_t, A_t}} h^{(\ell)}
            \\
            & \overset{(\ddagger)}\le
            2 D^* \parens*{
                1 + \abs*{\braces*{
                    t_\ell \in (\tau_j^+, \tau_{j+1}^+) : \mu^*(S_{t_\ell}) = 0
                }}
            }
            + \sum_{t=\tau_j}^{\tau_j^*-1}
            \parens*{e_{S_{t+1}} - p_{S_t, A_t}} h_t
        \end{align*}
        where $(\dagger)$ follows from $h^{(\ell)} = 0$ on the support of $\mu^*$, and $(\dagger)$ introduces $h_t$ as the unique $h^{(\ell)}$ such that $t \in [t_\ell, t_{\ell+1})$.
        Conclude by summing over $i \in \mathcal{J}^+_\mathrm{opt}$.
    \end{proof}

    \noindent
    \STEP{2}
    \textit{
        There exist constants $C_1, C_2, C_3, C_4 > 0$ such that, for all $\eta > 0$, 
        \begin{equation}
            \forall x \ge 0,
            \quad
            \Pr \parens*{
                \sum_{j \in \mathcal{J}^+_\mathrm{opt}} \parens*{\tau_{j+1}^* - \tau_j^+} 
                >
                x + C_4 \varphi(\tau)
                \text{~and~}
                \bigcap_{t = \tau}^{\tau+T} F_t
            }
            \le 
            C_1 T^{C_2} \exp(-C_3x).
        \end{equation}
        Moreover, $C_1, C_2, C_3, C_4$ can be chosen independently of $F, \tau, T$ and $\varphi$. 
    }
    \medskip
    \begin{proof}
        We bound every term appearing in \STEP{1}.

        The \strong{first term} involves $\abs{\mathcal{J}^+_\mathrm{opt}}$.
        Because elements of $\mathcal{J}^+_\mathrm{opt}$ and $\mathcal{J}^+_\mathrm{sub}$ are intertwined, we $\abs{\mathcal{J}^+_\mathrm{opt}} \le 1 + \abs{\mathcal{J}^+_\mathrm{sub}}$. 
        Moreover, since macroscopic segments $[\tau_j^+, \tau_{j+1}^+)$ are unions of segments $[\tau_i, \tau_{i+1})$, we have $\abs{\mathcal{J}^+_\mathrm{sub}} \le \abs{\mathcal{I}_\mathrm{sub}}$ that has been bounded in \eqref{equation_coherence_suboptimal_2} already.
        Accordingly, $\abs{\mathcal{J}^+_\mathrm{sub}}$ has sub-exponential tails on the good event $\bigcap_{t=\tau}^{\tau+T-1} F_t$:
        \begin{equation}
            \forall x \ge 0,
            \quad
            \Pr \parens*{
                \abs*{\mathcal{J}^+_\mathrm{sub}} \ge x + \frac 1c \varphi(\tau)
                \text{~and~} \bigcap_{t=\tau}^{\tau+T-1} F_t
            }
            \le 
            \exp \parens*{- c x}
        \end{equation}
        where $c > 0$ is a model dependent constant.

        For the \strong{second term}, remark that for each $t_\ell \in [\tau_j, \tau_{j+1})$ with $j \in \mathcal{J}^+_\mathrm{opt}$, the probability that the episode ends with $S_{t_{\ell+1}} \in \mathrm{supp}(\mu^*)$ is positive because of the regenerativity property (\Cref{definition_regenerative_episodes}) of \eqref{equation_vanishing_multiplicative}.
        This is also true for the first (possibly) truncated episode $[t'_k, t_{k+1})$ that starts the macroscopic segment $[\tau_j^+, \tau_{j+1}^+)$ because as the gain $g^{\pi_{t}}(S_t; M)$ increases from $t'_k-1$ to $t'_k$ to the optimal $g^{\policy_t}(\State_t, M) = \optgain(\State_t; M)$, all states that are reachable from $S_t$ under $\pi_t$ cannot have been visited yet during the episode.
        In the end, the probability of reaching $\mathrm{supp}(\mu^*)$ by the end of the episode is lower bounded by some $\epsilon'(\pi_{t_\ell}, S_{t_\ell}, M) > 0$ and we denote $\epsilon' > 0$ the minimum for all possible values of $\pi^\ell$ and $S_{t_\ell}$.
        We conclude that $\Pr(\mu^*(S_{t_{\ell+1}}) > 0 \mid \History_{t_\ell}) > \epsilon'$. 
        Accordingly,
        \begin{equation*}
            U_{\tau_j^+}
            :=
            \abs*{\braces*{
                t_\ell \in (\tau_j^+, \tau_{j+1}^+)
                :
                \mu^*(X_{t_\ell})
                =
                0
            }}
        \end{equation*}
        is stochastically dominated by a geometric distribution $\mathrm{G}(\epsilon')$. 
        Using bounds on tails of geometric random variables (\Cref{lemma_tail_bound_geometric}), we obtain:
        \begin{equation}
            \label{equation_coherence_optimal_segments_2}
            \Pr\parens*{
                \sum_{j \in \mathcal{J}^+_\mathrm{opt}}
                \abs*{\braces*{
                    t_\ell \in (\tau_j^+, \tau_{j+1}^+) : \mu^*(S_{t_\ell}) = 0
                }}
                >
                \abs*{\mathcal{J}^+_\mathrm{opt}} \parens*{1 + \tfrac 2{\epsilon'}}
                + \tfrac{2\eta \log(T)}{\log\parens*{\frac 1{1-\epsilon'}}}
            }
            \le
            T^{-\eta}
            .
        \end{equation}
        The \strong{third term} $\sum_{j \in \mathcal{J}^+_\mathrm{opt}} \sum_{t=\tau_j}^{\tau_{j+1}^*-1} (e_{S_{t+1}} - p_{Z_t}) h_t$ is the sum of a martingale difference sequence, each term having span at most $D^*$ by \STEP{1}.
        By applying a time-uniform Azuma-Hoeffding's inequality (see \cite[Lemma~5]{bourel_tightening_2020}), we obtain:
        \begin{equation}
            \label{equation_coherence_optimal_segments_3}
            \Pr \parens*{
                \sum_{j \in \mathcal{J}^+_\mathrm{opt}} 
                \sum_{t=\tau_j}^{\tau_{j+1}^*-1}
                (e_{S_{t+1}} - p_{Z_t}) h_t
                > 
                D^* \sqrt{
                    \sum_{j \in \mathcal{J}^+_\mathrm{opt}}
                    \parens*{\tau_{j+1}^* - \tau_j^+}
                    \parens*{\tfrac 12 + \eta} \log(1+T)
                }
            }
            \le
            T^{-\eta}
            .
        \end{equation}
        Combining the bound of the first term, \eqref{equation_coherence_optimal_segments_2} and \eqref{equation_coherence_optimal_segments_3}, we see that there exists $C_1, C_2, C_3, C_4$ such that for all $\eta > 0$, with probability $1 - 3T^{-\eta}$, 
        \begin{align*}
            & \sum_{j \in \mathcal{J}^+_\mathrm{opt}}
            \parens*{\tau_{j+1}^* - \tau_j^+}
            \\
            & \le
            C_1 + (C_2 + \eta C_3) (\log(T) + \varphi(\tau))
            + C_4 \sqrt{\sum_{j\in\mathcal{J}^+_\mathrm{opt}} \parens*{\tau_{j+1}^* - \tau_j^+} \parens*{\tfrac 12 + \eta} \log(T)}.
        \end{align*}
        This is an equation of the form $x \le \alpha + \beta \sqrt{x}$ that implies in particular $x \le 2(\alpha + \beta^2)$. 
        We conclude by rearranging terms of the equation. 
    \end{proof}

    \noindent
    \STEP{3}
    \textit{
        There exist constants $C_1, C_2, C_3, C_4 > 0$ such that, for all $\eta > 0$, 
        \begin{equation}
            \Pr \parens*{
                \sum_{j \in \mathcal{J}^+_\mathrm{opt}}
                \sum_{t=\tau_j^+}^{\tau_{j+1}^+-1}
                \ogaps(\Pair_t)
                >
                x + C_4 \varphi(\tau)
                \text{~and~}
                \bigcap_{t = \tau}^{\tau+T} F_t
            }
            \le
            C_1 T^{C_2} \exp(-C_3x)
            .
        \end{equation}
        Moreover, $C_1, C_2, C_3, C_4$ can be chosen independently of $F, \tau, T$ and $\varphi$. 
    }
    \medskip
    \begin{proof}
        Invoke \eqref{equation_coherence_optimal_segments_1} and apply the result of \STEP{2}.
    \end{proof}

    \subsection{Combining everything}
    \label{appendix_coherence_combining}

    Conclude by combining \eqref{equation_coherence_partioning} with \Cref{appendix_coherence_suboptimal_segments} \STEP{4} and \Cref{appendix_coherence_optimal_segments} \STEP{3}.

    \subsection{Technical result: Tails of geometric random variables}

    In this section, we provide a result on the tails of sums of geometric random variables. 

    \begin{lemma}[Tails of Geometric Random Variables]
        \label{lemma_tail_bound_geometric}
        Let $(X_i)$ be a sequence of i.i.d.~random variable of distribution $\mathrm{G}(p)$, and let $S_n := X_1 + \ldots + X_n$ be their sum.
        Then, for all $c \ge 2$ and $t \ge 0$,
        \begin{equation*}
            \Pr\parens*{
                S_n \ge c \parens*{\EE[S_n] + t}
            }
            \le 
            (1 - p)^t
            \exp\parens*{
                -\tfrac{(2c-3)n}{4}
            }.
        \end{equation*}
    \end{lemma}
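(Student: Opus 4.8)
The plan is to reduce the statement to the one-dimensional moment generating function of the geometric law via a Chernoff bound, and then to an elementary scalar inequality. Using the ``number of trials'' convention for $\mathrm{G}(p)$, so that $\EE[X_1] = 1/p$, the moment generating function is $M(\lambda) := \EE[e^{\lambda X_1}] = p e^{\lambda}/(1 - (1-p) e^{\lambda})$, which is finite exactly when $\lambda < \log\frac{1}{1-p}$. Writing $a := c(\EE[S_n] + t) = c(n/p + t)$, independence of the $X_i$ gives, for any admissible $\lambda > 0$, the bound $\Pr(S_n \ge a) \le e^{-\lambda a} M(\lambda)^n = e^{-\lambda c t} \bigl(e^{-\lambda c/p} M(\lambda)\bigr)^n$, which already separates the $t$-dependence from the $n$-dependence.

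The key is to choose $\lambda$ inside the window $[\tfrac{1}{c}\log\frac{1}{1-p},\ \log\frac{1}{1-p})$, which is nonempty precisely because $c \ge 2$. For any such $\lambda$ one has $\lambda c \ge \log\frac{1}{1-p}$, hence $e^{-\lambda c t} \le (1-p)^t$, which produces the advertised factor $(1-p)^t$. I would take $\lambda$ to be the Chernoff-optimal value solving $(1-p) e^{\lambda} = 1 - p/c$ (note $\lambda > 0$ since $c > 1$); a short computation shows it lies in the window above, as this amounts to $1 - p/c \ge (1-p)^{(c-1)/c}$, which follows from $\phi(p) := \log(1-p/c) - \tfrac{c-1}{c}\log(1-p)$ satisfying $\phi(0) = 0$ and $\phi' \ge 0$ on $(0,1)$. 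With this choice $M(\lambda) = \frac{c-p}{1-p}$, so the per-sample factor is $\Psi(p,c) := \frac{c-p}{1-p}\bigl(\frac{1-p}{1-p/c}\bigr)^{c/p}$, and it remains to prove $\Psi(p,c) \le \exp(-(2c-3)/4)$ for all $p \in (0,1)$ and $c \ge 2$.

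The crux, and the part I expect to be most delicate, is this scalar inequality. Taking logarithms and using $\log\frac{c-p}{1-p} = \log c - \log\frac{1-p}{1-p/c}$ gives the compact form $\log \Psi(p,c) = \log c + \frac{c-p}{p}\log\frac{1-p}{1-p/c}$, whose limit as $p \to 0^+$ is $\log c - (c-1)$. I would show that $p \mapsto \log\Psi(p,c)$ is non-increasing, so its supremum over $p$ is attained in the limit $p \to 0^+$; this reduces the target to the two facts $\Psi(p,c) \le c\,e^{-(c-1)}$ and $c\,e^{-(c-1)} \le e^{-(2c-3)/4}$. The latter is equivalent to $\log c \le \frac{2c-1}{4}$, which holds at $c = 2$ and extends to all $c \ge 2$ since the right-hand side has derivative $\tfrac12 \ge \tfrac1c$.

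The genuinely technical point is the monotonicity in $p$. Because the bound $\Psi(p,c) \le c\,e^{-(c-1)}$ is tight at $p = 0$, naive convexity estimates on $(1-p)^{1/p}$ overshoot (they leave a positive residual in the exponent), so one cannot conclude by crude bounding of the two factors separately. Instead I would either differentiate $\log\Psi$ in $p$ and show the derivative is $\le 0$ throughout, or expand $\log\frac{1-p}{1-p/c} = -\sum_{k\ge 1}\frac{p^k}{k}(1 - c^{-k})$ and retain enough terms to dominate the $O(p)$ discrepancy coming from the factor $\frac{c-p}{p}$; the degenerate cases $p \in \{0,1\}$ (deterministic geometric) are handled by continuity and are trivial.
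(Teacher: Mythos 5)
Your argument is correct and shares the paper's skeleton --- a Chernoff bound on $S_n$ with the tilt $\lambda$ calibrated so that $e^{-\lambda c}\le 1-p$, which peels off the $(1-p)^t$ factor --- but the endgame is genuinely different. The paper fixes the boundary value $s=-\tfrac1c\log(1-p)$ (so that $e^{-sc}=1-p$ exactly) and then controls the per-sample exponent through the chain $1-(1-p)^{1/c}\le p/c$ and $\log(1-z)\ge -z-z^2$ for $z\le\tfrac12$; you instead take the MGF-optimal $\lambda$ solving $(1-p)e^\lambda=1-p/c$, verify it lies above $\tfrac1c\log\tfrac1{1-p}$ (your $\phi'\ge 0$ computation, which reduces to $c(c-2)+p\ge0$, is exactly where $c\ge2$ enters), and then drive the per-sample factor to its $p\to0^+$ limit $c e^{1-c}$ by monotonicity. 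The one step you leave open does go through, and by the first of your two suggested strategies rather than the series expansion: writing $u(p)=\log\tfrac{1-p}{1-p/c}$ and $g(p)=\tfrac{c-p}{p}u(p)$, one computes $u'(p)=\tfrac{1-c}{(1-p)(c-p)}$, hence $g'(p)=-\tfrac{c}{p^2}u(p)-\tfrac{c-1}{p(1-p)}$, and $g'\le 0$ is equivalent to $\log\bigl(1+\tfrac{p(c-1)}{c(1-p)}\bigr)\le\tfrac{p(c-1)}{c(1-p)}$, i.e.\ to $\log(1+x)\le x$ --- no delicate cancellation is needed, so the step you flagged as the crux is a one-liner. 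Your route buys a cleaner and slightly sharper per-sample constant ($c e^{1-c}$ versus the stated $e^{-(2c-3)/4}$, compared via $\log c\le\tfrac{2c-1}{4}$ as you note), at the price of the extra monotonicity lemma; the paper's choice of $s$ avoids that lemma but leads to the more opaque (and, as printed, somewhat garbled) chain of elementary inequalities in its final display.
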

    \begin{proof}
        This proof is standard and was found on \texttt{math.stackexchange.com}\footnote{
            \resizebox{0.95\linewidth}{!}{
                \url{https://math.stackexchange.com/questions/110691/tail-bound-on-the-sum-of-independent-non-identical-geometric-random-variables}
            }
        }.
        We rely on Chernoff's method as usual by using the Laplace transform $\EE[e^{sX_i}]$. 
        We have:
        \begin{equation*}
            \Pr(S_n \ge c(\EE[S_n]+t))
            \le
            e^{-sct}
            e^{-scn/p}
            \prod_{i=1}^n
            \EE[e^{sX_i}].
        \end{equation*}
        We compute the Laplace transform of $X_i$: $\EE[e^{sX_i}] = (1 - \frac{1-e^s}{p})^{-1}$. 
        Setting $s = - \frac 1c \log(1-p)$, we have $\exp(-sc) = 1-p$.
        In the above formula, we readily obtain:
        \begin{equation*}
            \Pr(S_n \ge c(\EE[S_n]+t))
            \le
            (1-p)^t 
            \exp\parens*{
                n\parens*{
                    \tfrac ap
                    - \log\parens*{1 - \tfrac bp}
                }
            }
        \end{equation*}
        where $a := \log(1-p)$ and $b = 1 - (1-p)^{1/c}$. 
        We want that exponential to decrease quickly to $0$ with $n$, i.e., we want $a/p - \log(1-b/p) < 0$. 
        By Bernoulli's inequality, we have $b = 1 - (1-p)^{1/c} \le p/c \le p/2$, hence $b/p \le \frac 12$. 
        Moreover, for $z \in (0, \tfrac 12]$, $\log(1-z) \ge -z-z^2$, hence
        \begin{equation*}
            \tfrac ap - \log\parens*{1 - \tfrac bp}
            \le
            \tfrac ap \tfrac bp - \log(\tfrac 12)
            \le 
            \tfrac 12 \parens*{\tfrac ab + \tfrac 32}.
        \end{equation*}
        Finally, since $a = \log(1-p)$ and $b \le p/c$, it follows that $\frac ab \le \frac{c\log(1-p)}p \le -c$, so we get $\frac ab - \log(1 - \frac bp) \le \frac 12(-c + \frac 32)$. 
        This concludes the proof. 
    \end{proof}

    \clearpage
    \section{Regret of exploration guarantees of \texorpdfstring{\eqref{equation_vanishing_multiplicative}}{(VM)}}
    \label{appendix_regret_of_exploration}

    In this appendix, we provide the details of \Cref{section_vm_regret_guarantees} and behind the proof of \Cref{theorem_main}, assertion 3. 
    Below is a map of the proof.
    It will be reported throughout the proof to keep track of where the current lemma of interest in used in the proof's architecture. 

    \begin{figure}[ht]
        \centering
        \small
        \resizebox{\linewidth}{!}{\begin{tikzpicture}
            \tikzstyle{lemma}=[text width=3.3cm, align=center, draw, rounded corners, fill=white]

            \node[lemma, text width=2cm] (0) at (0, 0) {\texttt{EVI}-based algorithm};

            \node[lemma,dashed] (21) at (8, -1.5) {Linear visits $N_z(T) = \Omega(T)$ on $\pairs^{**}(M)$};
            \node[lemma] (31) at (12, -1.5) {Shaking effect (\Cref{appendix_shaking})};

            \node[lemma] (10) at (4, 0) {Asymptotic regime (\Cref{appendix_asymptotical})};
            \node[lemma,dashed] (20) at (8, 1.5) {Logarithmic visits $N_z(T) = \OH(\log(T))$ outside $\pairs^{**}(M)$};
            \node[lemma] (30) at (12, 1.5) {Shrinking effect (\Cref{appendix_shrinking})};

            \draw[color=black, fill=white] (14.5, 0) circle(0.25cm);
            \node (star) at (14.5, 0) {\Large $*$};
            \node[lemma] (40) at (17, 0) {\strong{Local} coherence (\Cref{section_establishing_coherence})};
            \node[lemma] (50) at (17, -1.75) {$\RegExp(T) = \OH(\log(T))$};

            \draw[->, >=stealth] (0) to (0, 1) to node[midway, above] {\scriptsize \Cref{lemma_coherence}} node[midway, below] {\scriptsize \strong{Global} coherence} (3.5, 1) to (3.5, 0.5);

            \draw[->, >=stealth] (4.5, 0.5) to (4.5, 2.5) to node[midway, above] {\scriptsize \Cref{lemma_asymptotic_regime}} (7.5, 2.5) to (7.5, 2.27);
            \draw[->, >=stealth] (8.5, 2.27) to (8.5, 2.5) to node[midway, above] {\scriptsize \Cref{lemma_shrinking}} (11.5, 2.5) to (11.5, 2.03);
            \draw[->, >=stealth] (12.5, 2.03) to (12.5, 2.5) to (14.5, 2.5) to (14.5, 0.25);

            \draw[->, >=stealth] (4.5, -0.5) to (4.5, -2.5) to node[midway, below] {\scriptsize \Cref{lemma_asymptotic_regime}} (7.5, -2.5) to (7.5, -2.27);
            \draw[->, >=stealth] (8.5, -2.27) to (8.5, -2.5) to node[midway, below] {\scriptsize \Cref{lemma_shaking}} (11.5, -2.5) to (11.5, -2.03);
            \draw[->, >=stealth] (12.5, -2.03) to (12.5, -2.5) to (14.5, -2.5) to (14.5, -0.25);

            \draw[->, >=stealth] (5.8, 0) to node[pos=0.88, above] {\scriptsize (\Cref{section_establishing_coherence})} (14.25, 0);
            \draw[->, >=stealth] (14.75, 0) to (40);
            \draw[->, >=stealth] (16, -0.535) to node[midway, right] {\scriptsize \Cref{lemma_coherence}} (16, -1.2);
        \end{tikzpicture}}
        \caption{
            \label{figure_regret_of_exploration_analysis}
            Proof map of regret of exploration guarantees.
        }
    \end{figure}
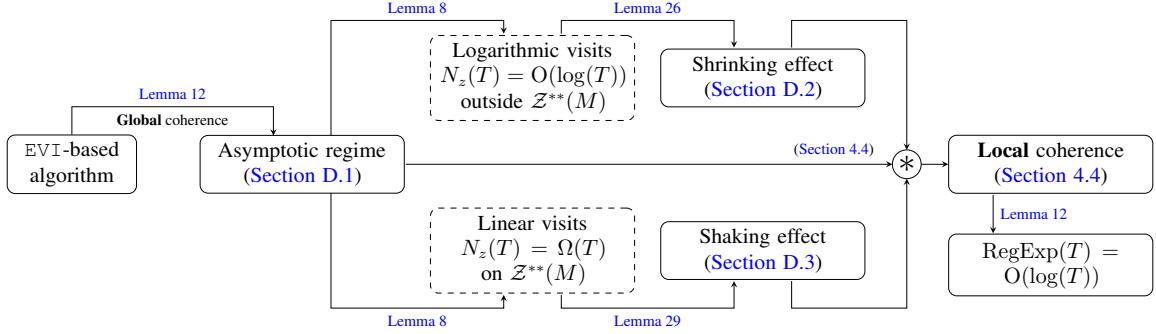
    
    \paragraph{Outline}
    The appendix is organized as such.
    In \Cref{appendix_asymptotical} we prove \Cref{lemma_asymptotic_regime}, describing the asymptotic visit rates. 
    We continue by establishing formal version of the shrinking-shaking effect, discussed informally in \Cref{section_shrinking_shaking}, beginning with the shrinking effect in \Cref{appendix_shrinking} and continuing with the shaking effect in \Cref{appendix_shaking}.
    We conclude by linking the shrinking-shaking effect and the asymptotic visit rates to a local coherence property of \Cref{lemma_local_coherence} in \Cref{appendix_local_coherence}, which is the last step of the proof of the assertion 3 of \Cref{theorem_main}, see \Cref{section_establishing_coherence}.
    
    \subsection{The asymptotic regime of \texorpdfstring{\eqref{equation_vanishing_multiplicative}}{(VM)}: Proof of \texorpdfstring{\Cref{lemma_asymptotic_regime}}{Lemma 8}}
    \label{section_asymptotic_regime}
    \label{appendix_asymptotical}

    In this section, we provide a proof of \Cref{lemma_asymptotic_regime}:

    \par
    \bigskip
    \noindent
    \textbf{\Cref{lemma_asymptotic_regime}}
    \textit{
        Let $M \in \models$ be non-degenerate satisfying \Cref{assumption_interior}.
        Assume that \texttt{KLUCRL} is run while managing episodes with $f$-\eqref{equation_vanishing_multiplicative} with arbitrary $f > 0$.
        There exists $\lambda > 0$ such that:
        \begin{align*}
            \forall z \notin \optpairs(M), \quad
            & \Pr^M \parens*{
                \exists T, \forall t \ge T :
                N_z(t) < \lambda \log(t)
            } = 1, 
            \text{~and}
            \\
            \forall z \in \optpairs(M), \quad
            & \Pr^M \parens*{
                \exists T, \forall t \ge T :
                N_z(t) > \tfrac 1\lambda t
            } = 1.
        \end{align*}
    }

    \begin{figure}[ht]
        \centering
        \small
        \resizebox{\linewidth}{!}{\begin{tikzpicture}
            \tikzstyle{lemma}=[text width=3.3cm, align=center, draw, rounded corners, fill=white]

            \node[lemma, text width=2cm] (0) at (0, 0) {\texttt{EVI}-based algorithm};

            \node[lemma,dashed, fill=yellow!50] (21) at (8, -1.5) {Linear visits $N_z(T) = \Omega(T)$ on $\pairs^{**}(M)$};
            \node[lemma] (31) at (12, -1.5) {Shaking effect (\Cref{appendix_shaking})};

            \node[lemma, fill=yellow!50] (10) at (4, 0) {Asymptotic regime (\Cref{appendix_asymptotical})};
            \node[lemma,dashed, fill=yellow!50] (20) at (8, 1.5) {Logarithmic visits $N_z(T) = \OH(\log(T))$ outside $\pairs^{**}(M)$};
            \node[lemma] (30) at (12, 1.5) {Shrinking effect (\Cref{appendix_shrinking})};

            \draw[color=black, fill=white] (14.5, 0) circle(0.25cm);
            \node (star) at (14.5, 0) {\Large $*$};
            \node[lemma] (40) at (17, 0) {\strong{Local} coherence (\Cref{section_establishing_coherence})};
            \node[lemma] (50) at (17, -1.75) {$\RegExp(T) = \OH(\log(T))$};

            \draw[->, >=stealth] (0) to (0, 1) to node[midway, above] {\scriptsize \Cref{lemma_coherence}} node[midway, below] {\scriptsize \strong{Global} coherence} (3.5, 1) to (3.5, 0.5);

            \draw[->, >=stealth] (4.5, 0.5) to (4.5, 2.5) to node[midway, above] {\scriptsize \Cref{lemma_asymptotic_regime}} (7.5, 2.5) to (7.5, 2.27);
            \draw[->, >=stealth] (8.5, 2.27) to (8.5, 2.5) to node[midway, above] {\scriptsize \Cref{lemma_shrinking}} (11.5, 2.5) to (11.5, 2.03);
            \draw[->, >=stealth] (12.5, 2.03) to (12.5, 2.5) to (14.5, 2.5) to (14.5, 0.25);

            \draw[->, >=stealth] (4.5, -0.5) to (4.5, -2.5) to node[midway, below] {\scriptsize \Cref{lemma_asymptotic_regime}} (7.5, -2.5) to (7.5, -2.27);
            \draw[->, >=stealth] (8.5, -2.27) to (8.5, -2.5) to node[midway, below] {\scriptsize \Cref{lemma_shaking}} (11.5, -2.5) to (11.5, -2.03);
            \draw[->, >=stealth] (12.5, -2.03) to (12.5, -2.5) to (14.5, -2.5) to (14.5, -0.25);

            \draw[->, >=stealth] (5.8, 0) to node[pos=0.88, above] {\scriptsize (\Cref{section_establishing_coherence})} (14.25, 0);
            \draw[->, >=stealth] (14.75, 0) to (40);
            \draw[->, >=stealth] (16, -0.535) to node[midway, right] {\scriptsize \Cref{lemma_coherence}} (16, -1.2);
        \end{tikzpicture}}
    \end{figure}

    \paragraph{Outline of the proof}
    The proof relies on the coherence lemma (\Cref{lemma_coherence}).
    We show that the confidence regions are such that, if a sub-optimal policy is played, one of the pairs responsible for its optimistic gain must be sub-sampled.
    This provides a ``global'' coherence property, see \STEP{1}, this is used show that sub-optimal pairs are visited at most logarithmically often in the asymptotic regime, see \STEP{2} and \eqref{equation_asymptotic_4_5}.
    However, the coherence lemma (\Cref{lemma_confidence_region}) cannot be invoked unless \Cref{assumption_interior} holds and $\model$ is non-degenerate.
    We deduce in parallel that non optimal pairs, i.e., $\pairs \setminus \optpairs(\model)$, are visited at most logarithmically often in \STEP{3} with \eqref{equation_asymptotic_5}, and that optimal pairs, i.e., $\optpairs(\model)$, are visited at least linearly often in \STEP{4} with \eqref{equation_asymptotic_6}.

    \bigskip
    \noindent
    \STEP{1}
    \textit{
        There exists a sequence of adapted events $(F_t)$ satisfying $\Pr(\exists T, \forall t \ge T: F_t) = 1$ and a function $\varphi : \N \to \R_+$ with $\varphi(t) = \OH(t)$ s.t.~the algorithm is $((F_t), \lfloor \log(T) \rfloor, T, \varphi)$-coherent.
    }
    \medskip
    \begin{proof}
        Introduce the good event $E_t := \braces*{M \in \models(t)}$.
        By design of the confidence region (see \Cref{lemma_confidence_region}), we know that $\Pr(\exists t \ge T: \model \notin \models(t)) = \OH(T^{-1})$, so $\Pr(\exists T, \forall t \ge T: E_t) = 1$.
        Let $T \ge 1$ and set $T_0 := \lfloor \log(T) \rfloor$.
        Pick $t \in \braces{T_0, \ldots, T}$ and let $\braces{t_k, \ldots, t_{k+1}-1}$ be the unique episode it falls in.
        We denote $\pi \equiv \pi_{t_k}$ for short and assume that $\pi$ is sub-optimal from $S_t$, i.e.,
        \begin{equation}
            \optgain(\State_t; \model) > \gain^\policy(\State_t; \model)
            .
        \end{equation}
        So there exists a class of pairs $\pairs'$ which is recurrent under $\policy$, with $\pairs' \subseteq \Reach(\policy, \State_t)$ and such that $\gain^\policy(\state; \model) < \optgain(\state; \model)$ for every $\state \in \states(\pairs')$. 
        Let $\state_0 \in \states(\pairs')$. 

        Denote $\Delta_g := \min \braces*{g^*(s; M) - g^\policy(s; M) : \pi \in \Pi, s \in \states, \optgain(\state;, \model) > g^\policy(\state; M)} > 0$ the minimal gain gap in $M$.
        Because $\pi$ is output by \texttt{EVI} (\Cref{appendix_evi}) at time $t_k$, it is optimistically optimal at time $t_k$ and $g^*(\State_t; \models(t_k)) = g(\State_t; \tilde{r}_\pi, \tilde{p}_\pi)$ for some $\tilde{r}_\pi \in \prod_s \rewards_{s, \pi(s)}(t_k)$ and $\tilde{p}_\pi \in \prod_s \kernels_{s, \pi(s)}(t_k)$.
        Furthermore, on $E_t$, we have $D(\models(t)) \le D(M)$ hence every policy returned by \texttt{EVI} (\Cref{appendix_evi}) has optimistic bias span at most $D(M)$ and its optimistic gain has span equal to $0$.
        We have, on $E_{t_k}$,
        \begin{align*}
            \Delta_g
            & \le g^*(\state_0; M) - g^\policy(\state_0; M)
            \\
            & \overset{(\dagger)}\le 
            g^{\pi_{t_k}}(\state_0; \models(t_k)) - g^\policy(\state_0; M)
            \\
            & \overset{(\ddagger)}\le 
            \norm*{\tilde{r} - r}_{\infty, \Reach(\pi, \state_0)}
            + \tfrac 12 D(M) \norm*{\tilde{p} - p}_{1, \Reach(\pi, \state_0)}
            .
        \end{align*}
        In $(\dagger)$, we have used that, $g^*(\state_0; M) \le g^*(\state_0; \models(t_k)) = g^*(\State_{t_k}; \models(t_k)) = g^\pi(\State_{t_k}; \models(t_k))$ on $E_{t_k}$.
        In $(\ddagger)$, we first invoke a gain deviation inequality (\Cref{lemma_gain_deviations}), then rely on the fact that by \Cref{assumption_interior}, the optimistic gain of $\pi$ computed by \texttt{EVI} only depends on pairs that are reachable from $\state_0$ under $\pi$ on $M$.
        One of the two terms of the RHS of the above equation must be at least $\tfrac 12 \Delta_g$.
        For instance, $D(M) \norm*{\tilde{p} - p}_{1, \Reach(\pi, \state_0)} \ge \Delta_g$.
        We have:
        \begin{equation}
        \label{equation_asymptotic_2}
        \begin{split}
            \Delta_g
            & \le
            D(M) \parens*{
                \norm*{\tilde{p} - \hat{p}_{t_k}}_{1, \Reach(\pi, \state_0)}
                +
                \norm*{\hat{p}_{t_k} - p}_{1, \Reach(\pi, \state_0)}
            }
            \\
            & =
            D(M) \parens*{
                \min_{z \in \Reach(\pi, \state_0)}
                \norm*{\tilde{p}(\pair) - \hat{p}_{t_k}(\pair)}_{1}
                +
                \min_{z \in \Reach(\pi, \state_0)}
                \norm*{\hat{p}_{t_k}(\pair) - p(\pair)}_{1}
            }
            .
        \end{split}
        \end{equation}
        Now, given $\tilde{\kernel} \in \kernels_\pair (t)$, we have $\visits_\pair (t) \KL(\hat{\kernel}_{t}(\pair)||\tilde{\kernel}(\pair)) \le \abs{\states} \log(2et)$, see \eqref{equation_confidence_region}.
        By Pinsker's inequality, we deduce that there are constants $\alpha, \beta > 0$ (independent of $t \ge 1$, $\pair \in \pairs$ and $\tilde{\kernel} \in \kernels_\pair (t)$) such that $\visits_\pair (t) \norm{\tilde{\kernel}(\pair) - \hat{\kernel}_{t}(\pair)}_1^2 \le \alpha \log(\beta t)$. 
        Accordingly,
        \begin{equation*}
            \kernels_z(t) 
            \subseteq 
            \braces*{
                \tilde{p}_z \in \kernels(\states):
                N_z(t) \norm*{\tilde{p}_z - \hat{p}_z(t)}_1^2 \le \alpha \log(\beta t)
            }
            =: \kernels'_z(t)
            .
        \end{equation*}
        Since $\Pr(\exists T \ge 1: \kernel(\pair) \in \kernels_\pair(t)) = 1$, we deduce that $\Pr(\exists T \ge 1: \kernel(\pair) \in \kernels'_\pair(t)) = 1$.
        Injecting this in \eqref{equation_asymptotic_2}, we see that on the asymptotically almost sure event $F_{t_k}^p := \braces*{\forall z, p_z \in \kernels'_z(t)}$, we have:
        \begin{equation}
        \label{equation_asymptotic_3}
            \Delta_g 
            \le 
            2 D(M) 
            \min_{z \in \Reach(\pi, \state_0)}
            \sqrt{\frac{\alpha\log(\beta t_k)}{N_z(t_k)}}
            \overset{(\dagger)}\le
            2 D(M) 
            \min_{z \in \Reach(\pi, \state_0)}
            \sqrt{\frac{2\alpha\log(2\beta t)}{N_z(t)}}
        \end{equation}
        where $(\dagger)$ uses that the \eqref{equation_vanishing_multiplicative} guarantees $N_z(t_{k+1}) \le 2 N_z(t_k)$ and $t_{k+1} \le 2 t_k$.
        Solving \eqref{equation_asymptotic_3} in $N_z(t)$, we find a condition of the form $N_z(t) \le \alpha' \log(\beta' t)$.

        The same rationale can be used to handle the case where $\norm*{\tilde{r} - r}_{\infty, \Reach(\pi, \state_0)} \ge \tfrac 12 \Delta_g$, dealing with the design of another asymptotically almost sure event $F_t^r := \braces*{\forall z, r_z \in \rewards_z'(t)}$ and ending with the same kind of upper-bound on $N_z(t)$.
        In the end, setting $F_t := \bigcap_{t'=\lfloor t/2\rfloor}^t F_{t'}^r \cap F_{t'}^p \cap E_{t'}$ and $\varphi(T_0) = \alpha' \log(\beta' t)$, we see that the algorithm is $((F_t), T_0, T, \varphi)$-coherent.
    \end{proof}

    \noindent
    \STEP{2}
    \textit{
        There exists $C > 0$ such that:
        \begin{equation}
        \label{equation_asymptotic_4_5}
            \Pr \parens*{
                \exists T,
                \forall t \ge T,
                \forall z \in \pairs^-(M):
                N_z(t) \le C \log(t)
            }
            =
            1
            .
        \end{equation}
    }
    \begin{proof}
        Since $M$ is non-degenerate, coherence can be converted to regret guarantees (\Cref{lemma_coherence}): Applying \Cref{lemma_coherence} following \STEP{1}, there exist constants $C_1, C_2 > 0$ such that:
        \begin{equation}
        \label{equation_asymptotic_4}
            \forall T \ge 1,
            \quad
            \Pr\parens*{
                \Reg(\log(T), T) \ge C_1 + C_2 \log(T)
                \text{~and~}
                \bigcap_{t=\lfloor\log(T)\rfloor}^T F_t
            }
            \le
            T^{-2}.
        \end{equation}
        Since $N_z(T) \le N_z(T_0) + \ogaps(\pair)^{-1} \Reg(T_0, T)$, the condition $\Reg(\log(T), T) \le C_1 + C_2 \log(T)$ is converted to $N_z(T) \le C'_1 + C'_2 \log(T)$ for all $z \in \pairs^-(M)$. 
        We have:
        \begin{align*}
            & \Pr\parens*{
                \forall T,
                \exists t \ge T,
                \exists z \in \pairs^-(M):
                N_z(t) > C'_1 + C'_2 \log(t)
            }
            \\
            & \overset{(\dagger)}=
            \Pr\parens*{
                \forall T,
                \exists t \ge T,
                \exists z \in \pairs^-(M):
                N_z(t) > C'_1 + C'_2 \log(t) 
                \text{~and~}
                \bigcap_{t=\lfloor \log(T)\rfloor}^T
                F_t
            }
            \\
            & \le
            \Pr\parens*{
                \forall T,
                \exists t \ge T,
                \exists z \in \pairs^-(M):
                \Reg(\log(T), T) > C_1 + C_2 \log(T)
                \text{~and~}
                \bigcap_{t=\lfloor \log(T)\rfloor}^T
                F_t
            }
            \\
            & =
            \lim_{T\to\infty}
            \sum_{t \ge T}
            \sum_{z \in \pairs^-(M)}
            \Pr\parens*{
                \Reg(\log(T), T) > C_1 + C_2 \log(T)
                \text{~and~}
                \bigcap_{t=\lfloor \log(T)\rfloor}^T
                F_t
            }
            \\
            & \overset{(\ddagger)}\le 
            SA \lim_{T \to \infty} \tfrac 1T 
            = 0.
        \end{align*}
        In the above, $(\dagger)$ follows by $\Pr(\limsup F_t) = 1$ and $(\ddagger)$ by \eqref{equation_asymptotic_4}.
        Up to assuming $t$ large enough, we eventually have $C_2' \log(T) \ge C_1'$ hence the constant term can be ignored.
    \end{proof}

    \noindent
    \STEP{3}
    \textit{
        There exists $C > 0$ such that:
        \begin{equation}
        \label{equation_asymptotic_5}
            \Pr \parens*{
                \exists T, \forall t \ge T,
                \forall z \notin \pairs^{**}(M):
                N_z(t) \le C \log(t)
            }
            .
        \end{equation}
    }
    \begin{proof}
        Because $M$ is non-degenerate, $\pairs^*(M)$ defines a unique policy that we denote $\policy^*$, given by $\policy(\state) = \action$ where $\action \in \actions(\state)$ is the unique action such that $(\state, \action) \in \weakoptimalpairs(\model)$. 

        Introduce the reward function $f(z) := \indicator{z \notin \pairs^{**}(M)}$. 
        Let $g^f, h^f$ and $\Delta^f$ be the gain, bias and gap functions of $\pi^*$ in $M$ endowed with the reward function $f$.
        Remark that $g^f(s) = 0$, that $h^f(s) = 0$ for $(s, \pi^*(s)) \in \pairs^{**}(M)$ and that $\Delta^f(z) = 0$ for $z \in \pairs^*(M)$.
        Denote $H_f := \max\braces{\vecspan{h^f}, \max_z \abs{\Delta^f(z)}}$.
        Therefore:
        \begin{align*}
            \sum_{z \notin \pairs^{**}(M)} N_z(T)
            & = \sum_{t=1}^T f(Z_t)
            \\
            & = \sum_{t=1}^T \parens*{
                \parens*{e_{S_t} - p(Z_t)} h^f - \Delta_f(Z_t)
            }
            \\
            & \le 
            H^f 
            + \sum_{t=1}^T \indicator{Z_t \notin \pairs^{**}(M)} \parens*{e_{S_{t+1}} - p(Z_t)} h^f 
            + H^f \sum_{z \in \pairs^-(M)} N_z(T)
            \\
            & \overset{(\dagger)}\le
            H^f \parens*{
                1 
                + 2 \sqrt{\sum_{z \notin \pairs^{**}(M)} N_z(T) \log(T)}
                + \sum_{z \in \pairs^-(M)} N_z(T)
            }
            \\
            & \overset{(\ddagger)} \le
            H^f \parens*{
                1 
                + 2 \sqrt{\sum_{z \notin \pairs^{**}(M)} N_z(T) \log(T)}
                + S A C \log(T)
            }
        \end{align*}
        where $(\dagger)$ holds with probability $1 - T^{-2}$ by Azuma-Hoeffding's inequality (see \cite[Lemma~5]{bourel_tightening_2020}), and $(\dagger)$ holds on the asymptotically almost sure event $(\forall z \in \pairs^-(M), N_z(T) \le C \log(T))$ (see \eqref{equation_asymptotic_4_5}).
        This is an equation of the form $n \le \alpha + \beta \sqrt{n}$ that implies in particular $n \le 2(\alpha + \beta^2)$.
        In the end, we get:
        \begin{equation*}
            \Pr \parens*{
                \forall T,
                \exists t \ge T:
                \sum_{z \notin \pairs^{**}(M)} N_z(t)
                \le 
                2H^f(1 + S A C \log(T) + 4 \log(T))
            }
            = 1.
        \end{equation*}
        This concludes the proof.
    \end{proof}

    \noindent
    \STEP{4}
    \textit{
        There exists $c > 0$ such that:
        \begin{equation}
        \label{equation_asymptotic_6}
            \Pr \parens*{
                \exists T, \forall t \ge T, \forall z \in \pairs^{**}(M):
                N_z(t) \ge c t
            }
            = 1.
        \end{equation}
    }
    \begin{proof}
        This is established with a similar technique than \eqref{equation_asymptotic_5} in \STEP{3}.
        By non-degeneracy of $M$, $\pairs^*(M)$ defines a unique policy that we denote $\pi^*$.
        Fix $z_0 \in \pairs^{**}(M)$ and introduce the reward function $f(z) = \indicator{z = z_0}$.
        Remark that $g^f(s) = c > 0$ for all $s \in \states$ and that $\Delta^f(z) = 0$ for all $z \in \pairs^*(M)$.
        Let $H^f := \vecspan{h^f} \vee \max_z \abs{\Delta^f(z)}$.
        We have:
        \begin{align*}
            N_{z_0}(T)
            & := 
            \sum_{t=1}^T f(Z_t)
            \\
            & =
            cT + \sum_{t=1}^T \parens*{
                \parens*{e_{S_t} - p(Z_t)} h^f
                - \Delta_f (Z_t)
            }
            \\
            & \ge 
            cT 
            - \sum_{t=1}^T \indicator{Z_t \in \pairs^-(M)} \parens*{e_{S_{t+1}} - p(Z_t)} h^f
            - H^f \sum_{z \in \pairs^-(M)} N_z(T)
            \\
            & \ge 
            cT - 2 \sqrt{H^fSAC} \cdot \log(T)
            - H^f S A C \log(T)
            \sim cT
        \end{align*}
        where the last inequality holds with probability $1 - T^{-2}$ on the asymptotically almost sure event $(\forall z \in \pairs^-(M): N_z(T) \le C \log(T))$ given by \eqref{equation_asymptotic_4_5}.
        We conclude accordingly.
    \end{proof}

    \paragraph{About \Cref{assumption_interior}}
    In the coherence property, the first statement, which is about the reachability of sub-sampled pairs, is not guaranteed to hold if we run \texttt{KLUCRL} on an arbitrary model.
    The issue lies in the fact that the high optimistic gain of a policy may be due states that are unreachable under the optimistically optimal policy.
    This is because in the confidence region $\models(t)$, there may be models with a richer transition structure than the true hidden model $M$. 
    This is where \Cref{assumption_interior} seems necessary.
    \Cref{assumption_interior} is roughly equivalent to stating that the support of the transitions of $M$ are known in advance. 
    We conjecture that this assumption cannot be removed without a significant rework of \texttt{EVI}.
    Under \Cref{assumption_interior}, the optimistic gain of a policy $\pi$ from a fixed state $s$ only depends on $\rewards_z(t), \kernels_z(t)$ for pairs $z$ that are reachable from $s$ under $\pi$ on $M$.
    This echoes the reachability requirement of sub-sampled pairs.

    \subsection{The shrinking effect: Formal version of \texorpdfstring{\Cref{lemma_informal_shrinking_shaking}}{Informal Property 9}}
    \label{appendix_shrinking}

    In this section, we provide a proof of a formalized version of the \strong{shrinking effect} part of \Cref{lemma_informal_shrinking_shaking}.

    \begin{figure}[ht]
        \centering
        \small
        \resizebox{\linewidth}{!}{\begin{tikzpicture}
            \tikzstyle{lemma}=[text width=3.3cm, align=center, draw, rounded corners, fill=white]

            \node[lemma, text width=2cm] (0) at (0, 0) {\texttt{EVI}-based algorithm};

            \node[lemma,dashed] (21) at (8, -1.5) {Linear visits $N_z(T) = \Omega(T)$ on $\pairs^{**}(M)$};
            \node[lemma] (31) at (12, -1.5) {Shaking effect (\Cref{appendix_shaking})};

            \node[lemma] (10) at (4, 0) {Asymptotic regime (\Cref{appendix_asymptotical})};
            \node[lemma,dashed] (20) at (8, 1.5) {Logarithmic visits $N_z(T) = \OH(\log(T))$ outside $\pairs^{**}(M)$};
            \node[lemma, fill=yellow!50] (30) at (12, 1.5) {Shrinking effect (\Cref{appendix_shrinking})};

            \draw[color=black, fill=white] (14.5, 0) circle(0.25cm);
            \node (star) at (14.5, 0) {\Large $*$};
            \node[lemma] (40) at (17, 0) {\strong{Local} coherence (\Cref{section_establishing_coherence})};
            \node[lemma] (50) at (17, -1.75) {$\RegExp(T) = \OH(\log(T))$};

            \draw[->, >=stealth] (0) to (0, 1) to node[midway, above] {\scriptsize \Cref{lemma_coherence}} node[midway, below] {\scriptsize \strong{Global} coherence} (3.5, 1) to (3.5, 0.5);

            \draw[->, >=stealth] (4.5, 0.5) to (4.5, 2.5) to node[midway, above] {\scriptsize \Cref{lemma_asymptotic_regime}} (7.5, 2.5) to (7.5, 2.27);
            \draw[->, >=stealth] (8.5, 2.27) to (8.5, 2.5) to node[midway, above] {\scriptsize \Cref{lemma_shrinking}} (11.5, 2.5) to (11.5, 2.03);
            \draw[->, >=stealth] (12.5, 2.03) to (12.5, 2.5) to (14.5, 2.5) to (14.5, 0.25);

            \draw[->, >=stealth] (4.5, -0.5) to (4.5, -2.5) to node[midway, below] {\scriptsize \Cref{lemma_asymptotic_regime}} (7.5, -2.5) to (7.5, -2.27);
            \draw[->, >=stealth] (8.5, -2.27) to (8.5, -2.5) to node[midway, below] {\scriptsize \Cref{lemma_shaking}} (11.5, -2.5) to (11.5, -2.03);
            \draw[->, >=stealth] (12.5, -2.03) to (12.5, -2.5) to (14.5, -2.5) to (14.5, -0.25);

            \draw[->, >=stealth] (5.8, 0) to node[pos=0.88, above] {\scriptsize (\Cref{section_establishing_coherence})} (14.25, 0);
            \draw[->, >=stealth] (14.75, 0) to (40);
            \draw[->, >=stealth] (16, -0.535) to node[midway, right] {\scriptsize \Cref{lemma_coherence}} (16, -1.2);
        \end{tikzpicture}}
    \end{figure}

    In \Cref{lemma_shrinking} below, we show that if $\visits_\pair (t) = \OH(\log(t))$ and under a good event, the kernel confidence region $\kernels_\pair (t)$ remains confined in the confidence region $\kernels_\pair (t_{k(i)-1})$ at time $t_{k(i)-1}$, the beginning of the previous {exploitation} episode (when the current policy is gain optimal). 
    For rewards, the shrinking effect is shown strict by quantifying its speed.
    The shrinking speed is shown to be faster than any $(\frac 1t)^\eta$ for $\eta > 0$.
    This will be essential later, so that the shrinking effect on non-optimal pairs completely dominates the shaking effect on optimal pairs.

    \begin{lemma}
    \label{lemma_shrinking}
        Let $(t_{k(i)})$ be the enumeration of exploration episodes, and let $T \ge 1$.
        Fix $\lambda > 0$ and $\pair \in \pairs$.
        For all $\delta, \eta > 0$, we can find $\epsilon, m, C > 0$ such that:
        \begin{equation*}
        \begin{gathered}
            \Pr \parens*{
                \exists t \in \braces*{t_{k(i)}, \ldots t_{k(i)}+T}
                :
                {
                    \kernels_\pair(t) \not \subseteq \kernels_\pair(t_{k(i)-1}) 
                    \atop
                    \text{~and~}
                    F_t \text{~and~}
                    \visits_\pair(t) > \visits_\pair(t_{k(i)}) + C \log\parens*{\tfrac T\delta}
                }
            }
            \le \delta,
            \\
            \Pr \parens*{
                \exists t \in \braces{t_{k(i)}, \ldots, t_{k(i)}+T}
                :
                {
                    \sup \rewards_\pair(t) > \sup \rewards_\pair(t_{k(i)-1}) - \tfrac{\visits_\pair(t) - \visits_\pair(t_{k(i)})}{C \cdot (t_{k(i)})^\eta}
                    \atop
                    \text{~and~} F_t \text{~and~}
                    N_z(t) > N_z(t_{k(i)}) + C \log\parens*{\tfrac T\delta}
                }
            }
            \le \delta
        \end{gathered}
        \end{equation*}
        with $F_{t} := (\visits_\pair(t) < \frac 1\lambda \log(t), \KL(\hat{\kerrew}_t(\pair)||\kerrew(\pair)) < \epsilon, t > m)$ where $\kerrew(\pair) \equiv (\reward(\pair), \kernel(\pair))$. 
    \end{lemma}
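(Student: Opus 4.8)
The plan is to treat both assertions as concentration statements driven by the decomposition of the optimistic index sketched in \eqref{equation_ucb_update}: the update of the upper confidence value of $\pair$ after a few extra visits splits into an \emph{empirical} term (pure noise, controlled by a maximal concentration inequality) and an \emph{optimism drop} (deterministic, pushing the index toward the empirical mean). On the good event $F_t$ the visit count is logarithmic, $\visits_\pair(t) < \tfrac1\lambda \log(t)$, and the empirical estimate lies within a KL-ball of radius $\epsilon$ around the truth; this is exactly the regime in which the optimism drop dominates. Throughout I would use that under $f$-\eqref{equation_vanishing_multiplicative} consecutive episodes satisfy $t_{k(i)} \le 2 t_{k(i)-1} + \OH(1)$ (as in the proof of \Cref{lemma_number_episodes}), so that the threshold $\log(2et)$ grows by at most $\log 2 + \oh(1)$ between $t_{k(i)-1}$ and any $t \le t_{k(i)}+T$; and that \Cref{assumption_interior} keeps $\hat\reward_\pair(t)$ and $\hat\kernel_\pair(t)$ interior to $\rewards_\pair^0,\kernels_\pair^0$, so the sup and the region are governed by the KL-constraint alone.

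\textbf{Reward part.} I would write $\sup\rewards_\pair(t) = U(\visits_\pair(t),\hat\reward_\pair(t),t)$, where $U(n,\hat r,t)$ is the largest $\tilde r$ with $n\,\kl(\hat r,\tilde r)\le \log(2et)$, and expand $\sup\rewards_\pair(t_{k(i)-1}) - \sup\rewards_\pair(t)$ into (i) the optimism drop from the increase of the visit count from $\visits_\pair(t_{k(i)-1})$ to $\visits_\pair(t)$, (ii) the upward contribution of the growth of $\log(2et)$, and (iii) the empirical drift $\hat\reward_\pair(t)-\hat\reward_\pair(t_{k(i)-1})$. The heart is lower bounding (i): since $\visits_\pair(t_{k(i)-1}) \le \visits_\pair(t_{k(i)}) \le \visits_\pair(t)$, in the regime $\visits_\pair(t)=\OH(\log t)$ one checks by implicitly differentiating $n\,\kl(\hat r,U)=\log(2et)$ that the per-sample drop is of order $(\log t)^{-1}$, hence at least $\tfrac{1}{C(t_{k(i)})^\eta}$ for $t$ past a threshold $m$ (because $(\log t)^{-1}\gg t^{-\eta}$); summing over the $\visits_\pair(t)-\visits_\pair(t_{k(i)})$ extra visits gives a decrease of at least $\tfrac{\visits_\pair(t)-\visits_\pair(t_{k(i)})}{C(t_{k(i)})^\eta}$, the small-base-count case being handled separately where the drop is in fact $\Omega(\sqrt{\log t})$ and super-linear. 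Term (ii) is $\OH((\log t)^{-1})$ by the episode-length bound, and term (iii) is controlled uniformly over $t\in\{t_{k(i)},\ldots,t_{k(i)}+T\}$ by the maximal Hoeffding inequality (\Cref{lemma_hoeffding_maximal}); requiring $\visits_\pair(t)-\visits_\pair(t_{k(i)}) > C\log(T/\delta)$ with $C$ large makes (i) dominate (ii) and (iii) with probability at least $1-\delta$ after a union bound over the $T+1$ times of the window.

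\textbf{Kernel part.} Here I would not track a scalar but a containment of regions, which I would reduce to $\ell_1$-balls: by Pinsker together with the KL-constraint (the relaxation $\kernels_\pair(t)\subseteq\{\tilde p: \visits_\pair(t)\norm{\tilde p-\hat\kernel_\pair(t)}_1^2\le \alpha\log(\beta t)\}$ already used in the proof of \Cref{lemma_asymptotic_regime}), the region $\kernels_\pair(t)$ is sandwiched between $\ell_1$-balls of radius $\Theta(\sqrt{\log(t)/\visits_\pair(t)})$. Then $\kernels_\pair(t)\subseteq\kernels_\pair(t_{k(i)-1})$ follows from the triangle inequality as soon as the radius at $t$ plus the displacement of centers $\norm{\hat\kernel_\pair(t)-\hat\kernel_\pair(t_{k(i)-1})}_1$ is at most the radius at $t_{k(i)-1}$; the displacement is bounded uniformly over the window by the maximal Weissman inequality (\Cref{lemma_weissman_maximal}), and the required radius shrinkage is again secured by the extra $\visits_\pair(t)-\visits_\pair(t_{k(i)}) > C\log(T/\delta)$ visits combined with the $\log 2+\oh(1)$ control of the threshold growth.

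\textbf{Main obstacle.} The delicate point is the reward part: extracting a \emph{quantified} decrease of the announced order $\tfrac{\visits_\pair(t)-\visits_\pair(t_{k(i)})}{C(t_{k(i)})^\eta}$ requires balancing the optimism drop against both the upward push of the growing threshold $\log(2et)$ and the empirical noise at once, uniformly over the whole window and over all admissible visit counts. The non-linearity of $U$ in $n$ (the drop is super-linear when the base count $\visits_\pair(t_{k(i)})$ is small but only $\Theta((\log t)^{-1})$ per sample when it is $\Theta(\log t)$) forces a case split, and the comparison is against the \emph{previous} episode $t_{k(i)-1}$ rather than $t_{k(i)}$, which is precisely why the bound $t_{k(i)}\le 2t_{k(i)-1}+\OH(1)$ induced by \eqref{equation_vanishing_multiplicative} is indispensable to keep the threshold-growth term negligible relative to the drop.
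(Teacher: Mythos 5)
Your high-level architecture — optimism drop versus empirical noise, maximal inequalities over the window, the logarithmic-visit regime making the drop dominate — matches the paper's heuristic from \Cref{section_shrinking_shaking}, but both halves of your plan break precisely where that heuristic has to be made quantitative.

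For the kernel part, the $\ell_1$-ball sandwich cannot deliver the containment $\kernels_\pair(t)\subseteq\kernels_\pair(t_{k(i)-1})$. Pinsker gives the outer $\ell_1$-ball for free, but the inner one requires a reverse Pinsker inequality whose constant depends on $\min_\state\hat{\kernel}_t(\state|\pair)$, so the inner and outer radii differ by a fixed multiplicative factor bounded away from $1$ (the KL ball is anisotropic, closer to a $\chi^2$-ellipsoid than to an $\ell_1$-ball). Meanwhile, the radius shrinkage you can extract from $C\log(T/\delta)$ extra visits out of $\visits_\pair(t)=\Theta(\log t)$ total is only a relative factor $1-\Theta(\log(T/\delta)/\log t)$, which tends to $1$ as $i\to\infty$ and can never absorb a constant-factor mismatch; bridging that gap would require $\Theta(\log t)$ extra visits, not $\Theta(\log(T/\delta))$. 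This is exactly why the paper's proof (\Cref{lemma_shrinking_kernels}, \STEP{4}) never leaves the KL geometry: it expands $\visits_\pair(t_{k(i)-1})\KL(\hat{\kernel}_{t_{k(i)-1}}(\pair)||\tilde{\kernel}(\pair))$ around the same quantity evaluated at the empirical kernel at time $t$, so the shape of the region is preserved and only perturbation terms of order $\norm{w}_1\log(1/c)$ — comparable to the gain $\visits_\pair(t_{k(i)},t)\,\alpha\log(\beta t)/\visits_\pair(t)$ — need to be controlled, via an entropy bound on $\log(1/\tilde{\kernel}(\state|\pair))$.

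For the reward part, your intermediate claim that the per-sample drop of $\sup\rewards_\pair(t)$ is of order $(\log t)^{-1}$ is false in general, and the failure mode is the crux of the lemma. Implicit differentiation of $n\,\kl(\hat r,U)=\log(2et)$ gives $\partial U/\partial n=-\kl(\hat r,U)\,U(1-U)/(n(U-\hat r))$: the drop is proportional to $1-U$, and the optimistic reward can hug the boundary, with $1-U$ as small as $(\beta t)^{-\Theta(1/((1-\hat r)n))}$, i.e.\ polynomially small in $t$ when $n$ is of constant order. Establishing the quantitative lower bound $1-\reward^+_{t_{k(i)-1}}(\pair)=\Omega((t_{k(i)-1})^{-\eta})$ once $\visits_\pair(t_{k(i)-1})\ge 2\alpha/(\eta(1-\reward(\pair)))$, via the entropy decomposition of $\kl$, is the main content of \Cref{lemma_shrinking_rewards} and is exactly what puts $(t_{k(i)})^{\eta}$ rather than $\log(t_{k(i)})$ in the denominator of the statement you are proving. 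You identify the threshold-growth/noise balance as the main obstacle; that part is routine, whereas the boundary behaviour of the optimistic reward, which your proposal does not address, is where the proof actually lives.
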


    \subsubsection{A ``large'' shrinking effect for kernels}

    We beginning with a proof of the shrinking effect for the confidence regions of kernels. 
    The shrinking is shown large, in the sense that we show a property of the form ``$\kernels_\pair(t) \subseteq \kernels_\pair (t_{k(i)-1})$'' but do not quantify how smaller than $\kernels_\pair (t_{k(i)-1})$ the region $\kernel_\pair(t)$ is subjected to be. 

    \begin{lemma}[Shrinking effect, kernels]
        \label{lemma_shrinking_kernels}
        Let $(t_{k(i)})$ be the enumeration of exploration episodes, and let $T \ge 1$.
        Fix $\lambda > 0$ and $\pair \in \pairs$.
        For all $\delta > 0$, we can find $\epsilon, m, C > 0$ such that:
        \begin{equation*}
            \Pr \parens*{
                \exists t \in \braces*{t_{k(i)}, \ldots t_{k(i)}+T}
                :
                {
                    \kernels_\pair(t) \not \subseteq \kernels_\pair(t_{k(i)-1}) 
                    \atop
                    \text{~and~}
                    F_{t_{k(i)}-1} \cap F_t \text{~and~}
                    \visits_\pair(t) > \visits_\pair(t_{k(i)}) + C \log\parens*{\tfrac T\delta}
                }
            }
            \le \delta
        \end{equation*}
        with $F_{t} := (\visits_\pair(t) < \frac 1\lambda \log(t), \KL(\hat{\kerrew}_t(\pair)||\kerrew(\pair)) < \epsilon, t > m)$ where $\kerrew(\pair) \equiv (\reward(\pair), \kernel(\pair))$. 
        
    \end{lemma}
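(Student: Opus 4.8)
The plan is to reduce the containment of the two KL-balls to a containment of approximate Euclidean balls, and then to show that the shrinkage of radius produced by the extra visits dominates the drift of the empirical centre. Write $n := \visits_\pair(t)$, $n_0 := \visits_\pair(t_{k(i)-1})$ and $\dd n := n - n_0$. On the event inside the probability, the hypothesis $\visits_\pair(t) > \visits_\pair(t_{k(i)}) + C\log(T/\delta)$ together with $\visits_\pair(t_{k(i)}) \ge n_0$ gives $\dd n \ge C\log(T/\delta)$, while $F_t$ forces $n < \frac 1\lambda\log(t)$, hence also $n_0 < \frac 1\lambda\log(t)$. As $\kernels_\pair(t)$ and $\kernels_\pair(t_{k(i)-1})$ are both intersected with the same prior set $\kernels_\pair^0$, it suffices to prove the inclusion for the raw balls $B_t := \{\tilde{\kernel} : n\,\KL(\hat{\kernel}_t(\pair)||\tilde{\kernel}) \le \abs{\states}\log(2et)\}$ and $B_{\mathrm{ref}} := \{\tilde{\kernel} : n_0\,\KL(\hat{\kernel}_{t_{k(i)-1}}(\pair)||\tilde{\kernel}) \le \abs{\states}\log(2et_{k(i)-1})\}$.

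First I would localise every relevant distribution near $\kernel(\pair)$. Under $F_{t_{k(i)}-1} \cap F_t$, Pinsker's inequality converts $\KL(\hat{\kerrew}_t(\pair)||\kerrew(\pair)) < \epsilon$ into a total-variation bound, so $\hat{\kernel}_t(\pair)$ lies within order $\sqrt{\epsilon}$ of $\kernel(\pair)$; likewise at time $t_{k(i)}-1$, and since $\hat{\kernel}_{t_{k(i)-1}}(\pair)$ differs from $\hat{\kernel}_{t_{k(i)}-1}(\pair)$ by at most one sample on the sub-sampled pair $\pair$, it too is within order $\sqrt{\epsilon}$ of $\kernel(\pair)$. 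Every element of $B_t$ is moreover within order $\sqrt{\abs{\states}\log(2et)/n}$ of $\hat{\kernel}_t(\pair)$. Choosing $\epsilon$ small and $m$ large confines all these distributions to a fixed neighbourhood of $\kernel(\pair)$, on which the full support guaranteed by \Cref{assumption_interior} makes the Hessian of $\KL$ uniformly positive definite on the tangent space of the simplex. A two-sided Taylor expansion -- Pinsker for the lower bound, a reverse-Pinsker bound valid in the interior for the upper bound -- then sandwiches $B_t$ and $B_{\mathrm{ref}}$ between Euclidean balls centred at $\hat{\kernel}_t(\pair)$ and $\hat{\kernel}_{t_{k(i)-1}}(\pair)$ with radii of respective orders $r_t := \sqrt{\abs{\states}\log(2et)/n}$ and $r_{\mathrm{ref}} := \sqrt{\abs{\states}\log(2et_{k(i)-1})/n_0}$, up to fixed constants. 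Since a Euclidean ball $B(c_t, r_t)$ is contained in $B(c_{\mathrm{ref}}, r_{\mathrm{ref}})$ as soon as $\norm{c_t - c_{\mathrm{ref}}} + r_t \le r_{\mathrm{ref}}$, it remains to control the centre drift and compare radii.

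Next I would bound the centre drift through the identity $\hat{\kernel}_t(\pair) - \hat{\kernel}_{t_{k(i)-1}}(\pair) = \frac{\dd n}{n}(\bar{X} - \hat{\kernel}_{t_{k(i)-1}}(\pair))$, where $\bar{X}$ is the empirical mean of the $\dd n$ fresh samples of $\pair$ collected between $t_{k(i)-1}$ and $t$. Applying the maximal inequality of \Cref{lemma_weissman_maximal} to this fresh-sample average for each of the at most $T$ admissible starting counts and union-bounding -- whence the factor $\log(T/\delta)$ -- gives, with probability at least $1-\delta$, a bound of the form $\norm{\hat{\kernel}_t(\pair) - \hat{\kernel}_{t_{k(i)-1}}(\pair)} \le c_1\frac{\sqrt{\dd n\,\log(T/\delta)}}{n} + \frac{\dd n}{n}\sqrt{\epsilon}$, the last summand being at most $\sqrt{\epsilon}$. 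Lower-bounding the visit-count contribution to $r_{\mathrm{ref}} - r_t$ via $\frac{1}{\sqrt{n_0}} - \frac{1}{\sqrt{n}} \ge \frac{\dd n}{2n\sqrt{n_0}}$, the target $\norm{c_t - c_{\mathrm{ref}}} + r_t \le r_{\mathrm{ref}}$ reduces, after squaring, to a scalar requirement of the form $\dd n \ge c_2\frac{n_0}{\abs{\states}\log(2et_{k(i)-1})}\log(T/\delta) + c_3\,n_0\frac{\log(t/t_{k(i)-1})}{\log(2et_{k(i)-1})}$. Both right-hand terms are $\OH(\log(T/\delta))$ because $n_0 < \frac 1\lambda\log(t)$ and $t \le t_{k(i)} + T$, so the hypothesis $\dd n \ge C\log(T/\delta)$ closes the argument once $C$ is taken large (depending on $\lambda$ and $\abs{\states}$), $\epsilon$ small and $m$ large.

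The main obstacle is this last comparison, where two points need care. First, because $n$ is itself random and ranges over the window, the drift must be controlled by a \emph{time-uniform} maximal inequality rather than a fixed-sample-size one; this is precisely what turns $\log(1/\delta)$ into $\log(T/\delta)$ in the visit threshold. Second, when $T$ is large relative to $t_{k(i)}$ the confidence levels $\log(2et)$ and $\log(2et_{k(i)-1})$ cease to be comparable, and the term $n_0\log(t/t_{k(i)-1})/\log(2et_{k(i)-1})$ above is what forces the threshold to grow like $\log T$; bounding it uniformly again uses $n_0 < \frac 1\lambda\log(t)$ and $t \le t_{k(i)} + T$. The other delicate ingredient is the KL-to-Euclidean reduction, and it is here that \Cref{assumption_interior} is indispensable, since the reverse-Pinsker upper bound degenerates as $\kernel(\pair)$ nears the boundary of the simplex.
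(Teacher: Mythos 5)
Your overall architecture matches the paper's in its first half: the drift of the empirical kernel is split into an a priori error of order $\epsilon^2$ per fresh sample plus a martingale controlled by a time-uniform Weissman-type maximal inequality (this is exactly the paper's Steps 1--2), and the localization of all empirical distributions near $\kernel(\pair)$ with full support is the paper's Step 3. The divergence---and the gap---is in the final comparison. The paper never converts the KL constraints into Euclidean balls; it compares them directly, writing
\begin{equation*}
    \visits_\pair(t_{k(i)-1})\,\KL\parens[\big]{\hat{\kernel}_{t_{k(i)-1}}(\pair)\,\big\|\,\tilde{\kernel}(\pair)}
    =
    \tfrac{\visits_\pair(t_{k(i)-1})}{\visits_\pair(t)}\cdot \visits_\pair(t)\,\KL\parens[\big]{\hat{\kernel}_{t}(\pair)\,\big\|\,\tilde{\kernel}(\pair)}
    + (\text{drift corrections}),
\end{equation*}
so that the constraint $\visits_\pair(t)\KL(\hat{\kernel}_t\|\tilde{\kernel})\le \abs{\states}\log(2et)$ is multiplied by the factor $\visits_\pair(t_{k(i)-1})/\visits_\pair(t) = 1 - \dd n/\visits_\pair(t)$. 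Because the budget $\abs{\states}\log(2et)$ is of order $\log t$ while $\visits_\pair(t) < \tfrac1\lambda\log t$ on $F_t$, this multiplicative shrinkage yields an \emph{absolute} gain of at least $\lambda\abs{\states}\,\dd n$, linear in $\dd n$ with a $t$-independent constant, which beats the $\OH(\sqrt{\dd n\log(T/\delta)})$ drift and the $\OH(1)$ deficit coming from $\log(2et)$ versus $\log(2et_{k(i)-1})$.

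Your Euclidean sandwich cannot reproduce this. The obstruction is quantitative: on $F_t$ the KL radius $\abs{\states}\log(2et)/\visits_\pair(t)$ is bounded \emph{below} by the constant $\lambda\abs{\states}$, so the confidence sets are KL-balls of non-vanishing radius (their elements sit at $\Theta(1)$ distance from the center, with coordinates as small as $e^{-\Theta(1)}$). In that regime the second-order Taylor expansion of $\KL$ is not uniformly valid, and the only two-sided comparison available is Pinsker against a reverse-Pinsker bound, whose constants do not match: you obtain $B_t \subseteq B(c_t, C_1 r_t)$ and $B(c_{\mathrm{ref}}, C_2 r_{\mathrm{ref}}) \subseteq B_{\mathrm{ref}}$ with $C_1/C_2 > 1$ a fixed constant. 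The sufficient condition then becomes $r_{\mathrm{ref}}/r_t \ge C_1/C_2$, i.e.\ a constant-factor increase of the radius. But with $\dd n = \Theta(\log(T/\delta))$ fixed and $\visits_\pair(t_{k(i)-1}) = \Theta(\log t) \to \infty$, one has $r_{\mathrm{ref}}/r_t = \sqrt{(1+\dd n/\visits_\pair(t_{k(i)-1}))\cdot\log(2et_{k(i)-1})/\log(2et)} \to 1$: the shrinkage $r_{\mathrm{ref}} - r_t$ is only an $\OH(\log(T/\delta)/\log t)$ \emph{fraction} of the radius, vanishing as $t\to\infty$, so it cannot absorb any constant-factor loss from the sandwich. (Your drift bound and your scalar reduction are of the right order---both drift and radius gap scale as $\log(T/\delta)/\log t$---so the argument would close \emph{if} the inner and outer approximations had the same constant; they do not.) To repair the proof you must keep the comparison in KL coordinates and treat the center drift perturbatively around the KL constraint, which is what the paper's Step 4 does via the exact expansion of $\KL(\hat{\kernel}_t - w\,\|\,\tilde{\kernel})$ together with the bound $\log(1/\tilde{\kernel}(\state|\pair)) \le \abs{\states}\log(\beta' t)/(c\visits_\pair(t))$ for elements of the confidence set.
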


    \begin{proof}
        We write $\visits_\pair(t,t') := \visits_\pair(t') - \visits_\pair(t)$ the number of times $\pair \in \pairs$ is visited between the times $t$ and $t'$.
        We write $w_{t_{k(i)-1}, t}(\pair) := \hat{\kernel}_t(\pair) - \hat{\kernel}_{t_{k(i)-1}}(\pair)$ the change of kernel from time $t_{k(i)-1}$ to $t$ for the pair $\pair \in \pairs$. 
        Fix $\epsilon, \lambda, m > 0$, $\pair \in \pairs$ and introduce the event:
        \begin{equation}
            F_{t} 
            \equiv 
            F_t^{(\epsilon, \lambda, m)}
            := 
            \parens*{
                \visits_\pair(t) < \frac 1\lambda \log(t), 
                \KL(\hat{\kerrew}_t(\pair)||\kerrew(\pair)) < \epsilon, 
                t > m
            }
        \end{equation}

        \par
        \bigskip
        \noindent
        \STEP{1}
        \textit{
            There exists a function $\lambda \mapsto m_\lambda \in \N$ such that, for $m \ge m_\lambda$, we have:
            \begin{equation*}
                \Pr \parens*{
                    \exists t \in [t_{k(i)}, t_{k(i)}+T]
                    :
                    F_{t_{k(i)-1}}, 
                    \norm{w_{t_{k(i)-1}, t}(\pair)}_1
                    >
                    \tfrac { 
                        2 
                        + \epsilon^2 \visits_\pair(t_{k(i)}, t)
                        + \sqrt{\abs{\states} \visits_\pair(t_{k(i)},t) \log\parens*{\frac T\delta}}
                    }{
                        \visits_\pair(t)
                    }
                }
                \le
                \delta
            \end{equation*}
        }
        \begin{proof}
            With straight-forward algebra, we check that $w_{t_{k(i)-1}, t}(\pair)$ is equal to
            \begin{equation}
            \label{equation_likelihood_noise}
                \frac 1{\visits_\pair(t)}
                \parens*{
                    \visits_\pair(t_{k(i)-1}, t) \parens*{\kernel(\pair) - \hat{p}_{t_{k(i)-1}}(\pair)}
                    +
                    \sum_{i=t_{k(i)-1}}^{t-1} \indicator{\Pair_i = \pair} \parens*{e_{\State_{i+1}}-\kernel(\pair)}
                }
                .
            \end{equation}
            On the $F_{t_{k(i)-1}}$, we know that $\KL(\hat{\kernel}_{t_{k(i)-1}}(\pair)||\kernel(\pair)) < \epsilon$, so by Pinsker's inequality, follows $\norm{\kernel(\pair) - \hat{\kernel}_{t_{k(i)-1}}(\pair)}_1 \le \epsilon^2$. 
            So $\norm{w_{t_{k(i)-1}, t}(\pair)}_1 \le \tfrac 1{\visits_\pair(t)} (\visits_\pair(t_{k(i)-1}, t) \epsilon^2 + \norm{\sum_i \indicator{\Pair_i = \pair} (e_{S_{i+1}} - \kernel(\pair))}_1)$, consisting in two terms. 
            The first term is an error a priori, while the second is the norm of a martingale which is the sum of $\visits_\pair(t_{k(i)-1}, t)$ terms.
            On $F_{t_{k(i)-1}}$, we have:
            \begin{align*}
                \visits_\pair(t_{k(i)}) 
                & \le 
                \floor{\parens*{1 + f(t_{k(i)-1})}\visits_\pair(t_{k(i)-1}))} + 1
                \\
                & \overset{(\dagger)}\le
                \visits_\pair(t_{k(i)-1}) + 1 
                + \floor*{\tfrac 1\lambda f(t_{k(i)-1}) \log(t_{k(i)-1})}
                \\
                & \overset{(\ddagger)}=
                \visits_{\pair}(t_{k(i)-1}) + 1
            \end{align*}
            where $(\dagger)$ is by definition on $F_{t_{k(i)-1}}$ and $(\ddagger)$ holds for $t \to \infty$ since $f(t) = \oh(\log(t)^{-1})$, hence provided that $t_{k(i)-1} \ge \frac 12 t_{k(i)} \ge \frac 12 m$ is large enough with respect to $\lambda$, e.g., $m \ge m_\lambda \in \N$. 
            Accordingly, we have $\visits_\pair(t_{k(i)-1}, t_{k(i)}) \le 1$ on $F_{t_{k(i)-1}}$.
            So, on $F_{t_{k(i)-1}}$, we have:
            \begin{equation*}
                \norm*{w_\pair(t_{k(i)-1}, t)}_1
                \le 
                \frac 1{\visits_\pair(t)}
                \parens*{
                    2
                    + \visits_\pair(t_{k(i)}, t) \epsilon^2
                    +
                    \norm*{
                        \sum_{i=t_{k(i)}}^{t-1} \indicator{\Pair_i = \pair} \parens*{e_{\State_{i+1}}-\kernel(\pair)}
                    }_1
                }
                .
            \end{equation*}
            Applying Weissman's inequality (see \cite{weissman_inequalities_2003} or \cite[Equation~(44)]{auer_near_optimal_2009} or \Cref{lemma_weissman_maximal}), the martingale can then be bounded as follows:
            \begin{equation*}
            \begin{aligned}
                & 
                \Pr \parens*{
                    \exists t \in [t_{k(i)}, t_{k(i)}+T],
                    ~
                    \norm*{
                        \sum_{i=t_{k(i)}}^{t-1}
                        \indicator{\Pair_i = \pair} \parens*{e_{\State_{i+1}} - \kernel(\pair)}
                    }_1
                    \ge
                    \tsqrt{
                        \abs{\states} \visits_\pair(t_{k(i)}, t) \log\parens*{\tfrac T\delta}
                    }
                }
                \\
                & \le 
                \delta
                .
            \end{aligned}
            \end{equation*}
            We conclude accordingly.
        \end{proof}

        \noindent
        \STEP{2}
        \textit{
            Assume that $\epsilon < (\frac{\abs{\states}\log(T)}T)^{1/4}$ and $m \ge m_\lambda$.
            Then, for all $\delta > 0$, we have:
            \begin{equation*}
                \Pr \parens*{
                    \exists t \in [t_{k(i)}, t_{k(i)}+T]
                    :
                    F_{t_{k(i)-1}}, 
                    \norm{w_{t_{k(i)-1}, t}(\pair)}_1
                    >
                    \tfrac { 
                        2 \parens*{
                            1 
                            + 
                            \sqrt{\abs{\states} \visits_\pair(t_{k(i)},t) \log\parens*{\frac T\delta}}
                        }
                    }{
                        \visits_\pair(t)
                    }
                }
                \le
                \delta
            \end{equation*}
        }
        \begin{proof}
            We know that for $t \in \braces{t_{k(i}, \ldots, t_{k(i)}+T}$, we have $\visits_\pair(t_{k(i)}, t) \le T$. 
            Solve $\epsilon^2 T < \sqrt{\abs{\states} T \log(T)}$ in $\epsilon$ and invoke \STEP{1}.
        \end{proof}

        \noindent
        \STEP{3}
        \textit{
            There exists $\epsilon_\pair > 0$ such that, for all $\kernel'(\pair)$ satisfying $\KL(\kernel'(\pair)||\kernel(\pair)) < \epsilon_\pair$, we have $\mathrm{supp}(\kernel') \supseteq \mathrm{supp}(\kernel)$ and $\kernel'(\state|\pair) \ge \frac 12 \kernel(\state|\pair)$ for all $\state \in \states$. 
        }
        \medskip
        \begin{proof}
            Denote $x := \KL(\kernel'(\pair)||\kernel(\pair))$.
            By Pinkser's inequality, we have $\norm{\kernel'(\pair) - \kernel(\pair)}_1 \le \sqrt{2x}$, so
            \begin{equation*}
                \forall \state \in \states,
                \quad
                \abs{\kernel'(\state|\pair) - \kernel(\state|\pair)} \le \sqrt{2x}
                .
            \end{equation*}
            Assume that $\sqrt{2x} \le \frac 12 \min_{\state \in \mathrm{supp}(\kernel(\pair))} \kernel(\state|\pair)$. 
            Then $\kernel'(\state|\pair) \ge \frac 12 \kernel(\state|\pair)$ for all $\state \in \states$ and in particular, $\kernel'(\pair) \gg \kernel(\pair)$. 
            Hence the result. 
        \end{proof}

        \noindent
        \STEP{4}
        \textit{
            For $\epsilon < (\frac{\abs{\states} \log(T)}T)^{1/4}$ and for $m \ge m_\lambda$, for all $\delta > 0$ and $m \ge t_\delta \in \N$, we have:
            \begin{equation*}
                \Pr \parens*{
                    \exists t \in \braces{t_{k(i)}, \ldots, t_{k(i)}+T}
                    :
                    {
                        \displaystyle
                        \visits_\pair(t_{k(i)}, t) 
                        \ge
                        \frac{T \lambda^2 \abs{\states} \log\parens*{\tfrac T\delta}}{c^2}
                        +
                        4 \log \parens*{\frac ec}^2
                        \atop
                        \text{and~}
                        F_t
                        \text{~and~}
                        F_{t_{k(i)-1}}
                        \text{~and~}
                        \kernels_\pair(t) \not \subseteq \kernels_\pair(t_{k(i)-1})
                    }
                }
                \le \delta
                .
            \end{equation*}
        }
        \begin{proof}
            Let $\tilde{\kernel}(\pair) \in \kernels_\pair(t)$. 
            We derive conditions on $\visits_\pair (t_{k(i)}, t)$ such that $\tilde{\kernel}(\pair) \in \kernels_\pair(t_{k(i)-1})$ with high probability, by looking at when $\visits_{\pair}(t_{k(i)-1}) \KL(\hat{\kernel}_{t_{k(i)-1}}(\pair)||\tilde{\kernel}(\pair)) \le \alpha \log (\beta t_{k(i)-1})$ where $\alpha = \abs{\states}$ and $\beta = 2e$.
            Let $\states(\pair) := \kernel(\pair)$, which is the same as the support of $\hat{\kernel}_t(\pair)$ on $F_t$ by \STEP{3}.
            We have:
            \begin{align}
                & \notag
                \visits_\pair (t_{k(i)-1}) 
                \KL(\hat{\kernel}_{t_{k(i)-1}}(\pair)||\tilde{\kernel}(\pair))
                \\
                & = \notag
                \visits_\pair (t_{k(i)-1})
                \KL(
                    \hat{\kernel}_{t}(\pair) - w_{t_{k(i)-1}, t}(\pair)
                    ||
                    \tilde{\kernel}(\pair)
                )
                \\
                & = \notag
                \visits_\pair (t_{k(i)-1})
                \sum_{\state \in \states(\pair)}
                \parens*{
                    \hat{\kernel}_t(\state|\pair) - w_{t_{k(i)-1}, t}(\state|\pair)
                } \log \parens*{
                    \frac{
                        \hat{\kernel}_t(\state|\pair) - w_{t_{k(i)-1}, t}(\state|\pair)
                    }{
                        \tilde{\kernel}(\state|\pair)
                    }
                }
                \\
                & = \notag
                \visits_\pair (t_{k(i)-1})
                \sum_{\state \in \states(\pair)}
                \parens*{
                    \hat{\kernel}_t(\state|\pair) - w_{t_{k(i)-1}, t}(\state|\pair)
                } \parens*{
                    \log \parens*{
                        \frac{
                            \hat{\kernel}_t(\state|\pair)
                        }{
                            \tilde{\kernel}(\state|\pair)
                        }
                    }
                    + \log \parens*{
                        1-
                        \frac{
                            w_{t_{k(i)-1}, t}(\state|\pair)
                        }{
                            \hat{\kernel}_t(\state|\pair)
                        }
                    }
                }
                \\
                & = \notag
                \visits_\pair (t_{k(i)-1}) \parens*{
                    \KL(\hat{\kernel}_{t}(\pair)||\tilde{\kernel}(\pair))
                    - 
                    \sum_{\state \in \states(\pair)}
                    w_{t_{k(i)-1},t}(\state|\pair) \parens*{
                        \log \parens*{\hat{\kernel}_t(\state|\pair)}
                        +
                        \log \parens*{
                            \frac{
                                1
                            }{
                                \tilde{\kernel}(\state|\pair)
                            }
                        }
                    }
                }
                \\
                & ~~ + 
                \visits_{\pair} (t_{k(i)-1}) 
                \sum_{\state \in \states(\pair)}
                \parens*{
                    \hat{\kernel}_t(\state|\pair) - w_{t_{k(i)-1}, t}(\state|\pair)
                } \log \parens*{
                    1-
                    \frac{
                        w_{t_{k(i)-1}, t}(\state|\pair)
                    }{
                        \hat{\kernel}_t(\state|\pair)
                    }
                }
                .
            \label{equation_likelihood_noise_1}
            \end{align}
            Let $c := 2 \min_{\state \in \states(\pair)} \kernel(\state|\pair)$.
            By \STEP{3}, $\min_{\state \in \states(\pair)} \hat{\kernel}_t (\state|\pair) \ge c$ on $F_{t}$.
            So $\abs{\log(\hat{\kernel}_t(\state|\pair))} \le \log (\frac 1c)$ for all $\state \in \states(\pair)$. 

            Furthermore, as $\tilde{\kernel}(\pair) \in \kernels_\pair (t)$, we have $\visits_\pair (t) \KL(\hat{\kernel}_t(\pair)||\tilde{\kernel}(\pair)) \le \alpha \log(\beta t)$ where $\alpha = \abs{\states}$ and $\beta = 2e$ by construction of $\kernels_\pair(t)$, see \eqref{equation_confidence_region}.
            Writing $\entropy(\hat{\kernel}_t(\pair)) := - \sum_{\state} \hat{\kernel}_t(\state|\pair) \log(\hat{\kernel}_t(\state|\pair))$ the Shannon entropy of $\hat{\kernel}_t(\pair)$, we have
            \begin{align*}
                \alpha \log(\beta t)
                & \ge
                \visits_\pair (t) 
                \sum_{\state \in \states(\pair)}
                \hat{\kernel}_t(\state|\pair) 
                \log \parens*{
                    \frac{\hat{\kernel}_t(\state|\pair)}{\tilde{\kernel}(\state|\pair)}
                }
                \\
                & \ge
                \visits_\pair (t) 
                \parens*{
                    \sum_{\state \in \states(\pair)}
                    \hat{\kernel}_t(\state|\pair)
                    \log \parens*{\frac 1{\tilde{\kernel}(\state|\pair)}}
                    -
                    \entropy(\hat{\kernel}_t(\pair))
                }
                \\
                & \ge
                \visits_\pair (t) 
                \parens*{
                    \sum_{\state \in \states(\pair)}
                    \hat{\kernel}_t(\state|\pair)
                    \log \parens*{\frac 1{\tilde{\kernel}(\state|\pair)}}
                    -
                    \log \abs{\states}
                }
            \end{align*}
            so we find that $\log\parens{\frac 1{\tilde{\kernel}(\state|\pair)}} \le \frac 1{c \visits_\pair(t)} \parens{\alpha \log(\beta t) + \log \abs{\states}} \le \frac {\alpha \log(\beta' t)}{c\visits_\pair(t)}$ for some $\beta' > 0$.
            Using this to continue the computations from \eqref{equation_likelihood_noise_1} and further using $\log(1+x) \le x$, we have:
            \begin{align*}
                &
                \visits_\pair (t_{k(i)-1}) 
                \KL(\hat{\kernel}_{t_{k(i)-1}}(\pair)||\tilde{\kernel}(\pair))
                \\
                & \le 
                \visits_\pair (t_{k(i)-1}) \parens*{
                    \KL(\hat{\kernel}_{t}(\pair) || \tilde{\kernel}(\pair))
                    +
                    \norm{w_{t_{k(i)-1}, t}(\pair)}_1 \parens*{
                        \frac{
                            \alpha \log(\beta't)
                        }{c \visits_\pair (t)}
                        +
                        \log \parens*{\frac 1c}
                    }
                }
                \\
                & \quad -
                \visits_{\pair} (t_{k(i)-1}) 
                \sum_{\state \in \states(\pair)}
                \parens*{
                    \hat{\kernel}_t(\state|\pair) - w_{t_{k(i)-1}, t}(\state|\pair)
                }
                \frac
                { w_{t_{k(i)-1}, t}(\state|\pair) }
                { \hat{\kernel}_t(\state|\pair) }
                .
                \\
                & \le 
                \visits_\pair (t_{k(i)-1}) \parens*{
                    \KL(\hat{\kernel}_{t}(\pair) || \tilde{\kernel}(\pair))
                    +
                    \norm{w_{t_{k(i)-1}, t}(\pair)}_1 \parens*{
                        \frac{
                            \alpha \log(\beta't)
                        }{c \visits_\pair (t)}
                        +
                        \log \parens*{\frac 1c}
                        +
                        1
                    }
                }
                \\
                & \quad
                +
                \visits_\pair (t_{k(i)-1})
                \frac{\norm{w_{t_{k(i)-1},t}(\pair)}_2^2}{c}
                \\
                & \overset{(\dagger)}\le
                \alpha \log(\beta t)
                - \tfrac{\visits_\pair(t_{k(i)-1}, t) \alpha \log(\beta t)}{\visits_\pair (t)}
                + \tfrac{\sqrt{\abs{\states} \visits_\pair(t_{k(i)}, t) \log\parens*{T/\delta}}}{\visits_\pair(t)}
                \parens*{
                    \tfrac{
                        \alpha \log(\beta't)
                    }{c \visits_\pair (t)}
                    +
                    \log \parens*{\tfrac 1c}
                    +
                    1
                }
                \\
                & \quad + \OH \parens*{
                    \tfrac 1{\visits_\pair (t)}
                    \parens*{
                        \tfrac{
                            \alpha \log(\beta't)
                        }{c \visits_\pair (t)}
                        +
                        \log \parens*{\tfrac 1c}
                        +
                        1
                    }
                    +
                    \tfrac{1 + \abs{\states} \visits_\pair(t_{k(i)}, t) \log(T/\delta)}{c \visits_\pair(t)}
                }
                \\
                & \le 
                \alpha \log(\beta t)
                +
                \tfrac{\alpha \log(\beta t)}{\visits_\pair(t)} \parens*{
                    - \visits_\pair(t_{k(i)}, t)
                    + \tfrac{\log(\beta t)}{\log(\beta' t)}
                    \parens*{
                        \tfrac{\log(\beta' t)}{c \visits_\pair(t)}
                        \tsqrt{\abs{\states} \log\parens*{\tfrac T\delta}}
                        + \log\parens*{\tfrac ec}
                    }
                    \tsqrt{\visits_\pair(t_{k(i)}, t)}
                }
                \\
                & \quad
                + \OH \parens*{
                    \tfrac{T \log(T/\delta)}{\visits_\pair (t)}
                }
                \\
                & \overset{(\ddagger)}\le
                \alpha \log(\beta t_{k(i)-1})
                +
                \tfrac{\alpha \log(\beta t)}{\visits_\pair(t)} \parens*{
                    - \visits_\pair(t_{k(i)}, t)
                    + 2 \parens*{
                        \tfrac \lambda c
                        \tsqrt{\abs{\states} \log\parens*{\tfrac T\delta}}
                        + \log\parens*{\tfrac ec}
                    }
                    \tsqrt{\visits_\pair(t_{k(i)}, t)}
                }
                \\
                & \quad
                + \OH \parens*{
                    \tfrac{\visits_\pair(t_{k(i)}, t) \log(T/\delta)}{\visits_\pair (t)}
                }
            \end{align*}
            where $(\dagger)$ follows from \STEP{2} and holds with probability $1 - \delta$ on $F_{t_{k(i)-1}}$, 
            and $(\ddagger)$ follows by using that (1) $t_{k(i)-1} \le 3 t$ if $t$ is large enough, (2) that $\visits_\pair (t) < \frac 1\lambda \log(t)$ on $F_t$ and (3) that $\log(\beta't) / \log(\beta t) \le 2$ for $t$ large enough. 
            We want the RHS to be smaller than $\alpha \log(\beta t_{k(i)-1})$.
            For large $t$, we can neglect the second order term in $\visits_\pair(t_{k(i)}, t) \log(T/\delta)/\visits_\pair(t)$ when $t \gg T/\delta$, because $\log(\beta t) \gg \log(T/\delta)$.
            This leads to a condition of the form:
            \begin{equation*}
                \visits_\pair(t_{k(i)}, t)
                \ge
                2 \parens*{
                    \tfrac \lambda c
                    \tsqrt{\abs{\states} \log\parens*{\tfrac T\delta}}
                    + \log\parens*{\tfrac ec}
                }
                \tsqrt{\visits_\pair(t_{k(i)}, t)}
            \end{equation*}
            that leads immediately to the claimed result by using $(a + b)^2 \le 2 a^2 + 2 b^2$. 
        \end{proof}
        \STEP{4} concludes the proof. 
    \end{proof}

    \subsubsection{A ``strict'' shrinking effect for rewards}

    We continue with the shrinking effect for rewards.
    The proof is essentially similar to the shrinking effect for kernels (\Cref{lemma_shrinking_kernels}) but the result is more precise, because we quantify the speed of the shrinking phenomenon.
    Therefore, the proof requires an extra step. 

    \begin{lemma}[Shrinking effect, rewards]
    \label{lemma_shrinking_rewards}
        Let $(t_{k(i)})$ be the enumeration of exploration episodes, and let $T \ge 1$.
        Fix $\lambda > 0$ and $\pair \in \pairs$.
        For all $\delta, \eta > 0$, we can find $\epsilon, m, C > 0$ such that:
        \begin{equation*}
        \begin{gathered}
            \Pr \parens*{
                \exists t \in \braces{t_{k(i)}, \ldots, t_{k(i)}+T}
                :
                {
                    \max \rewards_\pair(t) > \max \rewards_\pair(t_{k(i)-1}) - \tfrac{\visits_\pair(t) - \visits_\pair(t_{k(i)})}{C \cdot (t_{k(i)})^\eta}
                    \atop
                    \text{~and~} F_{t_{k(i)-1}} \cap F_t \text{~and~}
                    N_z(t) > N_z(t_{k(i)}) + C \log\parens*{\tfrac T\delta}
                }
            }
            \le \delta
        \end{gathered}
        \end{equation*}
        with $F_{t} := (\visits_\pair(t) < \frac 1\lambda \log(t), \KL(\hat{\kerrew}_t(\pair)||\kerrew(\pair)) < \epsilon, \visits_\pair(t) > m)$ where $\kerrew(\pair) \equiv (\reward(\pair), \kernel(\pair))$. 
    \end{lemma}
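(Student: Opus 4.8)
The plan is to express the largest plausible reward $\max\rewards_\pair(t)$ as the empirical mean $\hat{\reward}_t(\pair)$ plus a KL-bonus, and then to track the two competing contributions appearing in the heuristic update \eqref{equation_ucb_update}: the \emph{empirical fluctuation} $w_{t_{k(i)-1},t}(\pair) := \hat{\reward}_t(\pair) - \hat{\reward}_{t_{k(i)-1}}(\pair)$, and the \emph{optimism drop} coming from the shrinking of the bonus as $\visits_\pair$ grows. Writing $b_\pair(t) := \max\rewards_\pair(t) - \hat{\reward}_t(\pair)$, I would decompose
\[
\max\rewards_\pair(t) - \max\rewards_\pair(t_{k(i)-1})
= w_{t_{k(i)-1},t}(\pair) + \bigl(b_\pair(t) - b_\pair(t_{k(i)-1})\bigr),
\]
and show that on the good events the optimism drop $b_\pair(t) - b_\pair(t_{k(i)-1})$ is negative enough to overwhelm $w_{t_{k(i)-1},t}(\pair)$ by the required margin $\tfrac{\visits_\pair(t)-\visits_\pair(t_{k(i)})}{C\,(t_{k(i)})^\eta}$. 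As this is the Bernoulli ($d=2$) specialization of the kernel case, the martingale control of the first summand can be imported almost verbatim from \Cref{lemma_shrinking_kernels}.

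The first two steps mirror \Cref{lemma_shrinking_kernels}. On $F_{t_{k(i)-1}}$ the gap satisfies $\visits_\pair(t_{k(i)-1},t_{k(i)}) \le 1$, because $f(t)=\oh(1/\log t)$ forces $\lfloor \lambda f(t_{k(i)-1})\log t_{k(i)-1}\rfloor = 0$ for $t$ large, cf.~\eqref{equation_vanishing_ends}. Hence, up to an $\OH(1/\visits_\pair(t))$ term, $w_{t_{k(i)-1},t}(\pair)$ is the normalised sum of $\visits_\pair(t_{k(i)},t)$ centred Bernoulli increments, and a maximal Azuma--Hoeffding inequality applied uniformly over $t \in \{t_{k(i)},\ldots,t_{k(i)}+T\}$ yields, with probability at least $1-\delta$,
\[
\abs{w_{t_{k(i)-1},t}(\pair)}
=
\OH\!\left(\frac{1 + \sqrt{\visits_\pair(t_{k(i)},t)\,\log(T/\delta)}}{\visits_\pair(t)}\right).
\]

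The genuinely new, \emph{extra} step is quantifying the optimism drop. By \Cref{assumption_interior} the mean $\reward(\pair)$ lies in $(0,1)$, so on $F_t$ (via $\KL(\hat{\kerrew}_t(\pair)||\kerrew(\pair))<\epsilon$) the empirical mean stays in a fixed compact subinterval of $(0,1)$ on which $\kl(\hat{\reward}_t(\pair)||\cdot)$ is uniformly strongly convex. Solving $\visits_\pair(t)\,\kl(\hat{\reward}_t(\pair)||\max\rewards_\pair(t))=\log(2et)$ and Taylor-expanding gives
\[
b_\pair(t) = \sqrt{\frac{\log(2et)}{2\,v_\pair\,\visits_\pair(t)}}\,\bigl(1+\oh(1)\bigr),
\qquad v_\pair := \reward(\pair)\bigl(1-\reward(\pair)\bigr)>0 .
\]
Differentiating in $\visits_\pair$, the per-sample decrease of the bonus is of order $\tfrac{\sqrt{\log t}}{\visits_\pair(t)^{3/2}}$, which on $F_t$ (where $\visits_\pair(t) < \tfrac1\lambda\log t$) is at least $\tfrac{\lambda^{3/2}}{2\log t}(1+\oh(1))$ \emph{per new sample}. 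Summing over the $\visits_\pair(t_{k(i)},t)$ new samples gives a drop $\le -c\,\tfrac{\visits_\pair(t_{k(i)},t)}{\log t}$. Comparing with the fluctuation bound, once $\visits_\pair(t) > \visits_\pair(t_{k(i)}) + C\log(T/\delta)$ the drop dominates $\abs{w_{t_{k(i)-1},t}(\pair)}$, so the net change is $\le -c'\,\tfrac{\visits_\pair(t)-\visits_\pair(t_{k(i)})}{\log t}$; since $\tfrac1{\log t}\ge\tfrac1{t^\eta}$ for every $\eta>0$ and $t$ large, this is $\le -\tfrac{\visits_\pair(t)-\visits_\pair(t_{k(i)})}{C\,(t_{k(i)})^\eta}$, the one-sample discrepancy between $t_{k(i)-1}$ and $t_{k(i)}$ being absorbed into $C$.

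The main obstacle is precisely this third step. Unlike the kernel lemma, mere containment does not suffice: I must show that the optimism drop, which is only $\Theta(1/\log t)$ per sample, genuinely dominates the empirical noise \emph{uniformly} over the window, which is exactly why the threshold $\visits_\pair(t)>\visits_\pair(t_{k(i)})+C\log(T/\delta)$ appears. The delicate points are (i) making the error terms in the KL-index expansion uniform over $t\in\{t_{k(i)},\ldots,t_{k(i)}+T\}$ and over $\hat{\reward}_t(\pair)$ in the compact sub-interval --- here both $\visits_\pair(t)>m$ (a valid expansion) and $\KL(\hat{\kerrew}_t(\pair)||\kerrew(\pair))<\epsilon$ (a non-degenerate $v_\pair$) are used; and (ii) the deliberately lossy passage to the $t^\eta$ rate, which is crude but is exactly what guarantees, downstream, that the shrinking on non-optimal pairs dominates the polynomial shaking on optimal pairs.
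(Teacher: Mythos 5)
There is a genuine gap, and it sits exactly where you locate your ``genuinely new, extra step''. Your expansion
\[
b_\pair(t) \;=\; \sqrt{\frac{\log(2et)}{2\,v_\pair\,\visits_\pair(t)}}\,\bigl(1+\oh(1)\bigr),
\qquad v_\pair = \reward(\pair)\bigl(1-\reward(\pair)\bigr),
\]
rests on a quadratic (Gaussian-type) approximation of $\kl(\hat{\reward}_t(\pair)\,||\,\cdot\,)$ around $\hat{\reward}_t(\pair)$, which is invalid in the regime where this lemma is applied. The event $F_t$ imposes $\visits_\pair(t) < \tfrac1\lambda\log(t)$, so the KL radius $\log(2et)/\visits_\pair(t)$ is bounded \emph{below} by $\lambda>0$: the confidence interval does not shrink to a neighbourhood of $\hat{\reward}_t(\pair)$, and its upper endpoint $\reward^+_t(\pair):=\max\rewards_\pair(t)$ saturates near the boundary $1$ of $[0,1]$, where $\kl(p,q)\approx(1-p)\log\tfrac1{1-q}$ grows logarithmically, not quadratically. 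Solving $\visits_\pair(t)\,\kl(\hat{\reward}_t(\pair)||\reward^+_t(\pair))=\log(2et)$ in that regime gives $1-\reward^+_t(\pair)=\exp\bigl(-\Theta(\log(t)/\visits_\pair(t))\bigr)$, which can be as small as $\Theta(t^{-\eta})$. The per-sample decrease of the optimistic index is (by implicit differentiation) $\tfrac{\log t}{\visits_\pair(t)^2}\cdot\tfrac{\reward^+(1-\reward^+)}{\reward^+-\hat{\reward}}=\Theta\bigl(t^{-\eta}/\log t\bigr)$ per new sample in the worst case --- \emph{not} your $\Omega(1/\log t)$. This is precisely why the lemma is stated with $(t_{k(i)})^\eta$ in the denominator: the $t^\eta$ is the sharp rate forced by the boundary behaviour of the Bernoulli KL, not a ``deliberately lossy'' relaxation of a $1/\log t$ rate as you assert. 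Your subsequent noise-versus-drop comparison inherits the error: with the correct drop rate, a direct comparison on the reward scale of $\Theta(t^{-\eta}\,\dd n/\log t)$ against fluctuations of order $\sqrt{\dd n\,\log(T/\delta)}/\visits_\pair(t)$ does not close under the mere condition $\dd n \ge C\log(T/\delta)$.

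The paper's proof avoids this by never comparing drop and noise on the reward scale. It first reuses STEP~4 of \Cref{lemma_shrinking_kernels} to obtain, once $\visits_\pair(t_{k(i)},t)\gtrsim\log(T/\delta)$, a \emph{multiplicative} contraction of the KL value of every plausible reward measured at time $t_{k(i)-1}$, namely $\visits_\pair(t_{k(i)-1})\KL(\hat{\reward}_{t_{k(i)-1}}(\pair)||\tilde{\reward}(\pair))\le(1-\tfrac{\visits_\pair(t_{k(i)},t)}{2\visits_\pair(t)})\alpha\log(\beta t)$ --- there the noise and the systematic decrement are both $\OH(1/\visits_\pair(t))$-scaled and the comparison $\dd n$ versus $\sqrt{\dd n\log(T/\delta)}$ is clean. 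Only then does it convert the KL contraction into an additive decrease of $\reward^+$ via the derivative $\tfrac{\reward^+-\hat{\reward}}{\reward^+(1-\reward^+)}$, and the crux of the whole argument is the explicit lower bound $1-\reward^+_{t_{k(i)-1}}(\pair)=\Omega\bigl((t_{k(i)-1})^{-\eta}\bigr)$ valid once $\visits_\pair(t_{k(i)-1})\ge\tfrac{2\alpha}{\eta(1-\reward(\pair))}$, see \eqref{equation_proof_shrinking_rewards_3}. That boundary estimate, which uses \Cref{assumption_interior} in the form $\reward(\pair)<1$ and the entropy decomposition of $\kl$, is entirely absent from your argument and cannot be recovered from a strong-convexity bound on a compact subinterval of $(0,1)$, because $\reward^+_{t_{k(i)-1}}(\pair)$ does not stay in any such subinterval.
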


    \begin{proof}
        The proof is essentially similar to \Cref{lemma_shrinking_kernels}.
        For rewards however, \Cref{lemma_shrinking_rewards} quantifies the shrinking speed, hence we need to refine what is being said at the end of the proof of \Cref{lemma_shrinking_kernels}. 
        Following \STEP{4} of the previous proof, for
        \begin{equation*}
            \frac 12 \visits_\pair(t_{k(i)}, t)
            \ge
            2 \parens*{
                \tfrac \lambda c
                \tsqrt{\abs{\states} \log\parens*{\tfrac T\delta}}
                + \log\parens*{\tfrac ec}
            }
            \tsqrt{\visits_\pair(t_{k(i)}, t)}
            ,
        \end{equation*}
        we essentially have, on $F_t \cap F_{t_{k(i)-1}}$, that
        \begin{equation}
        \label{equation_proof_shrinking_rewards_1}
            \visits_{\pair}(t_{k(i)-1})
            \KL(\hat{\reward}_{t_{k(i)-1}}(\pair)||\tilde{\reward}(\pair))
            \le
            \parens*{
                1 - \frac{\visits_\pair(t_{k(i)}, t)}{2 \visits_\pair (t)}
            } \alpha \log(\beta t)
        \end{equation}
        for all $\tilde{\reward}(\pair) \in \rewards_t (\pair)$. 
        Introduce the optimistic rewards $\reward^+_t(\pair) := \max \rewards_\pair (t)$ and $\reward^+_{t_{k(i)-1}}(\pair) := \max \rewards_\pair (t_{k(i)-1})$, and let $\omega^+_{t_{k(i)-1}, t} (\pair) := \reward^+_t(\pair) - \reward^+_{t_{k(i)-1}}(\pair)$ be their difference. 
        Following \eqref{equation_proof_shrinking_rewards_1}, we have
        \begin{equation*}
            \KL(\hat{\reward}_{t_{k(i)-1}}(\pair)||\reward^+_{t}(\pair))
            \le
            \parens*{
                1 - \frac{\visits_\pair(t_{k(i)}, t)}{2 \visits_\pair (t)}
            } 
            \cdot \KL(\hat{\reward}_{t_{k(i)-1}}(\pair)||\reward^+_{t_{k(i)-1}}(\pair))
            .
        \end{equation*}
        Approximating $\KL(\hat{\reward}_{t_{k(i)-1}}(\pair)||\reward^+_{t_{k(i)-1}}(\pair) + w^+_{t_{k(i)-1}, t}(\pair))$ by its Taylor expansion at first order, we find:
        \begin{equation*}
            \KL(\hat{\reward}_{t_{k(i)-1}}(\pair)||\reward^+_t(\pair))
            \approx
            \KL(\hat{\reward}_{t_{k(i)-1}}(\pair)||\reward^+_{t_{k(i)-1}}(\pair))
            +
            \frac{\reward^+_{t_{k(i)-1}}(\pair) - \hat{\reward}_{t_{k(i)-1}}(\pair)}{\reward^+_{t_{k(i)-1}}(\pair)(1 - \reward^+_{t_{k(i)-1}}(\pair))}
            w^+_{t_{k(i)-1}, t}(\pair)
        \end{equation*}
        so that, at first order, we obtain the equation:
        \begin{equation*}
            \frac{\reward^+_{t_{k(i)-1}}(\pair) - \hat{\reward}_{t_{k(i)-1}}(\pair)}{\reward^+_{t_{k(i)-1}}(\pair)(1 - \reward^+_{t_{k(i)-1}}(\pair))}
            w^+_{t_{k(i)-1}, t}(\pair)
            \approx
            - \frac{\visits_{t_{k(i)-1}, t}(\pair) \alpha \log(\beta t_{k(i)-1})}{\visits_\pair (t_{k(i)-1})^2}
        \end{equation*}
        and solving in $w^+_{t_{k(i)-1}, t}(\pair)$ provides:
        \begin{equation}
        \label{equation_proof_shrinking_rewards_2}
            w^+_{t_{k(i)-1}, t}(\pair)
            \approx
            - \frac{\visits_{t_{k(i)-1}, t}(\pair) \alpha \log(\beta t_{k(i)-1})}{\visits_\pair (t_{k(i)-1})^2}
            \cdot 
            \frac{\reward^+_{t_{k(i)-1}}(\pair)(1 - \reward^+_{t_{k(i)-1}}(\pair))}{\reward^+_{t_{k(i)-1}}(\pair) - \hat{\reward}_{t_{k(i)-1}}(\pair)}
            .
        \end{equation}
        The question is how close to the boundary ${\reward^+_{t_{k(i)-1}}(\pair)(1 - \reward^+_{t_{k(i)-1}}(\pair))}$ can be. 
        Thanks to \STEP{3} of the proof of \Cref{lemma_shrinking_kernels}, on $F_{t_{k(i)-1}}$, $\hat{\reward}_{t_{k(i)-1}}(\pair)$ and $\reward(\pair)$ have the same support with $\hat{\reward}_{t_{k(i)-1}}(\pair) \le 2 \reward(\pair) - 1 < 1$. 
        By writing $\KL(x||y) = - \entropy(x) + x \log (\frac 1y) + (1 - x) \log(\frac 1{1-y})$, the inequality $\visits_\pair(t_{k(i)-1}) \KL(\hat{\reward}_{t_{k(i)-1}}(\pair)||\reward^+_{t_{k(i)-1}}(\pair)) = \alpha \log (\beta t_{k(i)-1})$ leads to:
        \begin{align}
            1 - \reward^+_{t_{k(i)-1}}(\pair)
            & \ge
            \notag
            \exp \parens*{
                - \frac{\frac{\alpha \log(\beta t_{k(i)-1})}{\visits_\pair (t_{k(i)-1}) } - \entropy(\hat{\reward}_{t_{k(i)-1}}(\pair))}{1 - \hat{\reward}_{t_{k(i)-1}}(\pair)}
            }
            \\
            & \ge
            \label{equation_proof_shrinking_rewards_3}
            2^{\frac 2{1 - \reward(\pair)}}
            \parens*{
                \beta t_{k(i)-1}
            }^{
                - \frac{2 \alpha}{(1 - \reward(\pair)) \visits_\pair(t_{k(i)-1})}
            }
            = \Omega \parens*{
                (t_{k(i)-1})^{-\eta}
            }
        \end{align}
        provided that $\visits_\pair (t_{k(i)-1}) \ge \frac{2\alpha}{\eta(1 - \reward(\pair))}$. 
        To conclude, we inject \eqref{equation_proof_shrinking_rewards_3} into \eqref{equation_proof_shrinking_rewards_2} together with the fact that, on $F_{t_{k(i)-1}}$, we have $\visits_\pair (t_{k(i)-1}) < \frac 1\lambda \log(t_{k(i)-1})$, to get:
        \begin{equation*}
            w^+_{t_{k(i)-1}, t}(\pair)
            \lesssim
            - \frac{
                \visits_\pair(t_{k(i)}, t)
            }{\visits_\pair(t_{k(i)-1})}
            \cdot \Omega \parens*{
                (t_{k(i)-1})^{-\eta}
            }
            =
            - \Omega \parens*{
                \frac{
                    \visits_\pair(t_{k(i)}, t)
                }{
                    (t_{k(i)-1})^\eta \log(t_{k(i)-1})
                }
            }
            .
        \end{equation*}
        This concludes the proof. 
    \end{proof}

    \subsection{The shaking effect: Proof of \texorpdfstring{\Cref{lemma_shaking}}{Lemma 29}}
    \label{appendix_shaking}

    In this section, we provide a proof of a formalized version of the \strong{shaking effect} part of \Cref{lemma_informal_shrinking_shaking}.

    \begin{figure}[ht]
        \centering
        \small
        \resizebox{\linewidth}{!}{\begin{tikzpicture}
            \tikzstyle{lemma}=[text width=3.3cm, align=center, draw, rounded corners, fill=white]

            \node[lemma, text width=2cm] (0) at (0, 0) {\texttt{EVI}-based algorithm};

            \node[lemma,dashed] (21) at (8, -1.5) {Linear visits $N_z(T) = \Omega(T)$ on $\pairs^{**}(M)$};
            \node[lemma, fill=yellow!50] (31) at (12, -1.5) {Shaking effect (\Cref{appendix_shaking})};

            \node[lemma] (10) at (4, 0) {Asymptotic regime (\Cref{appendix_asymptotical})};
            \node[lemma,dashed] (20) at (8, 1.5) {Logarithmic visits $N_z(T) = \OH(\log(T))$ outside $\pairs^{**}(M)$};
            \node[lemma] (30) at (12, 1.5) {Shrinking effect (\Cref{appendix_shrinking})};

            \draw[color=black, fill=white] (14.5, 0) circle(0.25cm);
            \node (star) at (14.5, 0) {\Large $*$};
            \node[lemma] (40) at (17, 0) {\strong{Local} coherence (\Cref{section_establishing_coherence})};
            \node[lemma] (50) at (17, -1.75) {$\RegExp(T) = \OH(\log(T))$};

            \draw[->, >=stealth] (0) to (0, 1) to node[midway, above] {\scriptsize \Cref{lemma_coherence}} node[midway, below] {\scriptsize \strong{Global} coherence} (3.5, 1) to (3.5, 0.5);

            \draw[->, >=stealth] (4.5, 0.5) to (4.5, 2.5) to node[midway, above] {\scriptsize \Cref{lemma_asymptotic_regime}} (7.5, 2.5) to (7.5, 2.27);
            \draw[->, >=stealth] (8.5, 2.27) to (8.5, 2.5) to node[midway, above] {\scriptsize \Cref{lemma_shrinking}} (11.5, 2.5) to (11.5, 2.03);
            \draw[->, >=stealth] (12.5, 2.03) to (12.5, 2.5) to (14.5, 2.5) to (14.5, 0.25);

            \draw[->, >=stealth] (4.5, -0.5) to (4.5, -2.5) to node[midway, below] {\scriptsize \Cref{lemma_asymptotic_regime}} (7.5, -2.5) to (7.5, -2.27);
            \draw[->, >=stealth] (8.5, -2.27) to (8.5, -2.5) to node[midway, below] {\scriptsize \Cref{lemma_shaking}} (11.5, -2.5) to (11.5, -2.03);
            \draw[->, >=stealth] (12.5, -2.03) to (12.5, -2.5) to (14.5, -2.5) to (14.5, -0.25);

            \draw[->, >=stealth] (5.8, 0) to node[pos=0.88, above] {\scriptsize (\Cref{section_establishing_coherence})} (14.25, 0);
            \draw[->, >=stealth] (14.75, 0) to (40);
            \draw[->, >=stealth] (16, -0.535) to node[midway, right] {\scriptsize \Cref{lemma_coherence}} (16, -1.2);
        \end{tikzpicture}}
    \end{figure}

    In \Cref{lemma_shaking} below, we show that if $\visits_\pair (t) = \Omega(t)$ and under a good event, the reward-kernel confidence region $\kerrews_\pair (t) := \rewards_\pair (t) \times \kernels_\pair (t)$ barely changes compared to its state $\kerrews_\pair (t_{k(i)-1})$ at time $t_{k(i)-1}$, the beginning of the previous exploitation episode. 
    The amount of displacement is quantified in Hausdorff distance and is shown of order $\sqrt{\log(t)/t}$.
    This will be negligible with respect to the displacements of the confidence region due to the shrinking effect, of which the order of magnitude is $\Omega((\frac 1t)^\eta)$ for all $\eta > 0$.

    \begin{lemma}
    \label{lemma_shaking}
        Let $(t_{k(i)})$ be the enumeration of exploration episodes, and let $T \ge 1$.
        Fix $\lambda, z \in \pairs$ and for two sets $\mathcal{U}, \mathcal{V} \subseteq \R^n$, denote $d_\mathrm{H}(\mathcal{U}, \mathcal{V})$ the Hausdorff distance induced by the one-norm. 
        We can find $c, m > 0$ such that:
        \begin{align*}
            \parens*{\textit{kernels}} \quad
            & F_{t_{k(i)}} \supseteq 
            \parens*{
                \forall t \in [{t_{k(i)}}, {t_{k(i)}}+T]: d_\mathrm{H}(\kernels_z(t), \kernels_z({t_{k(i)-1}})) 
                \le 
                \sqrt{\tfrac {c \log(t)}t}
            }, 
            \\
            \parens*{\textit{rewards}} \quad
            & F_{t_{k(i)}} \supseteq 
            \parens*{
                \forall t \in [{t_{k(i)}}, {t_{k(i)}}+T]: d_\mathrm{H}(\rewards_z(t), \rewards_z({t_{k(i)-1}})) 
                \le 
                \sqrt{\tfrac {c \log(t)}t}
            }
        \end{align*}
        where $F_{t_{k(i)}} := (N_z({t_{k(i)-1}}) > \lambda {t_{k(i)-1}}, {t_{k(i)}} > m) \cap (\forall t \in [t_{k(i)-1}, t_{k(i)}], \model \in \models(t))$.
    \end{lemma}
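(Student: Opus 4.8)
The plan is to exploit that, under $F_{t_{k(i)}}$, every confidence region in play is a Kullback--Leibler ball of vanishing radius pinned near the true parameter $\kernel(z)$ (resp.\ $\reward(z)$), so that all of them are squeezed into small $\ell_1$-balls around an almost-common centre. Concretely, Pinsker's inequality turns each KL ball into an $\ell_1$-ball: from the definition \eqref{equation_confidence_region}, any $\tilde{\kernel} \in \kernels_z(\tau)$ satisfies $\tfrac12 \visits_z(\tau)\norm{\tilde{\kernel} - \hat{\kernel}_\tau(z)}_1^2 \le \visits_z(\tau)\KL(\hat{\kernel}_\tau(z)||\tilde{\kernel}) \le \abs{\states}\log(2e\tau)$, hence $\kernels_z(\tau) \subseteq B_1\bigl(\hat{\kernel}_\tau(z),\rho(\tau)\bigr)$ with $\rho(\tau) := \sqrt{2\abs{\states}\log(2e\tau)/\visits_z(\tau)}$, where $B_1$ denotes the closed $\ell_1$-ball. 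I would carry the whole argument through for kernels; rewards are verbatim with $\abs{\states}$ replaced by the dimension $d=2$. Note that only this (upper, Pinsker) direction of the KL/$\ell_1$ comparison is used, so no lower bound on the region nor any control of the simplex boundary is needed.

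First I would pin the empirical centres near $\kernel(z)$. On $F_{t_{k(i)}}$ we have $\model \in \models(\tau)$ for $\tau \in \{t_{k(i)-1},t_{k(i)}\}$, i.e.\ $\kernel(z) \in \kernels_z(\tau)$, and the same Pinsker computation gives $\norm{\hat{\kernel}_\tau(z) - \kernel(z)}_1 \le \rho(\tau)$ for those two times; in particular $\kernels_z(t_{k(i)-1}) \subseteq B_1(\kernel(z),2\rho(t_{k(i)-1}))$. For $t \in [t_{k(i)},t_{k(i)}+T]$ I deliberately avoid using $\model \in \models(t)$; instead I observe that the centre can only drift deterministically by the few new samples collected over the window: since each observation lies in the simplex, $\norm{\hat{\kernel}_t(z) - \hat{\kernel}_{t_{k(i)}}(z)}_1 \le 2\bigl(\visits_z(t)-\visits_z(t_{k(i)})\bigr)/\visits_z(t) \le 2T/\visits_z(t_{k(i)-1})$. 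Combining, $\hat{\kernel}_t(z)$ stays within $\rho(t_{k(i)}) + 2T/\visits_z(t_{k(i)-1})$ of $\kernel(z)$, while $\hat{\kernel}_t(z) \in \kernels_z(t)$ trivially.

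Next I would control the magnitudes, using that $T$ is a fixed constant while $t_{k(i)} \to \infty$. The doubling-type bound $t_{k(i)} \le 2 t_{k(i)-1} + \OH(1)$ (from the proof of \Cref{lemma_number_episodes}) together with the hypothesis $\visits_z(t_{k(i)-1}) > \lambda t_{k(i)-1}$ yields $\visits_z(t) \ge \visits_z(t_{k(i)-1}) > \lambda t_{k(i)-1} = \Omega(t)$ uniformly over the window; hence $\rho(\tau) = \OH(\sqrt{\log(t)/t})$ for each of $\tau \in \{t_{k(i)-1},t_{k(i)},t\}$, whereas the drift $2T/\visits_z(t_{k(i)-1}) = \OH(1/t) = \oh(\sqrt{\log(t)/t})$. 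Collecting the inclusions, $\kernels_z(t) \subseteq B_1(\hat{\kernel}_t(z),\rho(t))$ with $\hat{\kernel}_t(z) \in \kernels_z(t)$, and $\kernels_z(t_{k(i)-1}) \subseteq B_1(\kernel(z),2\rho(t_{k(i)-1}))$ with $\kernel(z) \in \kernels_z(t_{k(i)-1})$, the two centres being mutually $\OH(\sqrt{\log t/t})$-close. Using the elementary fact that if $c_A \in A \subseteq B_1(c_A,a)$ and $c_B \in B \subseteq B_1(c_B,b)$ with $\norm{c_A - c_B}_1 \le \delta$ then $d_\mathrm{H}(A,B) \le \max(a,b) + \delta$, I conclude $d_\mathrm{H}(\kernels_z(t),\kernels_z(t_{k(i)-1})) \le \sqrt{c\log(t)/t}$ for a suitable model-dependent $c$ and all $t_{k(i)} > m$, which is the asserted inclusion; rewards follow identically.

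The main obstacle is not any concentration estimate --- this is precisely the point of the shaking effect, and the bound is entirely deterministic on $F_{t_{k(i)}}$ --- but rather the bookkeeping that keeps every radius uniformly of order $\sqrt{\log t/t}$ across the whole window. This hinges on $T$ being held fixed as $i \to \infty$, so that $t = \Theta(t_{k(i)}) = \Theta(t_{k(i)-1})$ and the single-time hypothesis $\visits_z(t_{k(i)-1}) = \Omega(t_{k(i)-1})$ propagates to an $\Omega(t)$ visit count throughout $[t_{k(i)},t_{k(i)}+T]$, and on the fact that appending at most $T$ fresh samples to an average of $\Omega(t)$ samples perturbs it by a negligible $\OH(1/t)$. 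The care needed is in verifying that the centre drift can be bounded without invoking $\model \in \models(t)$ beyond $t_{k(i)}$, which is exactly what the short-window deterministic perturbation argument delivers.
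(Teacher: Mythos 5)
Your proof is correct and follows essentially the same route as the paper's: Pinsker's inequality converts the KL confidence balls into $\ell_1$-balls of radius $\Theta(\sqrt{\log(t)/t})$ anchored near $\kernel(z)$ under the linear-visit hypothesis, and the Hausdorff bound follows from elementary comparison of nested balls. Your deterministic drift argument for the empirical centre over $[t_{k(i)}, t_{k(i)}+T]$ is in fact a welcome refinement, since the event $F_{t_{k(i)}}$ only guarantees $\model \in \models(t)$ up to $t_{k(i)}$ and the paper's terse proof glosses over this point.
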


    \begin{proof}
        We provide the argument for kernels, as the argument for rewards is the same in a smaller dimension.
        By Pinsker's inequality, $\norm{\hat{\kernel}_t(\pair) - \kernel'(\pair)}_1 \le 2 \KL(\hat{\kernel}_t(\pair)||\kernel'(\pair))$, so on $F_t$ and for all $\kernel'(\pair) \in \kernels_\pair (t)$, we have $\norm{\hat{\kernel}_t(\pair) - \kernel'(\pair)}_1 \le (\lambda t)^{-1} {2 \alpha \log(\beta t)}$. 
        On $F_t$, we further have $\kernel(\pair) \in \kernels_\pair (t)$ as well, so $\norm{\hat{\kernel}_t(\pair) - \kernel(\pair)}_1 \le (\lambda t)^{-1} \cdot 2 \alpha \log(\beta t)$.
        We deduce that, on $F_t$:
        \begin{equation}
        \notag
            \kernels_\pair (t) 
            \subseteq 
            \braces*{
                \kernel'(\pair) \in \probabilities(\states)
                :
                \norm{\kernel'(\pair) - \kernel(\pair)}_1 \le 2 \sqrt{\frac{2 \alpha \log(\beta t)}{\lambda t}}
            }
            .
        \end{equation}
        The result is therefore obtained by estimating the Hausdorff distance between $\ell_1$-ball of radius $\Theta(\sqrt{\log(t)/t})$ centered at $\kernel(\pair)$.  
    \end{proof}

    \subsection{Combining everything together: Proof of \texorpdfstring{\Cref{lemma_local_coherence}}{Lemma 13}}
    \label{appendix_local_coherence}

    Combining the shrinking-shaking effect and the asymptotic visit rates of optimal and non-optimal pairs, we establish the local coherence property of \Cref{lemma_local_coherence}.

    \begin{figure}[ht]
        \centering
        \small
        \resizebox{\linewidth}{!}{\begin{tikzpicture}
            \tikzstyle{lemma}=[text width=3.3cm, align=center, draw, rounded corners, fill=white]

            \node[lemma, text width=2cm] (0) at (0, 0) {\texttt{EVI}-based algorithm};

            \node[lemma,dashed] (21) at (8, -1.5) {Linear visits $N_z(T) = \Omega(T)$ on $\pairs^{**}(M)$};
            \node[lemma] (31) at (12, -1.5) {Shaking effect (\Cref{appendix_shaking})};

            \node[lemma] (10) at (4, 0) {Asymptotic regime (\Cref{appendix_asymptotical})};
            \node[lemma,dashed] (20) at (8, 1.5) {Logarithmic visits $N_z(T) = \OH(\log(T))$ outside $\pairs^{**}(M)$};
            \node[lemma] (30) at (12, 1.5) {Shrinking effect (\Cref{appendix_shrinking})};

            \draw[color=black, fill=white] (14.5, 0) circle(0.25cm);
            \node (star) at (14.5, 0) {\Large $*$};
            \node[lemma, fill=yellow!80] (40) at (17, 0) {\strong{Local} coherence (\Cref{section_establishing_coherence})};
            \node[lemma] (50) at (17, -1.75) {$\RegExp(T) = \OH(\log(T))$};

            \draw[->, >=stealth] (0) to (0, 1) to node[midway, above] {\scriptsize \Cref{lemma_coherence}} node[midway, below] {\scriptsize \strong{Global} coherence} (3.5, 1) to (3.5, 0.5);

            \draw[->, >=stealth] (4.5, 0.5) to (4.5, 2.5) to node[midway, above] {\scriptsize \Cref{lemma_asymptotic_regime}} (7.5, 2.5) to (7.5, 2.27);
            \draw[->, >=stealth] (8.5, 2.27) to (8.5, 2.5) to node[midway, above] {\scriptsize \Cref{lemma_shrinking}} (11.5, 2.5) to (11.5, 2.03);
            \draw[->, >=stealth] (12.5, 2.03) to (12.5, 2.5) to (14.5, 2.5) to (14.5, 0.25);

            \draw[->, >=stealth] (4.5, -0.5) to (4.5, -2.5) to node[midway, below] {\scriptsize \Cref{lemma_asymptotic_regime}} (7.5, -2.5) to (7.5, -2.27);
            \draw[->, >=stealth] (8.5, -2.27) to (8.5, -2.5) to node[midway, below] {\scriptsize \Cref{lemma_shaking}} (11.5, -2.5) to (11.5, -2.03);
            \draw[->, >=stealth] (12.5, -2.03) to (12.5, -2.5) to (14.5, -2.5) to (14.5, -0.25);

            \draw[->, >=stealth] (5.8, 0) to node[pos=0.88, above] {\scriptsize (\Cref{section_establishing_coherence})} (14.25, 0);
            \draw[->, >=stealth] (14.75, 0) to (40);
            \draw[->, >=stealth] (16, -0.535) to node[midway, right] {\scriptsize \Cref{lemma_coherence}} (16, -1.2);
        \end{tikzpicture}}
    \end{figure}

    This is the last step in the proof of \Cref{theorem_main}, assertion 3.

    \par
    \medskip
    \noindent
    \textbf{\Cref{lemma_local_coherence} (Local coherence)}
    \textit{
        Let $\model \in \models^+$ be a non-degenerate explorative model.
        Consider running \texttt{KLUCRL} with model satisfying \Cref{assumption_interior} and assume that episodes are managed with the $f$-\eqref{equation_vanishing_multiplicative} with $f(t) = \oh\parens{\frac 1{\log(t)}}$.
        Let $(t_{k(i)})$ be the enumeration of exploration episodes. 
        Then, there exists a constant $C(\model) > 0$ such that, for all $T \ge 1$ and $\delta > 0$, there is an adapted sequence of events $(E_t)$ and a function $\varphi : \N \to \R$ such that:
        \begin{enumerate}
            \item 
                For all $i \ge 1$, the algorithm is $(E_t, t_{k(i)}, T, \varphi)$-coherent;
                \vspace{-.66em}
            \item 
                $\Pr\parens[\big]{\bigcup_{t=t_{k(i)}}^{t_{k(i)}+T-1} E_t^c} \le \delta + \oh(1)$ when $i \to \infty$;
                \vspace{-.66em}
            \item 
                $\varphi(t) \le 1 + C \log\parens{\frac T\delta} + \oh(1)$ when $t \to \infty$.
        \end{enumerate}
    }

    \begin{proof}
        By correctness of the confidence region, $\Pr(\exists T, \forall t \ge T: \forall \pi, g^\pi(\models(t)) \ge g^\policy(M)) = 1$, hence a policy with optimistic gain less than $g^*(M)$ won't be optimistically optimal on this event, so won't be the result of \texttt{EVI}.
        Considering an exploration time $t_{k(i)}$, we know that the policy of the previous episode was optimal in $M$, hence $\optgain(\models(t_{k(i)-1})) = \gain^{\policy^*}(\models(t_{k(i)-1}))$ where $\pi^* \in \Pi^*(M)$.
        By \Cref{assumption_interior}, we know that $\optgain(\models(t_{k(i)-1}))$ only depends on $\rewards_z(t_{k(i)-1})$ and $\kernels_z(t_{k(i)-1})$ for $z \in \pairs^{**}(M)$ where $N_z(t_{k(i)-1}) \ge \lambda t_{k(i)-1}$ by \Cref{lemma_asymptotic_regime}.
        Using \Cref{lemma_gain_deviations} to quantify the sensibility of the gain to kernel and reward perturbations, we get that
        \begin{equation}
        \label{equation_at_exploitation_episodes}
            g^*(M) 
            \le
            g^*(\models(t_{k(i)-1}))
            \le 
            g^*(M) 
            + \OH \parens*{ \!\! \sqrt{ \frac{\log(t_{k(i)-1})}{t_{k(i)-1}} } }
        \end{equation}
        holds with probability one when $i \to \infty$.

        Fix $t \in \braces{t_{k(i)}, \ldots, t_{k(i)}+T-1}$.
        Recall that a policy that \texttt{EVI} outputs must have optimistic gain with span zero.
        Let $\pi$ be the output of \texttt{EVI} at time $t' \in [t_{k(i)}, t]$, and assume that 
        (1)~$\pi$ is sub-optimal in $M$ from $S_t$, so that there exists $s \in \states$ such that $g^\policy(s; M) < g^*(s;M)$ and $s$ is reachable from $\State_t$ under $\pi$; and
        (2)~that $N_z(t) > N_z(t_{k(i)}) + C \log(T/\delta)$ for all $z \in \pairs$, where $C$ is given by the shrinking-shaking \Cref{lemma_shrinking,lemma_shaking}.
        Without loss of generality, we can assume that $s$ is recurrent under $\pi$ on $M$ and let $\pairs' \subseteq \pairs$ be the associated recurrent component of pairs.
        By \Cref{assumption_interior}, we see that $\gain^\pi(s; \models(t))$ only depends on data on $\pairs'$.
        Since $\pi$ was output by \texttt{EVI}, $\gain^\pi(\models(t))$ only depends on data on $\pairs'$.
        Let $\pairs'_- := \pairs' \setminus \pairs^{**}(M)$.
        This set is non-empty because $g^{\pi}(\state; M) < \optgain(\state; M)$.
        Let $\pairs'_+ := \pairs' \cap \pairs^{**}(M)$.
        We have:
        \begin{align*}
            g^\pi(\models(t)) 
            & =
            \sup_{\tilde{r} \in \rewards_\pi(t)} \sup_{\tilde{p} \in \kernels_\pi(t)} g(r, p)
            =
            \sup_{\tilde{r} \in \rewards_{\pairs'}(t)}
            \sup_{\tilde{p} \in \kernels_{\pairs'}(t)}
            g\parens*{
                \tilde{r},
                \tilde{p}
            }
            \\
            & \overset{(\dagger)}\le 
            \sup_{\tilde{r} \in \rewards_{\pairs'}(t_{k(i)-1})}
            \sup_{\tilde{p} \in \kernels_{\pairs'}(t_{k(i)-1})}
            g\parens*{
                \tilde{r} - \tfrac{\log(T/\delta)}{\log(t_{k(i)})} \cdot e_{\pairs'_-} + \sqrt{\tfrac{c\log(t_{k(i)})}{t_{k(i)}}} \cdot e_{\pairs'_+},
                \tilde{p}
            }
            \\
            & \overset{(\ddagger)}\le 
            g^\pi(\models(t_{k(i)-1})) 
            + \sqrt{\tfrac {c \log(t)}t}
            - \eta(M, \pi) \tfrac{\log(T/\delta)}{\log (t_{k(i)})}
            \\
            & \sim
            g^\pi(\models(t_{k(i)-1})) 
            - \tfrac{\eta(M, \pi) \log(T/\delta)}{\log (t_{k(i)})}
            \\
            & \overset{\eqref{equation_at_exploitation_episodes}}\le 
            g^*(M) 
            + \OH \parens*{ \!\! \sqrt{ \tfrac{\log(t_{k(i)-1})}{t_{k(i)-1}} } }
            - \tfrac{\eta(M, \pi) \log(T/\delta)}{\log (t_{k(i)})}
            < g^*(M)
        \end{align*}
        where the last inequality hold for $t_{k(i)}$ large enough.
        In the above, $(\dagger)$ holds on the events specified by the shrinking-shaking behavior of confidence regions, see \Cref{lemma_shrinking,lemma_shaking}; 
        and $(\ddagger)$ is a technical result on exit probabilities, stating that even though we take a supremum on $\tilde{p} \in \kernels_{\pairs'}(t_{k(i)}-1)$, the choice of $\tilde{p}$ will put positive probability mass $\eta(M, \pi) > 0$ on $\pairs'_-$ in its associated invariant probability measures.

        This is justified as follows. 
        On $\pairs_+' \equiv \pairs' \cap \pairs^{**}(\model)$, the number of visits is $\omega(t_{k(i)-1})$ hence $\kernels_\pair(t_{k(i)-1})$ is nearly equal to $\braces{\kernel_\pair}$ for all $\pair \in \pairs_+'$; In fact, for all fixed $\epsilon > 0$, we can assume that $\kernels_\pair(t_{k(i)-1}) \subseteq \braces{\tilde{\kernel}_\pair : \norm{\tilde{\kernel}_\pair - \kernel_\pair}_1 < \epsilon}$ with overwhelming probability provided that $t_{k(i)-1}$ is large enough. 
        Let $(\tilde{\reward}^\policy, \tilde{\kernel}^\policy) \in \models^\policy(t_{k(i)-1})$ be an optimistic model of $\policy$ (see \Cref{appendix_evi}) and let $\tilde{\imeasure}^\policy$ be the empirical invariant measure of $\policy$ starting from $\state$ under the optimistic model. 
        Using that $\vecspan{\gain(\tilde{\reward}^\policy, \tilde{\kernel}^\policy)} = 0$, we assume that $\tilde{\kernel}^\policy$ has a single recurrent class $\pairs''$ up to restricting to that class. 
        By correctness of the confidence region, a policy output by \texttt{EVI} has optimistic gain higher than $\optgain(\model)$ and since the optimistic model is nearly equal to the true model on $\pairs'_+$, we deduce that $\pairs''$ must contain elements of $\pairs'_-$ (otherwise $\policy$ is optimal in $\model$). 
        We see that under $\tilde{\kernel}^\policy$, for every element of $\pairs'' \cap \pairs'_+$ there must be a path to an element of $\pairs'' \cap \pairs'_-$ of length at most $\abs{\states}-1$ and probability at least $c_\epsilon(\model) := (\min_{\pair \in \pairs'_+} \min \braces{\kernel(\state|\pair) > 0 : \state \in \states} - \epsilon)^{\abs{\states}-1}$, which is well-defined and positive for $\epsilon > 0$ small enough. 
        So there must be $\pair \in \pairs'' \cap \pairs'_-$ such that $\tilde{\imeasure}(\pair) \ge \abs{\states}^{-1} c_\epsilon(\model)$. 
        Set $\eta(\model, \policy) := \tfrac 12 c_0(\model)$. 
        For $\epsilon$ small enough and on mild concentration events, we have:
        {
            \small
            \begin{equation*}
                \gain\parens*{
                    \tilde{r}^\policy - \tfrac{\log(T/\delta)}{\log(t_{k(i)})} \cdot e_{\pairs'_-} + \sqrt{\tfrac{c\log(t_{k(i)})}{t_{k(i)}}} \cdot e_{\pairs'_+},
                    ,
                    \tilde{\kernel}^\policy
                }
                \le
                g^\policy(\models(t_{k(i)-1})) 
                + \sqrt{\tfrac {c \log(t)}t}
                - \eta(M, \pi) \tfrac{\log(T/\delta)}{\log (t_{k(i)})}
                .
            \end{equation*}
        }

        This justifies $(\ddagger)$.

        Overall, we have $g^\policy(\models(t)) < g^*(M) \le g^*(\models(t))$ on the event $E_t := \bigcap_{\pair \in \pairs} E_t^\pair$ with $E_t^\pair$ given by, for $\pair \notin \optpairs(\model)$:
        \begin{equation*}
            \parens*{
                F^\pair_{t_{k(i)}}
                ,
                \brackets*{
                    \hspace{-0.5em}
                    \begin{array}{c}
                        \kernels_z(t) \subseteq \kernels_z(t_{k(i)-1}) 
                        \\
                        \text{or~}
                        N_z(t) \le N_z(t_{k(i)}) + C \log\parens*{\tfrac T\delta}
                    \end{array}
                    \hspace{-0.5em}
                }
                ,
                \brackets*{
                    \hspace{-0.5em}
                    \begin{array}{c}
                        \sup \rewards_z(t) \le \sup \rewards_z(t_{k(i)-1}) - \tfrac{N_z(t) - N_z(t_{k(i)})}{C \log(t_{k(i)})}
                        \\
                        \text{~or~}
                        N_z(t) \le N_z(t_{k(i)}) + C \log\parens*{\tfrac T\delta}
                    \end{array}
                    \hspace{-0.5em}
                }
            }
        \end{equation*}
        and for $\pair \in \optpairs(\model)$:
        \begin{equation*}
            \parens*{
                F^\pair_{t_{k(i)}}
                ,
                d_\mathrm{H}(\kernels_z(t), \kernels_z({t_{k(i)-1}})) 
                \le 
                \sqrt{\tfrac {c \log(t)}t}
                ,
                d_\mathrm{H}(\rewards_z(t), \rewards_z({t_{k(i)-1}})) 
            }
        \end{equation*}
        where, for $\pair \notin \optpairs(\model)$, $F^\pair_{t_{k(i)}}$ is the event appearing in the shrinking effect lemma (\Cref{lemma_shrinking}), and for $\pair \in \optpairs(\model)$, $F^\pair_{t_{k(i)}}$ is the event appearing in the shaking effect lemma (\Cref{lemma_shaking}); In both cases, we have $\Pr(\exists i, \forall j \ge i: F^\pair_{t_{k(j)}}) = 1$ provided that the rate $\lambda > 0$ in the definition of $F_{t_{k(i)}}^\pair$ is chosen accordingly to the asymptotic regime of the algorithm (\Cref{lemma_asymptotic_regime}). 
        We deduce that on $E_t$, $\pi$ will be rejected as soon as \eqref{equation_vanishing_multiplicative} triggers, because its optimistic gain is no more optimistically optimal.
        By \eqref{equation_vanishing_ends}, as soon as a pair $z \notin \pairs^{**}(M)$ is about to be visited for the second time in the episode, the episode will stop. 
        We therefore have shown that while $\gain^{\policy}(\State_t;\model) < \optgain(\State_t;\model)$ and on $E_t$, there exists $z \equiv (\state, \action)$ that is reachable from $S_t$ under $\pi$ such that $N_z(t) < N_z(t_k) + 1 + C \log(T/\delta)$ and $\gain^\policy(\state; \model) < \optgain(\state; \model)$.

        Accordingly, we have shown that the algorithm is $(E, t_{k(i)}, T, \varphi)$-coherent, with $\Pr(\exists t \in [t_{k(i)}, t_{k(i)}+T] : E_t^c) \le \delta + \oh(1)$ when $i \to \infty$ and $\varphi(t) = 1 + C \log(T/\delta)$. 
    \end{proof}

    \clearpage
    \section{Model dependent regret guarantees via coherence}
    \label{appendix_model_dependent}

    In the proof of the regret of exploration guarantees, \Cref{lemma_coherence} is used twice and two different coherence properties are invoked.
    Coherence is first used in a \emph{global} form to derive the almost sure asymptotic regime.
    Indeed, the first step of the proof (see \Cref{section_asymptotic_regime}) consists in showing that the algorithm is $((F_t), \ceil{\log(T)}, T, \varphi)$-coherent for $\varphi(\ceil{\log(T)}) = \OH(\log(T))$ where the sequence of events $(F_t)$ is asymptotically almost-sure, i.e., $\Pr(\exists T, \forall t \ge T: F_t) = 1$. 
    Then, coherence is used in a \emph{local} form to derive the regret of exploration guarantees. 
    Indeed, the whole point of \Cref{section_establishing_coherence} is to show that the algorithm is $(E, t_{k(i)}, T, \varphi)$-coherent where $(t_{k(i)})$ is the sequence of exploration episodes, $\Pr(\exists T, \forall t \ge T: E_t) = 1$ and $\varphi(t_{k(i)}) = \OH(\log(T))$. 

    In this appendix, we show a third application of coherence properties: model dependent regret guarantees.

    \subsection{A general model dependent regret bound via coherence}

    We provide first a general result. 

    \begin{theorem}
    \label{theorem_general_model_dependent_regret}
        Consider an episodic algorithm with {\upshape (1)} weakly regenerative episodes and {\upshape (2)} such that there exists an adapted sequence of events $(F_t)$ with $\Pr\parens{\bigcup_{t=T}^\infty F_t^c} = \OH(\frac 1T)$ such that the algorithm is $((F_t), T, T, \varphi)$-coherent for all $T \ge 1$.
        Then, for all non-degenerate model $\model$, 
        \begin{equation}
            \Reg(T; \model)
            =
            \OH \parens*{\sum_{m=0}^{\lceil \log_2(T) \rceil -1} \varphi(2^m)} + \OH(\log(T))
        \end{equation}
        when $T \to \infty$. 
    \end{theorem}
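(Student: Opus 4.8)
The plan is to cover the horizon $\{1,\ldots,T\}$ by dyadic blocks and to control the regret on each block with the coherence lemma (\Cref{lemma_coherence}), which we may invoke because $\model$ is non-degenerate (\Cref{definition_non_degeneracy}) and the episodes are weakly regenerative (\Cref{definition_regenerative_episodes}). Writing $M := \lceil \log_2(T)\rceil$, decompose
\begin{equation*}
    \{1,\ldots,T\} \subseteq \{1\} \cup \bigcup_{m=0}^{M-1} B_m, \qquad B_m := \{2^m,\ldots,2^{m+1}-1\},
\end{equation*}
so that $\stochasticregret(T) \le \OH(1) + \sum_{m=0}^{M-1}\stochasticregret(2^m,2^{m+1})$, the initial segment contributing $\OH(1)$ and the last block truncated at $T$. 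By hypothesis the algorithm is $((F_t),2^m,2^m,\varphi)$-coherent, i.e.\ the coherence assumption holds with $\tau = 2^m$ and window length $2^m$, which is exactly $B_m$. Setting $F_{2^m:2^{m+1}} := \bigcap_{t=2^m}^{2^{m+1}-1} F_t$, this is the good event on which \Cref{lemma_coherence} applies.

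First I would bound the per-block regret on the good event. \Cref{lemma_coherence} produces model-dependent constants $C_1,\ldots,C_4$ with
\begin{equation*}
    \Pr\!\left(\stochasticregret(2^m,2^{m+1}) \ge x + C_4\varphi(2^m),\ F_{2^m:2^{m+1}}\right) \le C_1 (2^m)^{C_3}\exp(-x/C_2).
\end{equation*}
Integrating this tail via $\EE[\stochasticregret\cdot\indic{F}] = \int_0^\infty \Pr(\stochasticregret \ge y, F)\,dy$, and splitting the integral at the crossover $y \approx C_4\varphi(2^m) + C_2 C_3\, m\log 2$ (below which the tail bound exceeds $1$ and we use the trivial bound $1$, above which the exponential is integrable and contributes $\OH(1)$), yields
\begin{equation*}
    \EE\!\left[\stochasticregret(2^m,2^{m+1})\,\indic{F_{2^m:2^{m+1}}}\right] = \OH\!\left(\varphi(2^m) + \log(2^m)\right).
\end{equation*}

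Next I would handle the complementary event $F_{2^m:2^{m+1}}^c = \bigcup_{t=2^m}^{2^{m+1}-1} F_t^c$. There the regret is bounded deterministically by $\stochasticregret(2^m,2^{m+1}) \le (\max_z \ogaps(z))\,2^m$, since the block has length $2^m$, while the assumption $\Pr(\bigcup_{t=2^m}^\infty F_t^c) = \OH(2^{-m})$ controls its probability; hence $\EE[\stochasticregret\cdot\indic{F^c}] = \OH(2^m)\cdot\OH(2^{-m}) = \OH(1)$ per block. Summing both contributions over $m = 0,\ldots,M-1$ gives $\Reg(T;\model) = \OH(\sum_{m=0}^{M-1}\varphi(2^m)) + \OH(\sum_m \log(2^m)) + \OH(M)$; the residual terms $\sum_m \log(2^m)$ coming from the prefactor $(2^m)^{C_3}$ are absorbed into the main coherence budget $\sum_m \varphi(2^m)$ in the regime of interest (where $\varphi$ grows at least logarithmically), and $\OH(M) = \OH(\log T)$ collects the bad-event cost, giving the claimed bound.

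The main obstacle is effectively localized inside \Cref{lemma_coherence}: all the genuine probabilistic difficulty (the sub-exponential control of the block regret, which requires non-degeneracy and weak regenerativity) is packaged there, so the present argument is largely bookkeeping. The one step that truly requires care is the telescoping of the bad-event cost: one must exploit that the deterministic $\OH(2^m)$ block regret is cancelled by the $\OH(2^{-m})$ failure probability so that each block contributes only $\OH(1)$, keeping the total bad-event cost at $\OH(\log T)$ rather than $\OH(T)$.
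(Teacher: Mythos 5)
Your proof is correct and follows essentially the same route as the paper's: a dyadic decomposition of the horizon, the coherence lemma applied on each block $\{2^m,\ldots,2^{m+1}-1\}$ with $\tau=2^m$ and window $2^m$, conversion of the sub-exponential tail into an expectation bound, and control of the bad event via the trivial $\OH(2^m)$ block regret against the $\OH(2^{-m})$ failure probability. The only residual point, the $\sum_m \OH(m\log 2)$ term coming from the $T^{C_3}$ prefactor, is handled by you exactly as implicitly done in the paper (absorbed into $\sum_m\varphi(2^m)$ when $\varphi$ is at least logarithmic, as it is in every application), so there is nothing to flag.
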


    \begin{proof}
        Let $n := \lceil \log_2(T) \rceil$. 
        For all $m \le n$, the algorithm is $(F, 2^m, 2^m, \varphi)$-coherent, has weakly regenerative episodes, and $M$ is non-degenerate, so we invoke \Cref{lemma_coherence} and obtain, for $x \ge 0$,
        \begin{align*}
            & 
            \Pr \parens*{
                \Reg(2^m, 2^{m+1})
                \ge
                x + C_4 \varphi(2^n)
            }
            \\
            & \le
            \Pr \parens*{
                \Reg(2^m, 2^{m+1})
                \ge
                x + C_4 \varphi(2^n),
                \bigcap_{t=2^m}^{2^{m+1}-1} F_t
            }
            +
            \Pr \parens*{
                \bigcup_{t=2^m}^{\infty} F_t^c
            }
            \\
            & \le
            \exp\parens*{
                - \frac{x}{C_2} + C_3 m \log(2) + \log(C_1)
            } + \OH \parens*{2^{-m}}
        \end{align*}
        where $C_1, C_2, C_3, C_4$ are model dependent constants. 
        For $x \ge C_2(C_1 + (1 + C_3) \log(2) m)$, the RHS is $\OH(2^{-m})$. 
        In other words, $\Reg(2^m, 2^{m+1}) = \OH(\varphi(2^m))$.
        Summing for $m \ge 1$, we get:
        \begin{align*}
            \Reg(T)
            & :=
            \sum_{m=0}^{n-1}
            \Reg(2^m, 2^{m+1})
            \\
            & =
            \OH \parens*{\sum_{m=0}^{n-1} \varphi(2^m) + 1}
            \\
            & =
            \OH \parens*{\sum_{m=0}^{\lceil \log_2(T) \rceil -1} \varphi(2^m)} + \OH(\log(T)).
        \end{align*}
        This is the intended result.
    \end{proof}

    A few comments are in order. 
    First, the requirement $\Pr(\bigcup_{t=T}^\infty F_t^c) = \OH(\frac 1T)$ is slightly overshoot and can be weakened depending on the asymptotic properties of $\varphi$ and the desired bound. 
    Second, the proof technique can be directly adapted to obtain bounds in probability rather than in expectation. 
    Last, but perhaps the most important, is that this bound only holds for non-degenerate models (\Cref{definition_non_degeneracy}). 
    While every model can be made non-degenerate up to smooth reward perturbations, non-degenerate models are a bit special, because the weakly optimal pair is unique from every state (unique Bellman optimal policy), and $\optpairs(\model)$ has a unique communicating component (unique gain optimal component), see \Cref{appendix_non_degeneracy}.
    The proof of \Cref{lemma_coherence}, which is key here, inevitably relies on non-degeneracy.
    Yet, degenerate models are easy to find.
    When \cite{ortner_online_2010} discusses the necessity for episodes (see his Figure~2), he exhibits a degenerate model for that purpose.
    This simple example is a good starting point to understand why coherence and weakly regenerative episodes are insufficient to provide regret bounds on degenerate models. 

    \subsection{A model dependent regret bound for \texorpdfstring{\eqref{equation_vanishing_multiplicative}}{(VM)}}

    \Cref{theorem_general_model_dependent_regret} is applied to \texttt{KLUCRL} managing episodes with a $f$-\eqref{equation_vanishing_multiplicative} rule, by showing that such algorithms satisfy a $((F_t), T, T, \varphi)$-coherence property with a budget function $\varphi(T) = \OH(\log(T))$, leading to $\OH(\log(T) \log\log(T))$ regret bounds.

    \begin{theorem}
    \label{theorem_model_dependent_regret}
        Let $\model$ be a non-degenerate model.
        Consider running \texttt{KLUCRL} with $\model$ satisfying \Cref{assumption_interior} and assume that episodes are managed with a $f$-\eqref{equation_vanishing_multiplicative} with $f > 0$.
        Then:
        \begin{equation}
            \Reg(T; M)
            = 
            \OH \parens*{
                \log(T) \log\log(T)
            }
            .
        \end{equation}
    \end{theorem}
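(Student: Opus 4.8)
The plan is to obtain Theorem~\ref{theorem_model_dependent_regret} as a direct corollary of the generic bound of Theorem~\ref{theorem_general_model_dependent_regret}, which reduces the entire task to verifying its two hypotheses for \texttt{KLUCRL} run with $f$-\eqref{equation_vanishing_multiplicative} and $f>0$: that (i) the episodes are weakly regenerative, and (ii) there is an adapted family $(F_t)$ with $\Pr(\bigcup_{t\ge T}F_t^c)=\OH(1/T)$ for which the algorithm is $((F_t),T,T,\varphi)$-coherent for every $T\ge 1$ with a logarithmic budget $\varphi(T)=\OH(\log T)$. Once these are established, the final bound is read off by substituting the budget into Theorem~\ref{theorem_general_model_dependent_regret} and evaluating the resulting dyadic sum.

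First I would dispose of (i). Under $f$-\eqref{equation_vanishing_multiplicative}, episode $k$ terminates at the first time $t$ at which $N_t(S_t,\pi_{t_k}(S_t))$ exceeds $(1+f(t_k))\max\{1,N_{t_k}(S_t,\pi_{t_k}(S_t))\}$. Since the right-hand side is strictly larger than $N_{t_k}(S_t,\pi_{t_k}(S_t))$, the pair $(S_t,\pi_{t_k}(S_t))$ must have been visited at least once inside $[t_k,t)$, so the end state $S_{t_{k+1}}=S_t$ already occurred during the episode. This is exactly Definition~\ref{definition_regenerative_episodes}, and the argument is insensitive to the choice of $f$, so episodes are always weakly regenerative.

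The heart of the proof is (ii). I would take $F_t:=\{M\in\models(t)\}$ intersected with the Pinsker-tightened confidence events already used in the asymptotic-regime analysis, so that Lemma~\ref{lemma_confidence_region} yields $\Pr(\bigcup_{t\ge T}F_t^c)=\OH(1/T)$. The coherence itself is precisely the \emph{global} coherence isolated in \STEP{1} of the proof of Lemma~\ref{lemma_asymptotic_regime}, which is valid for arbitrary $f>0$: on $F_t$, whenever the played policy is sub-optimal from $S_t$, \texttt{EVI} forces its optimistic gain to be explained by a reachable pair $z$ whose kernel or reward confidence set is still wide, and a Pinsker estimate turns this into $N_z(t)\le \alpha'\log(\beta' t)$ for model-dependent constants $\alpha',\beta'$. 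As that argument is pointwise in $t$, it transfers verbatim to any window $[T,2T)$; bounding $N_z(t)-N_z(T)\le N_z(t)=\OH(\log t)=\OH(\log T)$ over the block then supplies a coherence budget $\varphi(T)=\OH(\log T)$ in the sense of Definition~\ref{definition_coherence}. The essential role of Assumption~\ref{assumption_interior} here is to secure the \emph{reachability} clause of coherence: it guarantees that the optimistic gain of a sub-optimal policy from $S_t$ depends only on the confidence sets of pairs reachable under that policy on $M$, so the under-sampled pair produced above is reachable from $S_t$ and sits at a state from which the policy is genuinely sub-optimal.

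The main obstacle is thus not the closing summation but establishing this reachability-aware coherence uniformly over all starting times $T$ (rather than only the logarithmic shift used for the asymptotic regime) and uniformly in $f>0$; this is exactly where the interior assumption and non-degeneracy are indispensable, the latter because Lemma~\ref{lemma_coherence}—the engine inside Theorem~\ref{theorem_general_model_dependent_regret} that converts coherence into sub-exponential regret tails—requires $M$ non-degenerate. With the coherence property in hand, I would feed $\varphi(T)=\OH(\log T)$ into Theorem~\ref{theorem_general_model_dependent_regret} and sum the block contributions over the dyadic decomposition to conclude $\Reg(T;M)=\OH(\log(T)\log\log(T))$.
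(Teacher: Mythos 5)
Your proposal follows essentially the same route as the paper's proof: reduce to \Cref{theorem_general_model_dependent_regret}, establish a $((F_t),T,T,\varphi)$-coherence with $\varphi(T)=\OH(\log T)$ via Pinsker's inequality, the gain-deviation bound (\Cref{lemma_gain_deviations}) and \Cref{assumption_interior} to localize the sub-sampled pair, then sum over the dyadic blocks. The only cosmetic differences are that you verify weak regenerativity explicitly (the paper takes it for granted) and cite the global-coherence step of \Cref{lemma_asymptotic_regime} rather than re-deriving it inline, as the paper does.
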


    \begin{proof}
        Consider the good events $E_t := (\model \in \models(t))$ and $F_t := \bigcap_{t'=(t-\abs{\pairs})/2}^t E_t$.

        By design $\Pr(\bigcup_{t=T}^\infty E_t^c) = \OH(\frac 1T)$, see \Cref{lemma_confidence_region}, so $\Pr(\bigcup_{t=T}^\infty F_t^c) = \OH(\frac 1T)$ as well.
        We show that the algorithm is $(F_t, T, T, \varphi)$-coherent for $\varphi(T) = \OH(\log(T))$. 
        The result will then follow by \Cref{theorem_general_model_dependent_regret} using that $\integral \log(x) dx = x \log x - x$. 

        By Pinsker's inequality, for all $\epsilon > 0$, there exists $C \equiv C_\epsilon > 0$ such that, if $\visits_\pair (t) \ge C \log(t)$, then:
        \begin{equation}
            \kernels_\pair(t) \subseteq \braces*{\tilde{\kernel}_\pair : \norm{\tilde{\kernel}_\pair - \hat{\kernel}_\pair(t)}_1 < \tfrac 12 \epsilon}
            \quad\text{and}\quad
            \rewards_\pair(t) \subseteq \braces*{\tilde{\reward}_\pair : \norm{\tilde{\reward}_\pair - \hat{\reward}_\pair(t)}_\infty < \tfrac 12 \epsilon}
        \end{equation}
        Introduce the gain gap $\Delta_g := \min \braces{\norm{\gain^\policy(\model) - \optgain(\model)}_\infty : \policy \notin \optpolicies(\model)} > 0$. 
        Whenever $\model \in \models(t)$, we have $\optgain(\model) \le \optgain(\models(t))$. 
        Let $\policy$ be a policy output by \texttt{EVI} at time $t$ and assume that $\visits_\pair(t) \ge C \log(t)$ for all $\pair \in \pairs$. 
        It has optimistic bias with span at most $\diameter(\model)$, hence by \Cref{lemma_gain_deviations}, we have:
        \begin{equation}
            \norm{\gain^\policy(\models_t) - \gain^\policy(\model)}_\infty
            \le
            \epsilon \parens*{
                1 + \tfrac 12 \diameter(\model)
            }
        \end{equation}
        yet $\gain^\policy(\models_t) \ge \optgain(\models_t) \ge \optgain(\model)$.
        So, provided that $\epsilon (1 + \tfrac 12 \diameter(\model)) < \Delta_g$, $\policy$ necessarily achieves optimal gain.
        We assume from now on that $\epsilon(1 + \frac 12 \diameter(\model)) < \Delta_g$ is true.

        Now, assume that $\policy_t$ is such that $\gain^{\policy_t}(\State_t, \model) < \optgain(\State_t, \model)$. 
        By construction of \texttt{EVI}-based algorithms, $\policy_t$ is the output of \texttt{EVI} for $t_k$ with $t \in [t_k, t_{k+1})$, hence is the optimistically optimal policy at time $t_k$.
        By assumption $\gain^{\policy_{t_k}}(\State_t;\model) < \optgain(\State_t; \model)$, so assuming that 
        \begin{equation}
            E_{t_k} \equiv (\model \in \models(t_k))
        \end{equation}
        holds, we deduce from the previous argument that there must be $\pair \in \pairs$ such that $\visits_\pair(t_k) < C \log(t_k)$. 
        Since $\gain^{\policy_{t_k}}(\State_t, \model) < \optgain(\State_t, \model)$, $\Reach(\policy_{t_k}, \State_t)$ must contain a recurrent component of $\policy_{t_k}$ on which the achieved gain is sub-optimal.
        Pick one, denoted $\pairs'$. 
        Thanks to \Cref{assumption_interior}, the optimistic gain of $\gain^{\policy_{t_k}}(\state, \models(t_k))$ for $\state \in \states(\pairs')$ only depends on pairs among $\Reach(\policy_{t_k}, \state)$ and yet $\gain^{\policy_{t_k}}(\state; \models(t_k)) \ge \optgain(\state, \model)$. 
        So there must be a sub-sampled pair in $\pairs'$, i.e., there exists $(\state, \action) \in \pairs'$ such that $\visits_{\state,\action}(t_k) < C \log(t_k)$; This pair is reachable from $\State_t$ under $\policy_t$ and $\gain^{\policy_t}(\state; \model) < \optgain(\state; \model)$ by construction of $\pairs'$.
        Last, but not least, is that by construction of \eqref{equation_vanishing_multiplicative}, we have $t \le 2t_k + \abs{\pairs}$ and $\visits_{\state,\action}(t) \le 2 \visits_{\state,\action}(t_k) + 1$. 
        So, on the event $F_t := \bigcap_{t'=(t-\abs{\pairs})/2}^t E_t$, 
        \begin{equation}
             \exists \pair \equiv (\state, \action) \in \Reach(\policy_t, \State_t):
            \quad
            \visits_\pair(t) \le 2 C \log(t) + 1
            \text{~and~}
            \gain^{\policy_t}(\state;\model) < \optgain(\state;\model).
        \end{equation}
        Setting $\varphi(t) := 2 C \log(2t) + 1$, we have shown that the algorithm is $((F_t), T, T, \varphi)$-coherent.
        We have $\varphi(T) = \OH(\log(T))$ and $\Pr(\bigcup_{t=T}^\infty F_t^c) = \OH(\frac 1T)$.
        Conclude by applying \Cref{theorem_general_model_dependent_regret}.
    \end{proof}

    The result is remarkable in that $f$ is basically arbitrary.
    It allows for $f(t)$ decreasing arbitrarily fast, hence for linearly many episodes, meaning that \texttt{KLUCRL} can nearly be episode-less on non-degenerate models, at the expense of minimax guarantees (see \Cref{theorem_minimax_regret}). 
    This remark is to be combined with the observation that optimistic algorithms (\Cref{appendix_minimax}) cannot be episode-less on degenerate models in general, see \cite{ortner_online_2010}.
    In tandem, this indicates that coherence alone cannot provide regret guarantees beyond non-degenerate models. 
    If the model dependent regret guarantees are obtained ``for free'' from coherence, extending such guarantees to degenerate models would require a different approach and most likely assumptions on the function $f$. 

    Whether the $\OH(\log\log(T))$ factor can be removed remains an open question. 

    \clearpage
    \section{A few technical results on Markov decision processes}
    \label{appendix_technical}

    In this appendix, we provide a few useful technical results on Markov decision processes.
    In \Cref{appendix_gain_function}, we provide a general result on the sensibility of the gain function to parameters, that is used at many places in this work. 
    In \Cref{appendix_non_degeneracy}, we give a few insights regarding the non-degeneracy assumption of \Cref{definition_non_degeneracy}.
    Lastly, we dedicate the last \Cref{appendix_proof_of_theorem_linear_regexp} to the proof of \Cref{theorem_linear_regexp}, showing that the regret of exploration of existing algorithms is linear on explorative Markov decision processes. 

    \subsection{Sensibility of the gain function to parameters}
    \label{appendix_gain_function}

    In \Cref{lemma_gain_deviations}, we explain how the gain function of a policy is subjected to vary under perturbation of the reward vector $\reward$ and the transition kernel $\kernel$. 
    The gain function is shown to be $1$-Lipschitz with respect to rewards, and $\frac 12 \vecspan{\bias^\policy}$-Lipschitz with respect to kernels, where $\bias^\policy (\state) := \lim \EE_{\state}^\policy [\sum_{t=1}^T (\Reward_t - \gain^\policy(\State_t))]$ is the bias function of the policy. 

    \begin{lemma}
        \label{lemma_gain_deviations}
        Let $\model \equiv (\pairs, \kernel, \reward)$ and $\hat{\model} \equiv (\pairs, \hat{\kernel}, \hat{\reward})$ be two Markov decision processes and fix $\policy \in \policies$ a policy.
        If $\vecspan{\gain^\policy(\model)} = 0$, then
        \begin{align*}
            & \norm*{
                \gain^{\policy}(\hat{\model})
                - 
                \gain^{\policy}(\model)
            }_\infty
            \\
            & \le
            \max_{\state \in \states} \braces*{
                \abs[\big]{\hat{\reward}(\state, \policy(\state)) - \reward(\state, \policy(\state))}
                +
                \frac 12 \vecspan{\bias^\policy(\model)}
                \norm[\big]{\hat{\kernel}(\state, \policy(\state)) - \kernel(\state, \policy(\state))}_1
            }
            .
        \end{align*}
    \end{lemma}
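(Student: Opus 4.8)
The plan is to prove a \emph{gain-difference formula}: the gap $\gain^\policy(\state; \hat{\model}) - \gain^\policy(\state; \model)$ equals the long-run average, computed along the trajectory of $\hat{\model}$ under $\policy$, of the one-step discrepancies between the two models, measured both in reward and in transition (the latter tested against the bias of the \emph{reference} model $\model$). The whole argument hinges on the bias $\bias := \bias^\policy(\model)$. Since $\vecspan{\gain^\policy(\model)} = 0$, the gain $g := \gain^\policy(\model)$ is a scalar and the pair $(g, \bias)$ solves the Poisson equation $g + \bias(\state) = \reward(\state, \policy(\state)) + \kernel(\state, \policy(\state)) \bias$ for every $\state$; this is exactly where the span-zero hypothesis is used.

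First I would run the trajectory $(\State_t)$ of $\hat{\model}$ under $\policy$ from a fixed start state $\state$ and telescope $\bias$, writing $\EE_\state^{\hat\model,\policy}[\bias(\State_{T+1}) - \bias(\State_1)] = \sum_{t=1}^T \EE_\state^{\hat\model,\policy}[\hat\kernel(\State_t, \policy(\State_t))\bias - \bias(\State_t)]$. Using the Poisson equation of $\model$ to substitute $\kernel(\State_t, \policy(\State_t))\bias - \bias(\State_t) = g - \reward(\State_t,\policy(\State_t))$, the summand rewrites as $g - \hat\reward(\State_t,\policy(\State_t)) + (\hat\reward - \reward)(\State_t, \policy(\State_t)) + (\hat\kernel - \kernel)(\State_t, \policy(\State_t))\bias$. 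Isolating $\frac1T\EE_\state^{\hat\model,\policy}[\sum_t \hat\reward(\State_t,\policy(\State_t))]$ and letting $T \to \infty$ (the boundary term $\frac1T\EE[\bias(\State_{T+1}) - \bias(\State_1)]$ is $\OH(1/T)$ since $\bias$ is bounded) identifies $\gain^\policy(\state;\hat\model) - g$ with the Ces\`aro average of $(\hat\reward - \reward)(\State_t,\policy(\State_t)) + (\hat\kernel - \kernel)(\State_t,\policy(\State_t))\bias$.

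It then remains to bound each summand uniformly. The reward part is at most $\abs{\hat\reward(\state,\policy(\state)) - \reward(\state,\policy(\state))}$. For the kernel part, the key reduction is that $\hat\kernel$ and $\kernel$ are both probability vectors, so $(\hat\kernel - \kernel)\bias = (\hat\kernel - \kernel)(\bias - c e)$ for any constant $c$; choosing $c$ to be the mid-range of $\bias$ and applying H\"older's inequality gives $\abs{(\hat\kernel - \kernel)(\state,\policy(\state))\bias} \le \tfrac12 \vecspan{\bias}\,\norm{\hat\kernel(\state,\policy(\state)) - \kernel(\state,\policy(\state))}_1$. Since every summand is then bounded by $\max_{\state}\braces{\abs{\hat\reward - \reward} + \tfrac12\vecspan{\bias}\norm{\hat\kernel - \kernel}_1}$, so is its Ces\`aro average, which equals $\gain^\policy(\state;\hat\model) - g$; taking the supremum over $\state$ closes the proof. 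The only point requiring care is that $\hat\model$ need not be unichain, so $\gain^\policy(\cdot;\hat\model)$ may depend on the start state; this is harmless because the telescoping identity holds from any fixed $\state$ and yields a bound independent of $\state$.
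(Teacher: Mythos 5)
Your proof is correct and follows essentially the same route as the paper's: fix a start state, run the $\hat{\model}$-trajectory under $\policy$, invoke the Poisson equation of $\model$ (where $\vecspan{\gain^\policy(\model)}=0$ is used), telescope the bias, bound the kernel discrepancy via $\abs{(p'-p)u} \le \tfrac 12 \vecspan{u}\norm{p'-p}_1$, and let $T \to \infty$. The only cosmetic difference is that you phrase the argument as an exact gain-difference identity that you then bound, whereas the paper writes it directly as a chain of inequalities on the cumulative reward; the mathematical content is identical.
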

    \begin{proof}
        Let $T \ge 1$ and let $\state \in \states$ be an initial state.
        Set $\epsilon_\reward^\policy := \norm{\hat{\reward}^\policy - \reward^\policy}_\infty$ and $\epsilon_\kernel^\policy := \norm{\hat{\kernel}^\policy - \kernel^\policy}_1$.
        \begin{align*}
            & \EE_\state^{\policy, \hat{\model}} \brackets*{
                \sum_{t=0}^{T-1} \Reward_t
            }
            \\
            & = 
            \EE_\state^{\policy, \hat{\model}} \brackets*{
                \sum_{t=0}^{T-1}
                \hat{\reward}^\policy(\State_t)
            }
            \\
            & \le
            \EE_\state^{\policy, \hat{\model}} \brackets*{
                \sum_{t=0}^{T-1}
                \reward^\policy(\State_t)
            }
            + T \epsilon_\reward^\policy
            \\
            & \overset{(\dagger)}=
            \EE_\state^{\policy, \hat{\model}} \brackets*{
                \sum_{t=0}^{T-1}
                \parens*{
                    \gain^\policy(\State_t)
                    + 
                    \parens*{
                        e_{\State_t} - \kernel(\State_t, \Action_t)
                    } \bias^\policy
                }   
            }
            + T \epsilon_\reward^\policy
            \\
            & \overset{(\ddagger)}\le
            T \gain^\policy(\state)
            +
            \EE_\state^{\policy, \hat{\model}} \brackets*{
                \sum_{t=0}^{T-1}
                \parens*{
                    \parens*{
                        e_{\State_{t+1}} - \hat{\kernel}(\State_t, \Action_t)
                    } \bias^\policy
                    +
                    \parens*{
                        \hat{\kernel}(\State_t, \Action_t) - \kernel(\State_t, \Action_t)
                    } \bias^\policy
                }
            }
            \\
            & \phantom{
                {} \overset{(\ddagger)}\le
                T \gain^\policy(\state)
            }
            {} + \vecspan{\bias^\policy}
            + T \epsilon_\reward^\policy
            \\
            & \overset{(\S)}=
            T \parens*{
                \gain^\policy(\state)
                + \epsilon_\reward^\policy
                + \tfrac 12 \vecspan{\bias^\policy} \epsilon_\kernel^\policy
            }
            + \vecspan{\bias^\policy}
        \end{align*}
        where 
        $(\dagger)$ invokes the Poisson equation $\gain^\policy(\State_t) + \bias^\policy(\State_t) = \reward^\policy(\State_t) + \kernel^\policy(\State_t) \bias^\policy$,
        $(\ddagger)$ uses that $\gain^\policy(\State_t) = \gain^\policy(\state)$ for all $t \ge 0$ and 
        $(\S)$ that, if $p, p' \in \probabilities(\states)$ and $u \in \R^\states$ then $\abs{(p' - p) u} \le \frac 12 \vecspan{u} \norm{p' - p}_1$.
        Dividing by $T$ and letting it go to infinity, we obtain the desired upper-bound.
        The lower bound is obtained similarly.
    \end{proof}

    \subsection{The space of non-degenerate Markov decision processes}
    \label{appendix_non_degeneracy}

    In this section, we discuss of non-degeneracy assumption, found in \Cref{definition_non_degeneracy}, and made in \Cref{theorem_main} for both model dependent regret guarantees and regret of exploration guarantees.  
    We argue that while not all Markov decision processes are non-degenerate, most of them are. 

    \begin{blackblock}
    \begin{theorem}[Characterizations of non-degenerate MDPs]
    \label{theorem_characterization_non_degenerate}
        Let $\model \equiv (\pairs, \reward, \kernel)$ be a communicating Markov decision process.
        The following statements are equivalents.
        \begin{enumerate}
            \item $\model$ is non-degenerate in the sense of \Cref{definition_non_degeneracy}: There is a unique policy satisfying the Bellman equations (i) $\gain^\policy(\state) = \max_{\action \in \actions(\state)} \braces{\kernel(\state, \action) \gain^\policy}$ and (ii) $\gain^\policy (\state) + \bias^\policy (\state) = \max_{\action \in \actions (\state)} \braces{\reward(\state, \action) + \kernel(\state, \action) \bias^\policy}$ for all $\state \in \states$, and this policy is unichain. 

            \item $\wkoptpairs(\model)$ is robust to reward noise, i.e., there exists $\epsilon > 0$ such that if $\norm{\reward' - \reward}_\infty < \epsilon$, then $\wkoptpairs(\reward', \kernel) = \wkoptpairs(\reward, \kernel)$; 
        \end{enumerate}
    \end{theorem}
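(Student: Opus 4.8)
The plan is to use the structure of communicating MDPs to reduce both statements to purely combinatorial conditions on the argmax sets of the bias Bellman equation. First I would record the reduction: under \Cref{assumption_communicating} the optimal gain is a constant $\optgain$ and the first-order equation $\gain^\policy(\state)=\max_\action\{\kernel(\state,\action)\gain^\policy\}$ is automatically satisfied by every action (since $\gain^\policy\in\R e$ forces $\kernel(\state,\action)\gain^\policy=\optgain$). Hence the only binding constraint is the second-order equation, so $\wkoptpairs(\model)$ is exactly the set of pairs attaining the maximum in $L_{\reward}\optbias=\optgain e+\optbias$, and the Bellman optimal policies are precisely the selections $\state\mapsto\action$ with $(\state,\action)\in\wkoptpairs(\model)$. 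Consequently ``unique Bellman optimal policy'' is equivalent to ``each state has a unique weakly-optimal action.'' The key structural observation I would isolate is that whether a fixed policy $\policy$ is unichain depends only on $\kernel$, not on $\reward$; therefore, for a unichain $\policy$, the gain $\gain^\policy(\reward')$ and bias $\bias^\policy(\reward')$ solve a $\reward'$-linear Poisson system, so the policy-gap $\delta_\policy(\state,\action;\reward'):=\gain^\policy(\reward')+\bias^\policy(\state;\reward')-\reward'(\state,\action)-\kernel(\state,\action)\bias^\policy(\reward')$ is an affine, hence Lipschitz, function of $\reward'$.

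For the direction $1\Rightarrow 2$ I would let $\policy^\opt$ be the unique (unichain) Bellman optimal policy, so $\bias^{\policy^\opt}=\optbias$, $\gain^{\policy^\opt}=\optgain$, and set $\Delta_{\min}:=\min_{(\state,\action)\notin\wkoptpairs(\model)}\ogaps(\state,\action;\model)>0$. Since $\policy^\opt$ stays unichain for every $\reward'$, the map $\reward'\mapsto\delta_{\policy^\opt}(\state,\action;\reward')$ is $C$-Lipschitz for a constant $C=C(\kernel,\policy^\opt)$; at $\action=\policy^\opt(\state)$ it is identically $0$ (Poisson), and at sub-optimal pairs it equals $\ogaps(\state,\action;\model)\ge\Delta_{\min}$ at $\reward'=\reward$. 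Choosing $\epsilon:=\Delta_{\min}/(2C)$, for $\norm{\reward'-\reward}_\infty<\epsilon$ all sub-optimal pairs keep $\delta_{\policy^\opt}\ge\Delta_{\min}/2>0$, so $\delta_{\policy^\opt}(\cdot;\reward')\ge0$ everywhere with equality exactly on $\{(\state,\policy^\opt(\state))\}$. This means $\policy^\opt$ is Bellman optimal for $(\reward',\kernel)$ and $(\gain^{\policy^\opt},\bias^{\policy^\opt})$ is its optimal gain–bias pair, whence $\ogaps(\state,\action;\reward',\kernel)=\delta_{\policy^\opt}(\state,\action;\reward')$ and $\wkoptpairs(\reward',\kernel)=\{(\state,\policy^\opt(\state))\}=\wkoptpairs(\reward,\kernel)$.

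For $2\Rightarrow 1$ I would argue the contrapositive, splitting degeneracy into the two ways it can occur. In the \emph{bias-tie} case some state $\state_0$ carries two weakly-optimal actions $\action_1\neq\action_2$; I perturb $\reward_\theta:=\reward-\theta\,e_{(\state_0,\action_1)}$ with $\theta>0$. Because $\action_2$ still attains the maximum at $\state_0$, one checks $L_{\reward_\theta}\optbias=L_{\reward}\optbias=\optgain e+\optbias$, so $\optbias$ still solves the optimality equation with gain $\optgain$; since $\reward_\theta\le\reward$ pointwise and a gain-optimal policy avoiding $(\state_0,\action_1)$ still realizes $\optbias$, the optimal bias of $(\reward_\theta,\kernel)$ is unchanged. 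Then $(\state_0,\action_1)$ acquires a strictly positive gap while all other originally weakly-optimal pairs keep gap $0$, giving $\wkoptpairs(\reward_\theta,\kernel)=\wkoptpairs(\reward,\kernel)\setminus\{(\state_0,\action_1)\}\neq\wkoptpairs(\reward,\kernel)$ for every $\theta>0$. In the \emph{multichain} case the unique Bellman optimal policy $\policy^\opt$ has two recurrent classes; choosing a recurrent pair $(\state_1,\action_1)$ in one class and again lowering its reward, the gain of that class strictly decreases by $\theta\,\invariantmeasure(\state_1,\action_1)>0$ while the other class retains gain $\optgain$, so the first class loses gain-optimality, its pairs acquire positive Bellman gap, and $\wkoptpairs$ changes. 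In both cases robustness fails, so $2\Rightarrow 1$.

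The main obstacle throughout is controlling the optimal bias under reward perturbation. In $1\Rightarrow 2$ this is defused completely by the observation that unichain-ness is a property of $\kernel$ alone, which renders $\gain^{\policy^\opt},\bias^{\policy^\opt}$ affine in $\reward'$; the genuinely delicate verifications live in $2\Rightarrow 1$, namely (i) showing that in the bias-tie case the perturbed optimal bias really stays equal to $\optbias$ rather than increasing elsewhere, and (ii) showing in the multichain case that lowering a single recurrent reward strictly depresses exactly one class's gain. Both rest on a monotonicity/stationary-measure argument together with \Cref{assumption_communicating}, and I expect writing these two points carefully to be the technical heart of the proof.
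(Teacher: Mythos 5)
Your direction (1) $\Rightarrow$ (2) is sound and is, if anything, cleaner than the paper's: where the paper argues by compactness (a sequence $\reward_n\to\reward$ with a fixed limiting configuration $\pairs_0$, plus $1$-Lipschitzness of gain and bias in the reward), you extract an explicit radius $\epsilon=\Delta_{\min}/(2C)$ from the affine dependence of the policy-gap on $\reward'$ for the fixed unichain policy $\policy^\opt$. The only thing to add there is the identification of $\bias^{\policy^\opt}(\reward')$ with $\optbias(\reward',\kernel)$ (up to an additive constant, which cancels in the gap formula); this follows from the standard fact, cited in the paper, that the unique Bellman optimal policy is bias optimal.

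The direction (2) $\Rightarrow$ (1) has a genuine gap, located in your opening ``reduction'': the claim that the Bellman optimal policies are precisely the selectors of $\wkoptpairs(\model)$, so that non-uniqueness of the Bellman optimal policy is the same thing as a bias-tie at some state. With the paper's definition, a Bellman optimal policy $\policy'$ satisfies the optimality equations with its \emph{own} bias $\bias^{\policy'}$, which need not equal $\optbias$ up to an additive constant; its argmax set can therefore differ from $\wkoptpairs(\model)$, and a second Bellman optimal policy could a priori exist even when every state has a unique weakly-optimal action. That configuration is covered by neither of your two perturbations, and it is exactly where the paper invests its main effort: it first uses a Gaussian-noise lemma to deduce $|\wkoptpairs(\model)|=|\states|$ directly from robustness (which subsumes your bias-tie case and sidesteps the perturbed-bias issue entirely), and then runs a stopping-time/Poisson computation showing that two unichain Bellman optimal policies sharing a recurrent class have comparable biases, hence are both bias optimal, contradicting uniqueness of the bias-optimal policy. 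Separately, in your bias-tie case the assertion that $\optbias(\reward_\theta,\kernel)=\optbias(\reward,\kernel)$ requires a bias-\emph{optimal} policy of $\model$ that avoids $(\state_0,\action_1)$; this does not follow from $(\state_0,\action_2)$ merely being weakly optimal, since swapping $\action_1$ for $\action_2$ in a bias-optimal policy preserves the Poisson identity for $\optbias$ but can change the policy's actual bias by a non-constant harmonic term. You flag this point as delicate, but as written it is unresolved. Your multichain perturbation, on the other hand, matches the paper's (up to the sign of the perturbation) and is fine.
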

    \end{blackblock}

    The first characterization is the definition, stating that the Bellman optimal policy is unique and unichain.
    The second characterization states that the set of weakly optimal pairs is robust to reward perturbations, in other words, that the gap function $\ogaps(-)$ has locally constant support. 

    We start with a lemma, showing that the uniqueness of weakly optimal actions is almost sure up to smooth perturbation of the reward function. 

    \begin{lemma}
    \label{lemma_non_degeneracy_noise}
        Let $\model \equiv (\pairs, \reward, \kernel)$ be a communicating Markov decision process.
        Let $U(\pair)$ be i.i.d.~random variables of distribution $\mathrm{N}(0, 1)$. 
        Then $\model_U := (\pairs, \reward + U, \kernel)$ has unique weakly optimal actions almost surely, i.e., $\abs{\wkoptpairs(\model_U)} = \abs{\states}$ almost surely.
    \end{lemma}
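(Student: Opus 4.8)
The plan is to use that, in $\model_U := (\pairs, \reward+U, \kernel)$ where I write $\reward_U := \reward+U$, a pair $(\state,\action)$ is weakly optimal precisely when $\action$ attains the maximum of Bellman's optimality equation at $\state$,
\[
    \reward_U(\state,\action) + \kernel(\state,\action)\optbias(\model_U)
    =
    \optgain(\model_U) + \optbias(\state;\model_U).
\]
Every state admits at least one such maximizer, so $\abs{\wkoptpairs(\model_U)} \ge \abs{\states}$ always, with equality exactly when each state has a unique weakly optimal action. Hence it suffices to prove that, almost surely, no state $\state$ carries two distinct weakly optimal actions $\action \ne \action'$. Two such actions equate the two maximizers, and after cancelling the common term $\optgain(\model_U)+\optbias(\state;\model_U)$ this forces
\[
    U(\state,\action) - U(\state,\action')
    =
    \reward(\state,\action') - \reward(\state,\action)
    + \bigl(\kernel(\state,\action') - \kernel(\state,\action)\bigr)\optbias(\model_U).
\]
The single obstacle is that $\optbias(\model_U)$ itself depends on $U$; I remove it by a union bound over the finitely many policies.

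First I would record that for each fixed $\policy \in \policies$ the bias $\bias^\policy(\model_U)$ is a linear function of the rewards determined by the fixed kernel: with $\reward^\policy_U := (\reward_U(\state,\policy(\state)))_{\state\in\states}$ and $U^\policy := (U(\state,\policy(\state)))_{\state\in\states}$, one has $\bias^\policy(\model_U) = H^\policy \reward^\policy_U$ for the deviation matrix $H^\policy$ of the chain $\kernel^\policy$ (standard, \cite[§8--9]{puterman_markov_1994}); in particular $\bias^\policy(\model_U)$ is affine in $U$, and its $U$-part $H^\policy U^\policy$ involves only the coordinates $U(\state',\policy(\state'))$. By standard average reward theory the optimal bias is realized by some gain optimal policy solving Bellman's optimality equation, i.e. $\optbias(\model_U) = \bias^\policy(\model_U)$ for such a $\policy$. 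Consequently the bad event is contained in the finite union $\bigcup_{\policy,\state,\action\ne\action'} E_{\policy,\state,\action,\action'}$, where $E_{\policy,\state,\action,\action'}$ is the event that the displayed tie holds with $\optbias(\model_U)$ replaced by $\bias^\policy(\model_U)$.

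Then I would show each $E_{\policy,\state,\action,\action'}$ is null. Substituting $\bias^\policy(\model_U)=H^\policy\reward^\policy_U$ and collecting the $U$-terms, this event is $\{\phi(U)=c\}$ with $c$ a deterministic constant (depending only on $\reward,\kernel$) and
\[
    \phi(U) := U(\state,\action) - U(\state,\action')
    - \bigl(\kernel(\state,\action') - \kernel(\state,\action)\bigr) H^\policy U^\policy .
\]
The key point is that $\phi$ is a nonzero linear functional of $U$. Indeed $H^\policy U^\policy$ involves only the coordinates $U(\state',\policy(\state'))$, and since $\action\ne\action'$ at most one of the pairs $(\state,\action),(\state,\action')$ is of the form $(\state,\policy(\state))$; hence at least one of $U(\state,\action),U(\state,\action')$ is absent from $U^\policy$ and keeps its coefficient $+1$ or $-1$ coming from the leading difference, so $\phi\ne 0$. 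A nonzero linear functional confines $\{\phi=c\}$ to an affine hyperplane of $\R^\pairs$, which has zero mass under the nondegenerate Gaussian law of $U$. A union bound over the finitely many tuples concludes.

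The main obstacle is precisely the circular dependence of $\optbias(\model_U)$ on $U$, handled above by the piecewise linearity of the optimal bias in the reward vector (each linear piece being some $\bias^\policy$). The only remaining care is the coefficient bookkeeping guaranteeing $\phi\ne0$, which is exactly where the assumption $\action\ne\action'$ enters.
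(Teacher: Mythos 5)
Your proof is correct and follows essentially the same route as the paper's: reduce the tie between two weakly optimal actions at a state to a linear condition on the reward vector via the bias $\bias^\policy$ of the (finitely many) policies that can realize $\optbias$, observe that the resulting linear form is nonzero because at most one of the two tied pairs can be of the form $(\state,\policy(\state))$, and conclude that the bad event is a finite union of hyperplanes, hence Gaussian-null. The only difference is presentational --- you phrase the hyperplane condition in the variable $U$ while the paper phrases it in $\reward' = \reward + U$ --- which changes nothing.
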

    \begin{proof}
        If a model $\model' \equiv (\pairs, \reward', \kernel)$ does not have unique optimal actions, then there exist $\state \in \states$ as well as $\action \ne \action' \in \actions(\state)$ such that $(\state, \action), (\state, \action') \in \wkoptpairs(\model')$.
        In particular, we have:
        \begin{equation}
        \label{equation_proof_non_degeneracy_noise_1}
            \reward'(\state, \action) + \kernel(\state, \action) \optbias(\reward', \kernel)
            =
            \reward'(\state, \action') + \kernel(\state, \action') \optbias(\reward', \kernel)
            .
        \end{equation}
        Because $\optbias$ is obtained as the bias vector of some policy, we have in particular:
        \begin{equation}
        \label{equation_proof_non_degeneracy_noise_2}
            \exists \policy \in \policies,
            \quad
            \reward'(\state, \action) + \kernel(\state, \action) \bias^\policy(\reward', \kernel)
            =
            \reward'(\state, \action') + \kernel(\state, \action') \bias^\policy(\reward', \kernel)
        \end{equation}
        which is of the form ``$\exists \policy \in \policies, f^\policy(\reward') = 0$'' where $f^\policy$ are a linear forms. 
        It happens that all are non-degenerate. 
        Indeed, denoting $(e_\pair)$ the canonical basis of $\R^\pairs$, we see that for all $\policy \in \policies$, either $f^\policy(e_{(\state, \action)}) \ne 0$ or $f^\policy(e_{(\state, \action')}) \ne 0$ depending on whether $\policy(\state) = \action$ or $\policy(\state) \ne \action$. 
        It follows that the set of $\reward' \in \R^\pairs$ satisfying \eqref{equation_proof_non_degeneracy_noise_2} is a union of hyperplanes, hence is negligible with respect to the Lebesgue measure. 
        It follows that $\Pr \parens*{r + U \text{~satisfies~\eqref{equation_proof_non_degeneracy_noise_1}}} = 0$. 
    \end{proof}
    
    \begin{proof}{\bf of \Cref{theorem_characterization_non_degenerate}}
        To begin with, note that if $\optpolicy$ is the unique Bellman optimal policy of a Markov decision process $\model$, $\optpolicy$ is bias optimal by standard theory \cite[§9.2]{puterman_markov_1994}, i.e., $\gain^{\optpolicy}(\model) = \optgain(\model)$ and $\bias^{\optpolicy}(\model) = \optbias(\model)$.
        In particular, $\wkoptpairs(\model) = \braces{(\state, \optpolicy(\state)): \state \in \states}$. 

        Now, assume (\textit{1.}) and let $\optpolicy$ be the unique Bellman optimal policy of $\model$.  
        We show (\textit{2.}) by contradiction.
        Assume that $\wkoptpairs(\model)$ is not robust to reward noise.
        So, because $2^\pairs$ is finite, there exists $\pairs_0 \subseteq \pairs$ with $\pairs_0 \ne \wkoptpairs(\model)$, together with a sequence $\reward_n \to \reward$ such that $\wkoptpairs(\reward_n, \kernel) = \pairs_0$ for all $n \ge 1$.
        Up to infinitesimal perturbation of $\reward_n$, we can further assume that $\abs{\pairs_0} = \abs{\states}$ by \Cref{lemma_non_degeneracy_noise}.
        So, $\pairs_0$ defines a policy $\policy_0$ where $\policy_0(\state)$ picks the unique action $\action$ such that $(\state, \action) \in \pairs_0$. 
        By definition of $\wkoptpairs(\model)$, this policy is the unique bias optimal policy of every $\model_n \equiv (\pairs, \reward_n, \kernel)$.
        In particular, $\policy_0$ is Bellman optimal in $\model_n$.
        Now, the gain $\gain^{\policy_0}$ and bias $\bias^{\policy_0}$ are $1$-Lipschitz in $\reward$.
        By taking $n \to \infty$, we conclude that $\policy_0$ is Bellman optimal in $\model$ as well.
        By (\textit{1.}) uniqueness of the Bellman optimal policy of $\model$, we have $\policy_0 = \policy^*$ hence $\pairs_0 = \wkoptpairs(\model)$; A contradiction. 

        Conversely, assume (\textit{2.}). 
        By \Cref{lemma_non_degeneracy_noise}, it implies that $\abs{\wkoptpairs(\model)} = \abs{\states}$, hence that the bias optimal policy is unique. 
        We prove (\textit{1.}) by contradiction, so either $\model$ has multiple Bellman optimal policies, or its Bellman optimal policy is not unichain.

        Assume that the Bellman optimal policy $\optpolicy$ is not unichain and let $\pairs_1$ and $\pairs_2$ be two disjoint recurrent components of $\optpolicy$. 
        Consider the model $\model'_\epsilon := (\pairs, \reward + \epsilon \indicator{\pairs_1}, \kernel)$ for $\epsilon > 0$. 
        We see that $\optgain(\model'_\epsilon) = \optgain(\model) + \epsilon$, while the gain of every unichain policy $\policy$ with recurrent component $\pairs_2$ is $\gain^\policy(\model'_\epsilon) = \optgain(\model)$. 
        Using the formula
        \begin{equation*}
            \gain^{\policy}(\state; \model'_\epsilon) 
            =
            \optgain(\state; \model'_\epsilon)
            - \lim_{T \to \infty} 
            \EE_\state^{\policy, \model'_\epsilon} \brackets*{
                \frac 1T
                \sum_{t=1}^T
                \ogaps(\Pair_t; \model'_\epsilon)
            },
        \end{equation*}
        we conclude that there exists $\pair \in \pairs_2$ such that $\ogaps(\pair; \model'_\epsilon) > 0$, hence $\pair \notin \wkoptpairs(\model'_\epsilon)$.
        So $\wkoptpairs(\model'_\epsilon) \ne \wkoptpairs(\model)$; A contradiction.

        Now, assume that $\model$ has multiple Bellman optimal policies, say $\optpolicy_1$ and $\optpolicy_2$.
        Without loss of generality, we assume that $\optpolicy_2$ is bias optimal in $\model$, i.e., that $\optpolicy_2 (\state)$ picks the unique action $\action$ such that $(\state, \action) \in \wkoptpairs(\model)$, which exists by (\textit{2.}).
        With the same argument as before, we can show that both $\optpolicy_1$ and $\optpolicy_2$ are unichain. 
        We can also show that $\optpolicy_1$ and $\optpolicy_2$ must have the same recurrent component; Otherwise, we introduce $\pairs_1^*$ and $\pairs_2^*$ their respective components, we consider $\model'_\epsilon := (\pairs, \reward + \epsilon \indicator{\pairs_1^* \setminus \pairs_2^*}, \kernel)$ and invoke the same rationale. 
        So $\bias^{\optpolicy_1} = \bias^{\optpolicy_2} = \optbias$ on the recurrent states of $\optpolicy_1$ and $\optpolicy_2$. 
        Let $\ogaps_1 (\state, \action) := \gain^{\optpolicy_1}(\state) +  \bias^{\optpolicy_1}(\state) - \reward(\state, \action) - \kernel(\state, \action) \bias^{\optpolicy_1}$ be the gap function of $\optpolicy_1$ in $\model$.
        Note that $\ogaps_1 \ge 0$ by definition of $\optpolicy_1$. 
        Now, let $\state_0$ be a recurrent state of $\optpolicy_1$ and $\optpolicy_2$ and let $\tau_0 := \inf \braces{t \ge 1 : \State_t = \state_0}$ be the reaching time to $\state_0$.
        For all $\state \in \states$, we have
        \begin{align*}
            \bias^{\optpolicy_2}(\state)
            & =
            \EE^{\policy_2, \model}_{\state} \brackets*{
                \sum_{t=1}^{\tau_0-1}
                \parens*{
                    \bias^{\optpolicy_2} (\State_t)
                    - 
                    \bias^{\optpolicy_2} (\State_{t+1})
                }
            }
            + \bias^{\optpolicy_2} (\state_0)
            \\
            & \overset{(\dagger)}=
            \EE^{\policy_2, \model}_{\state} \brackets*{
                \sum_{t=1}^{\tau_0-1}
                \parens*{
                    \reward(\Pair_t) - \optgain(\State_t)
                }
            }
            + \bias^{\optpolicy_2} (\state_0)
            \\
            & \overset{(\ddagger)}=
            \EE^{\policy_2, \model}_{\state} \brackets*{
                \sum_{t=1}^{\tau_0-1}
                \parens*{
                    \reward(\Pair_t) - \gain^{\optpolicy_1}(\State_t)
                }
            }
            + \bias^{\optpolicy_1} (\state_0)
            \\
            & \overset{(\S)}=
            \EE^{\policy_2, \model}_{\state} \brackets*{
                \sum_{t=1}^{\tau_0-1}
                \parens*{
                    \bias^{\optpolicy_1} (\State_t)
                    - 
                    \bias^{\optpolicy_1} (\State_{t+1})
                    - 
                    \ogaps_1 (\Pair_t)
                }
            }
            + \bias^{\optpolicy_1} (\state_0)
            \\
            & \overset{(\$)}\le
            \bias^{\optpolicy_1}(\state)
        \end{align*}
        where 
        $(\dagger)$ invokes the Poisson equation of $\optpolicy_2$;
        $(\ddagger)$ uses that $\optpolicy_1$ is gain optimal and that $\bias^{\optpolicy_1}(\state_0) = \bias^{\optpolicy_2}(\state_0)$;
        $(\S)$ follows by definition of the gap function $\ogaps_1$; and
        $(\$)$ uses that $\ogaps_1 \ge 0$. 
        So $\optbias = \bias^{\optpolicy_2} \le \bias^{\optpolicy_1}$.
        So $\optpolicy_1$ is bias optimal.
        By (\textit{2.}), the bias optimal policy of $\model$ is unique, so $\optpolicy_1 = \optpolicy_2$; A contradiction.
    \end{proof}

    Combining \Cref{theorem_characterization_non_degenerate} and \Cref{lemma_non_degeneracy_noise}, we obtain the following result.

    \begin{corollary}[Non-degeneracy is almost-sure]
    \label{corollary_non_degeneracy_noise}
        Let $\model \equiv (\pairs, \reward, \kernel)$ be a communicating Markov decision process.
        Let $U(\pair)$ be i.i.d.~random variables of distribution $\mathrm{N}(0, 1)$. 
        Then $\model_U := (\pairs, \reward + U, \kernel)$ is almost-surely non-degenerate.
    \end{corollary}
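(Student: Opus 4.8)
The plan is to deduce the corollary from \Cref{theorem_characterization_non_degenerate} by a measure-theoretic argument: I will show that the set of \emph{degenerate} reward vectors
\[
    B := \braces*{
        \reward' \in \R^\pairs
        :
        (\pairs, \reward', \kernel) \text{ is degenerate}
    }
\]
is Lebesgue-negligible. Since $U = (U(\pair))_{\pair \in \pairs}$ has i.i.d.~$\mathrm{N}(0,1)$ entries, the random vector $\reward + U$ admits a density with respect to the Lebesgue measure on $\R^\pairs$; hence $\Pr(\reward + U \in B) = 0$ as soon as $B$ is null, which is exactly the claim that $\model_U$ is non-degenerate almost surely. By \Cref{theorem_characterization_non_degenerate}, $\reward' \in B$ if and only if $\wkoptpairs(\reward', \kernel)$ is \emph{not} robust to reward noise, i.e.~iff the map $\phi : \reward' \mapsto \wkoptpairs(\reward', \kernel)$ fails to be locally constant at $\reward'$. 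As $\phi$ takes values in the finite set $2^\pairs$, its non-local-constancy locus is the union of the boundaries of its (finitely many) level sets, and it suffices to show this locus is contained in a finite union of affine hyperplanes of $\R^\pairs$.

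Next I would exploit that, for the \emph{fixed} kernel $\kernel$, the gain and bias of any policy depend affinely on the reward. Precisely, for $\policy \in \policies$ the gain $\gain^\policy(\state; \reward', \kernel)$ is linear in $\reward'$ (it equals $\invariantmeasure^\policy \reward'$ for the $\reward'$-independent invariant measure $\invariantmeasure^\policy$) and the bias $\bias^\policy(\state; \reward', \kernel)$ is affine in $\reward'$, since solving the Poisson equation is a linear operation on $\reward'$. Consider the finite arrangement $\mathcal{A}$ of hyperplanes obtained by collecting every genuine tie $\braces{\gain^\policy(\state; \cdot) = \gain^{\policy'}(\state; \cdot)}$ and $\braces{\bias^\policy(\state; \cdot) = \bias^{\policy'}(\state; \cdot)}$ over $\policy, \policy' \in \policies$ and $\state \in \states$. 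On each connected open cell of $\R^\pairs \setminus \bigcup \mathcal{A}$ all strict orderings of these affine functions are frozen, so the set $\optpolicies$ of gain optimal policies is constant and the bias optimal policy is constant; consequently $\optgain$ and $\optbias$ are affine on the cell, and so is each Bellman gap $\ogaps(\pair; \cdot)$. For a given pair, either $\ogaps(\pair; \cdot) \equiv 0$ on the cell (so $\pair$ is weakly optimal throughout it) or $\ogaps(\pair; \cdot)$ is a non-trivial affine function whose zero set meets the cell inside a further hyperplane. Adjoining these finitely many per-cell gap hyperplanes to $\mathcal{A}$ yields a finite family of hyperplanes off which $\phi$ is constant on each cell, hence locally constant. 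Thus $B$ lies in a finite union of hyperplanes and is Lebesgue-null, which together with the absolute continuity of $\reward + U$ finishes the proof; this also refines and subsumes \Cref{lemma_non_degeneracy_noise}.

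I expect the main obstacle to be the careful treatment of $\optbias$ in the second step. Because the optimal bias is a maximum of $\bias^\policy$ over the gain optimal policies, it can a priori be discontinuous precisely where $\optpolicies$ changes; the role of the bias-tie hyperplanes in $\mathcal{A}$ is to confine every such change to $\bigcup \mathcal{A}$, so that on each open cell the identity of the bias optimal policy is unambiguous and $\optbias$ is a single affine branch. Verifying that the cell structure simultaneously freezes both the gain optimal set and the bias optimal selection---and hence that $\ogaps(\pair; \cdot)$ is a well-defined affine function per cell whose vanishing is either identical or hyperplane-confined---is the delicate point; once it is in place, nullity of $B$ and the density of $\reward + U$ close the argument immediately.
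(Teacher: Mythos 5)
Your proof is correct, but it is organized differently from the paper's. The paper derives \Cref{corollary_non_degeneracy_noise} in one line by ``combining'' \Cref{theorem_characterization_non_degenerate} with \Cref{lemma_non_degeneracy_noise}, i.e.~it leans on the auxiliary lemma stating that the perturbed model almost surely has unique weakly optimal actions (itself proved by exhibiting a finite union of non-degenerate affine hyperplanes containing the tie locus). You bypass that lemma entirely and argue directly that the set $B$ of degenerate reward vectors is Lebesgue-null, by building a hyperplane arrangement out of the gain ties, the bias ties, and the per-cell zero sets of the Bellman gaps, and then invoking characterization~(\textit{2.}) of \Cref{theorem_characterization_non_degenerate} to identify $B$ with the non-local-constancy locus of $\reward' \mapsto \wkoptpairs(\reward', \kernel)$. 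Both routes ultimately rest on the same mechanism (affine dependence on $\reward'$ for fixed $\kernel$, hence a null exceptional set, hence almost-sure avoidance by an absolutely continuous perturbation), but yours is the more self-contained of the two: the paper's one-line combination is actually somewhat elliptic, since uniqueness of weakly optimal actions alone is not one of the two equivalent characterizations and does not by itself yield the robustness property~(\textit{2.}); your arrangement, by freezing simultaneously the gain-optimal set, the bias-optimal selection, and the support of $\ogaps(-;\cdot)$ on each open cell, is precisely what closes that gap --- and your closing remark correctly identifies the possible discontinuity of $\optbias$ across changes of $\optpolicies$ as the point that makes the bias-tie hyperplanes necessary. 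The only caveat worth keeping in mind is that the tie sets must be restricted to \emph{genuine} (non-identically-vanishing) affine differences, as you note, since distinct policies in $\policies$ may have identical gain or bias functions of $\reward'$; with that proviso the argument is complete.
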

    
    This result states that if a Markov decision process is degenerate, almost all its neighbors are non-degenerate.
    For instance, fixing the kernel then picking the reward function uniformly at random in $[0, 1]^\pairs$, the resulting Markov decision process is non-degenerate with probability one. 
    This supports the idea that, although many Markov decision processes are degenerate, most are non-degenerate. 

    \subsection{Proof of \texorpdfstring{\Cref{theorem_linear_regexp}}{Theorem 6}: Algorithms based on \texorpdfstring{\eqref{equation_doubling_trick}}{(DT)} have linear regret of exploration}
    \label{appendix_proof_of_theorem_linear_regexp}

    In this paragraph, we prove \Cref{theorem_linear_regexp}.

    \bigskip
    \par\noindent
    \strong{\Cref{theorem_linear_regexp}.}
    \textit{
        Fix a pair space $\pairs$ and let $\models$ be the space of all recurrent models with pairs $\pairs$. 
        Let $f : \N \to (0, \infty)$ be such that $\lim f(n) = + \infty$. 
        Any no-regret episodic learner $\planner$ satisfying:
        \begin{equation}
        \begin{gathered}
            \forall k \ge 1,
            \exists \pair \in \pairs,
            \quad
            \visits_{t_{k+1}}(\pair) \ge \visits_{t_k}(\pair) + f\parens*{\visits_{t_k}(\pair)}
            \\
            \exists c > 0,
            \forall t \ge 0, \forall (\state, \action) \in \pairs,
            \quad
            \policy_t(\action|\pair) \ge c \text{~or~} \policy_t(\action|\pair) = 0
        \end{gathered}
        \end{equation}
        has linear regret of exploration on the explorative sub-space of $\models$, i.e., for all $\model \in \models^+$, we have $\RegExp(T) = \Omega(T)$ a.s.~when $T \to \infty$.
    }
    \medskip

    \begin{proof}
        Let $\model \in \models^+$.
        By \Cref{theorem_characterization_explorative}, $\abs*{\explorationepisodes} = \infty$ almost surely. 
        Denote $(t_{k(i)})$ the enumeration of exploration times. 
        Because $\model$ is recurrent, every policy is recurrent on $\model$ thus $\Reach(\policy, \model, \state) \cap \suboptimalpairs(\model) \ne \emptyset$ if, and only if $\gain^\policy(\model) < \optgain(\model)$, where $\state$ is an arbitrary state.
        From \eqref{equation_linear_condition}, we see that:
        \begin{equation}
            \Pr \parens*{
                \lim_{t \to \infty} 
                \min \braces*{\visits_t(\state, \action): \policy_t(\action|\state) > 0}
                =
                \infty
            }
            =
            1.
        \end{equation}
        It follows that $\liminf (t_{k(i)+1} - t_{k(i)}) = \infty$. 
        Below, we write $\imeasure^\policy$ the asymptotic empirical measure of play of $\policy \in \policies$, given by $\imeasure^\policy(\pair|\state; \model) := \lim \frac 1T \EE_{\state}^{\policy, \model}[\sum_{t=1}^T \indicator{\Pair_t = \pair}]$, i.e., $\imeasure^\policy(\pair|\state; \model)$ is the average amount of time that $\policy$ spends playing $\pair$ under $\model$ starting from $\state \in \states$.
        In particular, for all $T \ge 0$, we have:
        \begin{align*}
            & \RegExp(T) 
            \\
            & \overset{(*)}\ge
            \limsup_{i \to \infty}
            \parens*{
                \EE^{\planner, \model} \brackets*{
                    \sum_{t=t_{k(i)}}^{t_{k(i)}+T-1}
                    \ogaps(\Pair_t; \model)
                }
            }
            \\
            & \overset{(\dagger)}\ge
            \limsup_{i \to \infty}
            \parens*{
                \EE^{\planner, \model} \brackets*{
                    T \min(\imeasure^{\policy_{t_{k(i)}}}(\model))
                    \ogaps_\mathrm{min}(\model)
                    - \diameter(\policy_{t_{k(i)}}; \model)
                }
            }
            \\ 
            & \overset{(\ddagger)} \ge
            T \alpha - \beta
        \end{align*}
        where
        $(*)$ follows from by definition;
        $(\dagger)$ is obtained by writing the Poisson equation of $\policy_{t_{k(i)}}$ for the reward function $f_i(\pair) = \indicator{\pair = \pair_i}$ where $\pair_i$ is any sub-optimal pair played by $\policy_{t_{k(i)}}$, and $\diameter(\policy_{t_{k(i)}}; \model)$ is the span of the bias function of $\policy_{t_{k(i)}}$ under $f$; and
        $(\ddagger)$ introduces $\alpha := \min_{\policy} \min(\imeasure^\policy(\model)) \ogaps_\mathrm{\min}(\model) > 0$ and $\beta := \max_{\policy} \diameter(\policy; \model) < \infty$. 
    \end{proof}

    \clearpage
    \section{The class of explorative MDPs}
    \label{appendix_exploration}

    In this appendix, we study the spaces of Markov decision processes for which the regret of exploration (\Cref{definition_regret_of_exploration}) is well-defined. 
    By construction, the regret of exploration is well-defined if, and only if the number of exploration times (\Cref{definition_exploration_time}) is infinite and we naturally investigate when this is exactly the case.
    As motivated in \Cref{section_well_definition}, we need a technical accommodation: We focus on learning algorithms with sub-linearly many episodes.
    These algorithms are these for which the performance in practice is actually comparable to the policies from which they pick actions. 
    Under this technical assumption, explorative Markov decision processes correspond to those that, intuitively, cannot be learned by playing the optimal policy only. 
    In \Cref{theorem_characterization_explorative}, we provide four characterizations of explorative environments. 
    Every one of them is of a different nature, that we explain below.
    
    \begin{blackblock}
        \begin{theorem}[Characterizations of explorative MDPs]
        \label{theorem_characterization_explorative}
            Let $\models \equiv \product_{\pair \in \pairs} (\rewards_\pair \times \kernels_\pair)$ be a \strong{convex} ambient space in product form.
            Let $\model \in \models$ be a non-degenerate Markov decision process.
            The following assertions are equivalent:
            \begin{enumerate}
                \item $\model \notin \models^+$, i.e., $\model$ is not explorative;
                    \vspace{-.66em}
                \item $\model$ has empty \strong{confusing set}, i.e., $\confusing(\model) = \emptyset$, see \eqref{equation_confusing_set};
                    \vspace{-.66em}
                \item There exists a consistent learner $\learner$, i.e., such that $\Reg(T; \model', \learner) = \oh(T^\epsilon)$ for all $\model' \in \models$ and $\epsilon > 0$, such that $\Reg(T; \model, \learner) = \oh(\log(T))$;
                    \vspace{-.66em}
                \item There exists a robust learner $\learner$, i.e., such that $\sup_{\model' \in \models} \Reg(T; \model', \learner) = \oh(T)$, such that $\Reg(T; \model, \learner) = \OH(1)$;
            \end{enumerate}
        \end{theorem}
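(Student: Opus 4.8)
The plan is to treat statement (2), the emptiness of the confusing set $\confusing(\model)$, as the hub of the equivalence and to prove each of (1), (3), (4) equivalent to it. Concretely I would establish the constructive implications $(2)\Rightarrow(1)$, $(2)\Rightarrow(3)$, $(2)\Rightarrow(4)$ from a single tailored learner, and the information-theoretic implications $(1)\Rightarrow(2)$, $(3)\Rightarrow(2)$, $(4)\Rightarrow(2)$ from change-of-measure lower bounds. Throughout I read $\confusing(\model)$ as the set of \emph{confusing alternatives}: models $\model'\in\models$ that coincide with $\model$ on the reward-kernel pair $\kerrew(\pair)$ for every $\pair\in\optpairs(\model)$ yet satisfy $\optgain(\model')>\optgain(\model)$. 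Thus $\confusing(\model)=\emptyset$ says exactly that the data produced by playing a gain-optimal policy of $\model$ already rules out every strictly better environment in the ambient class.

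For the constructive side I would build a model-based learner with \emph{gated} exploration. It maintains confidence regions, plays an empirically gain-optimal policy, and deviates onto a pair of $\suboptimalpairs(\model)$ only when the current statistics on $\optpairs(\model)$ are still compatible with some higher-gain model of $\models$. When the true environment is $\model$ and $\confusing(\model)=\emptyset$, convexity of $\models$ upgrades emptiness into a \emph{uniform margin} (a separating-hyperplane / compactness argument bounding the relevant $\KL$-infimum away from zero), so after an $\OH(1)$ burn-in the compatibility test fails forever: the learner commits to $\optpolicies(\model)$ and visits $\suboptimalpairs(\model)$ only finitely often. This gives $\Reg(T;\model)=\OH(1)$, which is statement (4); the same learner is episodic with sub-linearly many episodes and, having finitely many sub-optimal plays, finitely many exploration episodes, which is statement (1). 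Adding a vanishing forced-exploration grid on every pair keeps the regret sub-polynomial on all $\model'\in\models$ while leaving the $\model$-regret at $\OH(1)=\oh(\log T)$, giving the consistent learner of statement (3). The delicate point here is that naive optimism will \emph{not} do: unvisited pairs retain inflated optimistic value and generate exactly the long bad episodes this paper is about, so exploration must be gated by $\confusing(\model)$ rather than by raw optimism.

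For the lower-bound side, the quantitative implications $(3)\Rightarrow(2)$ and $(4)\Rightarrow(2)$ are the familiar transportation argument. If $\confusing(\model)\neq\emptyset$, fix a confusing $\model'$; a learner with $\OH(1)$ (resp.\ $\oh(\log T)$) regret on $\model$ samples $\suboptimalpairs(\model)$ only $\OH(1)$ (resp.\ $\oh(\log T)$) times, so the log-likelihood ratio between $\model$ and $\model'$ stays controlled because the two agree on $\optpairs(\model)$; transferring the trajectory to $\model'$ then forces linear (resp.\ super-$\oh(\log T)$) regret there, contradicting robustness (resp.\ the $\Omega(\log T)$ Lai--Robbins bound for consistent learners). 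These close $(4)\Leftrightarrow(2)$ and $(3)\Leftrightarrow(2)$.

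The main obstacle is the \emph{qualitative, almost-sure} implication $(1)\Rightarrow(2)$, which I would prove by contraposition: assuming $\confusing(\model)\neq\emptyset$, show that \emph{every} no-regret learner with sub-linearly many episodes has $\abs{\explorationepisodes}=\infty$ almost surely. The difficulty is that one must rule out finiteness on an event of positive probability, not merely bound an expected rate. The plan is a Borel--Cantelli / martingale change-of-measure argument: on the event that only finitely many exploration episodes occur, past some random time the learner plays only gain-optimal policies of $\model$ and hence samples $\pairs\setminus\optpairs(\model)$ boundedly, so the likelihood ratio to $\model'$ is almost surely bounded and this event has comparable probability under $\model'$, where it entails linear regret and contradicts $\Reg(T;\model')=\oh(T)$. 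Pushing ``positive probability'' to ``almost surely'', and using the sub-linear-episode hypothesis together with convexity of $\models$ to guarantee that a single $\model'$ (rather than a drifting sequence) confounds the learner, is where the real work lies. Once $(1)\Rightarrow(2)$ is secured, the hub equivalences above complete the proof, the only remaining friction being quantifier bookkeeping against the definition of $\models^+$.
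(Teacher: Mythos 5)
Your plan mirrors the paper's proof almost exactly: the paper likewise takes (2) as the hub, proves (2)$\Rightarrow$(1),(3),(4) by constructing a \texttt{KLUCRL} variant biased toward the optimal policy of $\model$ (using convexity of $\models$ to turn emptiness of the confusing set into a uniform optimistic-gain margin on data from $\optpairs(\model)$ alone), and proves the converse implications by change-of-measure arguments, with the delicate almost-sure implication (1)$\Rightarrow$(2) handled precisely by your bounded-likelihood-ratio argument on the event that sub-optimal pairs are visited only boundedly often. The one small slip is that a forced-exploration grid touching every pair infinitely often cannot leave the $\model$-regret at $\OH(1)$; the paper reserves $\OH(1)$ for the robust learner of (4) and settles for $\oh(\log T)$ in (3), which is all that statement requires.
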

    \end{blackblock}

    Each characterization in \Cref{theorem_characterization_explorative} is to be understood as follows.

    The first characterization (\textit{1.}) is simply the definition from the main text (\Cref{definition_explorative}): A Markov decision process is explorative if the regret of exploration of no-regret algorithms with sub-linearly many episodes is well-defined.
    The other characterizations relate the concept of explorative MDPs to more common settings. 
    The second characterization (\textit{2.}) is computational. 
    A Markov decision process is explorative if its confusing set, given by\footnote{The notation ``$\model \ll \model^\dagger$'' is about the absolute continuity of $\model$ with respect to $\model^\dagger$. It means that $\reward(\pair) \ll \reward^\dagger(\pair)$ and $\kernel(\pair) \ll \kernel^\dagger(\pair)$ for all $\pair \in \pairs$.}
    \begin{equation}
    \label{equation_confusing_set}
        \confusing(\model) 
        :=
        \braces[\Big]{
            \model^\dagger \in \models:
            \model \ll \model^\dagger
            , 
            \model = \model^\dagger \mathrm{~on~} \optpairs(\model)
            ,
            \optpolicies(\model^\dagger) \cap \optpolicies(\model) = \emptyset
        },
    \end{equation}
    is empty.
    The confusing set is a natural object that arises in instance dependent approaches to regret minimization, see \cite{lai_asymptotically_1985,burnetas_optimal_1997,tranos_regret_2021,boone_regret_2025}---although as shown by the fourth characterization in \Cref{theorem_characterization_explorative}, it is also linked to instance independent frameworks.
    In practice, the second characterization provides a simple way to test if a Markov decision process is explorative. 
    The third characterization~(\textit{3.}) is relevant to regret minimization in the instance dependent setting.
    It is known that most MDPs are such that consistent learners satisfy $\Reg(T; \model, \learner) = \Omega(\log(T))$. 
    However, non-explorative MDPs are those for which it is somehow possible to have regret $\oh(\log(T))$.
    The fourth characterization (\textit{4.}) is relevant to regret minimization in the problem independent (or minimax) setting, stating that we can find robust learners with bounded regret on $\model$. 

    \paragraph{Outline}
    The main goal of this appendix is to establish \Cref{theorem_characterization_explorative}.
    \Cref{theorem_characterization_explorative} provides many characterizations of $\models^+$, but each of them come with long-winded and exhausting proofs, especially if written in full details. 
    So, we begin by providing some intuition on why explorative MDPs are necessary in the first place. 
    In \Cref{appendix_exploration_counter_example}, we describe a simple Markov decision process that is \emph{not} explorative, and for which we show that the famous \texttt{UCRL2} of \cite{auer_near_optimal_2009} has bounded regret. 
    Because the regret of exploration is the object of focus in this work, we believe that the most important part of \Cref{theorem_characterization_explorative} is to show that if $\confusing(\model) \ne \emptyset$, then $\model$ is explorative and the regret of exploration is well-defined. 
    Therefore, \Cref{appendix_confusing_implies_explorative} is dedicated to a fully detailed proof of this result, hence providing a simple condition under which the analysis of the regret of exploration is meaningful in the first place.
    The remaining equivalences of \Cref{theorem_characterization_explorative} are bonus.
    In \Cref{appendix_confusing_implies_regret}, we show that Markov decision processes with non-empty confusing set cannot be learned trivially: the regret of consistent learning algorithms must grow logarithmically with $T$ (\Cref{proposition_confusing_consistent}) and the regret of robust learning algorithms must be unbounded (\Cref{proposition_confusing_robust}).
    We consider the converse results in \Cref{appendix_empty_confusing_implies_non_explorative}, showing that when the confusing set of $\model$ is empty, then $\model$ is non-explorative, that there are robust algorithms with bounded regret on $\model$, and consistent learning algorithms with sub-logarithmic regret on $\model$.
    This completes the proof of \Cref{theorem_characterization_explorative}.

    In \Cref{appendix_interior_are_explorative}, we discuss how common the property ``$\model \in \models^+$'' is.
    We explain that it depends on the amount of structure of $\models^+$: $\models \setminus \models^+$ can be large if $\models$ is heavily structured, and small otherwise.
    In particular, we show that all non-degenerate interior models (\Cref{assumption_interior}) are explorative in the ambient set of all Markov decision processes (see \Cref{proposition_interior_models_are_explorative}). 

    \subsection{An example of non-explorative Markov decision process}
    \label{appendix_exploration_counter_example}

    Not every Markov decision process is explorative, and as a matter of fact, they are easily found. 
    Such MDPs can be learned within a finite exploration phase because efficient learning algorithm can eliminated sub-optimal policies just by having information on optimal ones. 
    In \Cref{figure_non_explorative}, we provide an example of a non-explorative environment. 

    \begin{figure}[ht]
        \centering
        \begin{tikzpicture}
            \node[circle, draw] (0) at (0*72-36:2) {$0$};
            \node[circle, draw] (1) at (1*72-36:2) {$1$};
            \node[circle, draw] (2) at (2*72-36:2) {$2$};
            \node[circle, draw] (3) at (3*72-36:2) {$3$};
            \node[circle, draw] (4) at (4*72-36:2) {$4$};
            \node[circle, draw] (5) at (4, 0) {$5$};
    
            \draw[->,>=stealth] (0) to node[midway,left] {$0.1$} (1);
            \draw[->,>=stealth] (1) to node[midway,above] {$0.9$} (2);
            \draw[->,>=stealth] (2) to node[midway,left] {$0.9$} (3);
            \draw[->,>=stealth] (3) to node[midway,left] {$0.9$} (4);
            \draw[->,>=stealth] (4) to node[midway,below] {$0.9$} (0);
    
            \draw[->,>=stealth,dashed] (1) to node[midway,above] {$0.95$} (5);
            \draw[->,>=stealth,dashed] (5) to node[midway,below] {$0.8$} (0);
        \end{tikzpicture}
        \caption{
            \label{figure_non_explorative}
            An example of a non-explorative Markov decision process.
            From all states, there is a single choice of action excepted at the marked state ($*$) where there are two actions (dashed and solid lines).
            Choices of action deterministically lead to the state indicated by the arrow.
            Rewards are Bernoulli, with means indicated by the labels.
            \vspace{-1.5em}
        }
    \end{figure}
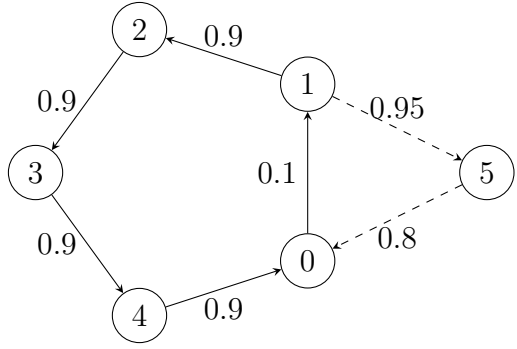

    \paragraph{Notations and intuition}
    Let $\kernel$ be the transition kernel of $\model$ as described by \Cref{figure_non_explorative}.
    Let 
    \begin{equation*}
        \models 
        := 
        \braces[\big]{
            \model' 
            : 
            \forall \pair \in \pairs,
            ~
            \kernel'(\pair) = \kernel(\pair) \mathrm{~and~} \reward(\pair) \in [0, 1]
        }
    \end{equation*}
    be the set of Bernoulli-reward Markov decision processes with the same transition structure than $\model$. 
    On $\model$, there are two policies $\policy^*$ and $\policy^-$, respectively looping on the 5-cycle or the 3-cycle.
    By looping on the 5-cycle, the algorithm learns its rewards very well, hence can claim that the 3-cycle's average reward is upper bounded by $\frac{1+1+0.1+\varepsilon_t}{3}$ because unknown rewards are bounded by 1. This is smaller than a lower bound for the 5-cycle $\frac{0.9+ 0.9+0.9+0.9+0.1 - \varepsilon_t}{5}$ (where $\varepsilon_t$ is vanishing with $t$).
    Therefore, the algorithm has no need to visit the dashed arrows infinitely often.
    What we have just justified is that: (1) there is no $\model' \in \models$ that coincide with $\model$ on the 5-cycle, which is such that $\optpolicies(\model') \ne \braces{\policy^*}$, meaning that $\confusing(\model) = \emptyset$ and echoing the characterization (\textit{2.}) of \Cref{theorem_characterization_explorative}; (2) the property $\confusing(\model) = \emptyset$ can be exploited by some optimistic algorithm to have uncommonly small regret on $\model$ specifically, echoing the characterizations (\textit{3.}) and (\textit{4.}) of \Cref{theorem_characterization_explorative}. 
    We show this second point more formally with \texttt{UCYCLE} \cite{ortner_online_2010}, a variant of \texttt{UCRL2} that is specialized to learning deterministic transition Markov decision processes such as in \Cref{figure_non_explorative}.

    \begin{algorithm}[H]
        \begin{equation*}
            \rewards(t)
            := 
            \prod_{\pair \in \pairs} 
            \braces*{
                \tilde{\reward}(\pair) \in [0, 1]
                :
                \tilde{\reward}(\pair) 
                \le 
                \hat{\reward}_t(\pair) + \sqrt{\tfrac{2 \log(SAt)}{\visits_\pair(t)}}
            }
            \quad \textrm{and} \quad
            \kernels(t) := \braces*{\kernel}
            .
        \end{equation*}
        \begin{algorithmic}[1]
            \STATE $k \gets 0$, initialize $\policy^0$;
            \FOR{$t=0, 1, \ldots$}
                \IF{\eqref{equation_doubling_trick} triggers}
                    \STATE
                        $k \gets k+1$; $t_k \gets t$;
                    \STATE 
                        $\policy_{t_k} \gets \texttt{EVI}(\models(t_k), 0, 0^\states)$;
                \ENDIF
                \STATE
                    Set $\policy_t \gets \policy_{t_k}$ and play $\Action_t \gets \policy_t(\State_t)$.
            \ENDFOR
        \end{algorithmic}
        \caption{
            \label{algorithm_ucycle}
            \texttt{UCYCLE}: \texttt{UCRL2} for deterministic transition models
        }
    \end{algorithm}

    To be absolutely accurate, \Cref{algorithm_ucycle} is not exactly the same algorithm as \cite{ortner_online_2010}, that we have simplified to ease the exposition. 
    It is essentially the same algorithm as \texttt{UCRL2} of \cite{auer_near_optimal_2009} with prior information on the transition kernel of $\model$. 
    The proof of its model independent regret guarantees on $\models$ can be directly adapted from \cite{ortner_online_2010}, or from our own \Cref{appendix_minimax} by removing the error terms relative to the learning of transition kernels.

    \begin{proposition}
    \label{proposition_bounded_regret}
        \texttt{UCYCLE} (see \Cref{algorithm_ucycle}) is robust on $\models$, with
        \begin{equation*}
            \sup_{\model' \in \models} \Reg(T; \model', \texttt{UCYCLE}) 
            = 
            \OH\parens*{
                \tsqrt{\abs{\pairs} T \log(T)}
            }
            .
        \end{equation*}
        Moreover, for $\model'$ as given by \Cref{figure_non_explorative}, we have $\Reg(T; \model, \texttt{UCYCLE}) = \OH(1)$.
    \end{proposition}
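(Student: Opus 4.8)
The plan is to prove the two assertions separately: the worst‑case bound is a routine specialization of the analysis of \Cref{appendix_minimax}, whereas the constant bound on the specific instance is the substantive part.

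For the minimax guarantee I would replay the regret decomposition behind \Cref{lemma_minimax_regret_episodes} with the single simplification that \texttt{UCYCLE} uses $\kernels(t) = \braces{\kernel}$, so the transition kernel is exact. Then every error term of the form $\norm{\tilde{\kernel}_{t_k}(\Pair_t) - \kernel(\Pair_t)}_1$ vanishes identically and only the optimistic reward error survives, controlled exactly as in \Cref{lemma_expected_reward_error} via the maximal Hoeffding bound \Cref{lemma_hoeffding_maximal}. Since episodes follow \eqref{equation_doubling_trick} (i.e.\ $f \equiv 1$), the number of episodes is $\OH(\abs{\pairs}\log T)$ by \Cref{lemma_number_episodes}, and combining these, with the structural constants ($\diameter$, $\State$) being fixed for this class, yields $\sup_{\model' \in \models}\Reg(T;\model') = \OH(\tsqrt{\abs{\pairs}T\log T})$. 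This is the argument of \cite{ortner_online_2010} with the kernel bookkeeping deleted.

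For the $\OH(1)$ bound, fix $\model$ as in \Cref{figure_non_explorative}. There are only two candidate policies: the optimal $\optpolicy$ looping on the $5$-cycle $0\to1\to2\to3\to4\to0$ with $\optgain(\model) = \tfrac{3.7}{5} = 0.74$, and the sub-optimal $\policy^-$ looping on the $3$-cycle $0\to1\to5\to0$ with gain $\tfrac{1.85}{3}$. The sub-optimal pairs are the dashed ones, $\suboptimalpairs(\model) = \braces{(1,\mathrm{dashed}),(5,\cdot)}$, and since every entry into state $5$ is preceded one step earlier by the dashed action at state $1$, we have $\visits_{(5,\cdot)}(T) \le \visits_{(1,\mathrm{dashed})}(T)+1$; hence it suffices to bound $\EE[\visits_{(1,\mathrm{dashed})}(T)]$. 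The crucial structural observation is that state $0$ lies on \emph{both} cycles, so it is revisited at least once every $5$ steps: writing $\pair_0 := (0,\cdot)$, we get $\visits_{\pair_0}(t) \ge (t-4)/5$ \emph{deterministically}. I would then show \texttt{EVI} cannot select $\policy^-$ once $\pair_0$ is well estimated: on the optimism event $\braces{\model\in\models(t_k)}$ we have $\reward\in\rewards(t_k)$, hence $\gain^{\optpolicy}(\models(t_k)) \ge \optgain(\model) = 0.74$; on the one-sided event $\braces{\hat{\reward}_{t_k}(\pair_0)\le\reward(\pair_0)+\mathrm{bonus}}$, bounding the two unexplored rewards trivially by $1$ gives $\gain^{\policy^-}(\models(t_k)) \le \tfrac13\parens*{\tilde{\reward}_{t_k}(\pair_0)+2} \le \tfrac13\parens*{0.1 + 2\tsqrt{2\log(\State\Action t_k)/\visits_{\pair_0}(t_k)}+2}$, which falls below $0.74$ as soon as $\visits_{\pair_0}(t_k) > C\log(\State\Action t_k)$. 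By the deterministic lower bound on $\visits_{\pair_0}$, this holds for every $t_k \ge T_1$ with $T_1$ a \emph{deterministic} constant. Thus for $t_k \ge T_1$, choosing $\policy^-$ forces one of the two concentration events to fail, an event of probability $\OH(t_k^{-3})$ after union bounds over pairs and over $\visits \le t$ (the $\log(\State\Action t)$ confidence level is tuned for exactly this).

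Finally I would convert this into a regret bound by episode bookkeeping. Because the transitions are deterministic cycles, a $3$-cycle episode starting at $t_k$ ends, under \eqref{equation_doubling_trick}, once $(1,\mathrm{dashed})$ doubles, after at most $3\visits_{(1,\mathrm{dashed})}(t_k)\le t_k$ steps, so its length $L_k$ satisfies $L_k \le t_k$ and $t_{k+1}\le 2t_k$. Episodes with $t_k < T_1$ therefore all terminate before $2T_1$ and contribute at most $\Delta_{\max}\cdot 2T_1 = \OH(1)$, where $\Delta_{\max}:=\max_{\pair}\ogaps(\pair;\model)$. For $t_k \ge T_1$,
\[
    \EE\Big[\textstyle\sum_{k:\,\policy_{t_k}=\policy^-,\,t_k\ge T_1} L_k\Big]
    \le \sum_{t\ge T_1} t\,\Pr\parens*{t_k = t,\ \policy_{t_k}=\policy^-}
    \le \sum_{t\ge T_1} t\cdot\OH(t^{-3}) = \OH(1),
\]
using $L_k\le t_k = t$ on $\braces{t_k=t}$. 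Since $\visits_{(1,\mathrm{dashed})}(T)\le\sum_{k:\policy^-}L_k$, this gives $\EE[\visits_{(1,\mathrm{dashed})}(T)] = \OH(1)$ uniformly in $T$, whence $\Reg(T;\model,\texttt{UCYCLE}) = \OH(1)$. The main obstacle is precisely this last trade-off: the doubling rule lets a single rare confidence failure trigger an episode of length $\Theta(t_k)$, so the argument only closes because the per-time failure probability $\OH(t_k^{-3})$ is summable against the worst-case episode length $t_k$; securing this, together with the deterministic $\Omega(t)$ lower bound on visits to the shared state $0$ that makes the exclusion threshold $T_1$ deterministic rather than random, is where the real work lies.
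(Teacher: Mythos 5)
Your proof is correct and follows essentially the same route as the paper's: both exploit the deterministic $\Omega(t)$ visit count of the shared pair $(0,\cdot)$ to show that, past a deterministic time, \texttt{EVI} can select the $3$-cycle only on a concentration-failure event of probability $\OH(t^{-3})$ (optimism failure or a one-sided deviation of $\hat{\reward}(0,\cdot)$), which is then summed against the worst-case episode length $\OH(t_k)$ permitted by \eqref{equation_doubling_trick}. The only immaterial differences are that you do the bookkeeping per episode while the paper sums over time steps $t$ and looks back at the episode start $t' \in [t/3, t]$, and that $(5,\cdot)$ in fact has zero Bellman gap so does not belong to $\suboptimalpairs(\model)$ --- harmless, since you correctly reduce everything to bounding $\visits_{(1,\mathrm{dashed})}(T)$.
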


    \paragraph{Remark}
    The example of \Cref{figure_non_explorative} is robust to reward perturbation.
    It means that it is {non-degenerate} in the sense of \Cref{definition_non_degeneracy}.
    It follows that by identifying $\model' \in \models$ as a reward vector $\reward' \in [0, 1]^\pairs$, it means the set $\models'$ of $\model' \in \models$ where the 5-cycle dominates the 3-cycle in the fashion described above has positive Lebesgue measure. 
    As $\models^+ \supseteq \models'$, it follows that a large portion of $\models$ is made of non-explorative Markov decision processes: By picking $\reward'$ uniformly at random, the obtained MDP is non-explorative with positive probability. 

    \bigskip
    \def\proofname{Proof of \Cref{proposition_bounded_regret}}
    \begin{proof}
        The assertion on the model independent regret guarantees is well-known, see \cite{ortner_online_2010} and \Cref{appendix_minimax}.
        We focus on proving that it has bounded regret on the model $\model$ given in \Cref{figure_non_explorative}.

        The model $\model$ is identified with its reward vector $\reward$. 
        Remark that the only pair with positive Bellman gap is $(1, \dagger)$ with Bellman gap $\ogaps(1, \dagger) \le 1$. 
        So, the regret is upper-bounded by $\abs*{\braces{t \le T : \policy_t = \policy^-}}$.
        We are left to bound how many times the sub-optimal policy $\policy^-$ is played. 
        A simple property induced by the doubling trick \eqref{equation_doubling_trick} is that $t_{k+1} \le 3t_k$. 
        So, if $\policy_t = \policy^-$, then there exists $t' \in [\frac 13 t, t]$ such that $\policy^-$ is the result of \texttt{EVI}, i.e., $g^{\policy^-}(\models({t'})) > g^{\policy^*}(\rewards({t'})) + \frac 1{t'}$. 

        Let $c := 3 \cdot \frac{0.9+0.9+0.9+0.9+0.1}{5} - 2 = 0.22$, which is the threshold on the reward that one should have on $(0, *)$ in order to make $\policy^-$ better than $\policy^*$. 
        Since
        \begin{align*}
            g^{\policy^-}(\models(t))
            & \le 
            \frac 13\parens*{
                2 + \hat r_t(0, *) + \sqrt{\frac{2\log(\abs{\pairs}t)}{\visits_{0, *}(t)}}
            }
            \text{~and~}
            g^{\policy^*}(\models(t)) \ge \indicator{r \in \rewards(t)} g^{\policy^*}(\model),
        \end{align*}
        we have:
        \begin{align}
            \nonumber
            (*) & :=
            \EE \abs*{\braces*{
                t \ge 1: \policy_t \ne \policy^{-}
            }}
            \\
            \nonumber
            & \le 300 + 
            \sum_{t \ge 300}
            \sum_{t'=t/3}^t
            \Pr \parens*{
                g^{\policy^-}(\models({t'})) > g^{\policy^*}(\models(t')) + \tfrac 1{100}
            }
            \\
            & 
            \label{appendix:equation:toto24}
            \le 300 + 
            \sum_{t\ge 300}
            \sum_{t' = t/3}^t
            \parens*{
                \Pr \parens*{
                    \hat r_{t'}(0, *) + \sqrt{\frac{2\log(\abs{\pairs}t')}{\visits_{0, *}(t')}} > 0.21
                }
                + 
                \Pr \parens*{
                    \model \notin \models(t')
                }
            }.
        \end{align}
        For the first term, remark that $\visits_{0, *}(t') \ge \frac 15t'$ almost surely when $t' \ge 5$. 
        For $t'$ large enough so that $\sqrt{10\log(\abs{\pairs}t')/t'} < 0.01$, we have
        \begin{align*}
            (**)
            & :=
            \Pr \parens*{
                \hat{\reward}_{t'}(0, *) + \sqrt{\frac{2\log(\abs{\pairs}t')}{\visits_{0, *}(t')}} > 0.21
            } 
            \\
            & \le 
            \Pr \parens*{
                \exists n \in [\tfrac 15t', t']:
                \visits_{0, *}(t') = n, \hat{\reward}_{t'}(0, *) + \sqrt{\tfrac{2\log(\abs{\pairs}t')}{n}} > 0.21
            }
            \\
            & \le 
            \sum_{n=\frac15t'}^\infty
            \Pr\parens*{
                \visits_{0, *}(t') = n, \hat{\reward}_{t'}(0, *) - r(0, *) > 0.2
            } \\
            & \overset{(\dagger)}\le 
            \sum_{n=\frac15t'}^\infty 
            \exp\parens*{
                -\frac 8{10000}n 
            } = \frac{\exp\parens*{-\frac 1{6250}t'}}{1 - \exp\parens*{-\frac 1{1250}}} = \OH\parens*{\exp(-\tfrac 1{6250} t')}
        \end{align*}
        where $(\dagger)$ follows from Azuma-Hoeffding's inequality.
        For the second term, we have 
        \begin{align*}
            \Pr\parens*{\model \notin \models(t')}
            & = \Pr \parens*{
                \exists \pair \in \pairs, 
                \abs*{\hat r_{t'}(\pair) - r_z} > \sqrt{\frac{2\log(\abs{\pairs}t')}{\visits_\pair(t')}}
            }
            \\
            & \le 
            \sum_z 
            \sum_{n=1}^\infty 
            \Pr \parens*{
                N_{\pair}(t') = n, 
                \abs*{\hat r_{t'}(\pair) - r(z)} > \sqrt{\frac{2\log(\abs{\pairs}t')}{n}}
            } \\
            & \overset{(\dagger)}\le 2\abs{\pairs} \sum_{n=1}^\infty \exp\parens*{-4\log(\abs{\pairs}t') \cdot n}
            \\
            & \le \frac{2\abs{\pairs}}{(t'\abs{\pairs})^4} \cdot \frac 1{1 - (t'\abs{\pairs})^{-4}} \le \frac 4{\abs{\pairs}^3t'^4} = \OH\parens*{t'^{-4}}. 
        \end{align*}
        where $(\dagger)$ follows from Azuma-Hoeffding's inequality. 
        Overall, injecting it all in \eqref{appendix:equation:toto24}, we obtain $\EE\abs*{\braces{t \ge 1 : \policy_t \ne \policy^-}} < \infty$. 
        We conclude accordingly that $\Reg(T; \model, \texttt{UCYCLE}) = \OH(1)$. 
    \end{proof}
    \def\proofname{Proof}

    \subsection{MDPs with non-empty confusing sets are explorative}
    \label{appendix_confusing_implies_explorative}

    In this section, we show that Markov decision processes with non-empty confusing set are explorative, see \Cref{proposition_well_defined}.
    This is (\textit{1.}) $\Rightarrow$ (\textit{2.}) in \Cref{theorem_characterization_explorative} for which we show the transposition $\neg$(\textit{2.}) $\Rightarrow$ $\neg$(\textit{1.}).
    This result is absolutely necessary to justify that the analysis of the regret of exploration is formally based. 

    \begin{proposition}
    \label{proposition_well_defined}
        Let $\model \in \models$.
        If $\confusing(\model) \ne \emptyset$, then every no-regret algorithm $\learner$ with sub-linearly many episodes has infinitely many exploration episodes on $\model$, almost surely. 
    \end{proposition}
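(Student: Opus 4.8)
The plan is to argue by contradiction through a change of measure onto a confusing model. Fix $\model^\dagger \in \confusing(\model)$ and suppose, contrary to the claim, that the event $G := \{\text{only finitely many exploration episodes occur}\}$ has $\Pr^\model(G) > 0$; on $G$, let $\tau$ denote the (a.s.~finite) time of the last exploration episode. First I would record the two structural facts I intend to exploit. By no-regret on $\model$, the cumulative Bellman gap is $\oh(T)$, so visits to positive-gap pairs satisfy $\EE^\model[\sum_{\pair \in \suboptpairs(\model)} \visits_\pair(T)] = \oh(T)$. By the definition of the confusing set, $\model$ and $\model^\dagger$ coincide on $\optpairs(\model)$ while $\model \ll \model^\dagger$ everywhere, so the log-likelihood ratio $L_T := \log \frac{d\Pr^\model}{d\Pr^{\model^\dagger}}(\History_T)$ accumulates only on visits to $\pairs \setminus \optpairs(\model)$, and is bounded by $C \sum_{\pair \notin \optpairs(\model)} \visits_\pair(T)$ for a model-dependent constant $C > 0$ (a uniform bound on the per-visit divergences, finite by $\model \ll \model^\dagger$).

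Second, I would translate ``finite exploration'' into a statement about the asymptotic behaviour of the deployed policies. After $\tau$, no episode satisfies both defining conditions of an exploration episode. Using that episodes are sub-linear in number together with no-regret on $\model$, I would show that on $G$ the learner eventually deploys only policies that are gain optimal from the current state and whose reachable-recurrent pairs lie in $\optpairs(\model)$; equivalently, that the distinguishing visits $\sum_{\pair \notin \optpairs(\model)} \visits_\pair(T)$ grow sub-linearly. Here the positive-gap part is controlled by the regret bound, while the weakly-optimal-transient part is controlled by the observation that dropping an optimal policy for one that can leave $\optpairs(\model)$ is precisely an exploration event. This confines, on $G$, the trajectory to $\optpairs(\model)$ up to a vanishing fraction of time.

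Third comes the transport step. On $\optpairs(\model)$ the dynamics and rewards of $\model$ and $\model^\dagger$ are identical, and the recurrent class of any $\model$-gain-optimal policy is closed and contained in $\optpairs(\model)$; hence such a policy has the same gain $\optgain(\model)$ under $\model^\dagger$. Since $\optpolicies(\model)\cap\optpolicies(\model^\dagger)=\emptyset$, this forces $\optgain(\model) < \optgain(\model^\dagger)$, so the behaviour the learner commits to on $G$ is strictly sub-optimal for $\model^\dagger$ and, if reproduced under $\model^\dagger$, would accrue regret at the linear rate $\optgain(\model^\dagger) - \optgain(\model) > 0$. I would then pick a $\sigma(\History_T)$-measurable event $A$ encoding ``plays $\model$-optimally on a linear fraction of $[\tau, T]$'', note $\Pr^\model(G \cap A)$ is bounded below, and use a transportation inequality of the form $\kl(\Pr^\model(A),\Pr^{\model^\dagger}(A)) \le \EE^\model[L_T]$ to carry this positive-probability event to $\model^\dagger$, concluding $\Reg(T;\model^\dagger,\learner) = \Omega(T)$ and contradicting no-regret on $\model^\dagger$.

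The hard part will be the quantitative control linking the second and third steps: the transportation inequality is only useful when the accumulated information $\EE^\model[L_T]$ is genuinely small, whereas no-regret a priori yields only sub-linear, not bounded, distinguishing visits. Closing this gap requires a careful localisation — passing to a sub-event on which the distinguishing visits stay bounded (or grow slowly enough to keep $L_T$ controlled on the relevant stretch), and a separate treatment of the weakly-optimal-but-transient pairs, for which the sub-linear-episode hypothesis is essential to prevent a degenerate learner from evading the exploration-episode definition by switching policies constantly. I expect this to force a stopping-time argument along episode boundaries rather than a one-shot application of the inequality, which is presumably why the text flags this result as peripheral and ``surprisingly difficult.''
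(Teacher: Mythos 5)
Your overall strategy matches the paper's core mechanism: a change of measure onto some $\model^\dagger \in \confusing(\model)$, exploiting that $\model$ and $\model^\dagger$ coincide on $\optpairs(\model)$ so the log-likelihood ratio accumulates only on visits outside $\optpairs(\model)$, and that $\model$-optimal behaviour is linearly sub-optimal under $\model^\dagger$. You also correctly diagnose that the expectation-form transportation inequality is too weak and that one must localise to an event on which the distinguishing visits are bounded by a constant; the paper does exactly this, using the pointwise identity $\Pr^{\model^\dagger}(\event) = \EE^{\model}[\indicator{\event}\exp(-L(t))]$ on an event of the form $\{\sum_{\pair \notin \pairs^{**}(\model)} \visits_\pair(t) \le m'\}$, where the likelihood ratio is at most $m'\log c$.

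Two load-bearing steps remain genuine gaps. First, your reduction from ``finitely many exploration episodes'' to ``bounded distinguishing visits on a positive-probability event'' is only asserted, and it is where the sub-linear-episodes hypothesis does all of its work: the paper first proves, via a Poisson-equation decomposition, that $\EE[\sum_{t=1}^T(\optgain(\State_t) - \gain^{\policy_t}(\State_t) + \indicator{\State_t \notin \mathrm{Rec}(\policy_t)})] \le \Reg(T) + C\,\EE|\episodes(T)|$, hence that the process sits on the recurrent class of a gain-optimal policy infinitely often almost surely, and only by alternating this fact with reachability of sub-optimal pairs does one manufacture exploration episodes --- an alternation that uses non-degeneracy to identify $\Reach(\policy_t, \State_t)$ with $\pairs^{**}(\model)$, which your argument never invokes. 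Second, bounding the likelihood ratio requires bounding visits to all of $\pairs \setminus \pairs^{**}(\model)$, including zero-gap transient pairs, not merely the positive-gap pairs that the regret controls; the paper upgrades ``at most $m$ visits to $\suboptpairs(\model)$'' to ``at most $m'$ visits outside $\pairs^{**}(\model)$'' via a Poisson equation for the reward $\indicator{\pair \in \pairs^{**}(\model)}$ combined with a time-uniform Azuma--Hoeffding bound, and then needs a symmetric Poisson-equation argument under $\model^\dagger$ to convert the transported event into $\Omega(T)$ regret there. You name both difficulties but resolve neither, so as it stands the proposal is a correct plan rather than a proof.
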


    \paragraph{Proof sketch}
    Recall that, by definition, $k \ge 1$ is an \strong{exploration episode} if (1) $g^*(M) = g(\pi^k, S_{t_k}, M)$ and (2) $\Reach(\pi^k, S_{t_k}, M) \cap \pairs^-(M) \ne \emptyset$, see (\Cref{definition_exploration_episode}).
    In order to show that there are infinitely many exploration episodes, we have to show that the learning process alternates infinitely often between periods of times when the played policy is gain optimal, and others when there is a reachable sub-optimal pair.
    \STEP{1} is a preliminary technical fact.
    In \STEP{2}, we show with \eqref{equation_well_definition_2A} that the process is infinitely many times on the recurrent part of a gain optimal policy.
    In \STEP{2}, we show with \eqref{equation_well_definition_3_0} that the process must play sub-optimal pairs infinitely often.
    Combining both in \STEP{4}, we show that the number of exploration times is infinite, and each are finite with probability one.

    \paragraph{Notations}
    For $\policy \in \policies$, we write $\mathrm{Rec}(\policy)$ the set of states that are recurrent under $\policy$ on $M$, i.e., $\state \in \states$ such that $\Pr_{\state}^{\policy, \model}(\forall m, \exists n \ge m: \State_n = \state) = 1$.

    \bigskip
    \noindent
    \STEP{1}
    \textit{
        For every model $M \in \models$, there exists a constant $C(M) > 0$ such that whatever the learning algorithm, we have:
        \begin{equation}
        \label{equation_well_definition_1A}
            \EE^M\brackets*{
                \sum_{t=1}^T \parens*{
                    g^*(S_t, M) - g^{\pi_t}(\State_t, M)
                    + \indicator{S_t \notin \mathrm{Rec}(\pi_t)}
                }
            }
            \le
            \Reg(T; \model)
            + C(M) \EE^M \abs*{\episodes(T)}
            .
        \end{equation}
    }
    \begin{proof}
        In the proof below, we drop the dependency in $M$ in the notations. 
        If $\pi \in \Pi$, we denote $\mathrm{Rec}(\pi)$ the recurrent states of $\pi$ in $M$.
        We have:
        \begin{align*}
            (*) 
            & = \Reg(T; \model)
            \\
            & = 
            \EE\brackets*{
                \sum_{t=1}^T \Delta^*(Z_t)
            }
            \\
            & \overset{(\dagger)}= 
            \EE \brackets*{
                \sum_{t=1}^T 
                \parens*{g^*(S_t) - r(Z_t) + \parens*{e_{S_t} - p(Z_t)} h^*}
            }
            \\
            & \ge
            \EE \brackets*{
                \sum_{t=1}^T (g^* - r(Z_t))
            }
            - \vecspan{h^*}
            \\
            & \overset{(\ddagger)}\ge 
            \underbrace{
                \EE \brackets*{
                    \sum_{k=1}^{\abs*{\episodes(T)}} \sum_{t=t_k}^{t_{k+1}-1}
                    \eqindicator{S_t \in \mathrm{Rec}(\pi_t)}
                    (g^*(S_t) - r(Z_t))
                }
            }_{\mathrm{A}}
            - \underbrace{
                \EE \brackets*{
                    \sum_{k=1}^{\abs*{\episodes(T)}} \sum_{t=t_k}^{t_{k+1}-1}
                    \eqindicator{S_t \notin \mathrm{Rec}(\pi_t)}
                }
            }_{\mathrm{B}}
            \\
            & \phantom{=} - \vecspan{h^*}
        \end{align*}
        where $(\dagger)$ uses the Bellman equation $h^*(s) + g^*(s) = r(s,a) + p(s,a)h^* + \Delta^*(s,a)$, and $(\ddagger)$ uses that $g^*(S_t) - r(Z_t) \ge -1$. 
        We bound A and B separately. 
        Let $D_* := \max_{\pi} \max_s \EE_s^\pi[\inf\braces{t \ge 1 : S_t \in \mathrm{Rec}(\pi)}] < \infty$ be the worst hitting time to a recurrent class in $M$. 
        We have:
        \begin{equation}
        \label{equation_well_definition_1B}
            \mathrm{B}
            =
            \EE \brackets*{
                \sum_{k=1}^{\abs*{\episodes(T)}} \sum_{t=t_k}^{t_{k+1}-1}
                \inf\braces*{t > t_k : S_t \in \mathrm{Rec}(\pi^k)}
            }
            \le
            D_* \EE[\abs*{\episodes(T)}]
            .
        \end{equation}
        Meanwhile, introduce $t'_k := t_{k+1} \wedge \inf \braces{t > t_k: S_t \in \mathrm{Rec}(\pi^k)}$ and $H := \max_\pi \vecspan{h^\pi} < \infty$ the worst bias span.
        We have:
        \begin{align}
            \mathrm{A}
            & = \nonumber
            \EE \brackets*{
                \sum_{k=1}^{\abs*{\episodes(T)}} \sum_{t=t'_k}^{t_{k+1}-1}
                (g^*(S_t) - r(Z_t))
            }
            \\
            & \overset{(\dagger)}= \nonumber
            \EE \brackets*{
                \sum_{k=1}^{\abs*{\episodes(T)}} \sum_{t=t'_k}^{t_{k+1}-1}
                \parens*{
                    g^*(S_t) - g^{\pi^k}(S_t) + \parens*{p(Z_t) - e_{S_t}}h^{\pi^k}
                }
            }
            \\
            & \ge \nonumber
            \EE \brackets*{
                \sum_{k=1}^{\abs*{\episodes(T)}} \sum_{t=t'_k}^{t_{k+1}-1}
                \parens*{
                    g^*(S_t) - g^{\policy_t}(S_t) 
                }
            }
            - H \EE[\abs*{\episodes(T)}]
            \\
            & \overset{(\ddagger)}\ge
        \label{equation_well_definition_1C}
            \EE \brackets*{
                \sum_{t=1}^T
                \parens*{
                    g^*(S_t) - g^{\policy_t}(S_t) 
                }
            }
            - H \EE[\abs*{\episodes(T)}]
        \end{align}
        where $(\dagger)$ uses the Poisson equation $h^{\pi^k}(s) + g^{\pi^k}(s) = r(s, \pi^k(s)) + p(s, \pi^k(s)) h^{\pi^k}$ and $(\ddagger)$ that $g^*(S_t) \ge g^{\pi_t}(S_t)$ for all $t \ge 1$.
        Combining \eqref{equation_well_definition_1B} and \eqref{equation_well_definition_1C}, we get:
        \begin{equation*}
            \EE \brackets*{
                \sum_{t=1}^T
                \parens*{
                    g^*(S_t) - g^{\policy_t}(S_t) 
                }
            }
            +
            \EE\brackets*{
                \sum_{t=1}^T \eqindicator{S_t \notin \mathrm{Rec}(\pi_t)}
            }
            \le
            \Reg(T)
            + (2D_* + H) \EE[\abs*{\episodes(T)}]
            .
        \end{equation*}
        Conclude the proof by setting $C := 2 D_* + H < \infty$.
    \end{proof}

    \noindent
    \STEP{2}
    \textit{
        Assume that the algorithm is no-regret and has sub-linearly many episodes in expectation.
        Then:
        \begin{equation}
        \label{equation_well_definition_2A}
            \Pr \parens*{
                \forall T, \exists t \ge T:
                g^*(S_t, M) = g^{\policy_t}(S_t, M)
                \text{~and~}
                S_t \in \mathrm{Rec}(\pi_t)
            }
            =
            1.
        \end{equation}
    }
    \begin{proof}
        Assume on the contrary that $\Pr(\forall T, \exists t \ge T: g^*(S_t, M) = g^{\policy_t}(S_t, M) \wedge S_t \in \mathrm{Rec}(\pi_t)) = 1 - \delta$ with $\delta > 0$.
        Accordingly, there exists $T_0 \ge 1$ such that:
        \begin{equation*}
            \Pr \parens*{
                \forall t \ge T_0,
                g_{S_t}^*(M) > g^\policy_t(S_t, M)
                \text{~or~}
                S_t \notin \mathrm{Rec}(\pi_t)
            }
            \ge \tfrac 12 \delta.
        \end{equation*}
        Let $\Delta_g := \min \braces{g^*(s, M) - g^\pi(s, M): \pi \in \Pi, s \in \states, g^*(s, M) > g^\pi(s, M)}$ be the gain-gap of $M$.
        We have $\Delta_g \in (0, 1]$ and thus:
        \begin{align*}
            (*) 
            & := \EE\brackets*{
                \sum_{t=1}^T 
                \parens*{g^*(S_t, M) - g^{\pi_t}(S_t, M)}
            }
            + \EE \brackets*{
                \sum_{t=1}^T \eqindicator{S_t \notin \mathrm{Rec}(\pi_t)}
            }
            \\
            & \ge
            \Delta_g
            \EE \brackets*{
                \sum_{t=1}^T
                \eqindicator{
                    g^*(S_t, M) > g^{\pi_t}(S_t, M)
                    \text{~or~}
                    S_t \notin \mathrm{Rec}(\pi_t)
                }
            }
            \\
            & \ge 
            \Delta_g (T - T_0) \Pr \parens*{
                \forall t \ge T_0,
                g^*(S_t, M) > g^{\pi_t}(S_t , M)
                \text{~or~}
                S_t \notin \mathrm{Rec}(\pi_t)
            }
            \\
            & \ge 
            \tfrac 12 \Delta_g \delta (T - T_0)
            = \Omega(T)
            .
        \end{align*}
        Meanwhile, we know that $\Reg(T; \model) = \oh(T)$ and $\EE[\abs*{\episodes(T)}] = \oh(T)$, so that by \STEP{1} \eqref{equation_well_definition_1A}, we also have $(*) = \oh(T)$, a contradiction. 
    \end{proof}

    \noindent
    \STEP{3}
    \textit{
        If $\confusing(M) \ne \emptyset$, then every no-regret algorithm satisfies
        \begin{equation}
        \label{equation_well_definition_3_0}
            \Pr^M \parens*{
                \forall T, \exists t > T: \Delta^*(Z_t) > 0
            }
            = 
            1.
        \end{equation}
    }
    \begin{proof}
        On the contrary, assume that $\Pr^M( \forall T, \exists t > T: \Delta^*(Z_t) > 0) = 1 - \delta$ with $\delta > 0$.
        Accordingly, there exists $m \ge 1$ such that:
        \begin{equation}
        \label{equation_well_definition_3A}
            \tfrac 12 \delta 
            \le 
            \Pr^M \parens*{
                \forall t > m: \Delta^*(Z_t) = 0
            } 
            \le
            \Pr^M \parens*{
                \forall t \ge 1:
                \sum_{z \in \pairs^-(M)} N_t(\pair) \le m
            }
            .
        \end{equation}
        We show that $z \in \pairs^-(M)$ can be changed to $z \notin \pairs^{**}(M)$ in \eqref{equation_well_definition_3A}, see \eqref{equation_well_definition_3B}. 
        To see this, introduce the reward function $f(z) := \indicator{z \in \pairs^{**}(M)}$ and let $g^f, h^f$ and $\Delta^f$ the respective gain, bias and gap functions of the optimal policy $\pi^*$ of $M$ (defined by $\pi^*(s) = a$ the unique $a \in \actions(s)$ such that $(s,a) \in \pairs^{*}(M)$) under reward function $f$ and kernel $p(M)$.
        Remark that $g^f(s) = 1$ for all $s \in \states$ and that, by construction of $\pi^*$, $\Delta^f(z) = 0$ for all $z \in \pairs^{*}(M)$.
        Denote $H^f := \mathrm{sp}\parens*{h^f} \vee \max_z \abs*{\Delta^f(z)}$.
        We have:
        \begin{align*}
            & \sum_{z \in \pairs^{**}(M)}
            N_z(T)
            \\
            & = \sum_{t=1}^T f(Z_t)
            \\
            & \overset{(\dagger)}= 
            \sum_{t=1}^T \parens*{1 + \parens*{e_{S_t} - p(Z_t)}h^f - \Delta^f(Z_t)}
            \\
            & \ge 
            T - H^f
            - \sum_{t=1}^T \Delta^f(Z_t)
            + \sum_{t=1}^T \parens*{e_{S_{t+1}} - p(Z_t)} h^f
            \\
            & \overset{(\ddagger)}\ge 
            T - H^f
            - H^f \sum_{t=1}^T \indicator{Z_t \notin \pairs^*(M)}
            + \sum_{t=1}^T \indicator{Z_t \notin \pairs^{**}(M)} \parens*{e_{S_{t+1}} - p(Z_t)} h^f
        \end{align*}
        where $(\dagger)$ uses the Bellman equation $1 + h^f(s) = f(s,a) + p^f(s,a) h^f + \Delta^f(s,a)$, and $(\ddagger)$ that $h^f(s) = 0$ for all $(s, \pi^*(s)) \in \pairs^{**}(M)$.
        For $\pairs' \subseteq \pairs$, denote $N_T({\pairs'}) := \sum_{z \in \pairs'} N_T(z)$.
        The first sum is equal to $\sum_{t=1}^T \indicator{Z_t \notin \pairs^{*}(M)} = N_T({\pairs^-(M)})$.
        The RHS of the above equation is bounded using a time-uniform Azuma-Hoeffding inequality (see \cite[Lemma~5]{bourel_tightening_2020}), showing that:
        \begin{equation*}
            \Pr \parens*{
                \exists T \ge 1:
                {
                    \sum_{t=1}^T \indicator{Z_t \notin \pairs^{**}(M)} \parens*{e_{S_{t+1}} - p(Z_t)} h^f
                    \atop
                    <
                    - H^f \sqrt{
                        N_T({\pairs^{**}(M)^c})
                        \log\parens*{
                            \tfrac{4N_{\pairs^{**}(M)^c}(T)}{\delta}
                        }
                    }
                }
            }
            \le
            \tfrac 14 \delta
        \end{equation*}
        Using that $N_T({\pairs^{**}(M)^c}) = T - N_T({\pairs^{**}(M)})$, we obtain that, with probability at least $\tfrac 14 \delta$, for all $T \ge 1$, we have:
        \begin{align*}
            & T - N_T({\pairs^{**}(M)^c})
            \\
            & \ge
            T 
            - H^f \parens*{1 + N_T({\pairs^-(M)})}
            - H^f \sqrt{
                N_T({\pairs^{**}(M)^c})
                \log\parens*{
                    \tfrac{4N_T({\pairs^{**}(M)^c})}{\delta}
                }
            }
            \\
            & \ge 
            T 
            - H^f \parens*{1 + m}
            - H^f \sqrt{
                N_T({\pairs^{**}(M)^c})
                \log\parens*{
                    \tfrac{4N_T({\pairs^{**}(M)^c})}{\delta}
                }
            }
            .
        \end{align*}
        Rearranging terms, we get that with probability at least $\tfrac 14\delta$, for all $T \ge 1$, we have:
        \begin{align*}
            & N_T({\pairs^{**}(M)^c})
            \\
            & \le
            H^f \parens*{
                1 + m
                + \sqrt{
                    N_T({\pairs^{**}(M)^c})
                    \log\parens*{ N_T({\pairs^{**}(M)^c}) }
                }
                + \sqrt{
                    N_T({\pairs^{**}(M)^c})
                    \log\parens*{\tfrac 4\delta}
                }
            }
            .
        \end{align*}
        Denoting $n := N_T({\pairs^{**}(M)})$, we have an equation of the form $n \le \alpha + \beta \sqrt{n \log(n)} + \gamma \sqrt{n}$. 
        For $n \ge 3$, $n \log(n) \ge n$ hence we can simplify the upper-bound to $n \le \alpha + (\beta + \gamma) \sqrt{n \log(n)}$.
        Dividing by $\log(n) \ge 1$ and setting $m := n/\log(n)$, we get $m \le \alpha + (\beta + \gamma) \sqrt{m}$, and simple algebra leads to:
        \begin{equation*}
            \frac{n}{\log(n)} = m \le 2\parens*{\alpha + (\beta + \gamma)^2}.
        \end{equation*}
        Further using $\log(n) \le \sqrt{n}$, we get $n \le 4(\alpha + (\beta+\gamma)^2)^2$.
        We conclude that there exists a constant $m'$ such that:
        \begin{equation}
        \label{equation_well_definition_3B}
            \Pr^{M} \parens*{
                \forall t \ge 1,
                \sum_{z \notin \pairs^{**}(M)}
                N_t(z)
                \le m'
            } \ge \tfrac 14 \delta.
        \end{equation}
        Now that \eqref{equation_well_definition_3B} is established, we finally derive a contradiction by relying on a change of measure argument.
        Let $M^\dagger \in \confusing(M)$, which is non-empty by assumption.
        For short, the transition kernels and reward distributions of $M$ (respectively $M^\dagger$) are denoted $p$ and $r$ (respectively $p^\dagger$ and $r^\dagger$).
        We introduce the log-likelihood-ratio of observations $H_t := (S_t, A_t, R_1, \ldots, A_{t-1}, R_{t-1}, S_t)$ as:
        \begin{equation*}
            L(t)
            \equiv L(H_t)
            :=
            \sum_{s,a} \sum_{i < t-1} 
            \indicator{S_i = s, A_t = a}
            \log\parens*{
                \frac{r_{s,a}(R_i)}{r_{s,a}^\dagger(R_i)}
                \frac{p_{s,a}(S_{i+1})}{p^\dagger_{s,a}(S_{i+1})}
            }
            .
        \end{equation*}
        It is known since \cite{marjani_navigating_2021} that if $\event$ is a $\sigma(H_t)$-measurable event, then $\Pr^{M^\dagger}(\event) = \EE^M[\indicator{\event} \exp(-L(t))]$.
        Since $M \ll M^\dagger$, there exists a constant $c > 0$ such that, for all $z \in \pairs$, we have $\log[(r_z(\alpha)/r_z^\dagger(\alpha)) \cdot (p_z(s')/p_z^\dagger(s'))] \le \log(c)$ with the convention $0/0 = 0$.
        For $z \in \pairs^{**}(M)$, the LHS logarithm is null.
        Therefore, we have:
        \begin{align*}
            & \Pr^{M^\dagger} \parens*{
                \sum_{z \notin \pairs^{**}(M)} N_t(z) \le m'
            }
            \\
            & =
            \EE^M \brackets*{
                \eqindicator{ \sum_{z \notin \pairs^{**}(M)} N_t(z) \le m' }
                \exp \parens*{
                    - L(t)
                }
            }
            \\
            & \ge
            \EE^M \brackets*{
                \eqindicator{ \sum_{z \notin \pairs^{**}(M)} N_t(z) \le m' }
                \exp \parens*{
                    - \sum_{z \notin \pairs^{**}(M)}
                    N_t(z) \log(c)
                }
            }
            \\
            & \ge c^{-m'}
            \Pr^{M} \parens*{
                \sum_{z \notin \pairs^{**}(M)} N_t(z) \le m'
            }
            \ge
            c^{-m'} \delta := \delta' > 0.
        \end{align*}
        Accordingly, the algorithm has probability at least $\delta'$ to spend at most $m'$ visits outside $\pairs^{**}(M)$ when running on $M^\dagger$.
        This will be in contradiction $M^\dagger \in \confusing(M)$ and the consistency of the algorithm.
        Indeed, since $M^\dagger \gg M$ coincides with $M$ on $\pairs^{**}(M)$, we see that the optimal policy $\pi^*$ of $M$ has unique recurrent class $\pairs^{**}(M)$ in $M^\dagger$.
        Yet, $\pi^* \notin \Pi^*(M^\dagger)$, hence $\pairs^{**}(M) \cap \pairs^-(M^\dagger) \ne \emptyset$, i.e., there exists $z \in \pairs^{**}(M)$ such that $\Delta^*(z; M^\dagger) > 0$.
        We further link the number of visits of this $z$ to the total number of visits of $\pairs^{**}(M)$ with the same technique that the one used to convert \eqref{equation_well_definition_3A} to \eqref{equation_well_definition_3B}.
        
        Introduce the reward function $f(z') := \indicator{z' = z}$, and let $g^f, h^f, \Delta^f$ be the gain, bias and gaps functions of the policy $\pi^*$ in $M^\dagger$.
        There exists $\epsilon > 0$ such that $g^f(s) = \epsilon$ for all $s \in \states$.
        Letting $C := \mathrm{sp}\parens*{h^f} \vee \max_{z'} \abs*{\Delta^f(z')} < \infty$.
        For all $T \ge 1$, we have
        \begin{align*}
            N_T(z)
            & = 
            \sum_{t=1}^T f(Z_t)
            = \sum_{t=1}^T 
            \parens*{
                \epsilon + \parens*{e_{S_t} - p(Z_t)}h^f - \Delta^f(Z_t)
            }
            \\
            & \ge
            T \epsilon 
            - C - C N_{\pairs^{**}(M)^c}(T)
            + \sum_{t=1}^T \parens*{e_{S_{t+1}} - p(Z_t)} h^f
            \\
            & \overset{(\dagger)}\ge
            T \epsilon - C (1 + m') - C \sqrt{T \log\parens*{\tfrac {2T}{\delta'}}}
            \sim T \epsilon
            .
        \end{align*}
        where $(\dagger)$ holds with probability $\tfrac 12 \delta' > 0$ uniformly for $T \ge 1$, by invoking a time-uniform Azuma-Hoeffding (see \cite[Lemma~5]{bourel_tightening_2020}) to lower-bound the right-hand martingale.
        We accordingly obtain, when $T \to \infty$,
        \begin{equation}
        \label{equation_well_definition_3C}
            \Reg(T; \model^\dagger)
            \gtrsim \tfrac 12 \epsilon \delta' \Delta^*(z; M^\dagger) T 
            = \Omega(T)
            .
        \end{equation}
        So \eqref{equation_well_definition_3C} is in contradiction with the consistency of the algorithm.
    \end{proof}

    \noindent
    \STEP{4}
    \textit{
        If the algorithm is no-regret, has sub-linearly many episodes, then for all $M \in \models$ such that $\confusing(M) \ne \emptyset$, we have:
        \begin{equation}
            \Pr^M \parens*{
                \forall T, \exists t \ge T:
                g^*(M) = g^{\pi_{t-1}}(S_{t-1}, M)
                \text{~and~}
                \Reach(\pi_t, S_t, M) \cap \pairs^{-}(M) \ne \emptyset
            } = 1.
        \end{equation}
        Moreover, the stopping times $t$ enumerating times such that $g^*(S_{t-1}, M) = g^{\pi_{t-1}}(S_{t-1}, M)$ and $\Reach(\pi_t, S_t, M) \cap \pairs^-(M) \ne \emptyset$ are exploration times; Hence there are infinitely many of them with probability one.
    }
    \medskip
    \begin{proof}
        This is obtained by combining \eqref{equation_well_definition_2A} of \STEP{2} and \eqref{equation_well_definition_3_0} of \STEP{3}.
        We have:
        \begin{align}
        \label{equation_well_definition_4B}
            \Pr^M \parens*{
                \forall T, \exists t \ge T:
                g^*(S_{t-1},M) = g^{\pi_t}(S_t, M)
                \text{~and~}
                S_t \in \mathrm{Rec}(\pi_t)
            } & = 1, \text{~and}
            \\
        \label{equation_well_definition_4C}
            \Pr^M \parens*{
                \forall T, \exists t \ge T:
                \Reach(\pi_t, S_t, M) \cap \pairs^-(M) \ne \emptyset
            }
            & = 1
            .
        \end{align}
        By non-degeneracy of $M$, if $g^*(S_t, M) = g^{\pi_t}(S_t, M)$ and $S_t \in \mathrm{Rec}(\pi_t)$, then $\Reach(\pi_t, S_t, M) = \pairs^{**}(M)$ which is disjoint from $\pairs^-(M)$. 
        Define:
        \begin{align*}
            \tau_1 
            & := \inf \braces*{
                t \ge 1 : 
                g^*(S_t, M) = g^{\pi_t}(S_t, M) \text{~and~} S_t \in \mathrm{Rec}(\pi_t)
            }
            ,
            \\
            \tau_{2i}
            & := \inf \braces*{
                t > \tau_{2i-1}: 
                \Reach(\pi_t, S_t, M) \cap \pairs^-(M) \ne \emptyset
            }
            , 
            \\
            \tau_{2i+1}
            & := \inf \braces*{
                t > \tau_{2i} : 
                g^*(S_t, M) = g^{\pi_t}(S_t, M) \text{~and~} S_t \in \mathrm{Rec}(\pi_t)
            }
        \end{align*}
        Then $(\tau_i)$ is an increasing sequence of stopping times, and by \eqref{equation_well_definition_4B} \eqref{equation_well_definition_4C} applied in tandem, we show by induction that $\Pr^M(\tau_i < \infty) = 1$ for all $i \ge 1$.
        By non-degeneracy of $M$, at $t = \tau_{2i+1}$, the current policy is gain optimal and the process is currently on the optimal class $\pairs^{**}(M)$.
        Because $\pairs^{**}(M)$ is the disjoint union of sink components of $\pairs^*(M)$, hence the only way to exit $\pairs^{**}(M)$ is by playing a $z \in \pairs^-(M)$.
        Therefore, we see that for $t = \tau_{2i}$, we must have $g^*(S_{t-1}, M) = g^{\pi_{t-1}}(S_{t-1}, M)$ with $\pi_{t-1} \ne \pi_t$. 
        Accordingly, every $\tau_{2i}$ are change of episodes that are exploration episodes.
    \end{proof}

    This proves \Cref{proposition_well_defined}. 
    \hfill
    \QED

    \subsection{Instance (in)dependent regrets for MDPs with non-empty confusing sets}
    \label{appendix_confusing_implies_regret}

    In this section, we show (\textit{3.}) $\Rightarrow$ (\textit{2.}) and (\textit{4.}) $\Rightarrow$ (\textit{2.}) in \Cref{theorem_characterization_explorative} by showing the transpositions $\neg$(\textit{2.}) $\Rightarrow$ $\neg$(\textit{3.}) in \Cref{proposition_confusing_consistent} and $\neg$(\textit{2.}) $\Rightarrow$ $\neg$(\textit{4.}) in \Cref{proposition_confusing_robust}.
    In the statements below, we borrow the terminology introduced by \Cref{theorem_characterization_explorative}.
    A learning algorithm $\learner$ is said \strong{consistent} (on $\models$) if $\Reg(T; \model, \learner) = \oh(T^\epsilon)$ for all $\epsilon > 0$ and $\model \in \models$.
    A learning algorithm $\learner$ is said \strong{robust} (relatively to $\models$) if $\sup_{\model' \in \models} \Reg(T; \model', \learner) = \oh(T)$. 

    \begin{proposition}[Consistent algorithms]
    \label{proposition_confusing_consistent}
        Let $\model \in \models$ be such that $\confusing(\model) \ne \emptyset$. 
        Then every consistent learning algorithm $\learner$ satisfies $\Reg(T; \model, \learner) = \Omega(\log(T))$. 
    \end{proposition}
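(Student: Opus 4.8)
The plan is to run the classical change-of-measure lower bound of \cite{lai_asymptotically_1985,burnetas_optimal_1997}, transported to Markov decision processes through the likelihood identity already exploited in the change-of-measure step of \Cref{appendix_confusing_implies_explorative}. Fix any confusing alternative $\model^\dagger \in \confusing(\model)$, which exists by assumption, and let $\pairs^{\ne} := \braces*{\pair \in \pairs : \kerrew(\pair) \ne \kerrew^\dagger(\pair)}$ be the set of pairs on which the two models disagree. Since $\model = \model^\dagger$ on $\optpairs(\model)$ by definition of the confusing set \eqref{equation_confusing_set}, we have $\pairs^{\ne} \cap \optpairs(\model) = \emptyset$, and the log-likelihood-ratio process $L(t)$ accumulates increments only on visits to $\pairs^{\ne}$. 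By a Wald-type decomposition, $\EE^\model[L(T)] = \sum_{\pair \in \pairs^{\ne}} \EE^\model[\visits_\pair(T)] \KL(\kerrew(\pair)||\kerrew^\dagger(\pair))$, so it suffices to prove $\sum_{\pair \in \pairs^{\ne}} \EE^\model[\visits_\pair(T)] = \Omega(\log T)$ and then to convert this visit-count bound into a regret bound.

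For the visit-count bound I would use the transportation (divergence-decomposition) inequality: for every $\sigma(\History_T)$-measurable event $A$,
\begin{equation*}
    \sum_{\pair \in \pairs^{\ne}} \EE^\model[\visits_\pair(T)]\, \KL(\kerrew(\pair)||\kerrew^\dagger(\pair))
    = \EE^\model[L(T)]
    \ge \kl\parens{\Pr^\model(A),\, \Pr^{\model^\dagger}(A)},
\end{equation*}
which follows from the chain rule for relative entropy, the identity $\Pr^{\model^\dagger}(A) = \EE^\model[\indicator{A}\exp(-L(T))]$ of \cite{marjani_navigating_2021}, and the data-processing inequality for the binary partition $\braces*{A, A^c}$. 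The separating event is chosen to exploit the disjointness $\optpolicies(\model) \cap \optpolicies(\model^\dagger) = \emptyset$: any $\policy^* \in \optpolicies(\model)$ is recurrent on $\optpairs(\model)$, where the two models coincide, so $\gain^{\policy^*}(\model^\dagger) = \optgain(\model) < \optgain(\model^\dagger)$, i.e.~playing optimally in $\model$ is strictly gain-suboptimal in $\model^\dagger$. I would therefore set $A_T := \braces*{\sum_{t \le T} \ogaps(\Pair_t; \model^\dagger) \ge \tfrac 12 (\optgain(\model^\dagger) - \optgain(\model))\, T}$. Consistency on $\model^\dagger$ gives $\EE^{\model^\dagger}[\sum_{t\le T}\ogaps(\Pair_t;\model^\dagger)] = \Reg(T;\model^\dagger,\learner) = \oh(T^\epsilon)$, so Markov's inequality yields $\Pr^{\model^\dagger}(A_T) = \oh(T^{\epsilon-1})$. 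Conversely, consistency on $\model$ makes $\Pr^\model(A_T) \to 1$: a Poisson-equation bookkeeping on the reward $\indicator{\pair \notin \optpairs(\model)}$ (as in the proof of \eqref{equation_asymptotic_5} in \Cref{appendix_asymptotical}) shows that the accumulated $\model$-gap controls the time spent off $\optpairs(\model)$, so the process spends $T - \oh(T^\epsilon)$ steps on $\optpairs(\model)$; there it plays the unique optimal policy $\policy^*$ (non-degeneracy of $\model$), which accrues $\model^\dagger$-gap at the linear rate $\optgain(\model^\dagger) - \optgain(\model) > 0$. Plugging these two estimates into the transportation inequality gives $\kl(\Pr^\model(A_T),\Pr^{\model^\dagger}(A_T)) \ge (1-\oh(1))\log T$, hence $\sum_{\pair \in \pairs^{\ne}} \EE^\model[\visits_\pair(T)] = \Omega(\log T)$.

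It remains to convert this into $\Reg(T;\model,\learner) = \Omega(\log T)$. If some witness pair $\pair \in \pairs^{\ne}$ has positive Bellman gap, then $\Reg(T;\model) \ge \ogaps(\pair;\model)\,\EE^\model[\visits_\pair(T)]$ already closes the argument. The delicate case is when every witness pair lies in $\wkoptpairs(\model) \setminus \optpairs(\model)$, i.e.~is weakly optimal (zero gap) but transient under $\policy^*$. Then the witness pairs carry no regret of their own, and the $\Omega(\log T)$ visits must be charged indirectly: reaching a state transient under $\policy^*$ that many times requires leaving the recurrent class $\optpairs(\model)$ through non-$\policy^*$ actions, each of which is a positive-gap pair of $\suboptpairs(\model)$. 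I would formalise this routing cost with a Poisson-equation argument identical in spirit to the first step of \Cref{appendix_coherence_suboptimal_segments}, lower-bounding the regret of each excursion into the transient region by a model-dependent constant.

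The main obstacle is exactly this last conversion. In the bandit setting the witness is the suboptimal arm being pulled, so information visits translate directly into regret; in the MDP setting a confusing instance may differ from $\model$ only on weakly optimal transient pairs, so the $\log T$ information cost is not localized on positive-gap pairs and must instead be charged to the navigation cost of repeatedly steering the process back into the transient region. Making this charging quantitative, and uniform over all $\model^\dagger \in \confusing(\model)$ so that the constant in front of $\log T$ is genuinely positive, is where the non-degeneracy of $\model$ and the reachability structure of its optimal class are essential, and is the part I expect to require the most care.
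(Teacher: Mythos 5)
Your proposal is correct and follows the same two-step structure as the paper's proof: a change-of-measure lower bound showing that $\sum_{\pair \notin \optpairs(\model)} \EE^{\model}[\visits_\pair(T)] = \Omega(\log T)$ (which the paper simply imports as \cite[Corollary~7]{boone_regret_2025} rather than reproving via your separating event $A_T$), followed by a conversion of visits outside $\optpairs(\model)$ into regret (which the paper imports as \cite[Lemma~8]{boone_regret_2025}). The ``delicate case'' you flag at the end --- charging visits to zero-gap transient pairs to the navigation cost --- is precisely the content of that cited lemma, and the Poisson-equation argument you sketch for it is the same one the paper itself deploys in \STEP{3} of \Cref{appendix_asymptotical}, so there is no gap.
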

    \begin{proof}
        This is a consequence of \cite[Corollary~7]{boone_regret_2025}, that shows that every consistent learning algorithm $\learner$ is such that, for all $\model^\dagger \in \confusing(\model)$, we have:
        \begin{equation*}
            \EE^{\model, \learner} \brackets*{
                \sum_{\pair \in \pairs} 
                \visits_\pair(T)
                \KL(\kerrew(\pair)||\kerrew^\dagger(\pair))
            }
            \ge
            \log(T) + \oh(\log(T))
        \end{equation*}
        where $\kerrew(\pair) = (\reward(\pair), \kernel(\pair))$ is the reward-kernel tuple. 

        Fix $\model^\dagger \in \confusing(\model)$.
        Let $c := \max \braces{\KL(\kerrew(\pair)||\kerrew^\dagger(\pair)) : \kerrew(\pair) \ne \kerrew^\dagger(\pair)}$, that satisfies $c < \infty$ since $\model \ll \model^\dagger$.
        By definition of $\confusing(\model)$, $\model = \model^\dagger$ coincide on $\optpairs(\model)$, so $\KL(\kerrew(\pair)||\kerrew^\dagger(\pair)) = 0$ for all $\pair \in \optpairs(\model)$.
        Together with $\EE[\sum_{\pair \in \pairs} \visits_\pair(T) \KL(\kerrew(\pair)||\kerrew^\dagger(\pair))] \le \abs{\pairs} \max_{\pair \in \pairs} \EE[\visits_\pair(T)] \KL(\kerrew(\pair)||\kerrew^\dagger(\pair))$, we deduce that:
        \begin{equation*}
            \forall T \ge 1,
            \quad
            \max_{\pair \notin \optpairs(\model)}
            \EE^{\model, \learner} \brackets*{
                \visits_\pair (T)
            }
            \ge \frac{\log(T) + \oh(\log(T))}{\abs{\pairs}c}.
        \end{equation*}
        Now, by \cite[Lemma~8]{boone_regret_2025}, there exist constants $\alpha, \beta > 0$ such that $\EE^{\model, \learner}[\sum_{t=1}^T \indicator{\Pair_t \notin \optpairs(\model)}] \le \alpha \Reg(T; \model, \learner) + \beta$ for all $T \ge 1$, so
        \begin{equation*}
            \Reg(T; \model, \learner)
            \ge 
            \frac{
                \max_{\pair \notin \optpairs(\model)}
                \EE^{\model, \learner} \brackets*{
                    \visits_\pair (T)
                }
                - \beta
            }{
                \alpha
            }
            \ge
            \frac{
                \log(T) + \oh(\log(T))
            }{
                \abs{\pairs} c \alpha
            },
        \end{equation*}
        hence $\Reg(T; \model, \learner) = \Omega(\log(T))$. 
    \end{proof}

    \begin{proposition}[Robust algorithms]
    \label{proposition_confusing_robust}
        Let $\model \in \models$ be such that $\confusing(\model) \ne \emptyset$. 
        Then every robust learning algorithm $\learner$ satisfies $\Reg(T; \model, \learner) = \omega(1)$, i.e., $\Reg(T; \model, \learner) \to \infty$. 
    \end{proposition}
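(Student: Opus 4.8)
The plan is to argue by contradiction through a change of measure to a confusing model, reusing the machinery already assembled in \STEP{3} of the proof of \Cref{proposition_well_defined}. First I would note that the pseudo-regret $\stochasticregret(T) = \sum_{t=1}^T \ogaps(\Pair_t)$ is nondecreasing in $T$, since Bellman gaps are nonnegative on communicating models (\Cref{assumption_communicating}); hence $\Reg(T;\model,\learner)$ is nondecreasing, and the negation of $\Reg(T;\model,\learner) = \omega(1)$ is simply that $\Reg(T;\model,\learner) \le B$ for all $T$ and some finite constant $B$. I assume this for contradiction. Invoking the visit-to-regret conversion of \cite[Lemma~8]{boone_regret_2025}, bounded regret forces $\EE^{\model,\learner}[\sum_{\pair\notin\optpairs(\model)}\visits_\pair(T)] \le \alpha B + \beta =: B'$ uniformly in $T$, so by monotone convergence the expected total number of visits outside $\optpairs(\model)$ is at most $B'$. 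Markov's inequality then gives, for $m' := 2B'$, the bound $\Pr^\model(\sum_{\pair\notin\optpairs(\model)}\visits_\pair(\infty) \le m') \ge \tfrac12$.

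Next I would transport this event to a confusing model $\model^\dagger \in \confusing(\model)$, which is non-empty by hypothesis. Because $\model = \model^\dagger$ on $\optpairs(\model)$ and $\model \ll \model^\dagger$ (see \eqref{equation_confusing_set}), the log-likelihood ratio $L(t)$ of \STEP{3} satisfies $L(t) \le \log(c)\sum_{\pair\notin\optpairs(\model)}\visits_\pair(t)$ for a finite constant $c$, since the per-step contributions vanish on $\optpairs(\model)$. Writing $\event_t := \{\sum_{\pair\notin\optpairs(\model)}\visits_\pair(t) \le m'\}$, which is $\sigma(\History_t)$-measurable and decreasing in $t$, the change-of-measure identity $\Pr^{\model^\dagger}(\event_t) = \EE^\model[\indicator{\event_t}\exp(-L(t))]$ (see \cite{marjani_navigating_2021}) yields $\Pr^{\model^\dagger}(\event_t) \ge c^{-m'}\Pr^\model(\event_t) \ge \tfrac12 c^{-m'}$; letting $t\to\infty$ gives $\Pr^{\model^\dagger}(\bigcap_t \event_t) \ge \delta' := \tfrac12 c^{-m'} > 0$.

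Finally I would derive linear regret on $\model^\dagger$, contradicting robustness. On $\bigcap_t \event_t$ the process visits $\optpairs(\model)$ at all but at most $m'$ times; by non-degeneracy of $\model$, $\optpairs(\model)$ is a single irreducible recurrent class whose pairs are exactly $(\state,\policy^*(\state))$ for the unique Bellman optimal policy $\policy^*$, so the process essentially iterates $\policy^*$. Since $\policy^* \notin \optpolicies(\model^\dagger)$ while $\model^\dagger$ agrees with $\model$ on $\optpairs(\model)$, the policy $\policy^*$ is gain-suboptimal in $\model^\dagger$, and decomposing its regret rate over its stationary measure exhibits a pair $z_0 \in \optpairs(\model)$ with $\ogaps(z_0;\model^\dagger) > 0$ charged by that measure. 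Quantifying the number of visits to $z_0$ exactly as in the passage leading to \eqref{equation_well_definition_3C}—a Poisson-equation and time-uniform Azuma--Hoeffding argument applied to the indicator reward $f = \indicator{\pair = z_0}$ under $\model^\dagger$—shows that, with probability at least $\tfrac12\delta'$, $N_{z_0}(T) = \Omega(T)$, whence $\Reg(T;\model^\dagger,\learner) = \Omega(T)$. As $\model^\dagger \in \models$, this violates $\sup_{\model'\in\models}\Reg(T;\model',\learner) = \oh(T)$, completing the contradiction.

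The routine pieces (monotone convergence, Markov, the likelihood bookkeeping) are immediate, and the last step can be quoted almost verbatim from \STEP{3}. The one genuinely load-bearing point I would verify most carefully is that staying within $\optpairs(\model)$ under $\model^\dagger$ produces a \emph{linearly} growing count of the strictly suboptimal-in-$\model^\dagger$ pair $z_0$: this is exactly where non-degeneracy is used, guaranteeing that $\optpairs(\model)$ is an irreducible recurrent class with a stationary measure charging $z_0$, so that the martingale correction in the Poisson equation is $\OH(\tsqrt{T\log T})$ and negligible against the $\Omega(T)$ main term.
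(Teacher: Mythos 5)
Your argument is correct, but it takes a genuinely different route from the paper's. The paper's proof of \Cref{proposition_confusing_robust} is very short: it observes that a robust learner is in particular no-regret, cites \eqref{equation_well_definition_3_0} (established in \STEP{3} of the proof of \Cref{proposition_well_defined}) to get that sub-optimal pairs are played infinitely often almost surely, and then converts this almost-sure statement into $\Reg(T;\model,\learner)\to\infty$ by a block-counting argument: it builds deterministic times $T_k$ such that each window $\braces{T_k,\ldots,T_{k+1}-1}$ contains a sub-optimal play with probability at least $\tfrac 12$, so each window contributes at least $\tfrac c2$ to the expected regret. No new change of measure is performed, and the full strength of robustness is never used beyond no-regret. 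You instead assume bounded regret, convert it to a uniform bound on the expected number of visits outside $\optpairs(\model)$ via the cited visit-to-regret lemma, apply Markov's inequality, and then re-run the change-of-measure argument of \STEP{3} to exhibit $\Omega(T)$ regret on some $\model^\dagger\in\confusing(\model)$, contradicting $\sup_{\model'}\Reg(T;\model')=\oh(T)$. Your route is self-contained modulo \STEP{3}'s machinery and correctly identifies the load-bearing point (the recurrence of $\optpairs(\model)$ under $\model^\dagger$ and the existence of a charged pair with $\ogaps(z_0;\model^\dagger)>0$, which uses non-degeneracy exactly as the paper does); its only cost is that it genuinely needs the supremum over $\models$ in the robustness hypothesis, whereas the paper's argument establishes the same conclusion for every no-regret learner.
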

    \begin{proof}
        This is a direct consequence of (\textbf{STEP 3}) of the proof of \Cref{proposition_well_defined}, see \eqref{equation_well_definition_3_0}.
        Indeed, robust algorithms are by no-regret.
        So, by \eqref{equation_well_definition_3_0}, the function given by
        \begin{equation*}
            f(T)
            := 
            \inf_{T' \ge T}
            \braces*{
                \Pr^{\model, \learner} \parens[\big]{
                    \exists t \in \braces{T, \ldots, T'-1}
                    :
                    \ogaps(\Pair_t) > 0
                }
                \ge 
                \frac 12
            }
        \end{equation*}
        satisfies $T + 1 \le f(T) < \infty$ for all $T \ge 0$. 
        Introduce the (deterministic) sequence $T_1 := 1$ and $T_{k+1} := f(T_k)$, and introduce its pseudo-inverse $g(T) := \sup \braces{k \ge 1: T_{k+1} \le T}$. 
        Since $f(T) < \infty$, we have $T_k \to \infty$ and $g(T) \to \infty$.
        For $T \ge T_2$, we have:
        \begin{align*}
            \Reg(T; \model, \learner)
            & =
            \EE^{\model, \learner} \brackets*{
                \sum_{t=1}^T
                \ogaps(\Pair_t)
            }
            \\
            & \ge 
            \sum_{k=1}^{g(T)}
            \EE^{\model, \learner} \brackets*{
                \sum_{t=T_k}^{T_{k+1}-1}
                \ogaps(\Pair_t)
            }
            \\
            & \overset{(\dagger)}\ge 
            \sum_{k=1}^{g(T)} 
            c \Pr^{\model, \learner} \parens[\big]{
                \exists t \in \braces{T_k, \ldots, T_{k+1}-1}
                :
                \ogaps(\Pair_t) > 0
            }
            \overset{(\ddagger)}\ge
            \frac {c \cdot g(T)}2
        \end{align*}
        where $c := \min \braces{\ogaps(\pair): \ogaps(\pair) > 0} > 0$ is the minimum positive Bellman gap.
        We have $\frac {cg(T)}2 \to \infty$ when $T \to \infty$, hence the conclusion.
    \end{proof}

    \subsection{MDPs with empty confusing sets are non-explorative}
    \label{appendix_empty_confusing_implies_non_explorative}

    In this section, we show that if $\confusing(\model) = \emptyset$, then $\model$ is non-explorative, there exists a consistent learning algorithm $\learner$ such that $\Reg(T; \model, \learner) = \oh(\log(T))$ and there exists a robust learning algorithm $\learner'$ such that $\Reg(T; \model, \learner') = \OH(1)$.
    This corresponds to (\textit{2.}) $\Rightarrow$ (\textit{1.}), (\textit{3.}), and (\textit{4.}) in \Cref{theorem_characterization_explorative} hence completing the proof of all the equivalences. 
    The implication (\textit{2.}) $\Rightarrow$ (\textit{4.}), stating the existence of a robust learning algorithm $\learner$ with $\Reg(T; \model, \learner) = \OH(1)$ is done first, with \Cref{proposition_confusing_empty_robust}. 
    The proof is constructive, as we introduce a biased variant of \texttt{KLUCRL} \cite{filippi_optimism_2010} that is specialized to have bounded regret on $\model$, see \Cref{algorithm_biased_robust_klucrl}. 
    We prove (\textit{2.}) $\Rightarrow$ (\textit{1.}), i.e., that $\model$ is non-explorative, in \Cref{proposition_confusing_empty_non_explorative} and with the same algorithm. 
    For (\textit{2.}) $\Rightarrow$ (\textit{1.}) and the proof of the existence of a consistent learning algorithm $\learner$ such that $\Reg(T; \model, \learner) = \oh(\log(T))$, we provide the construction of the algorithm and simply sketch the proof.

    \subsubsection{A robust algorithm specialized to a non-explorative model}
    \label{appendix_confusing_empty_robust}

    We begin by providing a robust algorithm $\learner$ such that $\Reg(T; \model, \learner) = \OH(1)$ for $\model$ specifically. 

    \begin{proposition}
    \label{proposition_confusing_empty_robust}
        Consider a \strong{convex} ambient space $\models^* \equiv \product_{\pair \in \pairs} (\rewards^*_\pair \times \kernels^*_\pair)$ in product form and let $\model \in \models^*$ be non-degenerate.
        If $\confusing(\model) = \emptyset$, there exists a learning algorithm $\learner$ that 
        (1)~is robust,
        (2)~makes sub-linearly many episodes and
        (3) satisfies $\Reg(T; \model, \learner) = \OH(1)$.
    \end{proposition}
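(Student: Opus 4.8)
The plan is to construct \Cref{algorithm_biased_robust_klucrl} as the variant of \texttt{KLUCRL} that, at every episode start $t_k$, runs \texttt{EVI} on the usual confidence region \eqref{equation_confidence_region} intersected with $\models^*$, but with the mean reward of every suboptimal pair $\pair \in \suboptimalpairs(\model)$ lowered by a vanishing bias $b(t_k)$ chosen so that $b(t) = \oh(1)$ while $\tsqrt{\log(t)/t} = \oh(b(t))$ (e.g.\ $b(t) := 1/\log(t)$), with ties broken in favor of $\optpolicy$. The first thing I would prove is the structural consequence of emptiness of the confusing set: since $\models^*$ is convex and $\confusing(\model) = \emptyset$, every $\model' \in \models^*$ agreeing with $\model$ on $\optpairs(\model)$ has $\optgain(\model') = \optgain(\model)$ and admits $\optpolicy$ as an optimal policy. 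Indeed $\gain^{\optpolicy}(\model') = \gain^{\optpolicy}(\model) = \optgain(\model)$ because $\optpolicy$ only visits $\optpairs(\model)$; if some such $\model'$ had $\optgain(\model') > \optgain(\model)$, then $\optpolicy$ would be suboptimal on $\model'$, and since $\optpolicies(\model) = \{\optpolicy\}$ by non-degeneracy (\Cref{definition_non_degeneracy}), either $\model' \in \confusing(\model)$ directly (when $\model \ll \model'$), or the mixture $\model'_\lambda := \lambda \model + (1-\lambda)\model' \in \models^*$ satisfies $\model \ll \model'_\lambda$, agrees with $\model$ on $\optpairs(\model)$, and, by continuity of $\optgain$, has $\optgain(\model'_\lambda) > \optgain(\model)$ for small $\lambda > 0$, hence lies in $\confusing(\model)$; both cases contradict \eqref{equation_confusing_set} being empty. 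In particular $\gain^{\policy}(\model') \le \optgain(\model)$ for every policy $\policy$.

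From this I would derive the quantitative corollary that drives everything: on the event $\model \in \models(t)$ with each optimal pair sampled $\Omega(t)$ times, the optimistic gain over $\models(t)$ of any policy reaching a suboptimal pair exceeds $\optgain(\model)$ by at most $\OH(\tsqrt{\log(t)/t})$. Given a near-maximizing competitor $\model'' \in \models(t)$, I replace its optimal-pair parameters by those of $\model$; the resulting model agrees with $\model$ on $\optpairs(\model)$, so the structural lemma bounds $\policy$'s gain there by $\optgain(\model)$, and convexity lets me treat reduced-support $\model''$ as limits of absolutely-continuous models so that the bound passes to the limit. Then \Cref{lemma_gain_deviations} controls the remaining $\OH(\tsqrt{\log(t)/t})$ perturbation of the optimal-pair parameters, with a Lipschitz constant that is uniform because bias spans are bounded by $\diameter(\model)$ on $\model \in \models(t)$. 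Since the reward bias removes at least $c\,b(t)$ from the biased gain of any suboptimal-reaching policy (it spends a fraction $c > 0$ of its time on $\suboptimalpairs(\model)$) and $b(t) \gg \tsqrt{\log(t)/t}$, the biased optimistic gain of every such policy eventually falls strictly below $\optgain(\model) \le \gain^{\optpolicy}(\models(t))$, forcing \texttt{EVI} to return $\optpolicy$.

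For robustness and the episode count, episodes are ended by \eqref{equation_doubling_trick}, so \Cref{lemma_number_episodes} with $f \equiv 1$ gives $\abs{\episodes(T)} = \OH(\abs{\pairs}\log(T))$. Because $b(t) = \oh(1)$ perturbs the reward vector fed to \texttt{EVI} by a vanishing amount, the optimistic gains differ from the unbiased ones by $\oh(1)$, so rerunning the decomposition of \Cref{appendix_minimax} — the only new contribution being a term $\sum_{t \le T} b(t) = \oh(T)$ — yields $\sup_{\model' \in \models^*} \Reg(T; \model') = \oh(T)$, establishing robustness; and the vanishing bias cannot trap the algorithm on a suboptimal policy on any $\model'$, since a genuinely better policy retains a constant optimistic advantage that eventually dominates $b(t) \to 0$.

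Finally, for $\Reg(T; \model) = \OH(1)$ I would bound $\EE^{\model}\bigl[\#\{t \le T : \Pair_t \in \suboptimalpairs(\model)\}\bigr]$ by a constant, mimicking the computation \eqref{appendix:equation:toto24} in the proof of \Cref{proposition_bounded_regret}: a suboptimal pair is visited at $t$ only if the policy chosen at the enclosing episode start $t_k \ge t/3$ reaches a suboptimal pair, which by the previous paragraph forces either a failure of $\model \in \models(t_k)$ or a sub-linear sampling of some optimal pair at $t_k$; both have time-uniform, summable tails via \Cref{lemma_confidence_region} and the maximal inequalities \Cref{lemma_hoeffding_maximal,lemma_weissman_maximal}, so summing over $t$ and the nearby $t_k$ leaves a finite expected count and hence $\OH(1)$ regret. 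The main obstacle I anticipate is the apparent circularity between ``optimal pairs are sampled linearly'' (needed for the shrinking bound) and ``suboptimal pairs are visited finitely often'' (the goal); I expect to break it exactly as in \Cref{proposition_bounded_regret}, by lower-bounding optimal-pair visits on the high-probability event that $\optpolicy$ is played on all but finitely many steps and charging the residual event to the same summable tails, and by using the doubling-trick relation $t_{k+1} \le 3 t_k$ to localize each bad commitment to a nearby episode start so that the resulting double sum converges to a constant rather than to $\OH(\tsqrt{T})$.
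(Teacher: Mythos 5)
Your structural use of the empty confusing set (convexity plus the mixture $(1-\lambda)\model' + \lambda\model$ to restore $\model \ll \model'$, forcing $\optgain(\model') = \optgain(\model)$ for every $\model' \in \models^*$ agreeing with $\model$ on $\optpairs(\model)$) is exactly the right ingredient and matches the paper's key step. But the circularity you flag at the end is not actually broken by your plan. Your quantitative corollary needs every pair of $\optpairs(\model)$ sampled $\Omega(t)$ times, and in \Cref{proposition_bounded_regret} the analogous bound $\visits_{0,*}(t) \ge t/5$ holds \emph{almost surely under any policy} because of the special topology of that example (every cycle passes through state $0$). For a general non-degenerate $\model$ there is no such unconditional lower bound: a run that commits early to suboptimal policies may never sample $\optpairs(\model)$ enough for your shrinking bound to activate, and nothing in your algorithm prevents this. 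The paper resolves it by \emph{forced exploration}: each epoch $\braces{2^m, \ldots, 2^{m+1}-1}$ begins with $2^{2m/3}$ forced iterations of $\optpolicy$, which unconditionally yields $\visits_\pair(t) = \Omega(t^{2/3})$ on $\optpairs(\model)$ at sublinear regret cost, and the resulting confidence width $\OH(\sqrt{\log t}\,t^{-1/3})$ is compared to a threshold $\log(t)t^{-1/3}$. Without some such device your key hypothesis is simply not established.

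The second gap is your selection mechanism. Penalizing the rewards on $\suboptimalpairs(\model)$ by $b(t)$ and running plain \texttt{EVI} only helps if the penalty depresses a competitor's optimistic gain by $\Omega(b(t))$, and your justification --- that such a policy ``spends a fraction $c>0$ of its time on $\suboptimalpairs(\model)$'' --- refers to the stationary measure of the \emph{optimistic} extended model, not of $\model$. Since \Cref{assumption_interior} is not assumed here, the optimistic kernels can route around $\suboptimalpairs(\model)$ with vanishing stationary mass, the penalty then removes only $\oh(b(t))$, and \texttt{EVI} may return a policy that on the true $\model$ visits suboptimal pairs at constant frequency. The paper sidesteps this entirely by not penalizing rewards: its $\texttt{EVI-b}_{\optpolicy}$ simply compares $\optgain(\models(t;\models^*))$ to $\gain^{\optpolicy}(\models(t;\models^*))$ and returns $\optpolicy$ whenever the gap is below the threshold, which is precisely the quantity your corollary controls. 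Adopting the forced-exploration epochs and this threshold-comparison form of the biased \texttt{EVI} would let the rest of your outline (episode count, $\oh(T)$ robustness, summing geometric epoch-failure probabilities to get $\OH(1)$) go through.
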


    The algorithm that we consider is a variant of \texttt{KLUCRL} managing episodes with \eqref{equation_doubling_trick}, that is specialized for $\model$. 
    Also, we have to take into account that $\models^*$ may not be the whole set of Markov decision processes with pair space $\pairs$, i.e., we may have $\models^* \ne \product_{\pair \in \pairs} ([0, 1] \times \probabilities(\states))$. 
    It must be taken into account by the learning algorithm, as the property ``$\confusing(\model) = \emptyset$'' depends on $\models^*$---by definition \eqref{equation_confusing_set}, $\confusing(\model) \subseteq \models^*$ so if one increases $\models^*$ to $\models'$, the confusing set of $\model$ relatively to $\models'$ may become non-empty. 
    So, the confidence region of \texttt{KLUCRL}, $\models(t)$, is constrained to $\models^*$ to eventually exploit that $\confusing(\model) = \emptyset$ relatively to $\models^*$. 

    \paragraph{Notations}
    We introduce the natural optimal policy of $\model$, given by $\optpolicy(\state) = \action$ where $\action \in \actions(\state)$ is the unique element such that $(\state, \action) \in \weakoptimalpairs(\model)$.
    Further introduce:
    \begin{equation}
    \nonumber
    \begin{gathered}
        \rewards_\pair(t; \models^*)
        :=
        \braces[\big]{
            \tilde{\reward}_\pair \in \rewards^*_\pair
            :
            \visits_\pair(t)
            \KL(\hat{\reward}_\pair(t)||\tilde{\reward}_\pair)
            \le
            \log(2t) + \log\parens*{e(1 + \visits_\pair(t))}
        }
        \\
        \kernels_\pair(t; \models^*)
        :=
        \braces*{
            \tilde{\kernel}_\pair \in \kernels^*_\pair
            :
            \visits_\pair(t)
            \KL(\hat{\kernel}_\pair(t)||\tilde{\kernel}_\pair)
            \le
            \log(2t) + \abs{\states} \log\parens*{e\parens*{1 + \frac{\visits_\pair(t)}{\abs{\states}-1}}}
        }
    \end{gathered}
    \end{equation}
    Note that unlike \eqref{equation_confidence_region} of the vanilla \texttt{KLUCRL}, $\log(t)$ is changed to $\log(2t)$. 
    This is done so that $\Pr(\exists t \ge T: \model \notin \models(t; \models^*)) = \OH(T^{-2})$ instead of $\OH(T^{-1})$ as in the vanilla version. 
    The confidence region for $\optpolicy$ is $\models_{\optpolicy}(t; \models^*) := \product_{\state \in \states} (\rewards_{\state, \optpolicy(\state)}(t; \models^*) \times \kernels_{\state, \optpolicy(\state)}(t; \models^*))$. 
    Similarly to the whole confidence region $\models(t; \models^*)$, it can be seen as a Markov decision process with compact action space by extending its action space (see \Cref{appendix_evi}) and \texttt{EVI} can be run on $\models_{\optpolicy}(t; \models^*)$ to compute the optimistic gain of $\optpolicy$, written $\gain^{\optpolicy}(\models(t;\models^* ))$. 

    \paragraph{Idea of the algorithm}
    The designed algorithm is working by epochs of doubling sizes.
    Given an epoch $\braces{2^m, \ldots, 2^{m+1}-1}$, it starts by iterating $\optpolicy$ $(2^m)^{2/3}$ times in a row. 
    After that initial phase, the algorithms runs an altered version of \texttt{KLUCRL} that uses \texttt{EVI} specifically biased for $\optpolicy$, that, when several policies are nearly optimistically optimal, prioritizes $\optpolicy$. 

    \noindent
    \begin{minipage}[t]{.49\linewidth}
        \begin{algorithm}[H]
            \begin{algorithmic}[1]
                \FOR{epochs $m = 0, 1, 2, \ldots$}
                    \STATE Iterate $\optpolicy$ for $t = 2^m, \ldots, 2^m + 2^{2m/3}$;
                    \FOR{$t = 2^m + 2^{2m/3}, \ldots, 2^{m+1}$}
                        \IF{\eqref{equation_doubling_trick} triggers \strong{or} $t = 2^m + 2^{2m/3}$}
                            \STATE $k \gets k+1$, $t_k \gets t$;
                            \STATE $\policy_{t_k} \gets \texttt{EVI-b}_{\optpolicy}(\models(t; \models^*), t)$;
                        \ENDIF
                        \STATE Set $\policy_t \gets \policy_{t_k}$ and play $\Action_t \gets \policy_t(\State_t)$.
                    \ENDFOR
                \ENDFOR
            \end{algorithmic}
            \caption{
                \label{algorithm_biased_robust_klucrl}
                \texttt{KLUCRL}$(\optpolicy, \models^*)$
            }
        \end{algorithm}
    \end{minipage}%
    \hfill%
    \begin{minipage}[t]{.49\linewidth}
        \begin{algorithm}[H]
            \begin{algorithmic}[1]
                \STATE Compute $\tilde{\policy} \gets \texttt{EVI}(\widetilde{\models})$;
                \STATE Compute $\tilde{\gain}^* \gets \gain^*(\widetilde{\models})$;
                \STATE Compute $\tilde{\gain}^{\policy} \gets \gain^{\policy}(\widetilde{\models})$;
                \IF{$\tilde{\gain}^* > \tilde{\gain}^{\policy} + \frac{\log(t)}{t^{1/3}}$}
                    \RETURN $\tilde{\policy}$;
                \ELSE
                    \RETURN $\policy$.
                \ENDIF
            \end{algorithmic}
            \caption{
                \label{algorithm_biased_robust_evi}
                $\texttt{EVI-b}_\policy(\widetilde{\models}, t)$
            }
        \end{algorithm}
    \end{minipage}

    \par
    \bigskip
    \def\proofname{Proof of \Cref{proposition_confusing_empty_robust}}
    \begin{proof}
        \def\proofname{Proof}
        Proving that the algorithm is robust on $\models^*$ follows a similar line that \Cref{appendix_minimax}, that we won't detail here.
        The idea is that the forced exploration with $\optpolicy$ last for at most $T^{2/3}$ time steps of the learning process, accounting for a regret of order $T^{2/3}$ if $\optpolicy$ is not optimal. 
        Later, the algorithm deploys policy that are $\frac{\log(T)}{T^{1/3}}$-optimistically optimal, inducing an extra cost of $T^{2/3} \log^2(T)$ compared to the vanilla analysis of \texttt{KLUCRL}.
        Therefore, the model independent regret is \texttt{KLUCRL}$(\optpolicy, \models^*)$ is of order $T^{2/3} \log^2(T) = \oh(T)$. 

        Meanwhile, \texttt{KLUCRL}$(\optpolicy, \models^*)$ makes $\OH(\log(T))$ episodes: one for each epoch when playing $\optpolicy$, and the others are triggered by \eqref{equation_doubling_trick} that is known to produce logarithmically many episodes, see \cite{auer_near_optimal_2009} or \Cref{appendix_number_episodes}.

        Last but not least, we argue that $\Reg(T; \model) = \OH(1)$.
        Because this is an instance dependent result, the argument is different from \Cref{appendix_minimax}.
        The idea is to show that 
        \begin{equation}
        \label{equation_proof_confusing_non_explorative_1}
            \Pr^\model\parens[\big]{
                \exists t \in \braces*{2^m, \ldots, 2^{m+1}-1}
                :
                \policy_t \ne \optpolicy
            } = 
            \OH\parens*{4^{-m}}
            .
        \end{equation}
        Following \eqref{equation_proof_confusing_non_explorative_1}, we have:
        \begin{align*}
            \Reg(T; \model)
            & \le
            \sum_{m=0}^\infty
            \EE^{\model} \brackets*{
                \sum_{t=2^m}^{2^{m+1}-1}
                \ogaps(\Pair_t)
            }
            \\
            & \le 
            \sum_{m=0}^\infty
            2^m \max(\ogaps) \Pr^\model\parens[\big]{
                \exists t \in \braces*{2^m, \ldots, 2^{m+1}-1}
                :
                \policy_t \ne \optpolicy
            }
            \\
            & \overset{(\dagger)}\le
            \max(\ogaps) \sum_{m=0}^\infty \OH\parens*{2^{-m}}
            =
            \OH(1)
        \end{align*}
        where $(\dagger)$ follows from \eqref{equation_proof_confusing_non_explorative_1}.
        We now explain how \eqref{equation_proof_confusing_non_explorative_1} is established. 

        \bigskip
        \par
        \noindent
        \STEP{1}
        \textit{
            There exists $c > 0$ such that, for $m$ large enough and $\pair \in \optpairs(\model)$, we have:
            \begin{equation*}
                \Pr^\model \parens*{
                    \exists t \in \braces{2^m + 2^{2m/3}, \ldots, 2^{m+1}-1}
                    :
                    \visits_\pair (t) < c 2^{2m/3}
                }
                = 
                \OH(4^{-m})
                .
            \end{equation*}
        }
        \begin{proof}
            The recurrent pairs of $\optpolicy$ are precisely $\optpairs(\model)$, i.e., $\Pr^{\model, \optpolicy}\braces{\forall m, \exists n \ge m: \Pair_n = \pair} = 1$. 
            Because the state space is finite, it follows that $\min_{\pair \in \optpairs(\model) }\EE^{\model, \optpolicy}[\visits_\pair(t)] \ge c t$ for some $c > 0$ when $t \to \infty$. 
            It means that under $\optpolicy$, every optimal pair is visited linearly many times in expectation. 
            Fixing $\pair_0 \in \optpairs(\model)$ and setting $f(\pair) = \indicator{\pair = \pair_0}$, we show that $\visits_{\pair_0}(t) \ge c t$ holds in probability as well.
            This is done as follows.
            Seeing $f$ as a reward function, $\optpolicy$ has an associated gain and bias functions that we denote $\gain^f$ and $\bias^f$. 
            These satisfy a Poisson equation $f(\state, \policy(\state)) + \kernel(\state, \policy(\state)) \bias^f = \gain^f(\state) + \bias^f(\state)$.
            By non-degeneracy, $\optpolicy$ is unichain so $\vecspan{\gain^f} = 0$, and we see that $\gain^f(\state) \ge c$ for all $\state \in \states$. 
            We continue as follows: If we only iterate $\optpolicy$, then
            \begin{align*}
                \visits_{\pair_0} (t)
                & = \sum_{i=1}^t f(\Pair_t)
                \\
                & =
                \sum_{i=1}^t 
                \parens*{
                    \gain^f(\State_t) + \bias^f(\State_t) - \kernel(\Pair_t) \bias^f
                }
                \ge
                c t - \vecspan{\bias^f}
                + \sum_{i=1}^t \parens*{e_{\State_{t+1}} - \kernel(\Pair_t)} \bias^f
                .
            \end{align*}
            The RHS is a martingale and each term is almost surely bounded by $\vecspan{\bias^f}$. 
            By Azuma-Hoeffding's inequality, it is therefore bounded by $\vecspan{\bias^f}\sqrt{t \log(\alpha t)/2} = \oh(t)$ with probability $1 - \frac 1{t^\alpha}$.
            So, provided that $t$ is large enough, we have $\visits_\pair (t) \ge \frac 12 ct$ with probability $1 - \frac 1{t^\alpha}$.

            Now, we know that on the time range $\braces{2^m, \ldots, 2^m + 2^{2m/3}}$, the algorithm exclusively iterates $\optpolicy$, so $\optpolicy$ is iterate $t = 2^{2m/3}$ times. 
            Pick $\alpha = 3$.
        \end{proof}

        \par
        \noindent
        \STEP{2}
        \textit{
            There exists $C > 0$ such that for all $\pair \in \optpairs(\model)$, we have
            \begin{align*}
                & \Pr^\model \parens*{
                    {
                        \exists t \in \braces{2^m + 2^{2m/3}, \ldots, 2^{m+1}-1}
                        \atop
                        \exists \widetilde{\models}_t \equiv (\pairs, \tilde{\reward}_t, \tilde{\kernel}_t) \in \models(t; \models^*)
                    }
                    :
                    \norm*{\tilde{\kernel}_t(\pair) - \kernel(\pair)}_1
                    +
                    \abs*{\tilde{\reward}_t(\pair) - \reward(\pair)}
                    >
                    \frac{C \sqrt{\log(t)}}{t^{1/3}}
                }
                \\
                & =
                \OH \parens*{4^{-m}}
            \end{align*}
        }
        \begin{proof}
            Fix $\pair \in \optpairs(\model)$.
            By \STEP{1}, we know that $\visits_\pair (t) > c2^{2m/3}$ with probability $1 - \OH(4^{-m})$ uniformly for $t \in \braces{2^m + 2^{2m/3}, \ldots, 2^{m+1}-1}$.
            Using \cite[Proposition~1]{jonsson2020planning} we find that for $m$ large enough, 
            \begin{equation*}
                \Pr^\model \parens*{
                    \exists t \ge 1
                    :
                    \visits_\pair(t) \ge c 2^{2m/3}
                    \text{~and~}
                    \KL(\hat{\reward}_t(\pair)||\reward(\pair))
                    >
                    \frac{\log(2e \cdot 4^m)}{c 2^{2m/3}}
                }
                \le
                4^{-m}
                .
            \end{equation*}
            So, by Pinsker's inequality, it follows that for $m$ large enough, we have $\abs{\hat{\reward}_t(\pair) - \reward(\pair)} \le C_r \sqrt{\log(t)} t^{-1/3}$ with probability $1 - \OH(4^{-m})$ uniformly for $t = 2^m + 2^{2m/3}, \ldots, 2^{m+1}-1$, where $C_r > 0$ is some constant. 
            Now, by design of the confidence region $\rewards_\pair(t; \models^*)$, every $\tilde{\reward}_t(\pair) \in \rewards_\pair(t; \models^*)$ satisfies $\abs{\tilde{\reward}_t(\pair) - \hat{\reward}_t(\pair)} \le \frac 1{\visits_\pair(t)} \log(4et)$. 
            By triangular inequality, we conclude that, for $\pair \in \optpairs(\model)$, the inequality
            \begin{equation*}
                \abs*{\tilde{\reward}_t(\pair) - \reward(\pair)} 
                \le 
                \frac{C \sqrt{\log(t)}}{t^{1/3}}
            \end{equation*}
            holds uniformly for $t =2^m + 2^{2m/3}, \ldots, 2^{m+1}-1$ with probability $1 - \OH(4^{-m})$. 
            Transition kernels are treated similarly. 
        \end{proof}

        \noindent
        \STEP{3}
        \textit{
            There exists a constant $C > 0$ such that
            \begin{align*}
                & 
                \Pr \parens*{
                    \exists t =2^m + 2^{2m/3}, \ldots, 2^{m+1}-1
                    :
                    \gain^{\optpolicy}(\models(t; \models^*))
                    +
                    \frac{C \sqrt{\log(t)}}{t^{1/3}}
                    \le
                    \optgain(\models(t; \models^*))
                }
                \\
                & = \OH(4^{-m})
                .
            \end{align*}
        }
        \begin{proof}
            This is where we use that $\confusing(\model) = \emptyset$.

            Let $\event_m$ be the event under which $\model \in \models(t; \models^*)$ and $\norm*{\tilde{\kernel}_t(\pair) - \kernel_\pair}_1 + \abs{\tilde{\reward}_t(\pair) - \reward(\pair)} \le C \sqrt{\log(t)}{t^{-1/3}}$ for all optimal pair $\pair \in \optpairs(\model)$, uniformly for $t = 2^{m} + 2^{2m/3}, \ldots, 2^{m+1}-1$ and $\tilde{\model}_t \equiv (\pairs, \tilde{\reward}_t, \tilde{\kernel}_t) \in \models(t; \models^*)$, where $C > 0$ is given by \STEP{2}.
            Then, following \STEP{2} and the design of $\models(t; \models^*)$, we have $\Pr(\event_m) = 1 - \OH(4^{-m})$. 

            Fix $t \in \braces{2^m + 2^{2m/3}, \ldots, 2^{m+1}-1}$ and let $\tilde{\model} \in \models(t; \models^*)$ be a plausible model. 
            Since $\models^*$ is convex, we can assume that $\tilde{\model} \gg \model$ up to changing $\tilde{\model}$ to $(1 - \lambda) \tilde{\model} + \lambda \model$ for some arbitrarily small $\lambda > 0$. 
            We show that, on $\event_m$, 
            \begin{equation}
            \label{equation_proof_confusing_non_explorative_1_5}
                \gain^{\optpolicy}(\model) 
                + 
                \frac{C_g\sqrt{\log(t)}}{t^{1/3}}
                \ge 
                \optgain(\tilde{\model}) 
            \end{equation}
            for $C_g > 0$ some constant, independent from $\tilde{\model}$ and $m$. 
            Since, on $\event_m$ again, we further have $\gain^{\optpolicy}(\models(t; \models^*)) \ge \optgain(\model)$, the result will follow from $\Pr(\event_m) = 1 - \OH(4^{-m})$. 

            We now show \eqref{equation_proof_confusing_non_explorative_1_5}.
            Let $\model \cup \tilde{\model}$ be the Markov decision process with states $\states$ where the choice of an action from $\state$ consists in (1) choosing $\action \in \actions(\state)$ in the vanilla sense and (2) choosing whether the transition is made using $(\reward(\state, \action), \kernel(\state, \action))$ or $(\tilde{\reward}(\state, \action), \tilde{\kernel}(\state, \action))$. 
            Note that $\model \cup \tilde{\model}$ is still a MDP with finite action space, that $\optgain(\model \cup \tilde{\model}) \ge \max\braces{\optgain(\model), \optgain(\tilde{\model})}$ and $\diameter(\model \cup \tilde{\model}) \le \diameter(\model)$. 
            In particular, $\model \cup \tilde{\model}$ is communicating and $\vecspan{\optbias(\model \cup \tilde{\model})} \le \diameter(\model)$. 
            Using \texttt{EVI} on $\model \cup \tilde{\model}$ (see \Cref{appendix_evi}) to compute its optimal gain, we extract a MDP $\model^\dagger$ such that $\optgain(\model^\dagger) = \optgain(\model \cup \tilde{\model})$ and $\vecspan{\optbias(\model^\dagger)} \le \diameter(\model)$. 
            By construction, $\model^\dagger$ is a blend of $\model$ and $\tilde{\model}$, in the sense that $\reward^\dagger(\pair) \in \braces{\reward(\pair), \tilde{\reward}(\pair)}$ and $\kernel^\dagger(\pair) \in \braces{\kernel(\pair), \tilde{\kernel}(\pair)}$. 
            Let $\model^\dagger_0$ be obtained from $\model^\dagger$ by setting 
            \begin{equation*}
                \model^\dagger_0 (\pair)
                :=
                \begin{cases}
                    \model(\pair) & \text{if~} \pair \in \optpairs(\model);
                    \\
                    \model^\dagger (\pair) & \text{if~} \pair \notin \optpairs(\model).
                \end{cases}
            \end{equation*}
            Note that since $\models^*$ is in product form and $\models(t; \models^*) \subseteq \models^*$, we have $\model_0^\dagger \in \models^*$. 
            It follows that $\model^\dagger_0 \in \models^*$ and $\model^\dagger_0 = \model$ on $\optpairs(\model)$.
            Moreover, since $\tilde{\model} \gg \model$, we have $\model^\dagger \gg \model$ and $\model^\dagger_0 \gg \model$.
            So, because $\confusing(\model) = \emptyset$, we have $\optpolicy \in \optpolicies(\model_0^\dagger)$. 
            Accordingly,
            \begin{equation}
                \gain^{\optpolicy}(\model^\dagger_0) 
                = \optgain(\model^\dagger_0)
                = \optgain(\model)
            \end{equation}
            where the second equality follows from the observation that $\model^\dagger_0$ is a copy of $\model$ on $\optpairs(\model)$. 
            Now, by construction $\event_m$, we know that the width of the confidence region is $\OH(\sqrt{\log(t)} t^{-1/3})$ on pairs of $\optpairs(\model)$. 
            Using the gain deviation inequality of \Cref{lemma_gain_deviations}, we conclude that on $\event_m$, 
            \begin{equation}
            \begin{gathered}
                \norm*{
                    \gain^{\optpolicy}(\model)
                    -
                    \gain^{\optpolicy}(\model^\dagger)
                }_\infty
                \le
                \parens*{1 + \vecspan{\optbias(\model^\dagger)}}
                \OH \parens*{
                    \frac{\sqrt{\log(t)}}{t^{1/3}}
                }
                =
                \OH \parens*{
                    \frac{\sqrt{\log(t)}}{t^{1/3}}
                }
                \\
                \norm*{
                    \optgain(\model^\dagger_0)
                    -
                    \optgain(\model^\dagger)
                }_\infty
                \le
                \parens*{1 + \vecspan{\optbias(\model^\dagger)}}
                \OH \parens*{
                    \frac{\sqrt{\log(t)}}{t^{1/3}}
                }
                =
                \OH \parens*{
                    \frac{\sqrt{\log(t)}}{t^{1/3}}
                }
            \end{gathered}
            \end{equation}
            where every $\OH(-)$ hides constants that are independent of $\tilde{\model}$ and $m$. 
            Since $\optgain(\model^\dagger) \ge \optgain(\tilde{\model})$, we conclude accordingly. 
        \end{proof}

        Finally, \eqref{equation_proof_confusing_non_explorative_1} is an immediate consequence of (\STEP{3}). 
        (\STEP{3}) states that, uniformly for $t = 2^m + 2^{2m/3}, \ldots, 2^{m+1}-1$, we have:
        \begin{equation}
        \label{equation_proof_confusing_non_explorative_2}
            \gain^{\optpolicy}(\models(t; \models^*))
            +
            \frac{C \sqrt{\log(t)}}{t^{1/3}}
            >
            \optgain(\models(t; \models^*))
        \end{equation}
        with probability $1 - \OH(4^{-m})$, where $C > 0$ is some constant independent of $m$. 
        Now, by design of $\texttt{EVI-b}_{\optpolicy}$ (\Cref{algorithm_biased_robust_evi}), if at $t = t_k \in \braces{2^m + 2^{2m/3}, \ldots, 2^{m+1}}$, we have $\gain^{\optpolicy}(\models(t; \models^*)) + \log(t) t^{-1/3} \ge \optgain(\models(t; \models^*))$, then \texttt{EVI-b}$_{\optpolicy}$ outputs $\optpolicy$. 
        So, on the event prescribed by \eqref{equation_proof_confusing_non_explorative_2} and for $t \ge \exp(C^2)$, $\texttt{EVI-b}_{\optpolicy}$ outputs $\optpolicy$. 
        Hence \eqref{equation_proof_confusing_non_explorative_1}.
    \end{proof}
    \def\proofname{Proof}

    \subsubsection{Models with non-empty confusing set are non-explorative}

    With the same algorithm, we show that non-degenerate Markov decision processes with empty confusing set are non-explorative.

    \begin{proposition}
    \label{proposition_confusing_empty_non_explorative}
        Consider a \strong{convex} ambient space $\models^* \equiv \product_{\pair \in \pairs} (\rewards^*_\pair \times \kernels^*_\pair)$ in product form and let $\model \in \models^*$ be non-degenerate.
        If $\confusing(\model) = \emptyset$, then $\model$ is non-explorative, i.e., there exists a learning algorithm $\learner$ 
        (1)~with sub-linearly many episodes,
        (2)~which is no-regret on $\models^*$ and
        (3)~that has finitely many exploration episodes. 
    \end{proposition}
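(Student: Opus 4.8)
The plan is to reuse verbatim the algorithm built for \Cref{proposition_confusing_empty_robust}, namely \texttt{KLUCRL}$(\optpolicy, \models^*)$ of \Cref{algorithm_biased_robust_klucrl}, and to show that it \emph{also} satisfies the missing third requirement. Indeed, \Cref{proposition_confusing_empty_robust} already tells us that this algorithm is robust on $\models^*$ and makes $\OH(\log T)$ many episodes. Robustness gives item (1) of sub-linearly many episodes and item (2) of being no-regret on $\models^*$ for free, since $\sup_{\model' \in \models^*} \Reg(T; \model') = \oh(T)$ forces $\Reg(T; \model') = \oh(T)$ for each individual $\model' \in \models^*$. Hence the entire task reduces to establishing item (3): that \texttt{KLUCRL}$(\optpolicy, \models^*)$ has only finitely many exploration episodes on $\model$, almost surely.

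First I would invoke the key estimate \eqref{equation_proof_confusing_non_explorative_1} established inside the proof of \Cref{proposition_confusing_empty_robust}, which reads $\Pr^\model\parens{\exists t \in \braces{2^m, \ldots, 2^{m+1}-1} : \policy_t \ne \optpolicy} = \OH(4^{-m})$. Because $\sum_m 4^{-m} < \infty$, the Borel--Cantelli lemma yields, almost surely, a (random) epoch index $M$ such that for every $m \ge M$ the algorithm deploys $\optpolicy$ during the \emph{whole} epoch $\braces{2^m, \ldots, 2^{m+1}-1}$; note that the forced-exploration phase at the start of each epoch already plays $\optpolicy$, so it is consistent with this conclusion. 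In other words, from time $2^M$ onwards the played policy is always the natural optimal policy $\optpolicy$ of $\model$.

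The second step is to observe that any episode whose policy is $\optpolicy$ can never be an exploration episode. By non-degeneracy of $\model$, the action $\optpolicy(\state)$ is the unique one with $(\state, \optpolicy(\state)) \in \weakoptimalpairs(\model)$, so every pair visited by $\optpolicy$ has zero Bellman gap; consequently $\Pr_\state^{\optpolicy}\parens{\exists t \ge 1 : \ogaps(\Pair_t; \model) \ne 0} = 0$ for all $\state \in \states$, which is precisely the failure of condition (2) in \Cref{definition_exploration_episode}. Thus, for every episode starting at time $t_k \ge 2^M$ (where $\policy_{t_k} = \optpolicy$), episode $k$ is not an exploration episode.

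Combining the two steps, every exploration episode must start strictly before time $2^M$, and since only finitely many episodes begin before any fixed finite time, the total number of exploration episodes is finite almost surely; this is exactly item (3). I do not expect a genuine obstacle: all the analytic difficulty is already absorbed into \eqref{equation_proof_confusing_non_explorative_1} of \Cref{proposition_confusing_empty_robust}, and the only care needed here is to run the almost-sure argument cleanly---applying Borel--Cantelli to the tail-summable events $\braces{\exists t \in [2^m, 2^{m+1}) : \policy_t \ne \optpolicy}$ and checking that the initial forced phase of each epoch plays $\optpolicy$ and is therefore harmless.
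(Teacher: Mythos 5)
Your proposal is correct and follows essentially the same route as the paper: both reuse \texttt{KLUCRL}$(\optpolicy, \models^*)$ and the estimate \eqref{equation_proof_confusing_non_explorative_1}, deduce (via summability of $4^{-m}$, which is your Borel--Cantelli step and the paper's union bound over epochs) that almost surely $\policy_t = \optpolicy$ for all large $t$, and conclude that all subsequent episodes visit only zero-gap pairs and hence cannot be exploration episodes. Your explicit remark that items (1) and (2) are inherited from \Cref{proposition_confusing_empty_robust}, and your spelling out of why condition (2) of \Cref{definition_exploration_episode} fails under $\optpolicy$, merely make explicit what the paper leaves implicit.
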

    \begin{proof}
        We consider the algorithm \texttt{KLUCRL}$(\optpolicy, \models^*)$ (\Cref{algorithm_biased_robust_klucrl}), introduced for the proof of \Cref{proposition_confusing_empty_robust}. 
        Following \eqref{equation_proof_confusing_non_explorative_1}, we have:
        \begin{align*}
            \Pr^\model \parens*{
                \exists t \ge 2^m
                :
                \policy_t \ne \optpolicy
            }
            & \le 
            \sum_{n = m}^\infty 
            \Pr^\model \parens*{
                \exists t \in \braces{2^n, \ldots, 2^{n+1}-1}
                :
                \policy_t \ne \optpolicy
            }
            \\
            & =
            \OH \parens*{
                \sum_{n = m}^\infty 4^{-n}
            }
            = \OH \parens*{4^{-m}}
            = \oh_{m \to \infty}(1)
            .
        \end{align*}
        So $\Pr^\model\parens{\exists T, \forall t \ge T: \policy_t = \policy^*} = 1$. 
        Because every pair $\pair$ that $\optpolicy$ can reach satisfies $\ogaps(\pair; \model) = 0$ by construction of $\optpolicy$, it follows that $\Pr^\model\parens{\exists T, \forall t \ge T: \ogaps(\Pair_t) = 0} = 1$. 
        So necessarily, the number of exploration times (\Cref{definition_exploration_time}) is almost-surely finite.
    \end{proof}

    \subsubsection{A consistent algorithm specialized to a non-explorative model}

    We conclude by arguing that the robust algorithm of for \Cref{proposition_confusing_empty_robust}, $\texttt{KLUCRL}(\policy^*, \models^*)$, can be adapted into a consistent learning algorithm $\learner'$ such that $\Reg(T; \model, \learner') = \oh(\log(T))$. 

    \begin{proposition}
    \label{proposition_confusing_empty_consistent}
        Consider a \strong{convex} ambient space $\models^* \equiv \product_{\pair \in \pairs} (\rewards^*_\pair \times \kernels^*_\pair)$ in product form and let $\model \in \models^*$ be non-degenerate.
        If $\confusing(\model) = \emptyset$, there exists a learning algorithm $\learner$ that 
        (1)~is consistent; and
        (2) satisfies $\Reg(T; \model, \learner) = \oh(\log(T))$.
    \end{proposition}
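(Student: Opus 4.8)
The plan is to reuse the biased algorithm \texttt{KLUCRL}$(\optpolicy, \models^*)$ (\Cref{algorithm_biased_robust_klucrl}) built for \Cref{proposition_confusing_empty_robust}, but to recalibrate its two free parameters so that it becomes consistent rather than merely robust. As it stands the algorithm is \emph{not} consistent: on a model $\model'$ where $\optpolicy$ is sub-optimal, its forced-exploration phase of length $t^{2/3}$ already incurs $\Omega(T^{2/3})$ regret, which is not $\oh(T^\epsilon)$ for $\epsilon<\tfrac 23$. First I would shrink the forced-exploration schedule from $2^{2m/3}$ down to a poly-logarithmic length $g(m)=\Theta(m^2)$ per epoch $\braces{2^m,\ldots,2^{m+1}-1}$, replace the bias threshold $\log(t)\,t^{-1/3}$ of \texttt{EVI-b}$_{\optpolicy}$ (\Cref{algorithm_biased_robust_evi}) by a slowly vanishing threshold $\theta(t)=(\log t)^{-1/4}$, and inflate the confidence radius of $\models(t;\models^*)$ (replacing $\log(2t)$ by $\log(2t^{3})$) so that $\Pr(\model\notin\models(t;\models^*))=\OH(t^{-3})$, hence summable at a geometric rate along epochs.

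To prove consistency I would fix an arbitrary $\model'\in\models$. The total forced exploration before $T$ is $\sum_{m\le\log_2 T}g(m)=\OH(\log^3 T)$, which contributes only sub-polynomial regret. Crucially, each epoch replays $\optpolicy$ for $g(m)=\Theta((\log t)^2)$ steps, so $\optpolicy$'s (true) recurrent class in $\model'$ is sampled $\Theta((\log t)^2)$ times; since the gain of a fixed policy depends only on its recurrent class, an over-sampled recurrent class pins the optimistic kernel there, and by \Cref{lemma_gain_deviations} the optimistic gain tightens, $\gain^{\optpolicy}(\models(t;\models^*))\le \gain^{\optpolicy}(\model')+\OH((\log t)^{-1/2})$ with high probability. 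If $\optpolicy$ is sub-optimal on $\model'$ with gain gap $\Delta_g>0$, then optimism ($\optgain(\models(t;\models^*))\ge\optgain(\model')$) gives $\optgain(\models(t;\models^*))-\gain^{\optpolicy}(\models(t;\models^*))\ge\Delta_g-\OH((\log t)^{-1/2})>\theta(t)$ past a finite horizon $t_0(\model')$. From $t_0$ onward \texttt{EVI-b}$_{\optpolicy}$ always returns the unbiased optimistic policy, the run coincides with vanilla \texttt{KLUCRL} managed by \eqref{equation_doubling_trick}, and the classical gap-dependent analysis of optimistic algorithms yields $\OH(\log T)$ regret. Summing the constant burn-in $\OH(t_0)$, the forced cost $\OH(\log^3 T)$ and the learning cost $\OH(\log T)$ gives $\Reg(T;\model',\learner)=\OH(\log^3 T)=\oh(T^\epsilon)$ for every $\epsilon>0$.

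To bound the regret on $\model$ itself I would mirror Steps~1--3 of the proof of \Cref{proposition_confusing_empty_robust}. Here $\optpolicy$ is optimal, so the forced phase costs no regret and merely guarantees $\min_{\pair\in\optpairs(\model)}N_\pair(t)\ge c\,g(m)=\Theta((\log t)^2)$ (up to a deviation event of super-summable probability). On the good event $\model\in\models(t;\models^*)$, the Step~3 argument --- which uses $\confusing(\model)=\emptyset$ to produce the blended model $\model^\dagger_0$ with $\optpolicy\in\optpolicies(\model^\dagger_0)$ --- together with \Cref{lemma_gain_deviations} gives $\optgain(\models(t;\models^*))-\gain^{\optpolicy}(\models(t;\models^*))=\OH(\sqrt{\log t/\min_{\pair\in\optpairs(\model)}N_\pair(t)})=\OH((\log t)^{-1/2})<\theta(t)$. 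Hence \texttt{EVI-b}$_{\optpolicy}$ returns $\optpolicy$ and the algorithm plays the optimal policy except on the failure events $\braces{\model\notin\models(t;\models^*)}$ and the forced-sampling deviation, whose probabilities sum to $\OH(4^{-m})$ per epoch thanks to the inflated radius. This yields $\Reg(T;\model)\le\sum_m 2^m\,\OH(4^{-m})=\OH(1)=\oh(\log T)$, exactly as in \Cref{proposition_confusing_empty_robust}.

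The delicate point --- and the main obstacle --- is the joint calibration of the poly-logarithmic length $g$ and the vanishing threshold $\theta$. On $\model$ the threshold must dominate the optimism-induced gap $\OH(\sqrt{\log t/g(m)})$ so that $\optpolicy$ is never spuriously dropped, forcing $\theta(t)\gg(\log t)^{-1/2}$; yet $\theta(t)$ must still tend to $0$ so that on \emph{every} fixed sub-optimal $\model'$, with its own fixed gap $\Delta_g$, the test eventually fires and consistency is recovered. The choice $g(m)=\Theta(m^2)$, $\theta(t)=(\log t)^{-1/4}$ threads this needle, but the genuinely careful verification is that the confusing-set argument of \Cref{proposition_confusing_empty_robust}, and the tightening of $\gain^{\optpolicy}$, still go through with only poly-logarithmically many samples on $\optpairs(\model)$ rather than the $t^{2/3}$ samples available in the robust construction.
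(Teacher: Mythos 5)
Your construction is essentially the paper's: the same epoch-based algorithm with the forced exploration of $\optpolicy$ shrunk from $t^{2/3}$ to a poly-logarithmic schedule, a vanishing bias threshold in $\texttt{EVI-b}_{\optpolicy}$ calibrated to sit between the optimism width on $\optpairs(\model)$ and zero, and a reuse of \textbf{Steps 1--3} of \Cref{proposition_confusing_empty_robust} to get $\Reg(T;\model)=\OH(1)=\oh(\log T)$, with consistency coming from the test eventually firing on any fixed sub-optimal $\model'$. The paper itself only sketches this proof (\Cref{algorithm_consistent_robust_klucrl,algorithm_biased_consistent_evi}, with threshold $(\log t)^{-1/2}$ and $\psi(2^m)=\log(2^m)$ rather than your $(\log t)^{-1/4}$ and $m^2$), and your choice of exponents, if anything, separates the relevant scales more comfortably.
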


    The considered algorithm is a reworked version of \texttt{KLUCRL}$(\policy^*, \models^*)$ (\Cref{algorithm_biased_robust_klucrl}). 
    When tuning \Cref{algorithm_biased_robust_klucrl} and \Cref{algorithm_biased_robust_evi} to provide a robust algorithm, \Cref{algorithm_biased_robust_klucrl} forces $\Omega(T^{2/3})$ iterations of the policy $\policy^*$.
    This is incompatible with consistency, as the latter implies that the model dependent regret is $\Omega(T^{2/3})$ when $\policy^*$ is not gain optimal. 
    Instead, the algorithm will force $\OH(\log^{3}(T))$ iterations of $\policy^*$, losing robustness but achieving consistency. 
    Then, the biased \texttt{EVI} adds a bonus to favor the selection of $\policy^*$, compensating the noise on the estimation of $\gain^{\policy^*}$.
    \vspace{-1em}

    \noindent
    \begin{minipage}[t]{.49\linewidth}
        \begin{algorithm}[H]
            \begin{algorithmic}[1]
                \FOR{epochs $m = 0, 1, 2, \ldots$}
                    \STATE Set $\psi(2^m) \gets \log(2^m)$;
                    \STATE Play $\optpolicy$ for $t = 2^m, \ldots, 2^m + \psi(2^m)$
                    \FOR{$t = 2^m + \psi(2^m), \ldots, 2^{m+1}$}
                        \IF{\eqref{equation_doubling_trick} triggers \strong{or} $t = 2^m + \psi(2^m)$}
                            \STATE $k \gets k+1$, $t_k \gets t$;
                            \STATE $\policy_{t_k} \gets \texttt{EVI-b}_{\optpolicy}(\models(t; \models^*), t)$;
                        \ENDIF
                        \STATE Set $\policy_t \gets \policy_{t_k}$ and play $\Action_t \gets \policy_t(\State_t)$.
                    \ENDFOR
                \ENDFOR
            \end{algorithmic}
            \caption{
                \label{algorithm_consistent_robust_klucrl}
                \texttt{KLUCRL}$'(\optpolicy, \models^*)$
            }
        \end{algorithm}
    \end{minipage}%
    \hfill%
    \begin{minipage}[t]{.49\linewidth}
        \begin{algorithm}[H]
            \begin{algorithmic}[1]
                \STATE Compute $\tilde{\policy} \gets \texttt{EVI}(\widetilde{\models})$;
                \STATE Compute $\tilde{\gain}^* \gets \gain^*(\widetilde{\models})$;
                \STATE Compute $\tilde{\gain}^{\policy} \gets \gain^{\policy}(\widetilde{\models})$;
                \IF{$\tilde{\gain}^* > \tilde{\gain}^{\policy} + \sqrt{\frac{1}{\log(t)}}$}
                    \RETURN $\tilde{\policy}$;
                \ELSE
                    \RETURN $\policy$.
                \ENDIF
            \end{algorithmic}
            \caption{
                \label{algorithm_biased_consistent_evi}
                $\texttt{EVI-b}'_\policy(\widetilde{\models}, t)$
            }
        \end{algorithm}
    \end{minipage}

    \bigskip
    \def\proofname{Sketch of proof}
    \begin{proof}
        Proving \Cref{proposition_confusing_empty_consistent} in full detail is, again, tedious. 
        As the proof follows from techniques that are quite similar to \Cref{proposition_confusing_empty_robust}, we only leave the main ideas. 
        We fix $\model' \in \models^*$ and look at whether $\model' = \model$ or $\model' \ne \model$.

        If $\model' \ne \model$, we prove that $\Reg(T; \model') = \OH(\log^3(T))$ when $T \gg \exp\braces{(\optgain(\model') - \gain^{\policy^*}(\model'))^{-1}}^2$.
        $T$ needs to be large in order to compensate for the bonus that \Cref{algorithm_biased_consistent_evi} puts on the optimistic gain of $\policy^*$, that intentionally overshoots the empirical noise on $\gain^{\policy^*}$, i.e., the value of $\abs{\gain^{\policy^*}(\hat{\model}_t) - \gain^{\policy^*}(\model)}$. 
        Beyond that subtlety, we enumerate $\pair \notin \wkoptpairs(\model)$ and we distinguish the cases where $\Pair_t = \pair$ depending on whether (1) $\pair$ is transient under the currently deployed policy or (2) $\pair$ is recurrent under the currently deployed policy and (3) $\pair$ is a recurrent pair of $\optpolicy$ and $t \in \bigcup_m \braces{2^m, \ldots, 2^m + \psi(m)}$ is within a forced exploration phase.
        The first is proportionally to the number of episodes, which is $\OH(\log(T))$, while the second implies that confidence regions are wrong.
        As discussed above, when taking account of the bonus in \Cref{algorithm_biased_consistent_evi}, confidence regions start to be correct when $T \gg \exp\braces{(\optgain(\model') - \gain^{\policy^*}(\model'))^{-1}}^2$, leading to an error that sums as $\OH(1)$ overall. 
        The third accounts for $\OH(\log^3(T))$ which is eventually the dominant term in the regret bound. 

        If $\model' = \model$, we prove that $\Reg(T; \model) = \OH(1)$ with the exact same proof technique as \Cref{proposition_confusing_empty_robust}, by establishing an equation in the style of \eqref{equation_proof_confusing_non_explorative_1} with the same proof strategy \STEP{1, 2, 3}, adapted to a forced exploration of $\Theta(\log^3(T))$ rather than $\Theta(T^{1/3})$. 
    \end{proof}
    \def\proofname{Proof}

    \subsection{Interior Markov decision processes are universally explorative}
    \label{appendix_interior_are_explorative}
    
    We conclude the discussion on explorative spaces by discussing examples of explorative spaces, and how common the property ``$\model \in \models^+$'' may be. 
    As shown in \Cref{appendix_exploration_counter_example}, when the ambient space $\models$ is a fixed kernel space (i.e., is of the form $\models \equiv \product_{\pair \in \pairs} (\rewards_\pair \times \braces{\kernel(\pair)}$ for some fixed $\kernel \in \probabilities(\states)^\pairs$) and $\rewards_\pair \subseteq [0,1]$, large portions of $\models$ may be non-explorative. 
    For instance, taking 
    \begin{equation*}
        \models := \braces*{
            \model \equiv (\pairs, \reward, \kernel)
            :
            \reward \in [0, 1]^\pairs
            \text{~and~}
            \forall \pair \in \pairs,
            \forall \state \in \states,
            \kernel(\state|\pair) \in \braces{0, 1}
        }
    \end{equation*}
    the space of deterministic transition Markov decision processes with pair space $\pairs$, one can generalize the example of \Cref{appendix_exploration_counter_example} to show that as soon as $\abs{\pairs} > \abs{\states}$, a model picked in $\models$ uniformly at random is non-explorative with positive probability.  

    The take-away of this observation is that if $\models$ is structured, then $\models^+$ can be large. 
    In \Cref{proposition_interior_models_are_explorative} below, we prove a complementary result: If $\models$ has no structure, then every (non-degenerate) model in the interior (see \Cref{assumption_interior}) of $\models$ is explorative. 
    It follows (see \Cref{corollary_random_explorative}) that if the ambient space $\models$ has no structure and if $\model \in \models$ is picked uniformly at random, then $\model$ is explorative almost surely. 
    This property is very convenient for experiments: Any Markov decision processes that is generated randomly is a good environment to investigate regret of exploration guarantees. 

    \begin{proposition}[Interior implies explorative]
    \label{proposition_interior_models_are_explorative}
        Let $\models \equiv \product_{\pair \in \pairs} ([0, 1] \times \probabilities(\states))$ be the space of all Markov decision processes with pair space $\pairs$, with $\abs{\pairs} > \abs{\states}$.
        Then every non-degenerate (n.d.) interior model is explorative, i.e., 
        \begin{equation*}
            \braces*{
                \model \equiv (\pairs, \reward, \kernel)
                \text{~n.d.}
                :
                \forall \pair \in \pairs,
                \mathrm{supp}(\reward(\pair)) = \braces{0, 1}
                \text{~and~}
                \mathrm{supp}(\kernel(\pair)) = \states
            }
            \subseteq 
            \models^+
            .
        \end{equation*}
    \end{proposition}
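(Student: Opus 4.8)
The plan is to invoke the equivalence between assertions (\textit{1.}) and (\textit{2.}) of \Cref{theorem_characterization_explorative}: since the ambient space $\models \equiv \product_{\pair \in \pairs}([0,1] \times \probabilities(\states))$ is convex and in product form and $\model$ is non-degenerate, proving $\model \in \models^+$ reduces to exhibiting a single confusing model, i.e.\ to showing $\confusing(\model) \ne \emptyset$. Before building one, I would record two structural consequences of interiority. Because each kernel $\kernel(\pair)$ has full support, every deterministic policy induces an irreducible Markov chain on $\states$, hence is unichain with all states recurrent. In particular the unique Bellman optimal policy $\optpolicy$ of $\model$ is irreducible; any gain optimal policy then has all its pairs recurrent, and recurrent pairs of a gain optimal policy are weakly optimal, so by uniqueness of weakly optimal actions (\Cref{theorem_characterization_non_degenerate}) this forces $\optpolicies(\model) = \braces{\optpolicy}$ and $\optpairs(\model) = \braces{(\state, \optpolicy(\state)) : \state \in \states}$, the graph of $\optpolicy$. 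I would also note $\optgain(\model) < 1$, being a convex combination of mean rewards $\reward(\pair) \in (0,1)$.

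Next I would construct the witness. Since $\abs{\pairs} > \abs{\states}$ while $\optpolicy$ uses exactly $\abs{\states}$ pairs, there is a pair $\pair_0 = (\state_0, \action_0)$ with $\action_0 \ne \optpolicy(\state_0)$, hence $\pair_0 \notin \optpairs(\model)$ and $\pair_0 \in \suboptimalpairs(\model)$. Let $\policy^\dagger$ be obtained from $\optpolicy$ by switching its action at $\state_0$ to $\action_0$. I would define $\model^\dagger$ to coincide with $\model$ everywhere except at $\pair_0$, where I set a high reward $\reward^\dagger(\pair_0) = 1 - \eta$ and a near self-loop kernel placing mass $1 - \delta$ on $\state_0$ (the remaining $\delta$ spread with full support), for small parameters $\eta, \delta > 0$ fixed later. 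As only $\pair_0$ is altered and $\pair_0 \notin \optpairs(\model)$, automatically $\model = \model^\dagger$ on $\optpairs(\model)$; and since $\reward^\dagger(\pair_0) \in (0,1)$ and $\kernel^\dagger(\pair_0)$ has full support, $\model \ll \model^\dagger$.

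It remains to certify $\optpolicies(\model^\dagger) \cap \optpolicies(\model) = \emptyset$, for which it suffices to show $\optpolicy \notin \optpolicies(\model^\dagger)$. Since $\optpolicy$ never plays $\pair_0$, its gain is unchanged: $\gain^{\optpolicy}(\model^\dagger) = \optgain(\model) < 1$. The crux, and the step I expect to be the main obstacle, is lower bounding $\gain^{\policy^\dagger}(\model^\dagger)$. Here I would show that the stationary mass at $\state_0$ under $\policy^\dagger$ tends to $1$ as $\delta \to 0$: as $\kernel^\dagger(\pair_0)$ retains full support the chain under $\policy^\dagger$ remains irreducible, and writing $q := \min_{\pair, \state} \kernel(\state | \pair) > 0$, from any state the chain hits $\state_0$ with probability at least $q$ per step, so the mean return time to $\state_0$ is at most $(1-\delta)\cdot 1 + \delta(1 + 1/q) = 1 + \delta/q$; hence $\invariantmeasure^{\policy^\dagger}(\state_0) \ge (1 + \delta/q)^{-1}$. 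As all rewards are nonnegative, this gives
\begin{equation*}
    \gain^{\policy^\dagger}(\model^\dagger)
    \ge
    \invariantmeasure^{\policy^\dagger}(\state_0)\,\reward^\dagger(\pair_0)
    \ge
    \frac{1 - \eta}{1 + \delta/q},
\end{equation*}
which exceeds $\optgain(\model) < 1$ once $\eta$ and $\delta$ are small enough. For such a choice $\gain^{\policy^\dagger}(\model^\dagger) > \gain^{\optpolicy}(\model^\dagger)$, so $\optpolicy$ is not gain optimal in $\model^\dagger$ and $\model^\dagger \in \confusing(\model)$. By \Cref{theorem_characterization_explorative}, $\confusing(\model) \ne \emptyset$ yields $\model \in \models^+$, which is the claim.
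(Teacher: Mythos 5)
Your proof is correct and follows essentially the same route as the paper's: both reduce the claim to $\confusing(\model)\ne\emptyset$ via \Cref{theorem_characterization_explorative}, exhibit a confusing model by modifying only a non-weakly-optimal pair $\pair_0$ to have reward near $1$ and a near self-loop kernel (keeping full support so that $\model\ll\model^\dagger$ and agreement on $\optpairs(\model)$ are automatic), and conclude that the original optimal policy, whose gain is unchanged and $<1$, is beaten in the perturbed model. The only cosmetic difference is your witness policy (a deterministic switch of $\optpolicy$ at $\state_0$, with a quantitative Kac/return-time bound) versus the paper's uniformly randomized policy and a limit argument as $\epsilon\to 0$; both are valid.
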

    \begin{proof}
        Let $\model \in \models$ be a non-degenerate interior model. 
        We show that $\confusing(\model) \ne \emptyset$.
        Because $\abs{\pairs} > \abs{\states}$ and $\model$ is non-degenerate, there exists $\pair_0 \in \pairs \setminus \wkoptpairs(\model)$. 
        By definition, $\ogaps(\pair_0; \model) > 0$ so that this pair cannot be recurrent under any gain optimal policy of $\model$. 
        For $\epsilon > 0$, define $\model_\epsilon \equiv (\pairs, \reward_\epsilon, \kernel_\epsilon)$ the Markov decision process given by:
        \begin{equation*}
            \kernel_\epsilon (\pair)
            := 
            \begin{cases}
                \kernel(\pair) & \text{if~} \pair \ne \pair_0
                \\
                (1 - \epsilon) e_{\state_0} + \epsilon \cdot \kernel(\pair) & \text{if~} \pair = \pair_0
            \end{cases}
            \text{~~~and~~~}
            \reward_\epsilon (\pair)
            := 
            \begin{cases}
                \reward(\pair) & \text{if~} \pair \ne \pair_0
                \\
                1 - \epsilon + \epsilon \cdot \reward(\pair) & \text{if~} \pair = \pair_0
            \end{cases}
        \end{equation*}
        where $\pair_0 \equiv (\state_0, \action_0)$ and $e_{\state_0}$ is the Dirac at $\state_0$.
        By construction, $\model_\epsilon \gg \model$ for all $\epsilon > 0$. 
        Let $\policy$ be the (randomized) policy that picks actions uniformly at random from $\state \ne \state_0$ and with $\policy(\state_0) = \action_0$. 
        It is clear that as $\epsilon \to 0$, we have $\gain^\policy(\model_\epsilon) \to 1$ because $\policy$ converges to a unichain policy that converges to a loop on $\state_0$ where it scores $1 - \epsilon$. 
        Now, if $\optpolicy \in \optpolicies(\model)$, we have $\gain^{\optpolicy}(\model_\epsilon) = \gain^{\optpolicy}(\model)$ since $\optpolicy$ does not pick $\action_0$ from $\state_0$. 
        So, for $\epsilon > 0$ small enough, we have $\gain^{\policy}(\model_\epsilon) > \gain^{\optpolicy}(\model_\epsilon)$ for all $\optpolicy \in \optpolicies(\model)$.
        For such $\epsilon > 0$, we have $\optpolicies(\model_\epsilon) \cap \optpolicies(\model) = \emptyset$ and it follows that $\model_\epsilon \in \confusing(\model)$. 
        So $\confusing(\model) \ne \emptyset$ and $\model$ is explorative from \Cref{theorem_characterization_explorative}.
    \end{proof}

    \begin{corollary}
    \label{corollary_random_explorative}
        Let $\models \equiv \product_{\pair \in \pairs} ([0, 1] \times \probabilities(\states))$ be the space of all Markov decision processes with pair space $\pairs$, with $\abs{\pairs} > \abs{\states}$.
        Let $\model \equiv (\pairs, \reward, \pairs)$ where $\reward(\pair) \sim \mathrm{U}[0, 1]$ and $\kernel(\pair) \sim \mathrm{U}(\probabilities(\states))$ are sampled independently. 
        Then $\model \in \models^+$ almost surely. 
    \end{corollary}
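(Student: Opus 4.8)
The plan is to reduce the claim to \Cref{proposition_interior_models_are_explorative} by showing that a random $\model$ drawn as described is, almost surely, a non-degenerate interior model in the full ambient space $\models \equiv \product_{\pair \in \pairs}([0,1] \times \probabilities(\states))$. Since $\abs{\pairs} > \abs{\states}$ is assumed, \Cref{proposition_interior_models_are_explorative} then gives $\model \in \models^+$ on that almost-sure event, which is exactly the conclusion. The work thus splits into three almost-sure properties: full support of the kernels, strict interiority of the rewards, and non-degeneracy.

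First I would dispatch the interiority conditions. A sample $\kernel(\pair) \sim \mathrm{U}(\probabilities(\states))$ lies in the relative interior of the simplex almost surely, since the boundary $\braces*{\tilde{\kernel} \in \probabilities(\states): \exists \state,\ \tilde{\kernel}(\state) = 0}$ is a finite union of faces and hence Lebesgue-negligible; thus $\mathrm{supp}(\kernel(\pair)) = \states$ for every $\pair \in \pairs$ almost surely. In particular every action leads to every state with positive probability, so $\model$ is ergodic and a fortiori communicating (\Cref{assumption_communicating}), placing it within the framework of the paper. Likewise $\reward(\pair) \sim \mathrm{U}[0,1]$ satisfies $\reward(\pair) \in (0,1)$ almost surely, so the associated Bernoulli reward has $\mathrm{supp}(\reward(\pair)) = \braces{0,1}$. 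Intersecting over the finitely many pairs $\pair \in \pairs$, the interior conditions of \Cref{assumption_interior} hold almost surely.

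The heart of the argument is the almost-sure non-degeneracy, which I would obtain by conditioning on the kernel. On the almost-sure event that $\kernel$ is fully supported (hence communicating), fix such a $\kernel$ and let $D_\kernel \subseteq \R^\pairs$ denote the set of reward vectors $\reward'$ for which $(\pairs, \reward', \kernel)$ is degenerate. \Cref{corollary_non_degeneracy_noise} guarantees that $(\pairs, \reward' + U, \kernel)$ is non-degenerate almost surely when $U$ is a standard Gaussian vector; since $\reward' + U$ admits a density that is strictly positive on all of $\R^\pairs$, this forces $D_\kernel$ to be Lebesgue-negligible. Restricting to the cube, $D_\kernel \cap [0,1]^\pairs$ is negligible, so $\Pr(\reward \in D_\kernel) = 0$ under the uniform law on $[0,1]^\pairs$. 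As $\reward$ and $\kernel$ are drawn independently, Fubini's theorem integrates this bound over the (almost surely communicating) kernel to yield $\Pr(\model \text{ degenerate}) = 0$.

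Intersecting the three almost-sure events, $\model$ is almost surely a non-degenerate interior model, and \Cref{proposition_interior_models_are_explorative} concludes. The only delicate point is the non-degeneracy step: one must observe that \Cref{corollary_non_degeneracy_noise}, although stated for Gaussian perturbations, in fact certifies Lebesgue-negligibility of $D_\kernel$ (rather than merely a statement about one particular perturbation law), and that conditioning on $\kernel$ is legitimate — both of which hold because $D_\kernel$ depends measurably on $\kernel$ and the perturbed reward has an everywhere-positive density. Everything else is routine bookkeeping of negligible boundaries and finite intersections of full-measure events.
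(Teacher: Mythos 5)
Your proof is correct and follows essentially the same route as the paper: establish that $\model$ is almost surely interior and almost surely non-degenerate (the latter via \Cref{corollary_non_degeneracy_noise} integrated over the kernel), then invoke \Cref{proposition_interior_models_are_explorative}. Your extra remark that the Gaussian-perturbation statement actually certifies Lebesgue-negligibility of the degenerate reward set is a correct and welcome clarification of a step the paper leaves implicit.
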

    
    \begin{proof}
        If $\model$ is picked at random as described above, then $\model$ is interior with probability one. 
        Meanwhile, the set of degenerate models of a fixed arbitrary kernel $\kernel \in \probabilities(\states)$ is of measure zero, see \Cref{corollary_non_degeneracy_noise}.
        Integrating, the set of degenerate models is negligible in $\models$ for the Lebesgue measure.
        Hence, if $\model$ is picked at random described, then $\model$ is non-degenerate with probability one. 
    \end{proof}
\end{document}